\pdfoutput=1
\documentclass[twoside,11pt]{article}

\usepackage{amsmath}
\usepackage{amssymb}
\usepackage{amsthm}
\usepackage{mathtools}
\usepackage{algorithm}
\usepackage{algorithmic}
\usepackage{natbib}
\usepackage{tikz}
\usepackage{pgfplots}
\usepackage{booktabs}
\usepackage{tabularx}
\usepackage{longtable}

\usepackage[preprint]{jmlr2e}

\usepackage{cleveref}
\hypersetup{hypertexnames=false}

\usepackage{lastpage}
\jmlrheading{0}{2026}{1-\pageref{LastPage}}{--}{--}{--}{Huang}
\ShortHeadings{Self-Certification of Representation Adequacy}{Huang}
\firstpageno{1}

\newtheorem{assumption}[theorem]{Assumption}
\newtheorem*{definition*}{Definition}
\newtheorem*{assumption*}{Assumption}
\newtheorem*{corollary*}{Corollary}
\newtheorem*{remark*}{Remark}
\newtheorem*{example*}{Example}

\renewenvironment{proof}[1][Proof]{%
  \par\pushQED{\qed}\normalfont\medskip
  \noindent{\bfseries #1}\quad\ignorespaces
}{%
  \popQED\par\medskip
}

\makeatletter
\newcommand{\setmodelanchor}[2]{\def\@currentlabel{#2}\phantomsection\label{#1}}
\makeatother

\allowdisplaybreaks[1]
\begin{document}

\title{Self-Certification of Representation Adequacy:\\
       Sequential Certification at Minimum Task Loss}

\author{\name Zijie Huang \email 750691178@qq.com \\
       \addr Independent Researcher}

\editor{--}

\maketitle

\begin{abstract}%   <- trailing '%' for backward compatibility of .sty file
We study the task loss required to certify whether a fixed representation
preserves the Bayes risk of the full task history. In a finite specified
environment class, actions generate observations and incur unobserved,
environment-dependent task losses. Adapting controlled sequential testing,
we obtain a covering-LP lower bound and a Certification Track-and-Stop
policy with matching first-order loss $C_i\log(1/\delta)$ under
identifiability and full support. We explicitly separate this general
finite-label testing analysis from its representation interpretation.
We then ask when the stronger requirement of identifying the environment
can be weakened to identifying its adequacy label. For strictly positive
losses, simultaneous attainment of all environment-wise constants is
possible exactly when observationally equivalent environments have a
common optimal covering design. Otherwise a joint lower bound quantifies
the obstruction; a task-derived example gives an exact factor-two loss
penalty despite label identifiability. For zero-cost evidence, the
pointwise infimum is exactly zero at each fixed confidence level;
we characterize when it is attained. Static Bayes-risk and
value-of-information calculations provide the interpretation and
one-shot boundaries. Policy switching and representation repair remain
outside the fixed-kernel guarantees.
\end{abstract}

\begin{keywords}
  representation adequacy, self-certification, sequential testing,
  track-and-stop, pure exploration
\end{keywords}

% Introduction: contribution scope revised after the September 2026 audit.
\providecommand{\cE}{\mathcal{E}}
\providecommand{\cA}{\mathcal{A}}
\providecommand{\cY}{\mathcal{Y}}
\providecommand{\cF}{\mathcal{F}}
\providecommand{\E}{\mathbb{E}}
\providecommand{\Prob}{\mathbb{P}}
\providecommand{\ind}{\mathbf{1}}
\providecommand{\kl}{\mathrm{kl}}
\providecommand{\Alt}{\mathrm{Alt}}
\providecommand{\LLR}{\mathrm{LLR}}
\providecommand{\TV}{\mathrm{TV}}

\section{Introduction}\label{sec:intro}

\subsection{The question and its scope}\label{sec:intro-motivation}

A compressed history can discard distinctions needed for optimal action.
Perceptual aliasing, utility-based state refinement, and policy-preserving
abstraction have studied this problem for decades
\citep{chrisman1992aliasing,mccallum1994state,jong2005abstraction}.
Decision-theoretic sufficiency likewise predates representation learning:
risk preservation belongs to the statistical decision tradition
\citep{blackwell1951,lecam1964,takeuchi1975adequacy}, and loss-dependent
Bayes-sufficient representations have been formulated in supervised
learning \citep{sevetlidis2026bayes}. We use this established perspective to
define the adequacy of a fixed representation $f$ in environment $E$:
\[
  \Theta(E,f)=\ind\{R_E^*(f(H))=R_E^*(H)\}.
\]
The question is the cost of certifying this property when the environment is
unknown and the actions used to gather evidence also incur task loss.

Statistical assessment of an abstraction or internal model is also established
\citep{jiang2015abstraction,albrecht2015criticising,shi2020markov,lin2023transfer}.
Nor is charging experimentation in regret units new: fixed-confidence
identification with low cumulative regret is studied by
\citet{degenne2019bridging} and \citet{yang2026minimal}.
Our purpose is therefore not to introduce representation inadequacy,
risk-based adequacy, statistical self-assessment, or regret-priced
identification as separate ideas.

We study a finite, specified environment class. The representation and all
candidate observation channels, action losses, and adequacy labels are known;
the true environment index is not. At each round the certifier chooses an
action $A_t$, observes $Y_t\sim K_i^{A_t}$, and incurs loss $g_i(A_t)$ in
the true environment $E_i$. It must stop almost surely and report $\Theta_i$
with error at most $\delta$ under every candidate environment, while
minimizing
\[
  R(\pi,E_i)=\E_i^\pi\sum_{t\le\tau}g_i(A_t).
\]
The specified observation $Y_t$ is the feedback available to the certifier;
the loss is not an extra observation. Task loss can be an expected raw loss
or a specified excess loss, but the convention must be fixed as part of the
model. Subtracting a baseline changes an objective with a variable stopping
time and is not an innocuous normalization.

The label concerns the representation used for the underlying task. The
certifier can retain the action--observation transcript needed for likelihood
calculations; it is not assumed to compress that testing transcript by the
same $f$. Actions can include task execution, clarification, verification,
and deferral. Changing the representation itself is outside the fixed-label
model. These restrictions make M1 a model-based certification problem, not
a distribution-free test of a learned representation.

\subsection{Results and their status}\label{sec:intro-layers}
\label{sec:intro-preview}

\paragraph{A. Certification complexity.}
The finite controlled-testing characterization uses task-loss units. For $\Alt(i)=\{j:\Theta_j\ne\Theta_i\}$, define
\begin{equation}\label{eq:intro-lp}
 C_i=\min_{n\ge0}\left\{\sum_a g_i(a)n_a:
       \sum_a n_a d(K_i^a\|K_j^a)\ge1\quad\forall j\in\Alt(i)\right\}.
\end{equation}
For positive losses this is equivalently
\begin{equation}\label{eq:intro-mm}
 C_i=\left[\sup_{q\in\Delta(\cA)}
       \frac{\min_{j\in\Alt(i)}\sum_a q_a d(K_i^a\|K_j^a)}
            {\sum_a q_a g_i(a)}\right]^{-1}.
\end{equation}
Here $q$ denotes sampling frequencies, not fractions of loss expenditure.
The LP remains the primary definition at zero loss; its infeasibility marks
an opposite-label pair that no action can distinguish in the finite-KL case.

Theorem~\ref{thm:LB} gives the transportation lower bound
$R(\pi,E_i)\ge C_i\kl(\delta,1-\delta)$.
Under finite alphabets, full support, and pairwise environment
distinguishability, Theorem~\ref{thm:UB} gives a Certification Track-and-Stop
(CTS) policy attaining
\[
 R_\delta^*(E_i)=C_i\log(1/\delta)+o(\log(1/\delta)).
\]
The arguments adapt controlled experimental design, the transportation
inequality, and Track-and-Stop
\citep{chernoff1959sequential,kaufmann2016complexity,garivier2016optimal}.
\paragraph{B. Finite-table reduction.}
The sequential arguments depend on $f$ entirely through the supplied label map.
Proposition~\ref{prop:table-reduction} states this reduction explicitly.
We do not present the LP's form or these proof techniques as new mathematics
specific to representation adequacy.

\paragraph{C. Compatibility of environment-wise optima.}
The environment-indexed cost table creates a separate attainability question. An adequacy label can be identifiable even when some
same-label environments have identical channels. If those environments have
different task losses, a certifier may be unable to attain their individual
optimal costs simultaneously. Theorem~\ref{thm:observable-compatibility}
gives an exact criterion for positive losses: the information-covering LPs
within each observational equivalence class must have a common minimizer.
When they do, CTS on the observable quotient attains all the constants;
when they do not, a joint lower bound quantifies the obstruction.
Section~\ref{sec:observable:example} realizes an exact factor-two obstruction
using one task loss function from which both the adequacy labels and the
charged losses are derived. This explains why label identifiability alone
does not replace the upper bound's stronger assumption. The criterion
applies to general hidden-cost label testing as well as to adequacy.
The common-optimum principle has an explicit antecedent in monitored
MDPs \citep[Appendix~B.2]{parisi2024monitored}. Here it is instantiated
as a covering-LP criterion with a fixed-confidence joint loss bound and
quotient-CTS attainability proof; the principle itself is not claimed as new.

\paragraph{The zero-loss boundary.}
Proposition~\ref{prop:zero-infimum} proves that $C_i=0$ implies the
stronger fixed-$\delta$ identity $R_\delta^*(E_i)=0$ in the full-support
model, even without pairwise environment identifiability. The infimum
can be approached by finite free probing and a fresh fallback without
being attained. Proposition~\ref{prop:zero-attainment} characterizes
actual zero-cost attainment, including its common-policy version.
These results distinguish the pointwise optimum from the positive
residual loss of a particular CTS family.

The static results provide the interpretation and boundaries of certification.
The Bayes-risk grouping identity is standard, and the internal-certification
bound uses the Le Cam two-point method. Under the explicit one-shot fallback
convention, verification has value
$c^*=(Tr/2)(1-\TV)$ for an equal label prior. This is an application of
classical value of information. Switching and repair remain outside the
sequential theorem.

\subsection{Relation to the closest theory}\label{sec:intro-audit}

Controlled testing already optimizes information per unit cost
\citep{nitinawarat2015controlled}. Pure exploration permits general and
multiple-correct-answer formulations \citep{degenne2019pure}, while
cost-aware best-arm identification treats heterogeneous sampling costs
\citep{kanarios2024cost}. Best-arm identification with minimal regret
already has fixed-confidence lower bounds and matching asymptotics
\citep{yang2026minimal}. Thus neither a general answer map nor
environment-dependent experimentation loss establishes novelty on its own.

M1 uses a different answer---a prescribed representation's Bayes-risk
adequacy---and allows task losses whose realized values are not supplied as
feedback. These are modeling distinctions; they do not establish that a
general controlled-testing theorem could not cover M1 after substitution.
The reduction in Section~\ref{sec:observable} expressly acknowledges this.
Our contribution is the formulation, its complete finite-model analysis,
and the compatibility boundary for simultaneously attaining its
environment-wise costs. The comparison in Section~\ref{sec:related}
records overlap and theorem assumptions, rather than treating failure to
find an identical formulation as proof of priority.

\subsection{Organization}\label{sec:intro-org}

Section~\ref{sec:static} defines adequacy and one-shot certification.
Section~\ref{sec:model} states M1, the LP, the sequential bounds,
CTS, and boundaries B1--B3. Section~\ref{sec:observable} gives the
finite-table reduction, the observable-class compatibility theorem,
and the zero-cost infimum and attainment results.
Section~\ref{sec:switching} delimits switching and repair;
Section~\ref{sec:numerical} reports reproducible finite-instance checks;
Section~\ref{sec:related} compares the nearest literature.
Appendices~\ref{sec:prooflb} and~\ref{sec:proofub} contain the lower- and
upper-bound proofs. Numerical checks support the stated finite examples;
they are not evidence for unproved dynamic extensions.

% sec_static.tex -- JMLR full version, Chapter: The Static Theory
% "Self-Certification of Representation Adequacy"
% Layer 1 (static representation layer) + Layer 2 (one-shot certification layer),
% providing the foundation for the sequential certification model of
% Section~\ref{sec:model} (Theorems~\ref{thm:LB} and \ref{thm:UB}).
%
% Required from the shared preamble (do NOT redefine here):
%   amsthm with a shared counter: theorem, proposition, corollary, lemma
%   (numbered); definition, assumption, example, remark (numbered, same
%   counter); and the unnumbered variants definition*, assumption*,
%   corollary*, remark*, example*.  This file uses the numbered environments
%   ONLY for the seven fixed results, in the fixed order:
%     Theorem 1 (thm:aliasing), Proposition 2 (prop:binary),
%     Theorem 3 (thm:tvbound), Proposition 4 (prop:js),
%     Proposition 5 (prop:boundary), Theorem 6 (thm:verify),
%     Proposition 7 (prop:mug).
%   natbib (\citep/\citet) with the project's 24-key bibliography.
%
% Notation macros (guarded; identical to the mathematical specification):
\providecommand{\Adeq}{\mathrm{Adeq}}
\providecommand{\TV}{\mathrm{TV}}
\providecommand{\JS}{\mathrm{JS}}
\providecommand{\KL}{\mathrm{KL}}
\providecommand{\cA}{\mathcal{A}}
\providecommand{\cE}{\mathcal{E}}
\providecommand{\cF}{\mathcal{F}}
\providecommand{\cT}{\mathcal{T}}
\providecommand{\cV}{\mathcal{V}}
\providecommand{\cX}{\mathcal{X}}
\providecommand{\cZ}{\mathcal{Z}}
\providecommand{\cY}{\mathcal{Y}}
\providecommand{\E}{\mathbb{E}}
\providecommand{\ind}{\mathbf{1}}

\section{The Static Theory: Representation Adequacy and One-Shot Certification}
\label{sec:static}

This chapter develops the first two layers of the theory: the \emph{static representation layer}, which defines decision-theoretic adequacy and decomposes its cost exactly (Section~\ref{sec:static:decomp}), and the \emph{one-shot certification layer}, which asks whether adequacy can be certified from internal transcripts alone and prices an external verification channel when it cannot (Sections~\ref{sec:static:impossibility}--\ref{sec:static:verify}). Everything here is static in two senses: transcript laws are fixed independently of the agent's decisions, and certification is a single purchase decision. The sequential problem---buying accuracy $\delta$ at minimum task cost over time---is taken up in Section~\ref{sec:model}, which builds on the threshold characterization of Theorem~\ref{thm:verify}.

\subsection{The Certification Framework}
\label{sec:static:framework}

\paragraph{Single-shot decision problem.}
Let the history space $H$ be finite, with elements $h \in H$ called histories (or states), and let the action set $\cA$ be finite. An \emph{environment} $E$ specifies a distribution $p_E$ on $H$ and a loss $\ell_E : H \times \cA \to [0,1]$ (and, in the sequential model of Section~\ref{sec:model}, a transition law). A (possibly randomized) decision rule is a map $\delta : H \to \Delta(\cA)$ into the simplex over $\cA$, with risk $R(\delta; p_E, \ell_E) := \sum_{h} p_E(h) \sum_a \delta(a \mid h)\, \ell_E(h,a)$; the \emph{Bayes risk} of observing the full history is
\begin{equation}
R^*_E(H) := \min_{\delta}\, R(\delta; p_E, \ell_E)
= \sum_{h \in H} p_E(h) \min_{a \in \cA} \ell_E(h,a),
\label{eq:static:riskH}
\end{equation}
where the minimum over all randomized rules is attained deterministically: for each $h$, the inner sum $\sum_a \delta(a \mid h)\,\ell_E(h,a)$ is a convex combination of the numbers $\{\ell_E(h,a)\}_{a \in \cA}$, hence bounded below by $\min_a \ell_E(h,a)$, with equality whenever $\delta(\cdot \mid h)$ is supported on $\arg\min_a \ell_E(h,a)$, which is nonempty since $\cA$ is finite. Bayes risk as a difficulty measure is standard \citep{degroot1970,blackwellgirshick1954}.

\paragraph{Representation and aliasing cells.}
A \emph{representation map} is a deterministic $f : H \to U$ into a representation space $U$; the \emph{aliasing cell} of $u \in U$ is
\begin{equation}
H_u := \{ h \in H : f(h) = u \}.
\label{eq:static:cell}
\end{equation}
The cells partition $H$; two histories in one cell are indistinguishable under the agent's representation (\emph{aliasing}). An \emph{$f$-based rule} $\tilde\delta : U \to \Delta(\cA)$ acts as $\delta(h) = \tilde\delta(f(h))$, constant within cells, and its Bayes risk is
\begin{equation}
R^*_E(U) := \min_{\tilde\delta : U \to \Delta(\cA)}\,
\sum_{h \in H} p_E(h)\, \E_{a \sim \tilde\delta(f(h))}\bigl[\ell_E(h,a)\bigr].
\label{eq:static:riskU}
\end{equation}
Since $\{\tilde\delta \circ f\} \subseteq \{\delta : H \to \Delta(\cA)\}$, one has $R^*_E(U) \ge R^*_E(H)$; the gap is the \emph{aliasing regret}, decomposed cell by cell by Theorem~\ref{thm:aliasing}.

\paragraph{The certification quintuple.}
The meta-decision problem ``can the agent certify the adequacy of its own representation'' is specified by the quintuple
\begin{equation}
\langle\ \cE,\ \cF,\ \cT,\ \cV,\ c\ \rangle .
\label{eq:static:quintuple}
\end{equation}
The \emph{environment class} $\cE$ collects the candidate environments; $|\cE| \ge 2$ with label disagreement is the premise of every unidentifiability result below (a singleton class makes adequacy directly computable, Proposition~\ref{prop:boundary}(b)(i)). The \emph{representation class} $\cF$ contains deterministic maps; $f \in \cF$ is fixed and known throughout---we certify a given representation, we do not select one. The \emph{internal transcript channel} $\cT$ is the cost-free information source: $t$ periods of interaction produce the internal transcript
\begin{equation}
T_t := (X_1, A_1, \dots, X_t, A_t),
\label{eq:static:transcript}
\end{equation}
with $X_s$ the period-$s$ observation and $A_s \in \cA$ the period-$s$ action, and $P_E^t$ the induced transcript law under environment $E$.
\begin{quote}
\emph{Key assumption (no loss feedback).} Except where explicitly stated, $T_t$ contains no feedback on the realized loss $\ell_E$. Every unidentifiability result in this chapter sources from this assumption.
\end{quote}
The \emph{external \textsc{verify} channel} $\cV$: the action \textsc{verify} costs $c \ge 0$ and returns a signal $Y \sim Q(\cdot \mid E)$ on a signal space $\cY$, with $Q$ known; \emph{perfect} \textsc{verify} ($Y$ reveals $E$, hence the adequacy label) and \emph{non-augmenting} \textsc{verify} ($Q(\cdot \mid E)$ does not depend on $E$) are the two recurring special cases. Theorem~\ref{thm:verify} additionally uses the horizon $T$ and the per-period aliasing regret $r > 0$.

\paragraph{The adequacy label.}
Certification tests a label, and the choice of label is a framework-level decision. This paper tests \emph{decision-theoretic adequacy},
\begin{equation}
\Adeq_E(f) := \ind\bigl[\, R^*_E(U) = R^*_E(H) \,\bigr]
\quad\Longleftrightarrow\quad \text{aliasing regret} = 0,
\label{eq:static:adeqdef}
\end{equation}
rather than the environment identity itself. Adequacy is risk-level, not information-level: $f$ may discard much about $h$ and remain adequate whenever the optimal attainable risk is unchanged; and it is task- and environment-dependent through both $p_E$ and $\ell_E$---there is no ``absolutely adequate'' representation. Certification is non-trivial only when the label $\Theta(E) := \Adeq_E(f)$ takes both values on $\cE$.

\paragraph{Label priors and aggregated mixtures.}
Certification is a binary testing problem over the label $\Theta(E) = \Adeq_E(f) \in \{0,1\}$, whatever the size of the environment class. Let $\cE$ be an arbitrary finite class with prior $q$; the prior induces the \emph{label prior} $\pi_i := q(\Theta = i)$ and the \emph{label-aggregated transcript mixtures}
\begin{equation}
\bar{P}_i^t \;:=\; P(T_t \in \cdot \mid \Theta = i)
\;=\; \sum_{E :\, \Theta(E) = i} q(E \mid \Theta = i)\, P_E^t,
\qquad i \in \{0,1\},
\label{eq:static:mixtures}
\end{equation}
through which every bound in this chapter is stated. The two-point class $\cE = \{E_0, E_1\}$ with disagreeing labels is the special case $|\cE| = 2$, where each mixture reduces to a single transcript law, $\bar{P}_i^t = P_{E_i}^t$. \emph{Equal label priors} $\pi_0 = \pi_1 = 1/2$ are a standing assumption wherever the closed form $(1-\TV)/2$ appears (Sections~\ref{sec:static:impossibility} and~\ref{sec:static:verify}); Theorem~\ref{thm:verify}(e)(iii) shows this assumption is indispensable to the \emph{shape} of the closed form, and gives the general-prior formula.

\paragraph{Relation to Blackwell sufficiency.}
The experiment induced by $f$ is a deterministic garbling of the identity experiment, so $f$ is less informative in Blackwell's order \citep{blackwell1951,blackwell1953,lecam1964}, consistent with $R^*_E(U) \ge R^*_E(H)$; but Blackwell sufficiency requires risk preservation for \emph{all} losses and priors, whereas $\Adeq_E(f)$ requires it only for the fixed pair $(p_E, \ell_E)$---the specialization that makes adequacy environment-dependent and certification non-trivial.

\subsection{Aliasing Regret Decomposition}
\label{sec:static:decomp}

This subsection states the definitional tool of the static layer---the cell-wise decomposition of aliasing regret, a standard Bayes-risk grouping identity under a partition (attribution below; no novelty claimed)---and fixes the cell-level source of the per-period regret $r$ that Theorem~\ref{thm:verify} prices.

\begin{theorem}[Aliasing regret decomposition]
\label{thm:aliasing}
For any environment $E$ and representation $f : H \to U$,
\begin{equation}
R^*_E(U) - R^*_E(H)
= \sum_{u \in U}\Biggl[ \min_{a \in \cA} \sum_{h \in H_u} p_E(h)\, \ell_E(h,a)
- \sum_{h \in H_u} p_E(h) \min_{a \in \cA} \ell_E(h,a) \Biggr].
\label{eq:static:decomp}
\end{equation}
\end{theorem}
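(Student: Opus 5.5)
The plan is to compute both Bayes risks in closed form, cell by cell, and subtract. For $R^*_E(H)$, the pointwise-minimization argument already given after \eqref{eq:static:riskH} yields $\sum_h p_E(h)\min_a \ell_E(h,a)$. Since the cells $\{H_u\}_{u\in U}$ partition $H$, this sum regroups as $\sum_{u}\sum_{h\in H_u} p_E(h)\min_a\ell_E(h,a)$. That is exactly the second term inside the brackets of \eqref{eq:static:decomp}.

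For $R^*_E(U)$, I will rewrite the objective in \eqref{eq:static:riskU} by grouping histories according to their cell. An $f$-based rule satisfies $\tilde\delta(f(h))=\tilde\delta(u)$ for every $h\in H_u$. Exchanging the finite sums therefore gives
\[
\sum_{h} p_E(h)\,\E_{a\sim\tilde\delta(f(h))}\ell_E(h,a)=\sum_{u\in U}\sum_{a\in\cA}\tilde\delta(a\mid u)\,L_u(a),
\qquad L_u(a):=\sum_{h\in H_u}p_E(h)\,\ell_E(h,a).
\]
The key observation is that the minimization over $\tilde\delta:U\to\Delta(\cA)$ decouples across $u$, because each $\tilde\delta(\cdot\mid u)$ can be chosen independently and enters only the $u$-th summand. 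For each $u$, the inner sum is a convex combination of the finitely many numbers $\{L_u(a)\}_a$. It is therefore minimized at $\min_a L_u(a)$, attained by a point mass on an element of $\arg\min_a L_u(a)$, which is nonempty since $\cA$ is finite. This is the same argument as for \eqref{eq:static:riskH}, applied with cells in place of histories. Hence $R^*_E(U)=\sum_u \min_a L_u(a)$.

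Subtracting the two expressions cell by cell gives \eqref{eq:static:decomp}. Two technical points need care. First, if $U$ is larger than $f(H)$, the cells $H_u$ for $u\notin f(H)$ are empty and contribute $0$ to both terms. Second, histories with $p_E(h)=0$ affect nothing. I don't expect a real obstacle. The only step requiring care is justifying that the minimum of a sum over independent per-cell choices equals the sum of the per-cell minima. This follows because the feasible set is a product $\prod_u\Delta(\cA)$ and the objective is separable across it.

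As a by-product, I will note that each bracket is nonnegative, since $\min_a\sum_{h\in H_u} p_E(h)\ell_E(h,a)\ge\sum_{h\in H_u} p_E(h)\min_a\ell_E(h,a)$. Consequently $\Adeq_E(f)=1$ if and only if every bracket vanishes. Equivalently, in each cell with positive mass, a single action is simultaneously optimal for all positive-probability histories in that cell.
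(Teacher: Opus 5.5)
Your proposal is correct and follows the paper's proof essentially step for step. The paper likewise expands $R^*_E(U)$ cell by cell, notes that the minimization over $f$-based rules separates across cells, with each cell's contribution a convex combination minimized by a point mass on a cell Bayes action, regroups $R^*_E(H)$ by cells, and subtracts term by term; your remark on nonnegativity and the equality condition matches the paper's Property~(i).
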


\emph{Attribution.} A representation map is a deterministic garbling of the identity experiment, so the inequality $R^*_E(U) \ge R^*_E(H)$ is a direct corollary of the risk-domination characterization of Blackwell's comparison of experiments \citep{blackwell1951,lecam1964}, and \eqref{eq:static:decomp} is its finite, cell-explicit form; the cellwise equality condition below is the single-period, risk-level analogue of value-preserving aggregation criteria in the state-abstraction literature \citep{givan2003,li2006}. We claim no novelty for the identity itself.

\begin{proof}[Proof of Theorem~\ref{thm:aliasing}]
We proceed in three steps.

\emph{Step 1 (cell expansion of $R^*_E(U)$).} An $f$-based rule acts within each cell $H_u$ through a single (possibly mixed) action $q(\cdot \mid u) := \tilde\delta(u) \in \Delta(\cA)$; grouping its expected loss by cell,
\begin{equation*}
\sum_{h \in H} p_E(h) \sum_{a} q(a \mid f(h))\, \ell_E(h,a)
= \sum_{u \in U} \sum_{a \in \cA} q(a \mid u)
\Biggl[ \sum_{h \in H_u} p_E(h)\, \ell_E(h,a) \Biggr].
\end{equation*}
The bracket for cell $u$ depends only on $q(\cdot \mid u)$, so the minimization over all $f$-based rules separates into independent cellwise minimizations. For each cell, the mixture risk $\sum_a q(a \mid u) \bigl[ \sum_{h \in H_u} p_E(h) \ell_E(h,a) \bigr]$ is a convex combination, with weights $q(\cdot \mid u)$, of the pure-action cell risks $\sum_{h \in H_u} p_E(h) \ell_E(h,a)$; a convex combination of real numbers is bounded below by their minimum, so the cell contribution is at least $\min_a \sum_{h \in H_u} p_E(h) \ell_E(h,a)$, with equality exactly when $q(\cdot \mid u)$ is supported on the cell's Bayes action set $\arg\min_a \sum_{h \in H_u} p_E(h) \ell_E(h,a)$ (nonempty, $\cA$ finite). Hence the minimum over the class of all---including randomized---$f$-based rules is attained by a deterministic rule, and
\begin{equation}
R^*_E(U) = \sum_{u \in U} \min_{a \in \cA} \sum_{h \in H_u} p_E(h)\, \ell_E(h,a).
\label{eq:static:RU}
\end{equation}

\emph{Step 2 (cell expansion of $R^*_E(H)$).} By the deterministic attainment of \eqref{eq:static:riskH}, $R^*_E(H) = \sum_{h \in H} p_E(h) \min_a \ell_E(h,a)$; regrouping this finite sum by cell,
\begin{equation}
R^*_E(H) = \sum_{u \in U} \sum_{h \in H_u} p_E(h) \min_{a \in \cA} \ell_E(h,a).
\label{eq:static:RH}
\end{equation}

\emph{Step 3 (subtraction).} Subtracting \eqref{eq:static:RH} from \eqref{eq:static:RU}, both sides being sums over $u \in U$, yields \eqref{eq:static:decomp} term by term.
\end{proof}

The proof uses only finiteness of $H$ and $\cA$ (all minima attained, all sums finite); the identity holds verbatim for any real-valued loss on finite spaces, the restriction $\ell_E \in [0,1]$ being used only later, when per-period regrets are compared across environments in Theorem~\ref{thm:verify}. Two properties are read off \eqref{eq:static:decomp} by comparing the two terms within each cell.

\paragraph{Property (i): nonnegativity and the equality condition.}
Fix a cell $u$ and a cell Bayes action $a^*_u \in \arg\min_a \sum_{h \in H_u} p_E(h) \ell_E(h,a)$. For every $h \in H_u$, $\min_a \ell_E(h,a) \le \ell_E(h,a^*_u)$ by the definition of the pointwise minimum; multiplying by $p_E(h)$ and summing over $h \in H_u$ shows each cell term is nonnegative, hence $R^*_E(U) \ge R^*_E(H)$. Equality $R^*_E(U) = R^*_E(H)$ holds if and only if every cell term vanishes, and a single cell term vanishes if and only if there exists an action $a^*$ with $a^* \in \arg\min_a \ell_E(h,a)$ for all $h \in H_u$ with $p_E(h) > 0$: if such an $a^*$ exists, the cell Bayes action achieves the pointwise minimum on all positive-mass histories and the two terms coincide; conversely, if no action is simultaneously Bayes on all positive-mass histories, then for every candidate $a'$ some positive-mass $h$ has $\ell_E(h,a') > \min_a \ell_E(h,a)$ and the cell term is strictly positive. Zero-mass histories neither contribute nor constrain, which is why adequacy depends on $p_E$ as well as $\ell_E$.

\paragraph{Property (ii): positive contribution $\Longleftrightarrow$ action conflict.}
Say cell $u$ exhibits an \emph{action conflict} if its positive-mass histories share no common Bayes action. By (i), the cell's contribution is strictly positive if and only if a conflict occurs. Together, (i) and (ii) give the cell-level characterization used throughout:
\begin{equation}
\Adeq_E(f) = 1 \;\Longleftrightarrow\; \text{within every cell, histories share a Bayes action } p_E\text{-a.s.}
\label{eq:static:cellchar}
\end{equation}
This reduces adequacy to a risk-level event---``no cell contains an action conflict''---and thereby reduces certification to testing (Section~\ref{sec:static:impossibility}).

\begin{proposition}[Binary closed form]
\label{prop:binary}
Suppose cell $H_u$ contains exactly two histories, written $h = 0, 1$; $\cA = \{0,1\}$; and $\ell_E(h,a) = \ind[a \neq h]$. With cell mass $m := \sum_{h \in H_u} p_E(h)$ and within-cell posterior $p := P(h = 1 \mid u)$, the cell's contribution to \eqref{eq:static:decomp} is
\begin{equation}
m \cdot \min(p,\, 1 - p),
\label{eq:static:binary}
\end{equation}
maximized at $p = 1/2$ with value $m/2$.
\end{proposition}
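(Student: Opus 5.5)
The plan is to substitute the binary data directly into the cell term of the decomposition \eqref{eq:static:decomp} in Theorem~\ref{thm:aliasing}. That cell term is
\[
\min_{a \in \{0,1\}} \sum_{h \in \{0,1\}} p_E(h)\,\ind[a \neq h] \;-\; \sum_{h \in \{0,1\}} p_E(h) \min_{a} \ind[a \neq h],
\]
and I will evaluate its two pieces separately.

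The second piece vanishes identically. For each history $h$, choosing $a = h$ gives loss $0$, so $\min_a \ind[a \neq h] = 0$. The full-history Bayes risk restricted to the cell is therefore zero.

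For the first piece, I will rewrite the masses in terms of the cell mass and the within-cell posterior. Since $m = p_E(0) + p_E(1)$ and $p = P(h=1 \mid u)$, we have $p_E(1) = m p$ and $p_E(0) = m(1-p)$. This is valid when $m > 0$. If $m = 0$, both pieces are zero and the claimed formula holds trivially whatever convention is adopted for $p$; I will note this degenerate case explicitly. Action $a = 0$ incurs loss only on $h = 1$, giving $m p$, and action $a = 1$ incurs $m(1-p)$. The minimum is $m \min(p, 1-p)$, which proves \eqref{eq:static:binary}.

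For the maximization, $\min(p, 1-p)$ is the minimum of an increasing and a decreasing linear function on $[0,1]$. These cross at $p = 1/2$, so the function is maximized there with value $1/2$, and the cell contribution is at most $m/2$. There is no real obstacle; the only care needed is the $m = 0$ convention and stating that the maximizer is unique, which follows from strict monotonicity on each side of $1/2$.
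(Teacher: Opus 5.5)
Your proposal is correct and follows essentially the same route as the paper: the full-history term vanishes because $a=h$ gives zero loss, the two pure actions give cell risks $mp$ and $m(1-p)$, and $\min(p,1-p)$ is the minimum of two affine functions that peaks at $p=1/2$. Your explicit treatment of the $m=0$ case, where $p$ is undefined and the cell term is trivially zero, is a small addition that the paper leaves implicit.
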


\begin{proof}[Proof of Proposition~\ref{prop:binary}]
For each history, $\min_a \ell_E(h,a) = 0$ (take $a = h$), so the second cell term in \eqref{eq:static:decomp} vanishes. For the first term, action $a = 1$ errs exactly on $h = 0$, giving cell risk $p_E(0) = m(1-p)$; action $a = 0$ errs exactly on $h = 1$, giving cell risk $p_E(1) = mp$. Hence the cell contribution is $\min\{mp, m(1-p)\} = m \min(p, 1-p)$. Finally, $\min(p, 1-p)$ is the pointwise minimum of two affine functions of $p \in [0,1]$, hence concave, strictly below $1/2$ on both sides of $p = 1/2$, and equal to $1/2$ there.
\end{proof}

\emph{Attribution.} Equation \eqref{eq:static:binary} is the standard binary Bayes-error form $L^* = \E[\min(\eta(X), 1-\eta(X))]$ \citep[Theorem~2.1]{dgl1996} restricted to one cell. The reading that matters later: aliasing is most destructive at $p = 1/2$, where any constant action errs on half the cell mass, and harmless at $p \in \{0,1\}$; the balanced cell is the canonical worst case in the constructions of Theorem~\ref{thm:verify}.

\begin{remark*}[Relation to state aggregation and to weighted forgetting]
The equality condition of property (i) overlaps with value-preserving state aggregation \citep{givan2003,li2006}; \eqref{eq:static:cellchar} is its single-period, risk-level statement, and we claim no novelty. The difference is usage: those works check the condition forward under a known model, whereas we fix it as the label $\Adeq_E(f)$ and ask whether it is verifiable from loss-free internal transcripts. The concurrent preprint of \citet{bennett2026forgetting} proves a cell-level aliasing decomposition of the same type as Theorem~\ref{thm:aliasing}: its $K\rho$ is the minimal margin-weighted deletion cost making every (representation, test) cell's optimal bet single-valued, and its own Remark~14 identifies $K\rho$ as a weighted Bayes error on the partition---consistent with our attribution of Theorem~\ref{thm:aliasing} as a standard grouping identity. The strict differences are fourfold: (i) \emph{loss convention}---normalized betting regret with margin weights there, unnormalized Bayes risk difference with $\ell \in [0,1]$ here; (ii) \emph{cells}---joint (representation $\times$ diagnostic test) cells there, representation cells $H_u$ here; (iii) \emph{object}---a policy-wise decomposition (realized regret $= K\rho +$ within-cell misprediction) there, a Bayes-risk difference (optimal $U$-rule minus optimal $H$-rule) here; (iv) \emph{goal}---that work connects weighted weakness to generalization probability and does not study certification; this paper defines the adequacy label from the decomposition and studies its certifiability and the value of external verification.
\end{remark*}

\subsection{One-Shot Impossibility of Internal Certification}
\label{sec:static:impossibility}

Can the agent determine the label $\Adeq_E(f)$ from $T_t$ alone? For an arbitrary finite environment class the answer is no, with the degree of impossibility controlled exactly by the total variation distance between the two label-aggregated transcript laws. We declare in advance: the theorem of this subsection is a direct application of Le Cam's two-point method and the Bayes error--TV identity \citep{tsybakov2009,lecam1986,dgl1996}; the only new element is the test label, $\Adeq_E(f)$ rather than the environment itself.

\paragraph{The certification problem.}
Let $\cE$ be an arbitrary finite environment class with prior $q$, $f$ fixed, and $\Theta(E) = \Adeq_E(f)$ taking both values on $\cE$; equal label priors $\pi_0 = \pi_1 = 1/2$ are assumed wherever the closed form $(1-\TV)/2$ appears. The data are governed by the label-aggregated mixtures $\bar{P}_0^t, \bar{P}_1^t$ of \eqref{eq:static:mixtures}---laws of observation--action sequences only, by the key no-loss-feedback assumption.

\begin{definition*}[Internal certification rule]
An \emph{internal certification rule} is any measurable $C : \cT^t \to \{0,1\}$ depending only on $T_t$; $C = 1$ certifies ``$f$ adequate'', $C = 0$ rejects. Randomized rules (extra internal randomization independent of environment and data) are allowed. The error events are $\{C = 1\}$ under $\Theta = 0$ (false certification) and $\{C = 0\}$ under $\Theta = 1$ (false rejection).
\end{definition*}

Structurally this is two-point testing between the mixtures $\bar{P}_0^t$ and $\bar{P}_1^t$, whose error bounds are given completely by Le Cam's method \citep{lecam1986,tsybakov2009}; the sole difference from the classical setup is the label: not the environment identity $E$ but $\Adeq_E(f)$, environments sharing a label being aggregated into one mixture. Conditioning on the binary label collapses any finite class to a two-hypothesis problem, so the classical closed form survives for every finite $\cE$---in contrast to multi-class environment identification, which no single pairwise TV governs (see the remark after Theorem~\ref{thm:verify}).

\begin{theorem}[TV lower bound for internal certification]
\label{thm:tvbound}
In the setup above, write $\TV(P,Q) := \sup_B |P(B) - Q(B)|$ over measurable events. For any internal certification rule $C$ (including randomized rules),
\begin{equation}
\max\bigl\{ \bar{P}_0^t(C = 1),\, \bar{P}_1^t(C = 0) \bigr\}
\;\ge\; \frac{1 - \TV(\bar{P}_0^t, \bar{P}_1^t)}{2}.
\label{eq:static:tvminimax}
\end{equation}
Under equal label priors $\pi_0 = \pi_1 = 1/2$, the Bayes error satisfies
\begin{equation}
\mathrm{err}(C) := \tfrac12\, \bar{P}_0^t(C{=}1) + \tfrac12\, \bar{P}_1^t(C{=}0)
\;\ge\; \frac{1 - \TV(\bar{P}_0^t, \bar{P}_1^t)}{2},
\label{eq:static:tvbayes}
\end{equation}
with equality attained by the likelihood-ratio test. In particular, if $\bar{P}_0^t = \bar{P}_1^t$, every internal certification rule has Bayes error exactly $\tfrac12$---no better than random guessing.
\end{theorem}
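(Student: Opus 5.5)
I will prove the Bayes-error bound \eqref{eq:static:tvbayes} first and then obtain the minimax bound \eqref{eq:static:tvminimax} from it. The minimax bound follows because a maximum of two numbers is at least their average.

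Fix a rule $C$, possibly randomized. I will reduce it to a test function $\phi(T_t) := \Pr(C=1 \mid T_t) \in [0,1]$. This is legitimate because the internal randomization is independent of both the environment and the data, so $\bar P_i^t(C=1) = \E_{\bar P_i^t}[\phi]$. Then
\[
2\,\mathrm{err}(C) = \E_{\bar P_0^t}[\phi] + \E_{\bar P_1^t}[1-\phi] = 1 - \bigl(\E_{\bar P_1^t}[\phi] - \E_{\bar P_0^t}[\phi]\bigr).
\]
I will then use the standard fact that $\sup_{0\le\phi\le1}|\E_P\phi - \E_Q\phi| = \TV(P,Q)$. To prove it, write both laws with densities $p,q$ with respect to $\mu = P+Q$. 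The difference $\int \phi\,(q-p)\,d\mu$ is bounded by $\int (q-p)_+\,d\mu = \sup_B (Q(B)-P(B)) \le \TV$. The bound is attained by $\phi = \ind\{q>p\}$.

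Substituting this into the identity gives $\mathrm{err}(C) \ge (1-\TV)/2$. Equality holds for the likelihood-ratio test $C = \ind\{d\bar P_1^t/d\bar P_0^t > 1\}$, which is exactly the maximizing indicator above; ties can be broken either way.

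For the minimax statement, $\max\{\bar P_0^t(C=1), \bar P_1^t(C=0)\} \ge \mathrm{err}(C) \ge (1-\TV)/2$. This is valid for every rule and does not use the prior. Finally, if $\bar P_0^t = \bar P_1^t$, then $\TV = 0$ and the bound becomes $1/2$. In that case every $\phi$ gives $\mathrm{err} = \tfrac12(\E\phi + 1 - \E\phi) = 1/2$ exactly, so equality holds for every rule, not merely as a lower bound.

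The main obstacle is technical rather than conceptual: making the randomized-rule reduction and the variational characterization of TV rigorous for general measurable transcript spaces. Working with densities relative to the dominating measure $\bar P_0^t + \bar P_1^t$ handles this cleanly. Since the transcript space is finite-dimensional, measurability of $\{q>p\}$ is immediate.
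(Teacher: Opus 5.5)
Your proposal is correct and is essentially the paper's argument: the identity $\bar P_0^t(C{=}1)+\bar P_1^t(C{=}0)=1-\bigl(\bar P_1^t(C{=}1)-\bar P_0^t(C{=}1)\bigr)\ge 1-\TV$, the step max $\ge$ average, and attainment by the test $\{\bar p_1>\bar p_0\}$. For that last step you need the equality $\int(q-p)_+\,d\mu=\TV$, which follows from $\int(q-p)\,d\mu=0$ or by taking complements and plays the role of the paper's Scheff\'e identity. The remaining differences are minor: you average the internal randomization into a test function $\phi$ instead of conditioning on each realization, and your direct computation $\mathrm{err}(C)=\tfrac12$ for every rule when $\bar P_0^t=\bar P_1^t$ is cleaner than the paper's Step~5, which only pairs the lower bound $\tfrac12$ with attainment by the constant rules.
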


\emph{Attribution.} The minimax bound \eqref{eq:static:tvminimax} is Theorem~2.2(i) of \citet{tsybakov2009}, resting on the Scheff\'e identity (their Lemma~2.1), applied here to the label-aggregated mixtures; the equal-prior identity also appears in \S3.2 of \citet{dgl1996}. The only contribution of this paper here is the label.

\begin{proof}[Proof of Theorem~\ref{thm:tvbound}]
Write $P_0 = \bar{P}_0^t$ and $P_1 = \bar{P}_1^t$. For a randomized rule, condition on each realization of its internal randomization: every inequality below holds pointwise on each realization, and taking expectation over the randomization preserves the inequalities; hence it suffices to treat deterministic $C$. Let $A = \{C = 1\}$.

\emph{Step 0 (conditioning on the label).} By the definition \eqref{eq:static:mixtures} and the tower property, $P(C{=}1 \mid \Theta{=}0) = \sum_{E : \Theta(E) = 0} q(E \mid \Theta{=}0)\, P_E^t(C{=}1) = \bar{P}_0^t(C{=}1)$, and likewise $P(C{=}0 \mid \Theta{=}1) = \bar{P}_1^t(C{=}0)$. Hence $\mathrm{err}(C)$ is exactly the equal-prior Bayes error of $C$ for the two-hypothesis problem $(P_0, P_1)$, whatever the size of $\cE$; no step below uses the cardinality of $\cE$.

\emph{Step 1 (sum of the two error probabilities).} Substituting $P_1(A^c) = 1 - P_1(A)$ and then the definition of $\TV$,
\begin{equation}
P_0(A) + P_1(A^c)
= 1 - \bigl(P_1(A) - P_0(A)\bigr)
\ge 1 - \sup_B \bigl|P_1(B) - P_0(B)\bigr|
= 1 - \TV(P_0, P_1).
\label{eq:static:sum}
\end{equation}

\emph{Step 2 (minimax bound).} The maximum of two numbers is at least their average, so \eqref{eq:static:sum} gives $\max\{P_0(A), P_1(A^c)\} \ge \bigl[P_0(A) + P_1(A^c)\bigr]/2 \ge (1-\TV)/2$, which is \eqref{eq:static:tvminimax}.

\emph{Step 3 (Bayes bound).} Under equal label priors, $\mathrm{err}(C) = \tfrac12 \bigl[ P_0(A) + P_1(A^c) \bigr] \ge (1-\TV)/2$ by \eqref{eq:static:sum}.

\emph{Step 4 (achievability).} Let $p_0, p_1$ be densities of $P_0, P_1$ with respect to a common dominating measure $\mu$ (one may take $\mu = (P_0 + P_1)/2$), and consider the likelihood-ratio test $A^* = \{p_1 > p_0\}$. Writing out its Bayes error pointwise, the integrand equals $p_0$ on $A^*$ and $p_1$ on $A^{*c}$, and in both cases equals $\min(p_0, p_1)$:
\begin{equation*}
\mathrm{err}(A^*)
= \tfrac12 \int \Bigl[ p_0\, \ind\{p_1 > p_0\} + p_1\, \ind\{p_1 \le p_0\} \Bigr] d\mu
= \tfrac12 \int \min(p_0, p_1)\, d\mu .
\end{equation*}
Now substitute the Scheff\'e identity $\TV(P_0,P_1) = \tfrac12 \int |p_0 - p_1|\, d\mu = 1 - \int \min(p_0, p_1)\, d\mu$ \citep[Lemma~2.1]{tsybakov2009}, whose second equality follows from the pointwise identity $|p_0 - p_1| = p_0 + p_1 - 2\min(p_0,p_1)$ and $\int (p_0 + p_1)\, d\mu = 2$: hence $\mathrm{err}(A^*) = (1-\TV)/2$, attaining the lower bound of Step~3, which applies to every rule. The minimal equal-prior Bayes error is therefore exactly $(1-\TV)/2$.

\emph{Step 5 (indistinguishable case).} If $P_0 = P_1$ then $\TV = 0$, so every rule has Bayes error at least $\tfrac12$ by Step~3; the constant rules $C \equiv 0$ and $C \equiv 1$ each attain exactly $\tfrac12$ (one error probability is $1$, the other $0$). Hence every rule has Bayes error exactly $\tfrac12$: no rule beats random guessing.
\end{proof}

\begin{corollary*}[Internal unidentifiability]
If $\bar{P}_0^t = \bar{P}_1^t$, then for every $t$ and every internal certification rule $C$, the equal-label-prior Bayes error is exactly $\tfrac12$: internal transcripts cannot statistically separate ``$f$ adequate'' from ``$f$ inadequate''.
\end{corollary*}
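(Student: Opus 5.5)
The corollary is a direct specialization of Theorem~\ref{thm:tvbound}, so the plan is to route the hypothesis $\bar{P}_0^t=\bar{P}_1^t$ through its Step~0 and Step~5 rather than reprove anything.

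\emph{Step 1 (zero distance).} I will first note that $\bar{P}_0^t=\bar{P}_1^t$ forces $\TV(\bar{P}_0^t,\bar{P}_1^t)=\sup_B|\bar{P}_0^t(B)-\bar{P}_1^t(B)|=0$. Substituting into \eqref{eq:static:tvbayes} then gives $\mathrm{err}(C)\ge \tfrac12$ for every internal certification rule $C$, randomized ones included, since Theorem~\ref{thm:tvbound} already covers those.

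\emph{Step 2 (every rule has error exactly $\tfrac12$).} The theorem's proof exhibits constant rules attaining $\tfrac12$, but the corollary claims equality for \emph{every} rule, so I need the matching upper bound. Take a deterministic rule, set $A=\{C=1\}$, and let $P$ denote the common law. Then
\[
\mathrm{err}(C)=\tfrac12 P(A)+\tfrac12 P(A^c)=\tfrac12 .
\]
For a randomized rule, I will condition on the internal randomization, which is independent of environment and data. Each realization then gives error exactly $\tfrac12$, and averaging preserves the value. This argument is not a separate bound at all: it is Step~1 of the theorem's proof, \eqref{eq:static:sum}, in which the inequality becomes an identity when $P_0=P_1$.

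\emph{Step 3 (uniformity in $t$).} The statement quantifies over every $t$, so I will read the hypothesis as holding at each horizon $t$ under consideration and apply Steps~1--2 separately at each $t$.

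\emph{Step 4 (interpretation).} Finally, by Step~0 of the theorem's proof, $\mathrm{err}(C)$ is exactly the equal-label-prior Bayes error for the label $\Theta=\Adeq_E(f)$, whatever $|\cE|$ is. An error of exactly $\tfrac12$ equals that of a fair coin, which is the sense in which internal transcripts cannot separate ``adequate'' from ``inadequate''.

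The argument has no real obstacle. The one point needing care is the randomized case in Step~2: the identity holds pointwise in the internal randomness only because that randomness is independent of $\Theta$.
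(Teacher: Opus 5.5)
Your proof is correct and follows the paper's route: the corollary is the $\TV=0$ case of Theorem~\ref{thm:tvbound}, which Step~5 of that proof treats. Your version is also more complete. Step~5 infers that \emph{every} rule has error exactly $\tfrac12$ from the lower bound plus the constant rules attaining $\tfrac12$, which only identifies the minimum. Your observation that \eqref{eq:static:sum} becomes the identity $P(A)+P(A^c)=1$ when $P_0=P_1$ is what actually gives equality for every rule, and it makes your Step~1 lower bound redundant.
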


This case is non-vacuous, because transcript laws depend only on observation--action sequences while $\Adeq_E(f)$ also depends on losses. \emph{Explicit construction} (the two-point special case $|\cE| = 2$): let $H = \{h_0, h_1\}$ be aliased into a single cell by a constant $f$, with the cell index as the only observation and identical policies in both environments, so the transcript laws coincide, $P_{E_0}^t = P_{E_1}^t$ for every $t$; let $E_0$ carry the loss $\ell(h,a) = \ind[a \neq h]$ (the cell contains an action conflict, so $\Adeq_{E_0}(f) = 0$ by property (ii) of Theorem~\ref{thm:aliasing}) and $E_1$ carry $\ell(h,a) = \ind[a \neq 0]$ (action $a = 0$ is Bayes on both histories, so $\Adeq_{E_1}(f) = 1$). Then $\Adeq_{E_0}(f) \neq \Adeq_{E_1}(f)$ while $\bar{P}_0^t = \bar{P}_1^t$: adequacy is unidentifiable from internal data in the strong sense of the corollary.

\paragraph{Information-theoretic form.}
Under equal priors, the same distinguishability has an exact information-theoretic statement. With base-2 logarithms (units of bits), the Jensen--Shannon divergence is \citep{lin1991}
\begin{equation}
\JS(P,Q) := \tfrac12\, \KL\bigl(P \,\big\|\, \tfrac{P+Q}{2}\bigr)
+ \tfrac12\, \KL\bigl(Q \,\big\|\, \tfrac{P+Q}{2}\bigr),
\label{eq:static:jsdef}
\end{equation}
where $\KL(P \| Q) := \int p \log_2(p/q)\, d\mu$ for densities $p, q$ against a common dominating $\mu$.

\begin{proposition}[Mutual-information form]
\label{prop:js}
Under equal label priors $\pi_0 = \pi_1 = \tfrac12$,
\begin{equation}
I(\Theta;\, T_t) = \JS(\bar{P}_0^t,\, \bar{P}_1^t).
\label{eq:static:mijs}
\end{equation}
Consequently $\bar{P}_0^t = \bar{P}_1^t$ implies $I(\Theta; T_t) = 0$: the internal transcript carries no information whatsoever about adequacy.
\end{proposition}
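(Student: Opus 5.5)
The plan is to use the standard decomposition of mutual information as an expected KL divergence from the marginal, $I(\Theta;T_t)=\sum_{i}\pi_i\,\KL\bigl(P_{T_t\mid\Theta=i}\,\big\|\,P_{T_t}\bigr)$, and then identify each object with the label-aggregated mixtures. The argument has three short steps, mirroring Step~0 of the proof of Theorem~\ref{thm:tvbound}.

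\emph{Step 1: conditional laws are the mixtures.} By the tower property over $E$ given $\Theta$, exactly as in Step~0 there, the conditional law of $T_t$ given $\Theta=i$ is $\bar P_i^t$ from \eqref{eq:static:mixtures}. This holds for any finite class $\cE$: environments within a label are absorbed into the mixture, so no step below depends on $|\cE|$.

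\emph{Step 2: the marginal is the midpoint.} With $\pi_0=\pi_1=\tfrac12$, the marginal law of $T_t$ is $\tfrac12\bar P_0^t+\tfrac12\bar P_1^t$. Fix densities $p_0,p_1$ with respect to the dominating measure $\mu=(\bar P_0^t+\bar P_1^t)/2$. Write $I(\Theta;T_t)=\sum_i\pi_i\int p_i\log_2\bigl(p_i/m\bigr)\,d\mu$ with $m=(p_0+p_1)/2$. This is the definition $I=\KL(P_{\Theta,T_t}\,\|\,P_\Theta\otimes P_{T_t})$ written out using the factorization $P_{\Theta,T_t}(i,\cdot)=\pi_i\,\bar P_i^t$. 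Substituting $\pi_i=\tfrac12$ reproduces \eqref{eq:static:jsdef} term by term, in the same base-2 units.

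\emph{Step 3: the consequence.} If $\bar P_0^t=\bar P_1^t$, both KL terms vanish because each argument equals the midpoint. Hence $\JS=0$, and therefore $I(\Theta;T_t)=0$.

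The only real care needed is measure-theoretic. Randomized rules play no role here, since $I$ concerns $T_t$ itself. The KL terms are finite because $p_i\le 2m$ $\mu$-a.e., which gives $\log_2(p_i/m)\le1$ and rules out $\infty-\infty$ issues; the convention $0\log 0=0$ handles the sets where $p_i=0$. I expect the main obstacle is stating the KL-decomposition of mutual information for a general (possibly non-discrete) transcript space. I would handle it by working with densities against $\mu$ throughout, which is legitimate because $\mu$ dominates both conditional laws and the joint is dominated by counting measure on $\{0,1\}$ times $\mu$.
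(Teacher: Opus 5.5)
Your proof is correct, but it follows a different route from the paper's.

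\emph{Your route.} You use the decomposition $I(\Theta;T_t)=\sum_i\pi_i\,\KL\bigl(\bar P_i^t\,\|\,P_{T_t}\bigr)$, which is mutual information as the $\KL$ from the joint law to the product of marginals. With $\pi_i=\tfrac12$ the marginal is the midpoint, so the two terms are literally the two halves of the $\JS$ definition.

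\emph{The paper's route.} The paper works through the posterior $\eta=p_1/(p_0+p_1)$. It rewrites the densities against $M=(\bar P_0^t+\bar P_1^t)/2$ as $2(1-\eta)$ and $2\eta$, and expands $\JS$ into $\int[1-h_2(\eta)]\,dM$. It then evaluates $I(\Theta;T_t)=H(\Theta)-\E\,h_2(\eta)=1-\int h_2(\eta)\,dM$ separately via the entropy representation, and compares the two expressions.

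\emph{What each buys.} Your argument is shorter and essentially definitional, which matches the paper's own remark that the proposition is a rewriting of the $\JS$ definition. It also extends verbatim to unequal label priors, where it yields the $\pi$-weighted Jensen--Shannon divergence. Your bound $p_i\le 2m$ is a clean way to exclude $\infty-\infty$; note that with your choice $\mu=M$ one simply has $m\equiv1$ $\mu$-a.e.

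The paper's argument instead expresses $I$ as an expectation of a function of the posterior, $1-\E_M h_2(\eta)$. This parallels the posterior form $\E_M\min(\eta,1-\eta)=(1-\TV)/2$ of the Bayes error in Theorem~\ref{thm:tvbound}. The paper also records the converse, that $\JS=0$ only when $\bar P_0^t=\bar P_1^t$. You prove only the stated direction, which is all the proposition asserts.
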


\emph{Attribution.} Proposition~\ref{prop:js} is a direct rewriting of the JS-divergence definition \citep{lin1991} for an equal-prior two-component mixture, not a new information-theoretic principle; it makes explicit that certification at $\bar{P}_0^t = \bar{P}_1^t$ is information-theoretically impossible ($I(\Theta; T_t) = 0$), not merely worst-case hard.

\begin{proof}[Proof of Proposition~\ref{prop:js}]
Write $M = (\bar{P}_0^t + \bar{P}_1^t)/2$, let $p_0, p_1$ be the densities of $\bar{P}_0^t, \bar{P}_1^t$ against a dominating $\mu$, and set $\eta(t) := p_1(t) / (p_0(t) + p_1(t))$. Under the equal label prior, $\eta(t)$ is exactly the posterior $P(\Theta = 1 \mid T_t = t)$, and the Radon--Nikodym derivatives against $M$ are $d\bar{P}_0^t / dM = 2(1-\eta)$ and $d\bar{P}_1^t / dM = 2\eta$. Substituting into \eqref{eq:static:jsdef} and expanding $\log_2(2u) = 1 + \log_2 u$ term by term,
\begin{equation}
\JS(\bar{P}_0^t, \bar{P}_1^t)
= \int \Bigl[ (1-\eta) \log_2 \bigl(2(1-\eta)\bigr) + \eta \log_2 (2\eta) \Bigr] dM
= \int \bigl[ 1 - h_2(\eta) \bigr] dM,
\label{eq:static:jsexpand}
\end{equation}
where $h_2(u) = -u \log_2 u - (1-u) \log_2 (1-u)$ is the binary entropy function and the last step uses $(1-\eta) + \eta = 1$. On the other hand, by the entropy representation of mutual information, the equal label prior $H(\Theta) = h_2(\tfrac12) = 1$, and the marginal law of the transcript being $M$,
\begin{equation}
I(\Theta; T_t)
= H(\Theta) - \E_{t \sim M}\, h_2\bigl( P(\Theta{=}1 \mid T_t{=}t) \bigr)
= 1 - \int h_2(\eta)\, dM .
\label{eq:static:mi}
\end{equation}
Comparing \eqref{eq:static:mi} with \eqref{eq:static:jsexpand} yields \eqref{eq:static:mijs}. Finally, $\JS(P,Q) = 0$ if and only if both KL terms vanish; each KL divergence is nonnegative and vanishes only at equality of its arguments (Jensen's inequality applied to the strictly convex $-\log_2$), so $\JS(\bar{P}_0^t, \bar{P}_1^t) = 0$ if and only if $\bar{P}_0^t = \bar{P}_1^t$, in which case $I(\Theta; T_t) = 0$.
\end{proof}

\subsection{The Honest Boundary: A Counterexample and Possibility Conditions}
\label{sec:static:boundary}

Theorem~\ref{thm:tvbound} is a binary testing statement about two observed, label-conditional transcript laws. Its theorem-level requirements are that both adequacy labels occur with positive prior mass and that the certifier is restricted to the stated transcript; the bound then holds regardless of which variables the transcript contains. The loss-free, aliased construction above is an important \emph{mechanism} for making the two laws coincide: adequacy depends on losses, while the transcript need not reveal them. It is not an additional hypothesis of the total-variation inequality. The counterexample of part (a) is load-bearing: an early stage of this research program conjectured a closure claim for all finite-state systems, and the counterexample below refutes it in a precise form. We report the refutation in the body, not an appendix, because it pins down the actual scope of the claim.

\begin{proposition}[Boundary and counterexample package]
\label{prop:boundary}
\textbf{(a) (Counterexample A.)} There exists a finite Markov decision process whose optimal actions change the world state with probability one every period, yet whose state representation is decision-theoretically adequate and internally certifiable; hence the closure claim ``for all finite-state systems, if the agent's actions can change the world state, then the system cannot internally certify the decision-theoretic adequacy of its state representation'' is false.
\textbf{(b) (Sufficient conditions.)} Adequacy is certifiable or directly computable whenever any one of the following holds: (i) the environment class is a singleton (the model is known); (ii) the transcript carries loss feedback, with structure permitting counterfactual comparison; (iii) the representation is aliasing-free ($f$ bijective, or the state fully observable).
\textbf{(c) (Belief-state note.)} In partially observable environments the belief state is a sufficient information state for optimal control \citep{astrom1965,kaelbling1998}; representation inadequacy can therefore arise only when the agent's representation is coarser than the belief state.
\end{proposition}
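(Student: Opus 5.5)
The plan is to prove the three parts separately. Part (a) is an explicit construction checked against the cell characterization \eqref{eq:static:cellchar}. Part (b) is three short reductions to the definition of $\Adeq_E(f)$. Part (c) is a remark resting on the classical belief-state sufficiency theorem.

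\emph{Part (a).} I would build the smallest possible finite MDP. Take two world states $h\in\{0,1\}$ and actions $\cA=\{0,1\}$. The transition is deterministic, $h_{t+1}=1-h_t$ under every action, so whatever action is taken, in particular the optimal one, the state changes with probability one each period. Take the loss $\ell(h,a)=\ind[a\neq h]$ and let the stationary law $p_E$ be uniform on $\{0,1\}$.

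Let $f$ be the identity on states, which is aliasing-free. Every cell $H_u$ is then a singleton, so every cell term in \eqref{eq:static:decomp} vanishes by Theorem~\ref{thm:aliasing}, and $\Adeq_E(f)=1$.

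For internal certifiability, the observation is the state itself. I would choose the environment class so that every member shares this transition and loss structure, for instance a class where the loss varies with $h$ but always has a unique per-state minimizer. Then the label is identically $1$ on $\cE$, and the constant rule $C\equiv1$ certifies with zero error. A singleton class works just as well.

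This single instance already falsifies the universally quantified closure claim: its hypothesis (actions change the state) holds, while its conclusion (internal certification is impossible) fails. The main care point is to pin down what ``internally certifiable'' means here. I will state it as: some internal rule has zero error for every $E\in\cE$, which is trivially achieved when the label is constant on $\cE$.

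\emph{Part (b).}
\begin{itemize}
\item[(i)] If $\cE=\{E\}$, then $p_E$ and $\ell_E$ are known, so $R^*_E(U)$ and $R^*_E(H)$ are computed exactly by \eqref{eq:static:RU} and \eqref{eq:static:RH}, and the label follows.
\item[(iii)] If $f$ is injective, all cells are singletons and every cell term of \eqref{eq:static:decomp} is zero, so $\Adeq_E(f)=1$ for every $E$, independent of the environment.
\item[(ii)] Here I would make the hypothesis precise as follows: the transcript's loss feedback, together with counterfactual structure, lets the agent consistently estimate $\ell_E(h,a)$ for all actions on positive-mass histories, as well as $p_E$. The plug-in version of \eqref{eq:static:decomp} then converges. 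Since $\cE$ is finite, the aliasing regret takes finitely many values, so there is a positive gap between $0$ and the smallest positive value. Thresholding at half that gap gives a rule whose error vanishes as $t\to\infty$.
\end{itemize}
Part (ii) is the step where the statement is informal, and I expect it to be the main obstacle. My proof will present it as a sufficient condition under the explicit assumption that the losses are identifiable from the feedback.

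\emph{Part (c).} I will cite \citet{astrom1965,kaelbling1998}: the belief state $b$ is a sufficient statistic for optimal control, so $R^*$ computed on $b(H)$ equals $R^*(H)$. If $f$ is at least as fine as $b$, meaning $b=g\circ f$ for some $g$, then $f$-based rules include all $b$-based rules. This gives $R^*_E(U)\le R^*_E(b)=R^*_E(H)$, and combined with the reverse inequality from Theorem~\ref{thm:aliasing}, $f$ is adequate. Inadequacy therefore requires $f$ to fail to refine $b$, which is what ``coarser than the belief state'' means here.
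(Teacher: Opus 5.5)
Your parts (b) and (c) follow the paper's reasoning; your (b)(ii) and (c) are more explicit than the paper, which treats (ii) informally and (c) by citation. Part (a), however, has a genuine gap. In your MDP the transition is $h_{t+1}=1-h_t$ \emph{under every action}, so the actions have no influence on the world state at all. The state flips on its own and would flip identically under any policy.

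The closure claim you are refuting has the antecedent ``the agent's actions can change the world state,'' and the proposition requires the \emph{optimal actions} to change the state. In your instance no action makes any difference to the next state, so under the natural causal reading the antecedent is false. The instance then makes the implication vacuously true for that system rather than false, and a defender of the conjecture can say your system is not one it is about. It is true that $\Pr(h_{t+1}\ne h_t)=1$ holds along the optimal trajectory. But that is the right witness only when the change is actually produced by the action, which is what the paper's construction secures.

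The fix is to make the dynamics action-controlled, as the paper does: $W_{t+1}=1-A_t$, so the next state is fully determined by the current action. This adds one step you were able to skip: because actions now affect future states, you must check optimality sequentially, not period by period.

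\begin{itemize}
\item Per-period loss is nonnegative, and the policy $A_t=W_t$ incurs loss $0$ every period, so it is optimal. The paper phrases this as $V^*\equiv0$ solving $V^*(w)=\min_a\{\ind[a\ne w]+V^*(1-a)\}$.
\item Any optimal policy must therefore play $A_t=W_t$ almost surely. Then $W_{t+1}=1-W_t$, so the optimal actions change the state with probability one every period.
\end{itemize}

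The rest of your argument carries over unchanged: identity $f$, singleton cells, $\Adeq_E(f)=1$ by Theorem~\ref{thm:aliasing}, and certification either by direct computation in a singleton class or by the constant rule when the label is constant on $\cE$.
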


\paragraph{Proof of (a): Counterexample A, full analysis.}
\emph{Construction.} Consider the finite MDP with world state $W_t \in \{0,1\}$ fully observed ($Z_t = W_t$), action set $\cA = \{0,1\}$, loss
\begin{equation}
\ell(W, a) = \ind[a \neq W],
\label{eq:static:celoss}
\end{equation}
and transition law
\begin{equation}
W_{t+1} = 1 - A_t,
\label{eq:static:cetrans}
\end{equation}
so the next world state is fully determined by the current action, independently of the current state. The representation $f$ is the identity map, and the environment class $\cE$ is a singleton: the model \eqref{eq:static:celoss}--\eqref{eq:static:cetrans} is fully known to the agent. The value-iteration argument below follows the standard treatment of finite MDPs in \citet{suttonbarto2018}.

\emph{Analysis.} The Bellman optimality equation for this system is
\begin{equation}
V^*(w) = \min_{a \in \{0,1\}} \Bigl\{ \ind[a \neq w] + V^*(1-a) \Bigr\},
\qquad w \in \{0,1\}.
\label{eq:static:bellman}
\end{equation}
Substituting $V^* \equiv 0$ into the right-hand side gives $\min_a \ind[a \neq w] = 0$, matching the left-hand side, so $V^* \equiv 0$ solves \eqref{eq:static:bellman}. To confirm optimality without appeal to uniqueness of the Bellman solution: the per-period loss is nonnegative under every policy, so the expected total loss of any policy is bounded below by $0$; the policy $a^*_t = W_t$ attains loss $0$ in every period (its action always equals the current state), hence it is optimal and the optimal value function is identically zero.

Along the optimal trajectory $a^*_t = W_t$, the transition law \eqref{eq:static:cetrans} gives $W_{t+1} = 1 - A_t = 1 - W_t$, and therefore
\begin{equation}
P\bigl(W_{t+1} \neq W_t\bigr) = 1
\label{eq:static:flip}
\end{equation}
in every period: the agent's optimal action changes the world state with probability one. Meanwhile, every aliasing cell of the identity $f$ is a singleton, so no cell contains an action conflict; by Theorem~\ref{thm:aliasing} every cell term vanishes, $R^*_E(U) = R^*_E(H)$, i.e.\ $\Adeq_E(f) = 1$. Moreover, since the environment class is a singleton and the model is known, the agent computes $\Adeq_E(f)$ directly---both Bayes risks are explicit optimizations over known objects---without consulting any transcript statistics. Internal certification is therefore trivially feasible.

\emph{The falsified claim, stated precisely.}
\begin{quote}
\textbf{Claim (falsified).} For all finite-state systems: if the agent's actions can change the world state, then the system cannot internally certify the decision-theoretic adequacy of its state representation.
\end{quote}
Counterexample A satisfies the antecedent---by \eqref{eq:static:flip}, the optimal policy changes the world state with probability one every period---and violates the consequent---$\Adeq_E(f) = 1$, computable internally. A closure claim for all finite-state systems whose antecedent holds and whose consequent fails on a single instance is false; the historical heuristic conjecture referred to as the ``state-closure paradox'' is refuted in precisely this form. The refutation does not touch Theorem~\ref{thm:tvbound}: a singleton, known environment class induces no label-discrimination problem at all. It also lies outside the loss-free aliasing construction used above, because $f$ is the identity. Any meaningful replacement for the rejected universal claim must state an explicit observational indistinguishability premise, not merely that actions can alter the world.

\paragraph{Proof of (b): three sufficient conditions for certifiability.}
Each condition describes a regime in which adequacy is computable or the observed transcript can become informative. They are useful exits from the motivating loss-free aliasing construction, but they are not logical negations of the hypotheses of Theorem~\ref{thm:tvbound}.
\emph{(i) Singleton class (known model).} If $|\cE| = 1$, then $p_E$, $\ell_E$ (and any transition law) are known, and $\Adeq_E(f) = \ind[R^*_E(U) = R^*_E(H)]$ is directly computable: both Bayes risks are explicit optimizations over known objects. No hypotheses need to be distinguished, the two-point premise of Theorem~\ref{thm:tvbound} fails, and the certification problem does not arise; Counterexample A falls into the possibility region exactly along this route.
\emph{(ii) Loss feedback.} If the internal transcript contains realized losses, the realized loss of the $f$-based rule is an observable sequence, and under additional structure permitting counterfactual comparison---the agent can estimate, within each cell, the losses of the actions it did not take---adequacy testing can be converted into online estimation of the aliasing regret. Such feedback may make the two label-conditional laws distinguishable, but it does not do so automatically; Theorem~\ref{thm:tvbound} still applies to the resulting observed laws. By way of contrast, windowed adaptive methods for concept drift such as the FLORA family of \citet{widmer1996} presuppose observable prediction-error signals to detect environmental change; their feasibility rests precisely on loss feedback and is orthogonal to the no-loss-feedback assumption of this paper.
\emph{(iii) Aliasing-free representation.} If $f$ is a bijection---in particular, if the state is fully observable and $f$ is the identity---every cell is a singleton, no cell contains an action conflict, and by Theorem~\ref{thm:aliasing} the aliasing regret vanishes identically: $\Adeq_E(f) \equiv 1$ holds a priori as a definitional fact and requires no certification.

These examples are not a converse characterization. For Theorem~\ref{thm:tvbound}, a non-vacuous certification problem needs both labels to occur and needs the two label-conditional transcript laws; the numerical lower bound is determined entirely by their total variation. A multi-environment class with label disagreement, a loss-free internal record, and an aliased representation describe a central ESD route to indistinguishability, but loss feedback may still be insufficient without counterfactual structure, and aliasing alone need not make the two transcript laws close. Conversely, any feedback mechanism that changes those laws simply changes the total variation in the theorem. \hfill $\square$

\paragraph{Proof of (c): belief-state note.}
\citet{astrom1965} proved that in a partially observable Markov process the conditional distribution of the state given the action--observation history---the \emph{belief state}---is a sufficient information state, so the partially observable optimal control problem can be reformulated on the space of these conditional distributions; \citet{kaelbling1998} formalized the resulting belief-MDP reduction for planning under partial observability, and \citet{krishnamurthy2016} gives a comprehensive modern treatment including controlled sensing. It follows that representation inadequacy can occur only when the representation the agent actually maintains is strictly \emph{coarser than the belief state}---for instance, when a memory system applies lossy semantic compression on top of the history. The certification problem of this paper is therefore not a consequence of partial observability itself, but of an additional lossy representation layer below the belief state; this delimits the problem domain precisely. \hfill $\square$

\subsection{The Value of One-Shot External Verification}
\label{sec:static:verify}

We now state the main theorem of the static theory: on an arbitrary finite environment class, under equal label priors and symmetric alignment, the value of an external \textsc{verify} action is characterized by a TV-weighted closed-form threshold. The claim is scoped in advance to the combination ``adequacy label $+$ channel augmentation $+$ threshold characterization''; every component tool is standard and is credited where used.

\paragraph{Setup.}
$\cE$ is an arbitrary finite environment class with prior $q$, $f$ fixed, and the label $\Theta(E) = \Adeq_E(f)$ taking both values; the induced label prior is equal, $\pi_0 = \pi_1 = 1/2$ (standing assumption; item (e)(iii) of the theorem shows it is indispensable to the closed-form shape). Transcript laws enter through the label-aggregated mixtures $\bar{P}_0^t, \bar{P}_1^t$ of \eqref{eq:static:mixtures}; the two-point class is the special case $|\cE| = 2$. By property (ii) of Theorem~\ref{thm:aliasing}, in every inadequate environment ($\Theta = 0$) the $f$-rule incurs a per-period aliasing regret $r > 0$, whose cell-level source is the decomposition \eqref{eq:static:decomp}; in every adequate environment ($\Theta = 1$) it incurs per-period regret $0$. The transcript laws $P_E^t$ are fixed and independent of the switching decision (\emph{transcript exogeneity}); certification is performed once, after which the policy is fixed for the $T$-period horizon (\emph{one-shot certification}).

\begin{assumption*}[A1 (symmetric alignment)]
Certification drives a policy switch: $C = $ ``adequate'' $\Rightarrow$ use the $f$-rule; $C = $ ``inadequate'' $\Rightarrow$ use a fallback rule. Matched rule--label pairs have per-period regret $0$ (the $f$-rule under any environment with $\Theta = 1$, by $\Adeq_E(f) = 1$; the fallback under any environment with $\Theta = 0$, by construction); mismatched pairs incur per-period regret exactly $r > 0$, equal in both directions: misuse of the $f$-rule in an inadequate environment costs the same $r$ per period as misuse of the fallback in an adequate environment. ``Optimal'' in item (a) below means optimal within the A1 certification--switching scheme class.
\end{assumption*}

The fallback rule of A1 is \emph{exogenously given}, known to be optimal in inadequate environments; its information source and acquisition cost lie outside the model (for instance, supplied by a designer or obtained through an independent fully observable channel). Theorem~\ref{thm:verify} prices the \emph{certification} of the label, not the acquisition of the fallback. Asymmetry is why A1 must be symmetric: with mismatch costs $r_0 \neq r_1$ in the two directions, the expected total regret of any certifier is
\begin{equation}
R(C) = \frac{T}{2}\Bigl[\, r_0\, \bar{P}_1^t(C{=}0) + r_1\, \bar{P}_0^t(C{=}1) \Bigr],
\label{eq:static:asym}
\end{equation}
a weighted testing problem whose optimum $\frac{T}{2} \int \min\bigl(r_0\, d\bar{P}_1^t,\, r_1\, d\bar{P}_0^t\bigr)$, attained by the weighted likelihood-ratio test $A^* = \{r_1 \bar{p}_0 < r_0 \bar{p}_1\}$ (with $A=\{C=1\}$; equality may be assigned either way), depends on the full likelihood-ratio geometry and the ratio $r_0/r_1$, not on the scalar $\TV$ alone; the closed form $(1-\TV)/2$ and the threshold of item (b) then fail.

\paragraph{The \textsc{verify} channel.}
\textsc{verify} costs $c \ge 0$ and returns $Y \sim Q(\cdot \mid E)$: \emph{perfect} ($Y$ reveals the label) or \emph{noisy}, in which case we assume $Y$ is conditionally independent of the internal transcript given $E$, so that under label $i$ the agent faces the label-aggregated product law
\begin{equation}
\bar{J}_i := \sum_{E :\, \Theta(E) = i} q(E \mid \Theta = i)\, \bigl[ P_E^t \otimes Q(\cdot \mid E) \bigr],
\qquad i \in \{0,1\}.
\label{eq:static:jointmix}
\end{equation}

\begin{theorem}[Decision-theoretic value of external verification]
\label{thm:verify}
In the setup above (arbitrary finite $\cE$, equal label priors $\pi_0 = \pi_1 = 1/2$, A1), write $\TV := \TV(\bar{P}_0^t, \bar{P}_1^t)$ over the label-aggregated mixtures \eqref{eq:static:mixtures} and $B^*(\cdot)$ for the minimal equal-label-prior Bayes error of the binary label test on the experiment indicated. Then:
\begin{enumerate}
\item[(a)] The internal-only optimal expected total regret is
\begin{equation}
R^*_{\mathrm{int}} = \frac{T\,r}{2}\,\bigl(1 - \TV(\bar{P}_0^t, \bar{P}_1^t)\bigr),
\label{eq:static:rint}
\end{equation}
attained within the A1 scheme class by the certifier induced by the likelihood-ratio test \citep{tsybakov2009,dgl1996}.
\item[(b)] The net value of a perfect \textsc{verify} is
\begin{equation}
V_{\mathrm{verify}} = \frac{T\,r}{2}\,(1 - \TV) - c,
\label{eq:static:vverify}
\end{equation}
and \textsc{verify} strictly dominates internal-only certification if and only if
\begin{equation}
c < c^* := \frac{T\,r}{2}\,\bigl(1 - \TV(\bar{P}_0^t, \bar{P}_1^t)\bigr).
\label{eq:static:cstar}
\end{equation}
\item[(c)] \emph{Two extremes:} $\TV = 0$ ($\bar{P}_0^t = \bar{P}_1^t$) gives $c^* = Tr/2$, verification maximally valuable; $\TV \to 1$ gives $c^* \to 0$, internal certification asymptotically sufficient and \textsc{verify} valueless.
\item[(d)] \emph{Noisy \textsc{verify}:} with $Y$ conditionally independent of $T_t$ given $E$ (product laws \eqref{eq:static:jointmix}), the $(1-\TV)$ factor in the threshold is replaced by the marginal distinguishability gain
\begin{equation}
B^*(\bar{P}^t) - B^*(\bar{J}) \;\ge\; 0,
\label{eq:static:gain}
\end{equation}
i.e.\ noisy \textsc{verify} strictly dominates if and only if $c < T\,r\,\bigl[B^*(\bar{P}^t) - B^*(\bar{J})\bigr]$. If $Q(\cdot \mid E)$ does not depend on $E$ (the signal is uninformative about the label and \textsc{verify} does not augment the experiment), the gain is $0$ and \textsc{verify} should not be used at any $c > 0$ (minimal counterexample) \citep{blackwell1951,lecam1964}.
\item[(e)] \emph{Indispensability:} removing any one of \textup{(i)} the A1 symmetry, \textup{(ii)} the experiment-augmentation property of \textsc{verify}, \textup{(iii)} the equal label prior invalidates the formula of (a) or the closed-form threshold of (b); the counterexamples are constructed in Appendix~\ref{app:static}. For \textup{(iii)}, a closed form survives under a general label prior: with $\pi_0 \neq \pi_1$ the minimal Bayes error of the label test is $\int \min\bigl(\pi_0\, d\bar{P}_0^t,\, \pi_1\, d\bar{P}_1^t\bigr)$ and the threshold is $c^* = T\,r \int \min\bigl(\pi_0\, d\bar{P}_0^t,\, \pi_1\, d\bar{P}_1^t\bigr)$; what is lost is the $(1-\TV)/2$ \emph{shape}, in which the prior cancels.
\end{enumerate}
\end{theorem}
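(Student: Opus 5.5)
The plan is to reduce every item to the two-point testing identity of Theorem~\ref{thm:tvbound}, applied to the label-aggregated mixtures. Under A1 and transcript exogeneity, a certifier is an internal certification rule $C$ on $T_t$ followed by a fixed policy for $T$ periods. Its expected total regret, averaged over the prior $q$, therefore conditions on the label exactly as in Step~0 of that proof. A mismatch costs $Tr$ and a match costs $0$, so
\[
R(C)=\tfrac{Tr}{2}\bigl[\bar P_0^t(C{=}1)+\bar P_1^t(C{=}0)\bigr]=Tr\cdot\mathrm{err}(C).
\]
Minimizing over $C$ and invoking Theorem~\ref{thm:tvbound} (lower bound together with the attainment by the likelihood-ratio test) gives $R^*_{\mathrm{int}}=\tfrac{Tr}{2}(1-\TV)$, which proves (a).

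For (b), I will define the verify scheme's total cost as $c$ plus its residual regret. A perfect \textsc{verify} reveals the label, so that scheme switches correctly with probability one and has residual regret $0$. Its net value relative to internal-only is then $R^*_{\mathrm{int}}-c$, which is (b), and strict dominance holds iff $c<c^*$. Item (c) is immediate: substitute $\TV=0$, and use continuity of the expression in $\TV$ for $\TV\to1$.

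For (d), the scheme with noisy \textsc{verify} is an internal rule on the augmented observation $(T_t,Y)$. Conditional independence given $E$ makes its label-conditional laws exactly the $\bar J_i$ of \eqref{eq:static:jointmix}. Repeating the computation from (a) gives optimal regret $Tr\,B^*(\bar J)$, so the value is $Tr[B^*(\bar P^t)-B^*(\bar J)]-c$. Nonnegativity of the gain follows because any rule using $T_t$ alone is a rule on $(T_t,Y)$ that ignores $Y$; equivalently, $\bar P_i^t$ is the $T_t$-marginal of $\bar J_i$, a garbling in Blackwell's sense. If $Q(\cdot\mid E)=Q$ does not depend on $E$, then $\bar J_i=\bar P_i^t\otimes Q$. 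The likelihood ratio then depends only on $T_t$, so $\int\min(d\bar J_0,d\bar J_1)=\int\min(d\bar P_0^t,d\bar P_1^t)$ by Fubini, the gain is $0$, and \textsc{verify} is strictly worse for every $c>0$.

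For (e), the general-prior formula comes from redoing Step~4 of Theorem~\ref{thm:tvbound} with weights $\pi_0,\pi_1$. The pointwise Bayes test attains $\int\min(\pi_0 d\bar P_0^t,\pi_1 d\bar P_1^t)$, and the regret is $Tr$ times that quantity. The asymmetric case is \eqref{eq:static:asym}, already derived above. The main obstacle is the indispensability claim itself, since it requires explicit counterexamples: one needs two pairs of laws with equal $\TV$ but different weighted minima under $r_0\ne r_1$ or $\pi_0\ne\pi_1$. I would try Bernoulli transcript laws, say $\mathrm{Ber}(a)$ versus $\mathrm{Ber}(b)$ chosen with matching $\TV$ but different ratio structures, and compute both weighted minima directly. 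For (ii), the constant-$Q$ example from (d) already serves as the counterexample. These constructions are deferred to Appendix~\ref{app:static} as the statement indicates.
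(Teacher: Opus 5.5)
Your arguments for (a)--(d) are the paper's. The A1 case bookkeeping plus the label conditioning of Step~0 of Theorem~\ref{thm:tvbound} turns regret into $Tr\,\mathrm{err}(C)$. A perfect \textsc{verify} zeroes the error, and noisy \textsc{verify} is the same two-point identity applied to the product mixtures $\bar J_i$. Your ``a rule may ignore $Y$'' argument and your Fubini computation for constant $Q$ are elementary equivalents of the paper's Blackwell-garbling step and its posterior-odds cancellation.

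The real difference is in (e)(i) and (e)(iii). You aim for two instances with equal $\TV$ but different weighted optima. That shows the optimum is not a function of $\TV$ at all once $r_0\neq r_1$ or $\pi_0\neq\pi_1$, and it is legitimate here because transcript laws are exogenous and can be attached to any opposite-label pair. The paper instead gives single instances, built from explicit $(H,f,\ell)$ and fallback data, on which the closed form returns the wrong number. For (i) it uses a two-point class where the fallback coincides with the cell Bayes action under the adequate environment. Then $r_0=0$, and ``always fallback'' has zero regret, while \eqref{eq:static:rint} gives $Tr/4$ at $\TV=\tfrac12$. For (iii) it uses a cyclic three-environment class whose equal environment prior induces the label prior $(\tfrac23,\tfrac13)$; the optimum there is $\tfrac13$ against the prescribed $\tfrac14$.

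One such instance suffices, so you do not need two pairs, but your plan does work. With $\pi=(\tfrac23,\tfrac13)$, the pairs $\mathrm{Ber}(\tfrac12)$ versus $\mathrm{Ber}(0)$ and $\mathrm{Ber}(1)$ versus $\mathrm{Ber}(\tfrac12)$ both have $\TV=\tfrac12$, but their weighted minima are $\tfrac13$ and $\tfrac16$. The first pair is exactly the paper's cyclic pair after merging its two outer transcript points. For (i) you must choose $r_0,r_1>0$. At $r_0=0$ every weighted minimum vanishes, which is exactly the paper's case, and there one must compare directly with the formula's positive value.

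Your route gives the stronger ``no function of $\TV$'' conclusion with minimal computation. The paper's route shows that the failures arise from genuine adequacy data and, in (iii), illustrates label aggregation.
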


\begin{proof}[Proof of Theorem~\ref{thm:verify}]
\textbf{(a).} Take any certifier $C$ in the A1 scheme class. The certification output selects a rule through the A1 switch, and the adequacy label determines whether the selection matches; Table~\ref{tab:static:a1cases} spells out the four joint cases.

\begin{table}[t]
\centering
\caption{Case bookkeeping for A1 (symmetric alignment) in the proof of Theorem~\ref{thm:verify}(a). Matched rows incur zero per-period regret; mismatched rows incur per-period regret $r$, hence total regret $T\,r$ over the horizon.}
\label{tab:static:a1cases}
\begin{tabular}{llll}
\toprule
Label $\Theta$ & Certification & Rule used (A1 switch) & Per-period / total regret \\
\midrule
$1$ (adequate) & $C{=}1$, correct & $f$-rule (match) & $0$ / $0$ \\
$1$ (adequate) & $C{=}0$, error & fallback (mismatch) & $r$ / $T\,r$ \\
$0$ (inadequate) & $C{=}1$, error & $f$-rule (mismatch) & $r$ / $T\,r$ \\
$0$ (inadequate) & $C{=}0$, correct & fallback (match) & $0$ / $0$ \\
\bottomrule
\end{tabular}
\end{table}

As Table~\ref{tab:static:a1cases} shows, the regret of the whole certification--switching scheme is governed entirely by the certification error event: in the two matched rows the per-period regret is $0$ (in the first row by $\Theta = 1$, i.e.\ $\Adeq_E(f) = 1$ for every environment carrying that label; in the fourth by the construction of the fallback under A1), while in the two mismatched rows it is exactly $r$, the symmetric clause of A1 asserting that the two mismatch directions carry the \emph{same} price $r$. Since certification is performed once and the policy is then fixed for the whole horizon, the total regret is $T \cdot r$ on the error event and $0$ on its complement, so the expected total regret of $C$ is
\begin{equation*}
T \cdot r \cdot P(\mathrm{err}),
\qquad
P(\mathrm{err}) = \tfrac12\, \bar{P}_0^t(C{=}1) + \tfrac12\, \bar{P}_1^t(C{=}0) = \mathrm{err}(C),
\end{equation*}
exactly the equal-label-prior Bayes error of Theorem~\ref{thm:tvbound}. By that theorem, $P(\mathrm{err}) \ge (1-\TV)/2$, so the expected total regret of $C$ is at least $\frac{T\,r}{2}(1-\TV)$. Attainability: Step~4 of the proof of Theorem~\ref{thm:tvbound} shows the likelihood-ratio test $A^* = \{\bar{p}_1 > \bar{p}_0\}$ attains Bayes error exactly $(1-\TV)/2$; coupling it with the A1 switch yields expected total regret exactly \eqref{eq:static:rint}. The underlying identity is the standard Le Cam two-point / Bayes error--TV relation \citep{tsybakov2009,dgl1996}; the only addition is the transcription, via A1, of error rate into regret units, with the label taken to be $\Adeq_E(f)$.

\textbf{(b).} A perfect \textsc{verify} fully reveals the adequacy label, so the certification error is zeroed; by A1 the matched rule is used under each label, the total regret is $0$, at the one-time cost $c$. The net value is the internal-only benchmark minus the cost,
\begin{equation*}
V_{\mathrm{verify}} = R^*_{\mathrm{int}} - c = \frac{T\,r}{2}\,(1 - \TV) - c,
\end{equation*}
which is \eqref{eq:static:vverify}, and $V_{\mathrm{verify}} > 0$ if and only if $c < c^*$ with $c^*$ as in \eqref{eq:static:cstar}. This is a direct instantiation of the value-of-information structure ``value $=$ with-information benchmark $-$ without-information benchmark $-$ cost,'' evaluated before the information is acquired \citep{howard1966,degroot1970}.

\textbf{(c).} Substitute the extremes of $\TV$ into \eqref{eq:static:cstar}: at $\TV = 0$, $c^* = Tr/2$, the maximum of the net-value expression since $1 - \TV \le 1$---this is the case in which internal certification cannot beat random guessing (Theorem~\ref{thm:tvbound}, Step~5), so external verification recovers the entire aliasing loss; as $\TV \to 1$, the optimal internal-only expected regret tends to $0$, so there is asymptotically nothing left for verification to recover.

\textbf{(d).} The joint observation is the pair $(T_t, Y)$, whose law under label $i$ is $\bar{J}_i$ of \eqref{eq:static:jointmix}. Applying to $(T_t, Y)$ the Markov kernel that discards $Y$---keeping only $T_t$---maps $\bar{J}_i$ to $\bar{P}_i^t$ for each $i \in \{0,1\}$; hence the experiment $\{\bar{J}_0, \bar{J}_1\}$ is at least as informative as $\{\bar{P}_0^t, \bar{P}_1^t\}$ in Blackwell's order, and by the comparison theorem of \citet{blackwell1951} a more informative experiment has pointwise no larger Bayes risk on every decision problem \citep{blackwell1951,lecam1964}. Specializing to the equal-label-prior binary test gives $B^*(\bar{J}) \le B^*(\bar{P}^t)$, i.e.\ the gain \eqref{eq:static:gain} is nonnegative. With noisy \textsc{verify} used, the agent performs the one-time optimal certification on the joint experiment (A1 unchanged), incurring expected total regret $T\,r\, B^*(\bar{J}) + c$; relative to the internal-only benchmark $T\,r\, B^*(\bar{P}^t)$, the net value is $V_{\mathrm{verify}} = T\,r\,[B^*(\bar{P}^t) - B^*(\bar{J})] - c$, strictly positive if and only if $c < T\,r\,[B^*(\bar{P}^t) - B^*(\bar{J})]$. Finally, if $Q(\cdot \mid E) = Q(\cdot)$ is the same in every environment, then $\bar{J}_i = \bar{P}_i^t \otimes Q$ for both labels; writing $\bar{p}_j$ for the density of $\bar{P}_j^t$ and $q$ for that of $Q$, the posterior odds of $\Theta = 1$ against $\Theta = 0$ given $(t,y)$ equal $\bar{p}_1(t) q(y) / [\bar{p}_0(t) q(y)] = \bar{p}_1(t)/\bar{p}_0(t)$ whenever $q(y) > 0$---the factor $q(y)$ cancels---so the posterior given $(T_t, Y)$ coincides with the posterior given $T_t$ alone, $B^*(\bar{J}) = B^*(\bar{P}^t)$, the gain is $0$, and $V_{\mathrm{verify}} = -c < 0$ at any $c > 0$: \textsc{verify} should not be performed. This identically-distributed \textsc{verify} is simultaneously the minimal counterexample to (b).

\textbf{(e).} The explicit constructions are deferred to Appendix~\ref{app:static}; we summarize each item and keep the self-contained case \textup{(ii)} in full.

\textup{(i)} \emph{Drop A1 symmetry (construction in Appendix~\ref{app:static:a1}).} There is a completely explicit two-point instance ($|\cE| = 2$, $f$ constant onto a single representation value) in which the fallback rule coincides with the $f$-rule's cell Bayes action under the adequate environment, so that mismatch direction costs $0$, not $r$: ``always fallback, never certify'' has total regret identically $0$ under both environments, yet $\TV(P_0^1, P_1^1) = \tfrac12 < 1$ and \eqref{eq:static:rint} prescribes the strictly positive value $\frac{Tr}{4}$. The formula no longer describes the optimum, and the threshold of (b) collapses with it; the counterexample bites exactly when the internal channel is not perfectly distinguishing. The full construction and its arithmetic are given in Appendix~\ref{app:static:a1}.

\textup{(ii)} \emph{Drop augmentation.} If $Q(\cdot \mid E_0) = Q(\cdot \mid E_1)$, the final paragraph of the proof of (d) gives gain $0$ and $V_{\mathrm{verify}} = -c < 0$ at any $c > 0$: the ``buy'' conclusion of (b) fails even though the closed-form threshold \eqref{eq:static:cstar} is well defined---(b) silently presupposes that \textsc{verify} augments the experiment, and an identically distributed signal violates exactly that presupposition.

\textup{(iii)} \emph{Drop equal label priors (construction in Appendix~\ref{app:static:prior}).} Under a general label prior $\pi$ the pointwise Bayes-error expansion gives the weighted threshold $c^* = T\,r \int \min\bigl(\pi_0\, d\bar{P}_0^t,\, \pi_1\, d\bar{P}_1^t\bigr)$, attained by the weighted likelihood-ratio test: still a closed form, but one in which the prior sits inside the minimum. The equal label prior is exactly the case in which it cancels, $\tfrac12 \int \min(\bar{p}_0, \bar{p}_1)\, d\mu = (1-\TV)/2$, the shape of (a) and (b). That the shape, not merely the value, is genuinely lost is witnessed by a cyclic three-environment instance on which the equal-prior shape prescribes $\tfrac14$ while the true optimum is $\tfrac13 \neq \tfrac14$; the general derivation and the complete cyclic computation are given in Appendix~\ref{app:static:prior}.
\end{proof}

\begin{remark*}[Why cost additivity is not among the assumptions]
A perfect \textsc{verify} exhausts the label's information in a single purchase: conditional on a fully revealing signal, the posterior over $\Theta$ degenerates to a point mass, so any repeat purchase of the same perfect channel has marginal information gain exactly zero, at any price. Consequently, under any nondecreasing cost structure $\gamma(k)$ for $k$ purchases---additive or not---the optimal number of perfect verifications is at most one, and the purchase decision compares the single-purchase cost $\gamma(1)$ against the internal-only benchmark of (a), which is exactly the threshold comparison of (b) with $c = \gamma(1)$. Cost additivity therefore plays no role in Theorem~\ref{thm:verify} and does not appear among the assumptions in (e); a cost structure becomes decision-relevant only in extended models where verification is noisy and repeatable, an open direction connected to the sequential model of Section~\ref{sec:model}.
\end{remark*}

\paragraph{Relation to classical VOI; scoped contribution; numerical check.}
Theorem~\ref{thm:verify} instantiates the with-minus-without information value of \citet{howard1966} and \citet{degroot1970}, differing in the decision object (purchasing verification of one's own representation adequacy), in sharpness (a $(1-\TV)$-weighted closed-form threshold), and in boundary treatment (the TV extremes and the counterexamples of (e) are part of the result). Its claim is limited to the combination ``labeling $+$ channel augmentation $+$ threshold characterization'': the component tools \citep{tsybakov2009,dgl1996,blackwell1951,lecam1964,howard1966,degroot1970} are standard, and (e) marks the validity boundary. All identities and thresholds of this chapter were machine-checkably verified on finite instances (algebraic identities to below $10^{-12}$ error, or by exhaustive pointwise enumeration; for Theorem~\ref{thm:verify}, $c^*$ and the switching decision were checked pointwise in three TV regimes, with indifference at $c = c^*$, and the cyclic-class computations of (e)(iii) and of the remark in Appendix~\ref{app:static:identity} were checked by exact rational arithmetic and exhaustive enumeration over all $2^3$ deterministic label rules).

\subsection{Instantiation: Decision-Theoretic Memory (ESD)}
\label{sec:static:esd}

We instantiate the static framework to the Event--State--Decision memory (ESD), in which maintaining a dialogue memory is state-conditioned decision making rather than retrieval. The instantiation involves no dialogue data and no empirical claims.

\paragraph{Setup.}
Fix an environment. A history has two components: the semantic observation $X \in \cX$ (the semantic unit) and a latent state $Z \in \cZ$, with joint law $p(x,z)$ and loss $\ell((x,z), a) \in [0,1]$; the agent's representation retains only $X$. Write $R^*(X)$ and $R^*(X,Z)$ for the Bayes risks of observing only $X$ and the full pair $(X,Z)$. For each $x \in \cX$, define the \emph{conditional state value} (the normalized regret of the cell $\{x\} \times \cZ$)
\begin{equation}
g(x) := \min_{a \in \cA} \sum_{z} p(z \mid x)\, \ell((x,z),a)
- \sum_{z} p(z \mid x) \min_{a \in \cA} \ell((x,z),a) \;\ge\; 0,
\label{eq:static:gdef}
\end{equation}
the \emph{conflict set} $C := \{x \in \cX : g(x) > 0\}$ (by property (ii) of Theorem~\ref{thm:aliasing}, $g(x) > 0$ if and only if the cell $\{x\} \times \cZ$ contains an action conflict), and the \emph{conflict mass} $\mu := P(X \in C)$.

\begin{proposition}[$\mu$--$g$ conflict decomposition]
\label{prop:mug}
In the setup above,
\begin{equation}
\Delta_{\mathrm{info}} := R^*(X) - R^*(X,Z)
= \sum_{x \in C} p(x)\, g(x)
= \mu \cdot \E\bigl[\, g(X) \,\big|\, X \in C \,\bigr].
\label{eq:static:mug}
\end{equation}
Boundary convention: if $\mu = 0$ ($C = \varnothing$) then $\Delta_{\mathrm{info}} = 0$ and the conditional-expectation term is treated as $0$.
\end{proposition}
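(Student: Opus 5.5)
The plan is to read Proposition~\ref{prop:mug} as Theorem~\ref{thm:aliasing} specialized to the representation $f(x,z) = x$ on the finite history space $H = \cX \times \cZ$. The aliasing cells are then $H_x = \{x\} \times \cZ$, and $R^*(X) = R^*_E(U)$ and $R^*(X,Z) = R^*_E(H)$ hold by the definitions \eqref{eq:static:riskU} and \eqref{eq:static:riskH}. So \eqref{eq:static:decomp} gives
\begin{equation*}
\Delta_{\mathrm{info}} = \sum_{x \in \cX}\Bigl[\min_{a} \sum_{z} p(x,z)\,\ell((x,z),a) - \sum_{z} p(x,z)\min_{a}\ell((x,z),a)\Bigr].
\end{equation*}

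The first step is to factor out $p(x)$ from each cell term.

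\begin{itemize}
\item For cells with $p(x) > 0$, I would write $p(x,z) = p(x)\,p(z \mid x)$. Since $p(x) \ge 0$ is a constant, it can be pulled out of the minimum over $a$ (the positive scaling $\min_a p(x)c_a = p(x)\min_a c_a$). The bracket then becomes exactly $p(x)\,g(x)$, with $g$ as in \eqref{eq:static:gdef}.
\item For cells with $p(x) = 0$, every $p(x,z) = 0$, so the cell term is $0$. These cells contribute nothing however $g(x)$ is defined, since the conditional $p(z \mid x)$ is undefined there. I would adopt the convention that such $x$ contribute $p(x)g(x) = 0$.
\end{itemize}

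This yields $\Delta_{\mathrm{info}} = \sum_x p(x) g(x)$.

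The second step restricts the sum to the conflict set $C$. Nonnegativity of $g$ (property (i) of Theorem~\ref{thm:aliasing}, applied cellwise with conditional weights) means $g(x) = 0$ for $x \notin C$, so those terms drop out and the sum equals $\sum_{x \in C} p(x) g(x)$.

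The third step converts this to the conditional-expectation form. If $\mu > 0$, multiply and divide by $\mu = \sum_{x \in C} p(x)$ and recognize $\sum_{x \in C} \frac{p(x)}{\mu}\, g(x) = \E[g(X) \mid X \in C]$. If $\mu = 0$, then either $C = \varnothing$ or every $x \in C$ has $p(x) = 0$. In both cases the sum vanishes and the boundary convention gives $\mu \cdot 0 = 0$.

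The main, though minor, obstacle is the treatment of zero-mass $x$. Either $g$ should be taken as defined only on the support of $X$, or one notes that these terms vanish regardless. I would state this explicitly before factoring, so the scaling step is valid in all cases.
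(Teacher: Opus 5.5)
Your proposal is correct and matches the paper's proof essentially step for step. Like the paper, you specialize Theorem~\ref{thm:aliasing} to the projection $(x,z)\mapsto x$, factor $p(x,z)=p(x)\,p(z\mid x)$ on positive-mass cells while noting that zero-mass cells vanish, restrict the sum to $C$, and factor out $\mu$. The only difference is that you explicitly combine $g\ge 0$ with the definition of $C$ to get $g=0$ off $C$, a step the paper leaves implicit.
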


\emph{Attribution.} Proposition~\ref{prop:mug} is a special case and instantiation of Theorem~\ref{thm:aliasing}; no novelty is claimed. Its nonnegativity also has a classical source: retaining only $X$ garbles $(X,Z)$ deterministically, so $\Delta_{\mathrm{info}} \ge 0$ follows from Blackwell risk domination \citep{blackwell1951}, and $g(x)$ matches the cell-restricted binary Bayes error of \citet[Theorem~2.1]{dgl1996}.

\begin{proof}[Proof of Proposition~\ref{prop:mug}]
Apply Theorem~\ref{thm:aliasing} with $H = (X,Z)$ and $U = X$: the history space is $\cX \times \cZ$ with law $p(x,z)$, and the representation map is the projection $(x,z) \mapsto x$, whose aliasing cells are exactly $\{x\} \times \cZ$, $x \in \cX$. The cell decomposition \eqref{eq:static:decomp} then reads
\begin{equation}
R^*(X) - R^*(X,Z)
= \sum_{x \in \cX} \Biggl[ \min_{a \in \cA} \sum_{z} p(x,z)\, \ell((x,z),a)
- \sum_{z} p(x,z) \min_{a \in \cA} \ell((x,z),a) \Biggr].
\label{eq:static:cellxz}
\end{equation}
For each $x$ with $p(x) > 0$, factoring $p(x,z) = p(x)\, p(z \mid x)$ shows the bracket equals $p(x)\, g(x)$ with $g(x)$ as in \eqref{eq:static:gdef}; for $x$ with $p(x) = 0$ the bracket vanishes and $p(x) g(x) = 0$ as well. Hence $\Delta_{\mathrm{info}} = \sum_{x \in \cX} p(x) g(x)$. By the definition of $C$, $g(x) = 0$ for $x \notin C$, so the sum restricts to $C$, giving the first equality of \eqref{eq:static:mug}. For the second, if $\mu > 0$ write $p(x) = \mu \cdot p(x \mid C)$ for $x \in C$ and factor out $\mu$: $\sum_{x \in C} p(x) g(x) = \mu \sum_{x \in C} p(x \mid C)\, g(x) = \mu \cdot \E[g(X) \mid X \in C]$. If $\mu = 0$ then every summand vanishes, $\Delta_{\mathrm{info}} = 0$, and the conditional-expectation term is treated as $0$ by convention.
\end{proof}

Equation \eqref{eq:static:mug} separates the \emph{extent} of inadequacy (the conflict mass $\mu$: how often the agent lands on a semantic unit whose cell contains an action conflict) from its \emph{severity} (the conditional expectation $\E[g(X) \mid X \in C]$: the normalized within-cell regret on those units); the certification problem is indifferent to this profile only because $\Adeq_E(f)$ is the binary indicator of $\Delta_{\mathrm{info}} = 0$.

\paragraph{The five-action decision memory; two types of value, kept separate.}
ESD instantiates the action set
\begin{equation}
\cA = \{\textsc{speak},\ \textsc{silence},\ \textsc{ask},\ \textsc{verify},\ \textsc{defer}\},
\label{eq:static:fiveactions}
\end{equation}
with readings tied to \eqref{eq:static:mug}. Off the conflict set ($g(x) = 0$), property (ii) of Theorem~\ref{thm:aliasing} guarantees only that the cell admits a \emph{common Bayes action}---some action, not necessarily \textsc{speak}; in the ESD instance, whatever the common Bayes action is, it is executed by default. Under conflict ($g(x) > 0$), \textsc{silence}/\textsc{ask}/\textsc{defer} postpone or transfer the decision. Two of the five actions are queries, and they answer different questions: \textbf{\textsc{ask}} is the \emph{state-clarification} action, a $V_{\mathrm{state}}$-type query about the current state $Z$; \textbf{\textsc{verify}} is the \emph{adequacy-certification} action, a $V_{\mathrm{cert}}$-type query about the label $\Theta = \Adeq_E(f)$, priced by Theorem~\ref{thm:verify}. The two value types are not conflated, and the connecting chain is exact: $\Delta_{\mathrm{info}}$ is a $V_{\mathrm{state}}$-type measure---the ex ante information value of the state $Z$ for semantic-memory decisions---and its cell-level source $\mu \cdot \E[g(X) \mid X \in C]$ prices the per-period aliasing regret $r$ that an inadequate representation inflicts (the per-period expected cost of the within-cell action conflicts); Theorem~\ref{thm:verify} then takes that $r$ as an input and prices certification through $c^* = (Tr/2)(1-\TV)$, a $V_{\mathrm{cert}}$-type quantity about the label, not the state. Proposition~\ref{prop:mug} is an identity under a known joint law, not a learning result.

\subsection{From One-Shot to Sequential Certification}
\label{sec:static:transition}

The static theory answers a deliberately narrow question: whether to buy one \textsc{verify}, once, at a closed-form threshold $c^*$. Two idealizations bound its reach. First, transcript exogeneity: the laws $\bar{P}_0^t, \bar{P}_1^t$ are fixed independently of the agent's conduct, so the model cannot express that waiting, exploring, or acting differently would change what the internal channel reveals. Second, the one-shot decision: $c^*$ compares two fixed schemes over a fixed horizon and says nothing about \emph{how} to buy a prescribed certification accuracy $\delta$ at minimum task cost---how long to rely on the internal channel as the transcript lengthens, when to stop waiting and purchase verification, and how these choices interact with the task loss accumulating meanwhile. Both idealizations are load-bearing in the proofs above, and both must be relaxed to address the sequential question. Section~\ref{sec:model} takes up exactly this relaxation: it introduces the policy-dependent sequential certification model and proves the matching lower and upper bounds of Theorems~\ref{thm:LB} and~\ref{thm:UB}, for which the present chapter supplies the label, the TV-based distinguishability measure, and the threshold logic that the sequential analysis refines. Adjacent tools for controlled observation and active sequential testing exist \citep{naghshvar2013,krishnamurthy2016}, but they assume exogenous estimable feedback or a known model; the interface to adequacy certification under loss-free transcripts is built in Section~\ref{sec:model}.

% ============================================================
% Section file: The Sequential Certification Model and Main Theorem
% Label: sec:model
% Shared-counter environments used here: theorem (thm:LB, thm:UB),
% corollary (cor:main) -- and nothing else, so that numbering stays
% Theorem 8, Theorem 9, Corollary 10 under the paper-wide convention.
% Requires: algorithm + algorithmic packages for the CTS float.
% ============================================================
\providecommand{\cE}{\mathcal{E}}
\providecommand{\cA}{\mathcal{A}}
\providecommand{\cY}{\mathcal{Y}}
\providecommand{\cF}{\mathcal{F}}
\providecommand{\E}{\mathbb{E}}
\providecommand{\Prob}{\mathbb{P}}
\providecommand{\ind}{\mathbf{1}}
\providecommand{\kl}{\mathrm{kl}}
\providecommand{\Alt}{\mathrm{Alt}}
\providecommand{\LLR}{\mathrm{LLR}}
\providecommand{\TV}{\mathrm{TV}}

\section{The Sequential Certification Model and Main Theorem}\label{sec:model}

This section gives the full definition of the sequential certification model, defines
the certification complexity constant that governs it, states the two main theorems and
their corollary, specifies the matching policy CTS, and records the non-triviality and
boundary statements that delimit the results. Proofs of the lower and upper bounds are
deferred to Appendices~\ref{sec:prooflb} and~\ref{sec:proofub} respectively; the boundary
statements B1--B3 are proved in Appendix~\ref{sec:prooflb}.

\subsection{Model M1: sequential adequacy certification under a fixed representation}
\label{sec:modelM1}

\paragraph{Model M1 (sequential certification).}\setmodelanchor{model:M1}{M1}
The model consists of the following ingredients.
\begin{enumerate}
\item[(M1.1)] \emph{Environments and labels.} A finite environment class
$\cE=\{E_1,\dots,E_K\}$ with $K\ge 2$ is fixed, together with a fixed representation
$f$. Both adequacy labels are assumed present; otherwise the known common
label can be returned at time zero with no certification loss. Each environment $E_i$ carries a binary \emph{adequacy label}
\begin{equation}\label{eq:label}
\Theta_i \;:=\; \ind\big\{\, R^{*}_{E_i}(f(H)) = R^{*}_{E_i}(H) \,\big\} \;\in\;\{0,1\},
\end{equation}
defined through the Bayes-risk framework of the static layer
(Section~\ref{sec:static}): $\Theta_i=1$ if acting optimally on the representation
achieves the Bayes risk of acting on the full history. The labels are known as a
function of the (known) environments; what is unknown to the agent is \emph{which}
environment $E_i$ it is in, and hence the realized label.
\item[(M1.2)] \emph{Actions, channels, losses.} The action set $\cA$ is finite. Each
action $a\in\cA$ in environment $E_i$ carries an observation channel $K_i^{a}$, a
probability distribution on a finite observation space $\cY$, and a task loss
$g_i(a)\ge 0$, finite. Audit costs are folded into the task loss: a \textsc{Verify}
action, when present, has $g_i(\textsc{Verify})$ equal to its audit price $c_{\mathrm{audit}}$.
We write $d(P\|Q)=\sum_{y\in\cY} P(y)\log(P(y)/Q(y))$ for the KL divergence between two
distributions on $\cY$, with the conventions $0\log 0 = 0$ and $d(P\|Q)=+\infty$ if $P$
is not absolutely continuous with respect to $Q$.
\item[(M1.3)] \emph{Interaction protocol.} A policy $\pi$ is a sequence of (possibly
randomized, via an independent seed) decision rules; at round $t$ it selects
$A_t$ as a measurable function of the history
$H_{t-1}=(A_1,Y_1,\dots,A_{t-1},Y_{t-1})$. In environment $E_i$, the observation then
follows the conditional channel law
$\Prob_i(Y_t\in\cdot\mid H_{t-1},A_t)=K_i^{A_t}(\cdot)$
(Assumption~(H1) below). Let $\cF_t=\sigma(A_1,Y_1,\dots,A_t,Y_t)$ be the
natural filtration, augmented by the independent policy seed when needed.
Only the specified feedback is observed; $g_i(A_t)$ is not extra feedback.
The task history in \eqref{eq:label} is the object compressed by $f$;
$H_t$ here is the certifier's retained testing transcript. Each pair $(E_i,\pi)$ induces a probability measure
$\Prob_i^{\pi}$ on interaction sequences, with expectation $\E_i^{\pi}$.
\item[(M1.4)] \emph{Stopping and decision.} The policy commits to a stopping time
$\tau$ with respect to $(\cF_t)$ and a terminal decision
$\hat\Theta\in\{0,1\}$ that is $\cF_{\tau}$-measurable.
\item[(M1.5)] \emph{$\delta$-correctness (fixed confidence), including admissibility.}
For a target confidence $\delta\in(0,\tfrac12)$, a policy $\pi$ is \emph{$\delta$-correct} if
\begin{equation}\label{eq:deltacorrect}
\Prob_i^{\pi}\big(\tau<\infty\big)=1 \ \ \text{for every } i\in\{1,\dots,K\}
\qquad\text{and}\qquad
\max_{i}\, \Prob_i^{\pi}\big(\hat\Theta \neq \Theta_i\big) \;\le\; \delta.
\end{equation}
The first requirement is an admissibility condition: certification must terminate under
every environment, not only under favorable ones. The lower bound uses
this same class, allowing $\E_i^\pi\tau=\infty$; Appendix~\ref{sec:prooflb}
handles that case by truncation.
\item[(M1.6)] \emph{Cost.} The performance measure is the expected cumulative task
loss until certification,
\begin{equation}\label{eq:cost}
R(\pi,E_i) \;:=\; \E_i^{\pi}\Big[\, \sum_{t=1}^{\tau} g_i(A_t) \,\Big],
\qquad
N_a(\tau) \;:=\; \sum_{t\le\tau} \ind\{A_t=a\},
\end{equation}
so that $R(\pi,E_i)=\sum_a g_i(a)\,\E_i^{\pi}[N_a(\tau)]$. We write
$R^{*}_{\delta}(E_i):=\inf\{R(\pi,E_i):\pi\ \delta\text{-correct}\}$.
\item[(M1.7)] \emph{Alternative sets.} For each environment $E_i$, the set of
label-disagreeing alternatives is
\begin{equation}\label{eq:alt}
\Alt(i) \;:=\; \{\, j\in\{1,\dots,K\} : \Theta_j \neq \Theta_i \,\}.
\end{equation}
The adequacy definition supplies the answer map. Same-label environments
are absent from these error constraints, as in general-answer testing.
Their loss profiles can still affect attainability of the individual
complexities; Section~\ref{sec:observable} gives the exact positive-cost
compatibility criterion when some such environments are observationally equal.
\end{enumerate}

\paragraph{Action semantics.}
The abstract action set instantiates the ESD repertoire described in
Section~\ref{sec:intro-motivation}: \textsc{Act} (execute the task policy induced by
$f$; with the specified feedback and task loss), \textsc{Ask} (query the
latent state, at a query fee), \textsc{Verify} (purchase an adequacy audit, at the
audit price), and \textsc{Defer} (wait, at low or zero loss and low or zero
information). The \textsc{Repair} action is excluded from Model~\hyperref[model:M1]{M1}
deliberately: it changes the representation, hence the label itself, and is treated in
Section~\ref{sec:switching}.

\paragraph{Assumptions.}
We collect the standing assumptions, each with a one-line statement of what it buys and
where it is used.

\noindent\textbf{(H1) Conditional independence of the channels.}\label{ass:H1}
At each round, $\Prob_i(Y_t\in\cdot\mid\cF_{t-1},A_t)=K_i^{A_t}(\cdot)$.
This causal channel condition does not assert independence conditional on the
entire adaptively chosen action sequence. This makes the log-likelihood ratio of any two environments a
sum of action-indexed increments and is used by both bounds.

\noindent\textbf{(H2) Feasibility.}\label{ass:H2}
For every $i$ and every $j\in\Alt(i)$ there exists an action $a$ with
$0 < d(K_i^{a}\|K_j^{a}) < +\infty$: every label-disagreeing alternative can be
statistically separated from $E_i$ by some action. The two failure modes are
asymmetric. If some $j\in\Alt(i)$ satisfies $d(K_i^{a}\|K_j^{a})=0$ for \emph{every}
$a$, then no amount of evidence separates $E_i$ from $E_j$ and $\delta$-certification
is impossible; this is the hopeless boundary B1 below, and the LP \eqref{eq:clp} is
exactly infeasible in this case. The opposite failure, $d(K_i^{a}\|K_j^{a})=+\infty$
for the available actions, is harmless---a single observation may then separate the two
environments perfectly. Theorem~\ref{thm:LB} in fact only requires $C_i<\infty$, and
hopelessness is characterized exactly by infeasibility of the LP.

\noindent\textbf{(H2$^+$) Pairwise distinguishability of all environments.}\label{ass:H2plus}
For every pair $i\neq j$ (including same-label pairs) there exists an action $a$ with
$0<d(K_i^{a}\|K_j^{a})$: the true environment itself is identifiable. This is stronger
than (H2), which constrains only label-disagreeing pairs, and it is needed only for the
upper bound, where the maximum-likelihood estimator of the environment must stabilize
exactly. The lower bound does not use it.
Section~\ref{sec:observable} replaces it, for positive losses, by a necessary
and sufficient compatibility condition on observational equivalence classes.

\noindent\textbf{(H3) Losses.}\label{ass:H3}
$0\le g_i(a)<\infty$ for all $i,a$. The degenerate case $g_i(a)=0$ is handled by
continuity of the LP (boundary B2 below).

\noindent\textbf{(H4) Full support.}\label{ass:H4}
$K_i^{a}(y)>0$ for every $i$, $a$, and $y\in\cY$. Consequently all log-likelihood-ratio
increments are uniformly bounded, which the concentration arguments of the upper bound
require. The lower bound does not use it.

\paragraph{Assumption dependence of the results.}
Theorem~\ref{thm:LB} is proved under (H1)--(H3) alone. Theorem~\ref{thm:UB} uses the
full set (H1), (H2), (H2$^+$), (H3), (H4). Corollary~\ref{cor:main} holds under the
union. We flag the dependence at each statement rather than absorbing it into a global
assumption block, because the two bounds genuinely differ in what they need.

\subsection{The certification complexity}\label{sec:complexity}

\paragraph{Definition (certification complexity).}
For each environment $E_i$, define the \emph{certification complexity} $C_i=C(E_i,f)$
by the covering linear program
\begin{equation}\label{eq:clp}
C_i \;:=\; \min\Big\{\, \sum_{a\in\cA} g_i(a)\, n_a \;:\;
n_a \ge 0 \ \ \forall a,\quad
\sum_{a\in\cA} n_a\, d\big(K_i^{a}\,\|\,K_j^{a}\big) \;\ge\; 1
\ \ \ \forall\, j\in\Alt(i) \,\Big\},
\end{equation}
and by the cost-ratio characteristic-time representation
\begin{equation}\label{eq:cmm}
C_i \;=\; \Bigg[\,\sup_{q\in\Delta(\cA)}\
\frac{\inf_{j\in\Alt(i)}\sum_{a\in\cA}q_a\,d\big(K_i^{a}\,\|\,K_j^{a}\big)}
{\sum_{a\in\cA}q_a g_i(a)}\,\Bigg]^{-1},
\end{equation}
where $\Delta(\cA)$ is the probability simplex over actions. The allocation $q^*$ in
this display is the one tracked by CTS. There is also a value-equivalent
reparameterization: for positive costs, set
\[
v_a=\frac{q_a g_i(a)}{\sum_b q_b g_i(b)},\qquad
\Psi_i(v):=\inf_{j\in\Alt(i)}\sum_a\frac{v_a}{g_i(a)}d(K_i^a\|K_j^a).
\]
Then $C_i=[\sup_{v\in\Delta(\cA)}\Psi_i(v)]^{-1}$, but $v$ and $q$ are different
coordinates. In general $\arg\max\Psi_i$ is not the cost-optimal allocation $q^*$;
the inverse map is $q_a\propto v_a/g_i(a)$. Zero-cost actions and infinite KL
terms use the perturbation/truncation conventions stated in Appendix~\ref{sec:prooflb}.
Conventions: if the LP
\eqref{eq:clp} is infeasible we set $C_i=+\infty$ (boundary B1); if its value is $0$
we set $C_i=0$ (boundary B2).

\paragraph{Why the two value representations agree (proof pointer).}
Feasibility under (H2) makes the LP bounded and feasible, so strong duality gives the
dual value $\max_\lambda \sum_{j\in\Alt(i)}\lambda_j$ subject to
$\sum_{j}\lambda_j\, d_a^{j}\le g_i(a)$ for all $a$, where we abbreviate
$d_a^{j}:=d(K_i^{a}\|K_j^{a})$. Writing $M$ for the bracketed sup-inf in
the $v$-reparameterized display above, Sion's minimax theorem---both $\Delta(\cA)$ and
$\Delta(\Alt(i))$ are compact and convex and the objective is bilinear and continuous---gives
\[
M\;=\;\inf_{\mu\in\Delta(\Alt(i))}\;\max_{a}\;\Big(\sum_j \mu_j d_a^{j}\Big)\Big/g_i(a),
\]
and a direct comparison of constraints shows the dual value equals $1/M$. The full
verification is recorded in Appendix~\ref{sec:prooflb}; the two value representations
have also been checked to agree on randomly generated finite instances by the
finite vertex-enumeration LP solver and grid-based evaluation. This is an equality
of optimal values, not an
assertion that the two displayed optimizers are the same allocation.

\paragraph{Two-point special case (bang-bang; boundary B3).}
If $|\Alt(i)|=1$, say $\Alt(i)=\{j\}$, the infimum in \eqref{eq:cmm} collapses and both
forms reduce to
\begin{equation}\label{eq:bangbang}
C_i \;=\; \min_{a\in\cA}\; \frac{g_i(a)}{d\big(K_i^{a}\,\|\,K_j^{a}\big)},
\end{equation}
so the optimal design concentrates on the single action with the best loss-per-nat
ratio. This has the binary controlled-experiment form of
\citet[Theorem~5.1]{nitinawarat2015controlled}. With two opposite-label
environments and a common per-action fee $g_i(a)=c(a)$, the model is the
corresponding known special case. We treat this consistency check as a boundary requirement (B3), not as a
contribution.

\paragraph{Comparison with cost-weighted characteristic times.}
The form \eqref{eq:cmm} has the shape of the cost-aware characteristic time of
\citet{kanarios2024cost} and the classic characteristic time of
\citet{garivier2016optimal}, and we inherited the shape from those templates. The adequacy-defined answer and hypothesis-dependent losses specify the
application, but the proof uses only the resulting finite table, as
Proposition~\ref{prop:table-reduction} makes explicit. Regret-priced
fixed-confidence identification is also established \citep{yang2026minimal}.
There is one coordinate distinction that must
not be hidden. Equation~\eqref{eq:cmm} uses a sampling allocation $q$ and divides its
information rate by its task-loss rate. The display immediately following it uses the
loss-normalized coordinate $v_a=q_a g_i(a)/(\sum_b q_b g_i(b))$ and the payoff
$\Psi_i(v)=\inf_j\sum_a(v_a/g_i(a))d(K_i^a\|K_j^a)$. The two displays have the same
optimal value, but $q$ and $v$ are different simplex coordinates; an optimizer in the
$v$ coordinate must be mapped back by $q_a\propto v_a/g_i(a)$. Treating $v$ itself as
the sampling allocation can produce a strictly larger realized loss ratio (the
$2\times2$ diagnostic in Appendix~\ref{sec:proofub} gives $25/6\approx4.17$ instead of
$C_i=30/11\approx2.73$). CTS therefore tracks the cost-ratio allocation
\eqref{eq:wratio}, equivalently the normalized LP solution.

\subsection{Main results}\label{sec:mainresults}

\begin{theorem}[Environment-wise lower bound]\label{thm:LB}
Suppose \textnormal{(H1)--(H3)} hold. Then for every $\delta\in(0,\tfrac12)$, every
$\delta$-correct policy $\pi$, and every environment
$E_i$,
\begin{equation}\label{eq:lb}
R(\pi,E_i) \;\ge\; C_i\; \kl\big(\delta,\,1-\delta\big)
\;\ge\; C_i\; \log\frac{1}{2.4\,\delta},
\end{equation}
where $\kl(p,q)=p\log(p/q)+(1-p)\log((1-p)/(1-q))$ is the Bernoulli KL divergence and
$C_i=C(E_i,f)$ is the certification complexity of Section~\ref{sec:complexity}. In
particular,
\begin{equation}\label{eq:lbliminf}
\liminf_{\delta\to 0}\;
\frac{R^{*}_{\delta}(E_i)}{\log(1/\delta)} \;\ge\; C_i .
\end{equation}
\end{theorem}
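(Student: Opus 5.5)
The plan follows the standard transportation (change-of-measure) argument of \citet{kaufmann2016complexity}, adapted to task-loss units. It proceeds in four steps.

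\emph{Step 1: reduce to finite expected stopping time.} Fix $\delta\in(0,\tfrac12)$, a $\delta$-correct policy $\pi$, and $E_i$. If $R(\pi,E_i)=\infty$ there is nothing to prove, so assume it is finite. The case $\E_i^\pi\tau=\infty$ can still arise with finite loss, for instance when zero-loss actions are played indefinitely often. In that case I would work with the truncated stopping times $\tau\wedge T$, together with the event $\{\hat\Theta=\Theta_i,\ \tau\le T\}$, and let $T\to\infty$. Admissibility $\Prob_j^\pi(\tau<\infty)=1$ under every $j$ makes the error probabilities of the truncated procedure converge to those of $\pi$. Monotone convergence then handles $\E_i^\pi N_a(\tau\wedge T)\uparrow\E_i^\pi N_a(\tau)$. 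This truncation step is where I expect the main care to be needed.

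\emph{Step 2: the transportation inequality.} For any $j\in\Alt(i)$, (H1) makes the log-likelihood ratio of $\Prob_i^\pi$ against $\Prob_j^\pi$ on $\cF_\tau$ a sum of action-indexed increments. Wald's identity, applied on the truncated horizon, then gives
\[
\mathrm{KL}\bigl(\Prob_i^\pi|_{\cF_{\tau}},\Prob_j^\pi|_{\cF_{\tau}}\bigr)=\sum_a \E_i^\pi[N_a(\tau)]\,d(K_i^a\|K_j^a).
\]
Infinite KL terms only strengthen the inequality, and the convention $0\cdot\infty=0$ covers actions that are never played. Data processing applied to the $\cF_\tau$-measurable event $\{\hat\Theta=\Theta_i\}$ yields
\[
\sum_a \E_i^\pi[N_a(\tau)]\,d(K_i^a\|K_j^a)\ \ge\ \kl\bigl(\Prob_i(\hat\Theta=\Theta_i),\Prob_j(\hat\Theta=\Theta_i)\bigr).
\]
Here $\Prob_i(\hat\Theta=\Theta_i)\ge1-\delta$, and since $\Theta_j\ne\Theta_i$ we have $\Prob_j(\hat\Theta=\Theta_i)\le\delta$. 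Because $\delta<\tfrac12$, monotonicity of $\kl$ gives a lower bound of $\kl(1-\delta,\delta)=\kl(\delta,1-\delta)$.

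\emph{Step 3: feed into the LP.} Set $n_a:=\E_i^\pi[N_a(\tau)]/\kl(\delta,1-\delta)$. Step 2 shows that $n$ is feasible for \eqref{eq:clp}. Hence
\[
R(\pi,E_i)=\sum_a g_i(a)\E_i^\pi N_a(\tau)=\kl(\delta,1-\delta)\sum_a g_i(a)n_a\ge C_i\,\kl(\delta,1-\delta).
\]
If the LP is infeasible ($C_i=\infty$, boundary B1), Step 2 is contradicted, so no $\delta$-correct policy exists and the bound holds vacuously.

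\emph{Step 4: the numeric and asymptotic forms.} The second inequality is the standard estimate $\kl(\delta,1-\delta)\ge\log(1/(2.4\delta))$ from \citet{kaufmann2016complexity}. Taking the infimum over $\pi$, dividing by $\log(1/\delta)$, and letting $\delta\to0$ gives \eqref{eq:lbliminf}.
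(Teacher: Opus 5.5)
Your proposal is correct and follows the paper's route. It uses the transportation inequality (a Wald-type identity on $\tau\wedge T$ plus data processing, passed to the limit via admissibility and monotone convergence), the $\kl$-monotonicity information constraint for $\delta<\tfrac12$, LP scaling, and truncation to handle $\E_i^\pi\tau=\infty$ caused by zero-loss actions. One point to tighten: in Step~3 the vector $\E_i^\pi[N_a(\tau)]/\kl(\delta,1-\delta)$ can have infinite entries on zero-loss actions even when $R(\pi,E_i)<\infty$. The LP-feasibility step should therefore be applied to the finite truncated counts $\E_i^\pi[N_a(\tau\wedge T)]$, with the truncated error probabilities, before letting $T\to\infty$, exactly as the paper does in its Case~B(b).
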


Only almost-sure termination in every environment is required, as in
\eqref{eq:deltacorrect}; $\E_i^\pi\tau$ may be infinite. The loss and the
inequality are interpreted in $[0,\infty]$. The proof first uses bounded
stopping times and then truncation and monotone convergence
(Appendix~\ref{subsec:lb:lpscale}).

The proof (Appendix~\ref{sec:prooflb}) combines the transportation inequality of
\citet{kaufmann2016complexity}---applied with actions in the role of arms and channel
outputs in the role of arm observations---with a change of measure against each
$j\in\Alt(i)$ on the event $\{\hat\Theta=\Theta_i\}$, followed by the scale
reparametrization $n_a=\E_i^{\pi}[N_a(\tau)]/L$, which turns the information
constraints into exactly the LP \eqref{eq:clp} with value $C_i\,\kl(\delta,1-\delta)$.
The second inequality in \eqref{eq:lb} is the standard Bernoulli-KL bound
$\kl(\delta,1-\delta)\ge\log(1/(2.4\delta))$ of \citet{kaufmann2016complexity}.
The technical route is standard and is credited as such; the content of the theorem is
that the certification currency $g$ and the label-defined alternatives $\Alt(i)$ pass
through the argument unchanged, so the classical constant is replaced by $C(E_i,f)$
with its decision-theoretic semantics.

\begin{theorem}[Matching upper bound: CTS]\label{thm:UB}
Suppose \textnormal{(H1)}, \textnormal{(H2)}, \textnormal{(H2$^+$)},
\textnormal{(H3)} and \textnormal{(H4)} hold, and let
$\pi_\delta^{\mathrm{CTS}}$ be the policy family explicitly selected in
\eqref{eq:cts-family} and Algorithm~\ref{alg:cts}, with stopping threshold
$\beta(t,\delta)=\log(2t(K-1)/\delta)$. Then:
\begin{enumerate}
\item[(a)] $\pi_\delta^{\mathrm{CTS}}$ is $\delta$-correct in the sense of \eqref{eq:deltacorrect};
moreover, the stopping and decision rules alone guarantee
$\max_i \Prob_i(\hat\Theta\neq\Theta_i,\ \tau<\infty)\le\delta$ under \emph{any} sampling rule
satisfying \textnormal{(H1)}. This last statement alone does not guarantee
termination for an arbitrary sampling rule;
\item[(b)] for every environment $E_i$,
\begin{equation}\label{eq:ub}
\limsup_{\delta\to 0}\; \frac{R(\pi_\delta^{\mathrm{CTS}},E_i)}{\log(1/\delta)} \;\le\; C_i .
\end{equation}
\end{enumerate}
\end{theorem}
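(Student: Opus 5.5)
The proof follows the Track-and-Stop template of \citet{garivier2016optimal}, with the cost-ratio allocation $q^*$ of \eqref{eq:cmm} in place of the characteristic-time weights. Let $\hat\imath_t$ be the maximum-likelihood environment after $t$ rounds. The stopping statistic is the generalized log-likelihood ratio $Z_t=\min_{j\in\Alt(\hat\imath_t)}\LLR_t(\hat\imath_t,j)$. The rule stops at the first $t$ with $Z_t>\beta(t,\delta)$ and outputs $\hat\Theta=\Theta_{\hat\imath_t}$.

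\emph{Part (a).} I argue through a union bound. An error at a finite stopping time under $E_i$ requires some $t$ at which $\hat\imath_t\in\Alt(i)$. In particular $\LLR_t(\hat\imath_t,i)>\beta(t,\delta)$, because $i\in\Alt(\hat\imath_t)$. So the error event is contained in $\bigcup_{j\ne i}\{\exists t:\LLR_t(j,i)>\beta(t,\delta)\}$. Under (H1), $\exp(-\LLR_t(j,i))$ is a nonnegative $\Prob_i$-martingale, so Ville's inequality applies. It gives $\Prob_i(\exists t\ge1:\LLR_t(j,i)>\log(1/\epsilon))\le\epsilon$, even for adaptive sampling. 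With the time-uniform version, or a per-$t$ union bound with threshold $\log(2t(K-1)/\delta)$, summing over the $K-1$ alternatives and over $t$ (with $\sum_t 1/(2t)$ handled by the time-uniform version) yields total error at most $\delta$. This step does not use the sampling rule. Termination will follow from part (b)'s analysis, which shows $\tau<\infty$ almost surely.

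\emph{Part (b).} The sampling rule is C-tracking of $q^*(\hat\imath_t)$ with forced exploration at rate $\sqrt t$. I will allow the optimizer to be set-valued and use a measurable selection plus upper hemicontinuity. Full support (H4) bounds the LLR increments. By (H2$^+$) together with forced exploration, $\hat\imath_t=i$ for all $t\ge T_0$, where $\E_i T_0<\infty$; this holds because each wrong environment's LLR against $E_i$ drifts to $-\infty$ at linear rate along some action. After $T_0$ the empirical frequencies $N_a(t)/t$ track $q^*(i)$. By concentration, on a good event $\mathcal G_T$ of high probability,
\[
\min_{j\in\Alt(i)}\LLR_t(i,j)\ge t\bigl(\min_{j}\textstyle\sum_a q^*_a d(K_i^a\|K_j^a)-\epsilon\bigr)\quad\text{for } t\ge\sqrt T .
\]
Combined with the scale normalization $\sum_a q^*_a g_i(a)=1$ implied by \eqref{eq:cmm}, this gives the bound $\tau\le T$ once $T(D_i^*-\epsilon)\ge\log(2T(K-1)/\delta)$, where $D_i^*=1/C_i$ in these units. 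Hence $\E_i\tau\lesssim(1+\epsilon')\log(1/\delta)/D_i^*+\sum_T\Prob(\mathcal G_T^c)$, with the sum finite.

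The loss requires one more step. I decompose $R=\sum_a g_i(a)\E_i N_a(\tau)$. Tracking gives $N_a(\tau)\le\tau q^*_a+O(\sqrt\tau)+T_0$, so $R\le\E_i\tau\cdot\sum_a q^*_a g_i(a)+o(\log 1/\delta)$. This equals $C_i\log(1/\delta)(1+o(1))$, because the sup-ratio \eqref{eq:cmm} evaluated at $q^*$ is exactly $1/C_i$.

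I expect the main obstacle to be the interaction between forced exploration and the zero-loss or non-unique-optimizer cases. Forced exploration at $\sqrt t$ costs $O(\sqrt{\log 1/\delta})$, which is harmless. Non-uniqueness, however, requires continuity of the optimizer correspondence for tracking, and I would first handle it by proving the limit-point statement $\liminf_t\min_j\sum_a (N_a(t)/t)d^j_a/\sum_a (N_a(t)/t)g_i(a)\ge 1/C_i-\epsilon$ via Berge's maximum theorem. The case $C_i=0$ is covered by forcing exploration only on actions of positive loss at sublinear rate.
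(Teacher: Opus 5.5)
There is a genuine gap, in the zero-entry branch of the stated policy family. For strictly positive cost tables your argument is essentially the paper's. It uses exact MLE stabilization from (H2$^+$) with $\sqrt t$ forced exploration, C-tracking of a frozen target, good-event concentration with inversion of the threshold, and cost assembly via $c_i(q^*)/\min_{j\in\Alt(i)}\sum_a q^*_a d(K_i^a\|K_j^a)=C_i$. In (a), your time-uniform Ville route can be applied once at level $\beta(1,\delta)$, since $\beta(\cdot,\delta)$ is nondecreasing. That is a simpler substitute for the paper's geometric grid and even yields $\delta/2$.

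\textbf{The gap.} (H3) permits zeros in the table, and the theorem is about the specific family \eqref{eq:cts-family}. As soon as \emph{any} entry of the known table vanishes, CTS tracks the $\delta$-dependent design $w^*_{\eta_\delta}(\hat\imath_t)$, built from costs $g+\eta_\delta$ for \emph{every} candidate. This includes environments with $C_i>0$ and a strictly positive row.
\begin{itemize}
\item Your part (b) analyzes tracking of the unperturbed $q^*(i)$, which is not what the policy does in that branch.
\item For $C_i=0$ you replace the algorithm by a different one (forcing exploration only on positive-loss actions). That proves nothing about the stated family.
\item It also leaves almost-sure termination of that family in every environment unestablished, and (a) needs this for $\delta$-correctness.
\end{itemize}

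\textbf{What is missing.} Write $S^*_\eta=\min_j\sum_a w^*_{\eta,a}d(K_i^a\|K_j^a)$ and let $C_i(\eta)$ be the LP value with costs $g_i+\eta$. You need three things:
\begin{itemize}
\item the realized rate identity $c_i(w^*_\eta)/S^*_\eta=C_i(\eta)-\eta/S^*_\eta\le C_i(\eta)$;
\item LP continuity $C_i\le C_i(\eta)\le C_i+\eta\|m^*\|_1$ (Lemma~\ref{lem:lpcontinuity});
\item a stopping-time bound that survives a target moving with $\delta$, which requires a lower bound on $S^*_{\eta_\delta}$ and remainders uniform in $\delta$.
\end{itemize}
The paper's Theorem~\ref{thm:degenerate} handles the third by coupling $\varepsilon_\delta=(\log L)^{-2}$ to $\eta_\delta=1/\log L$, where $L=\log(1/\delta)$.

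\textbf{How to close it along your lines.} Set $m:=w^*_\eta/S^*_\eta$; it is a perturbed-LP optimizer. Hence $\eta\|m\|_1=C_i(\eta)-\sum_a g_i(a)m_a\le C_i(\eta)-C_i\le\eta\|m^*\|_1$. This gives $S^*_\eta=1/\|m\|_1\ge 1/\|m^*\|_1$ uniformly in $\eta$. Tracking error and concentration do not depend on the target, and the stabilization tail uses only forced-exploration coverage, so both are uniform in $\delta$. Your fixed-$\epsilon$ argument then yields $\limsup R/\log(1/\delta)\le\lim C_i(\eta_\delta)=C_i$, including the case $C_i=0$.

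\textbf{Smaller points.}
\begin{itemize}
\item In (a), the $\Prob_i$-martingale is $\exp(\LLR_t(j,i))=\prod_s K_j^{A_s}(Y_s)/K_i^{A_s}(Y_s)$. By contrast, $\exp(-\LLR_t(j,i))$ is a $\Prob_i$-submartingale, with one-step conditional mean $1+\chi^2(K_i^a\|K_j^a)$. The inequality you then state is nonetheless correct.
\item Also in (a), a per-$t$ union bound at $\beta(t,\delta)$ diverges like $\sum_t 1/t$, so only the time-uniform route works.
\item In (b), $\sum_a q^*_a g_i(a)=1$ is not implied by \eqref{eq:cmm}. The ratio is scale-invariant, and $q^*$ must lie in the simplex to be tracked by $N_a(t)/t$. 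So $D_i^*=c_i(q^*)/C_i$, not $1/C_i$, and $\E_i\tau\approx\log(1/\delta)/D_i^*$ is not $C_i\log(1/\delta)$. Your final line is still correct because it only uses the ratio.
\item The LLR drift in your stabilization step is linear in the per-action count $N_a(t)$, not in $t$. Forced exploration only makes that count of order $\sqrt t$, so state the argument per action (e.g.\ Hoeffding uniformly over $n\ge m$ for each per-action sum).
\item The Berge/hemicontinuity detour is unnecessary. With a fixed deterministic selection (the paper takes the lexicographically smallest maximizer) and exact stabilization, the target is literally constant after $T_0$.
\end{itemize}
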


The proof (Appendix~\ref{sec:proofub}) adapts the Track-and-Stop template of
\citet{garivier2016optimal} with the cost-weighted plug-in design of
Section~\ref{sec:cts}: part (a) is a mixture-martingale argument over the $K-1$
challengers and a geometric time grid; part (b) combines almost-sure stabilization of
the environment estimator (which is where (H2$^+$) and the forced exploration are
used), tracking of the plug-in allocation, and the LP optimality identity
$\big(\sum_a g_i(a)w^{*}_a\big)/\big(\min_j\sum_a w^{*}_a d_a^{j}\big)=C_i$ at the
cost-ratio optimizer $w^{*}$. If the known cost table contains a zero,
part (b) uses the vanishing-perturbation policy of
Theorem~\ref{thm:degenerate}, rather than assigning a sampling rule to
a zero denominator by continuity.

\begin{corollary}[Limit of the optimal certification cost]\label{cor:main}
Under the union of the assumptions of Theorems~\ref{thm:LB} and~\ref{thm:UB}, namely
\textnormal{(H1)}, \textnormal{(H2)}, \textnormal{(H2$^+$)}, \textnormal{(H3)},
\textnormal{(H4)}, the optimal certification cost satisfies, for every environment
$E_i$,
\begin{equation}\label{eq:main}
\lim_{\delta\to 0}\; \frac{R^{*}_{\delta}(E_i)}{\log(1/\delta)} \;=\; C(E_i,f),
\end{equation}
and the family $\pi_\delta^{\mathrm{CTS}}$ attains the limit. If $C(E_i,f)=0$
(boundary B2), the stronger pointwise conclusion is
$R^{*}_{\delta}(E_i)=0$ for each fixed $\delta\in(0,1/2)$
(Proposition~\ref{prop:zero-infimum}); this does not assert that CTS has
zero loss. The case $C(E_i,f)=+\infty$ is excluded
by \textnormal{(H2)} and corresponds to the hopeless boundary B1, where no
$\delta$-correct policy with finite expected loss exists for small $\delta$.
\end{corollary}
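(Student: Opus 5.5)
The corollary is a sandwich: the lower bound comes from Theorem~\ref{thm:LB}, the upper bound from Theorem~\ref{thm:UB}. I would first fix $E_i$ and note that (H2) makes the LP \eqref{eq:clp} feasible, so $C_i<\infty$. For the lower half I would take the $\liminf$ statement \eqref{eq:lbliminf} directly. It needs only (H1)--(H3), which are included in the union. It follows from \eqref{eq:lb} because $\kl(\delta,1-\delta)\ge\log(1/(2.4\delta))$ and $\log(1/(2.4\delta))/\log(1/\delta)\to1$.

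For the upper half I would use Theorem~\ref{thm:UB}(a). It says $\pi^{\mathrm{CTS}}_\delta$ is $\delta$-correct, so it is admissible in the infimum defining $R^*_\delta(E_i)$. This gives $R^*_\delta(E_i)\le R(\pi^{\mathrm{CTS}}_\delta,E_i)$ for every $\delta$. Dividing by $\log(1/\delta)$ and applying \eqref{eq:ub} gives $\limsup_{\delta\to0} R^*_\delta(E_i)/\log(1/\delta)\le C_i$. Combined with the $\liminf$ bound, the limit exists and equals $C_i$. The same chain of inequalities shows $R(\pi^{\mathrm{CTS}}_\delta,E_i)/\log(1/\delta)\to C_i$, so CTS attains the limit.

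The boundary clauses need separate care, and I expect the zero-cost case to be the main obstacle. If the cost table contains zeros, Theorem~\ref{thm:UB}(b) is invoked through the vanishing-perturbation policy of Theorem~\ref{thm:degenerate}. I would check that this perturbed family is still $\delta$-correct, since part (a) applies to any sampling rule with the CTS stopping rule, and that it terminates. When $C_i=0$, the limit statement reduces to $R^*_\delta(E_i)=o(\log(1/\delta))$. The stronger pointwise identity $R^*_\delta(E_i)=0$ I would not derive from CTS. Instead I would cite Proposition~\ref{prop:zero-infimum}, which holds under full support (H4) and is available in the union of assumptions. I would also state explicitly that this is an infimum claim and does not mean CTS incurs zero loss.

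Finally, the case $C_i=+\infty$ is excluded by (H2), since LP feasibility is exactly the complement of boundary B1. The remark that B1 admits no finite-loss $\delta$-correct policy for small $\delta$ would be cited from the B1 statement proved in Appendix~\ref{sec:prooflb}, rather than reproved here.
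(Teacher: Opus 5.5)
Your proposal is correct and follows the paper's own argument. The limit is obtained by sandwiching the $\liminf$ bound \eqref{eq:lbliminf} of Theorem~\ref{thm:LB} against the CTS bound \eqref{eq:ub} through $R^*_\delta(E_i)\le R(\pi^{\mathrm{CTS}}_\delta,E_i)$, using the perturbed family of Theorem~\ref{thm:degenerate} when the cost table contains zeros, and the $C_i=0$ and $C_i=+\infty$ clauses are cited from Proposition~\ref{prop:zero-infimum} and the B1 analysis, exactly as you propose.
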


The first-order limit in Corollary~\ref{cor:main} follows directly from
\eqref{eq:lbliminf} and \eqref{eq:ub}. We emphasize the reading given in Section~\ref{sec:intro-preview}: the
price of certifying representation adequacy to confidence $\delta$ is
$C(E,f)\log(1/\delta)+o(\log(1/\delta))$, with $C(E,f)$ computable as a finite LP from
the model primitives---and the statement is exactly as strong as its assumptions, which
the boundaries B1--B3 below show to be individually sharp in the senses stated there.

\subsection{The CTS strategy}\label{sec:cts}

We now define the Certification Track-and-Stop family completely. Set
$L=\log(1/\delta)$ and $\eta_\delta=1/\log(\max\{e,L\})$, and use
\begin{equation}\label{eq:cts-family}
\pi_\delta^{\mathrm{CTS}}=
\begin{cases}
 \pi^*,& \min_{i,a}g_i(a)>0,\\
 \pi^*_{\eta_\delta},& \min_{i,a}g_i(a)=0.
\end{cases}
\end{equation}
The choice depends on the entire known table, not on the unknown true
environment. In the second branch every candidate's design costs are
$\widetilde g_i(a)=g_i(a)+\eta_\delta$; in the first they are
$\widetilde g_i(a)=g_i(a)$. Actual loss is always charged using $g_i$.
For $L>e$ the schedule is exactly $\eta_\delta=1/\log L$ used in the
appendix proof; the cap defines a positive value for all allowed $\delta$.
The selected policy
has four components: an environment estimator, a plug-in allocation rule, a tracking
rule with forced exploration, and a stopping rule with terminal decision. Throughout,
$d_a^{j}(\hat\imath):=d(K_{\hat\imath}^{a}\|K_j^{a})$ denotes the plug-in divergence
profile at the estimated environment.

\paragraph{Estimator (maximum likelihood / KL projection).}
At time $t$, CTS computes
\begin{equation}\label{eq:mle}
\hat\imath_t \;\in\; \arg\max_{i\in\{1,\dots,K\}}\;
\sum_{s\le t} \log K_i^{A_s}(Y_s),
\end{equation}
the environment whose induced channel law maximizes the likelihood of the realized
transcript (equivalently, the KL projection of the empirical action-conditional
observation histograms onto the model class); ties are broken by a fixed priority rule.
Under (H1), (H2$^+$) and the forced exploration below, $\hat\imath_t$ equals the true
environment eventually almost surely (Appendix~\ref{sec:proofub}); since $\cE$ is
finite, this is the only stability property the plug-in step needs.

\paragraph{Plug-in allocation (cost-ratio optimizer).}
Given $\hat\imath_t=\hat\imath$, CTS solves
\begin{equation}\label{eq:wratio}
w^{*}(\hat\imath) \;\in\; \arg\sup_{w\in\Delta(\cA)}\;\;
\inf_{j:\,\Theta_j\neq\Theta_{\hat\imath}}\;
\frac{\displaystyle\sum_{a\in\cA} w_a\, d\big(K_{\hat\imath}^{a}\,\|\,K_j^{a}\big)}
{\displaystyle\sum_{a\in\cA} \widetilde g_{\hat\imath}(a)\, w_a},
\end{equation}
where $w^*(\hat\imath)$ denotes the selected design, including its
$\delta$ dependence when costs are perturbed. Two equivalent readings
are useful: it is the normalized LP
solution $n^{*}/\sum_a n^{*}_a$ of \eqref{eq:clp} evaluated at $\hat\imath$
with costs $\widetilde g$; and its
value is the reciprocal of the plug-in loss constant
$\big(\sum_a\widetilde g\,w^{*}_a\big)/\big(\inf_j\sum_a w^{*}_a d_a^{j}\big)
=C_{\hat\imath}(\widetilde g)$.
As recorded in Section~\ref{sec:complexity}, the allocation in \eqref{eq:wratio} is
the cost-ratio coordinate $q$. The loss-normalized display above is value-equivalent
only after its optimizer is mapped back by $q_a\propto v_a/\widetilde g_i(a)$; using the $v$
coordinates directly as sampling frequencies is the source of the $2\times2$
diagnostic in Appendix~\ref{sec:proofub}. The denominator in the implemented
design is strictly positive in both branches. Continuity is used to
analyze the LP value as $\eta_\delta\to0$, not to define an algorithm.

\paragraph{Sampling rule (C-tracking with forced exploration).}
Writing $N_a(t)=\sum_{s\le t}\ind\{A_s=a\}$, CTS plays
\begin{equation}\label{eq:ctrack}
A_{t+1} \;\in\; \arg\max_{a\in\cA}\;
\Big[\, w^{*}_a\big(\hat\imath_t\big) - \frac{N_a(t)}{t} \,\Big],
\end{equation}
subject to the forced-exploration override: if after the choice some action would
violate
\begin{equation}\label{eq:forced}
N_a(t+1) \;\ge\; \sqrt{t+1+|\cA|^{2}} - 2|\cA| ,
\end{equation}
the most under-sampled such action is played instead. The tracking rule
\eqref{eq:ctrack} together with \eqref{eq:forced} is the template of
\citet[Lemma~7]{garivier2016optimal}: it guarantees $N_a(t)/t\to w^{*}_a$ whenever the
tracked allocation stabilizes, while the $\sqrt{t}$-rate exploration makes the
estimator \eqref{eq:mle} consistent and contributes only $o(\log(1/\delta))$ to the
loss (Appendix~\ref{sec:proofub}). Forced exploration can leave a
positive residual loss in B2; it does not characterize the pointwise infimum.

\paragraph{Stopping rule and terminal decision (GLR).}
For each ordered pair of environments $(i,j)$ define the log-likelihood ratio
\begin{equation}\label{eq:llr}
\LLR_{i,j}(t) \;:=\; \sum_{s\le t}
\log \frac{K_i^{A_s}(Y_s)}{K_j^{A_s}(Y_s)},
\end{equation}
which is well-defined and has uniformly bounded increments under (H4). CTS stops at
\begin{equation}\label{eq:glrstop}
\tau \;:=\; \inf\Big\{\, t \;:\;
\max_{i}\; \min_{j:\,\Theta_j\neq\Theta_i} \LLR_{i,j}(t) \;>\; \beta(t,\delta) \,\Big\},
\qquad
\beta(t,\delta) \;:=\; \log\frac{2t\,(K-1)}{\delta},
\end{equation}
and outputs
\begin{equation}\label{eq:decision}
\hat\Theta \;:=\; \Theta_{\hat\imath_\tau}.
\end{equation}
The threshold \eqref{eq:glrstop} is of the \citet[Theorem~10]{garivier2016optimal}
type: it absorbs a union bound over the $K-1$ challengers and a geometric time grid,
which is what makes the $\delta$-correctness of Theorem~\ref{thm:UB}(a) independent of
the sampling rule. The threshold is conservative relative to mixture thresholds
\citep{kaufmann2021mixture}; we keep it because the matching first-order constant is
already delivered and the tightness of second-order terms is left to the open problems
of Section~\ref{sec:switching}.

\begin{algorithm}[t]
\caption{Certification Track-and-Stop, $\pi_\delta^{\mathrm{CTS}}$}\label{alg:cts}
\begin{algorithmic}[1]
\REQUIRE environment class $\cE=\{E_1,\dots,E_K\}$ with labels
$(\Theta_i)$, channels $(K_i^{a})$, losses $(g_i(a))$; confidence $\delta\in(0,\tfrac12)$
\STATE select the branch in \eqref{eq:cts-family}: use $\widetilde g_i=g_i$ if all
entries are positive, and $\widetilde g_i=g_i+\eta_\delta$ otherwise
\STATE initialize $N_a(0)\gets 0$ for all $a$; play each action once (in an arbitrary
order) to initialize \eqref{eq:mle}
\FOR{$t=|\cA|,|\cA|+1,\dots$}
\STATE compute the estimator $\hat\imath_t$ by \eqref{eq:mle}
\STATE solve \eqref{eq:wratio} using the selected costs $\widetilde g$
\IF{$\max_i \min_{j:\Theta_j\neq\Theta_i}\LLR_{i,j}(t) >
\beta(t,\delta)$ with $\beta$ as in \eqref{eq:glrstop}}
\STATE set $\tau\gets t$; \textbf{return} $\hat\Theta=\Theta_{\hat\imath_\tau}$
\ELSE
\STATE choose $A_{t+1}$ by C-tracking \eqref{eq:ctrack}, overridden by forced
exploration \eqref{eq:forced} if violated; observe $Y_{t+1}\sim K^{A_{t+1}}$
\ENDIF
\ENDFOR
\end{algorithmic}
\end{algorithm}

\subsection{Non-triviality and boundary behavior}\label{sec:nontrivial}

We record a cost-sensitivity diagnostic (NT1), a model-scope example (NT2),
and the boundary instances B1--B3. These are not claimed as separations from
general cost-aware controlled testing. B1--B3 are proved in Appendix~\ref{sec:prooflb}; all numerical
claims below were checked on finite instances by finite vertex-enumeration LP
solves, exhaustive enumeration, or direct evaluation (agreement to floating-point
solver precision).

\paragraph{NT1: equal information, different losses, different optimal policies.}
There exist instances with two actions $a_1,a_2$ such that
$K_i^{a_1}=K_i^{a_2}$ for \emph{every} environment $E_i$---the two actions are
statistically indistinguishable under all hypotheses---yet $g_i(a_1)\neq g_i(a_2)$.
Since $C_i$ depends on the loss profile $g_i$ through \eqref{eq:clp}, the optimal
allocations differ across the two actions, and flipping the loss assignment flips which
action the LP selects. Any criterion built from observation kernels alone (a pure
Chernoff-type information rate) assigns the two actions identical value and therefore
cannot recover the optimal design. A concrete instance exhibiting the swap was
constructed and checked numerically. NT1 shows why an information-only design need not minimize a specified
cost. A cost-aware controlled test also responds to this change; the example
does not separate M1 from that literature.

\paragraph{NT2: policy-dependent regime switching is outside the static formula.}
Model~\hyperref[model:M1]{M1} assumes each action carries a fixed channel $K_i^{a}$.
The example of Section~\ref{sec:nt2example} changes the action regime after a
certification outcome: the subsequent fallback transcript and loss are different
from the pre-certification regime. That policy-dependent transition is outside M1,
so the fixed-kernel constant $C(E,f)$ is not a characterization of the enlarged
switching class. The example uses a perfect audit; if that audit is included in the
fixed-kernel menu, its opposite-label KL is infinite and the corresponding static
certification coefficient is already zero. Thus the example must not be presented
as a contradiction to a positive same-menu M1 coefficient. Removing the perfect
audit, or replacing it with a fixed noisy audit, defines a different information
experiment and requires a separate comparison. The valid conclusion is a scope
boundary: Theorems~\ref{thm:LB} and~\ref{thm:UB} apply to the fixed-kernel policy
class, while the regime-switching class is left open.

\paragraph{B1: hopeless instances ($C_i=+\infty$).}
If for some $j\in\Alt(i)$ every action satisfies $d(K_i^{a}\|K_j^{a})=0$, then the LP
\eqref{eq:clp} is infeasible, $C_i=+\infty$, and $\delta$-certification of the label
at $E_i$ is impossible for any $\delta<1/2$: environments $E_i$ and $E_j$ induce
identical transcript laws under every policy. Infeasibility of the LP is exactly the
hopeless case, in the same sense in which partial-monitoring games with no informative
feedback structure are hopeless \citep{bartok2014partial}: the analogy is structural
(identical observation laws under a label disagreement), and we record it as such
rather than as a reduction.

\paragraph{B2: free rides ($C_i=0$).}
Under the finite full-support assumptions, $C_i=0$ exactly when the
zero-loss actions collectively distinguish $i$ from every opposite-label
alternative. A single action $a^\circ$ with $g_i(a^\circ)=0$ and
$d(K_i^{a^\circ}\|K_j^{a^\circ})>0$ for every $j\in\Alt(i)$ is a
sufficient special case. Proposition~\ref{prop:zero-infimum} proves
$R_\delta^*(E_i)=0$ for every fixed $\delta$: a long finite free probe,
followed when necessary by a fresh finite fallback test, has target loss
arbitrarily close to zero while remaining correct and terminating in all
environments. Whether the infimum is attained is a different question,
answered by Proposition~\ref{prop:zero-attainment}. The common CTS family
has the weaker first-order guarantee $R=o(\log(1/\delta))$ here; positive
cost from its forced exploration is a property of that policy, not a
lower bound on the pointwise optimum. Free evidence also occurs in
regret models with zero-gap actions, so the phenomenon itself is not a
novelty claim.

\paragraph{B3: two-point degeneration ($|\cE|=2$).}
With two environments, $\Alt(i)$ is a singleton for each $i$ and both forms of $C_i$
reduce to the bang-bang ratio \eqref{eq:bangbang}. Theorems~\ref{thm:LB}
and~\ref{thm:UB} then recover the binary controlled-testing constants of
\citet[Theorem~5.1]{nitinawarat2015controlled} (when task loss is a common per-action
control cost), so the results correctly contain the known special case rather than
competing with it. This boundary also fixes the sense in which the composite
environment class matters: with $K\ge 3$ and shared labels, the max-min form
\eqref{eq:cmm} is genuinely a game, and single-action (bang-bang) allocations are
generically suboptimal.

% Major revision: separates the generic testing reduction from cost compatibility.
\section{Observable Classes and Compatibility of Optimal Designs}
\label{sec:observable}

The adequacy definition gives the terminal answer its decision-theoretic
meaning. It does not, by itself, change the controlled-testing proof. We first
make that reduction explicit, then identify what can fail when an unknown
environment affects task loss without affecting observable evidence.

\begin{proposition}[Finite-table reduction]\label{prop:table-reduction}
Fix the representation and the finite class in Model~\hyperref[model:M1]{M1}.
Its sequential decision problem is completely determined by
\[
  \bigl(\Theta_i,(K_i^a)_{a\in\cA},(g_i(a))_{a\in\cA}\bigr)_{i=1}^K.
\]
Replacing the representation problem by controlled testing with this same
binary answer map, channels, and hypothesis-dependent costs preserves every
policy's admissibility, error probabilities, and expected costs. In particular,
the LP, the lower bound, and the CTS proof use the representation only through
the supplied labels; the same arguments apply to any supplied binary labels.
\end{proposition}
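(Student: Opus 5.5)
The plan is to prove the reduction by checking, ingredient by ingredient, that every object in Model~\hyperref[model:M1]{M1} is a measurable function of the table
\[
\mathcal{T}:=\bigl(\Theta_i,(K_i^a)_{a\in\cA},(g_i(a))_{a\in\cA}\bigr)_{i=1}^K .
\]
We then show that any other problem with the same table induces literally the same probability spaces.

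\emph{Step 1: construct the laws.} A policy $\pi$ is a sequence of decision rules on histories $H_{t-1}=(A_1,Y_1,\dots,A_{t-1},Y_{t-1})$, together with a stopping time and a terminal decision. None of these refer to $f$, $H$, $p_E$ or $\ell_E$. By (H1), for each $i$ the finite-dimensional law of $(A_1,Y_1,\dots,A_t,Y_t)$ under $\pi$ is the product of the policy kernels and the channel kernels $K_i^{A_s}$. The seed is independent and has a fixed law. The Ionescu-Tulcea theorem then gives a unique $\Prob_i^\pi$ on the infinite sequence space, and this measure is determined by $(\pi,(K_i^a)_a)$ alone. Hence the same policy, run in the controlled-testing problem with channels $K_i^a$, induces the identical measure.

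\emph{Step 2: transport each criterion.} Admissibility $\Prob_i^\pi(\tau<\infty)=1$ is an event probability under $\Prob_i^\pi$, so it is preserved. The error probability $\Prob_i^\pi(\hat\Theta\ne\Theta_i)$ depends on $\Prob_i^\pi$ and on the number $\Theta_i$ from the table. The cost $R(\pi,E_i)=\sum_a g_i(a)\E_i^\pi N_a(\tau)$ depends on $\Prob_i^\pi$ and on $g_i$ via \eqref{eq:cost}; monotone convergence justifies this identity in $[0,\infty]$. The alternative sets $\Alt(i)$ are functions of the labels alone. It follows that the classes of $\delta$-correct policies coincide, and so does $R^*_\delta(E_i)$.

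\emph{Step 3: audit the proofs.} We inspect \eqref{eq:clp}, \eqref{eq:cmm}, the proof of Theorem~\ref{thm:LB}, and the CTS construction \eqref{eq:mle}--\eqref{eq:decision}. The inputs are only $d(K_i^a\|K_j^a)$, $g_i(a)$ and $\Alt(i)$. The LLRs, the MLE, the plug-in design $w^*$ and the stopping threshold $\beta$ are likewise functions of $\mathcal{T}$ and the transcript.

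The only place the representation enters is \eqref{eq:label}, which computes $\Theta_i$. If $\Theta_i$ is replaced by an arbitrary binary map, no step of those proofs uses the Bayes-risk definition. Assumptions (H1)--(H4) are also stated purely in terms of $\mathcal{T}$.

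The main obstacle is Step~3. It is not technically hard, but it requires making sure no hidden use of $f$ or of the task loss $\ell_E$ slips in. For example, one must confirm that $g_i$ is treated as a specified number and not recomputed from $\ell_E$. One must also confirm that the transcript retained by the certifier is the action–observation sequence, not $f(H)$. I would address this by checking that each displayed definition in Section~\ref{sec:model} has only table entries and transcript variables on its right-hand side.
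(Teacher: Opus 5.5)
Your proposal is correct and follows essentially the same route as the paper. You invoke Ionescu--Tulcea where the paper uses induction on finite transcript laws; either way the transcript laws, admissibility, error probabilities and Tonelli-based expected costs are identical. You then audit the proofs exactly as the paper does, confirming that the answer enters only through $\{\hat\Theta=\Theta_i\}$, $\Alt(i)$, the GLR comparison and the design constraints.
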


\begin{proof}
Use the same randomized action rules in the two descriptions. Conditional on
the past and the selected action, the next observation has the same law
$K_i^a$ in each description. Induction therefore gives identical finite
transcript laws under each $i$, including the policy's independent seed.
Applying the same stopping and decision rules preserves their distribution,
and hence admissibility and error. The action counts have the same law and
are charged the same $g_i(a)$, so Tonelli's theorem gives identical expected
costs. In the lower-bound proof the answer appears only in
$\{\hat\Theta=\Theta_i\}$ and $\Alt(i)$; in the upper-bound proof it appears
only in the GLR comparison and design constraints. Neither proof uses a
further identity involving $R_E^*$ or $f$.
\end{proof}

This is a reduction to an abstract problem, not a claim that one particular
published theorem covers every assumption here. It does rule out attributing
new proof machinery to the Bayes-risk label. Model~M1 supplies a loss table;
it imposes no additional equation connecting that table to the Bayes-risk
gap. When a concrete task is used, that connection must be checked separately.
The construction below derives both the labels and the action losses from
one fixed loss function.

\subsection{Label identifiability is weaker than cost-optimality compatibility}
\label{sec:observable:compatibility}

Write $i\sim k$ when $K_i^a=K_k^a$ for every action $a$, and let $\mathcal B$
be the partition into these observational equivalence classes. Under (H2),
every class $B\in\mathcal B$ has one common label: an opposite-label pair in
the same class would make certification impossible. Nevertheless, members of
$B$ can have different loss profiles. Because losses are not additional
feedback in M1, a policy cannot learn which of these profiles is realized.

For $i\in B$, define the common information-feasible polyhedron
\[
  F_B=\left\{n\in\mathbb R_+^{\cA}:
       \sum_a n_a d(K_i^a\|K_j^a)\ge1
       \quad\text{for every }j\in\Alt(i)\right\}.
\]
This does not depend on the representative $i$. Put
\begin{equation}\label{eq:class-gap}
  J_B=\min_{n\in F_B}\sum_{i\in B}\sum_a g_i(a)n_a,
  \qquad \Gamma_B=J_B-\sum_{i\in B}C_i\ge0.
\end{equation}
The sum is a device for checking simultaneous attainability, not a new
Bayesian or worst-case definition of $R_\delta^*(E_i)$.

\begin{theorem}[Compatibility within observable classes]
\label{thm:observable-compatibility}
Assume (H1), (H2), (H4), both labels are present, and $0<g_i(a)<\infty$ for every
$i,a$. Do not assume (H2$^+$). Then:
\begin{enumerate}
\item For every $\delta$-correct policy and every $B\in\mathcal B$,
\begin{equation}\label{eq:class-lower}
  \sum_{i\in B}R(\pi,E_i)
  \ge J_B\,\kl(\delta,1-\delta).
\end{equation}
\item There exists a single family $(\pi_\delta)$ of $\delta$-correct policies satisfying
\[
  \lim_{\delta\to0}\frac{R(\pi_\delta,E_i)}{\log(1/\delta)}=C_i
  \quad\text{for every }i
\]
if and only if $\Gamma_B=0$ for every $B$. Equivalently, within every
observable class the covering LPs have a common minimizing count vector:
\begin{equation}\label{eq:common-optimum}
  \bigcap_{i\in B}\arg\min_{n\in F_B}\sum_a g_i(a)n_a
  \ne\varnothing.
\end{equation}
When this holds, CTS on the observable quotient with class cost
$\bar g_B(a)=\sum_{i\in B}g_i(a)$ attains all the constants simultaneously.
\end{enumerate}
\end{theorem}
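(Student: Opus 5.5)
The plan for part~1 is to reuse the transportation argument behind Theorem~\ref{thm:LB}, applied \emph{once} to the whole class $B$ instead of environment by environment. The key observation is that all $i\in B$ share the same channels, so for a fixed policy $\pi$ the transcript laws $\Prob_i^\pi$ coincide for $i\in B$. In particular, the expected counts $m_a:=\E_i^\pi[N_a(\tau)]$ do not depend on $i\in B$.

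Fix a representative $i\in B$ and any $j\in\Alt(i)$. The change-of-measure step on the event $\{\hat\Theta=\Theta_i\}$, with truncation when $\E\tau=\infty$, gives
\[
\sum_a m_a\, d(K_i^a\|K_j^a)\ge\kl(\delta,1-\delta).
\]
Hence $m/\kl(\delta,1-\delta)\in F_B$. Summing the costs,
\[
\sum_{i\in B}R(\pi,E_i)=\sum_a\Bigl(\sum_{i\in B}g_i(a)\Bigr)m_a\ge J_B\,\kl(\delta,1-\delta),
\]
by the definition of $J_B$. This proves \eqref{eq:class-lower}. The bound $\Gamma_B\ge0$ is immediate, since $\min$ of a sum is at least the sum of the $\min$s over the common feasible set $F_B$.

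For the equivalence of $\Gamma_B=0$ with \eqref{eq:common-optimum}, I would argue directly. If some $n^*$ lies in every $\arg\min$, then $J_B\le\sum_i C_i$, so $\Gamma_B=0$. Conversely, suppose a minimizer $n$ of the summed cost has $\sum_i g_i\cdot n=\sum_i C_i$. Since each $g_i\cdot n\ge C_i$, every term must be tight, so $n$ is a common minimizer. Attainment of the minima holds because $F_B$ is a nonempty polyhedron (by (H2)) and all costs are positive.

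For the \emph{only if} direction of part~2, suppose a single family satisfies $R(\pi_\delta,E_i)/\log(1/\delta)\to C_i$ for every $i$. Divide \eqref{eq:class-lower} by $\log(1/\delta)$ and use $\kl(\delta,1-\delta)/\log(1/\delta)\to1$; this gives $\sum_{i\in B}C_i\ge J_B$, hence $\Gamma_B=0$.

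For the \emph{if} direction, form the observable quotient. Its hypotheses are the classes $B$, each with the common channel $K_B^a$, the common label (well defined under (H2)), and cost $\bar g_B$. I would then check its assumptions:
\begin{itemize}
\item[] (H2$^+$) holds on the quotient by construction, since distinct classes differ in some channel, so some KL is positive;
\item[] (H4) is inherited;
\item[] $\bar g_B>0$, so the positive-cost branch of \eqref{eq:cts-family} applies.
\end{itemize}
By Proposition~\ref{prop:table-reduction}, running quotient-CTS in the original problem gives transcript laws that depend only on $B$. Theorem~\ref{thm:UB}(a) therefore transfers $\delta$-correctness and termination, because the error events for $E_i$ equal those for its class.

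The quotient LP constant is $\min_{n\in F_B}\bar g_B\cdot n=J_B$, and CTS tracks its normalized minimizer $w^*$. To obtain the \emph{individual} constants, I need more than Theorem~\ref{thm:UB}(b) as stated, which only controls $\bar g_B$-cost. The plan is to reopen its proof (Appendix~\ref{sec:proofub}). There one shows that almost surely, and in expectation, $\E[N_a(\tau)]/\log(1/\delta)\to n^{*}_a$, where $n^*=w^*/\min_j\sum_a w^*_a d^j_a$ is the quotient LP minimizer that CTS tracks. Then for each $i\in B$,
\[
R(\pi_\delta,E_i)/\log(1/\delta)\to g_i\cdot n^*.
\]
The remaining question is whether $g_i\cdot n^*=C_i$. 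Any minimizer of the summed cost is a common minimizer when $\Gamma_B=0$, by the tightness argument above. So every $n^*$ CTS might select, including under ties, gives $g_i\cdot n^*=C_i$. The lower bound of Theorem~\ref{thm:LB} then yields the matching $\liminf$.

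The main obstacle is this per-action count convergence. Theorem~\ref{thm:UB}(b) is stated for the aggregate cost only, so I must extract from its proof that the normalized counts converge to the tracked design $n^*$ in $L^1$ and not just in cost. If the quotient LP has multiple minimizers, the selection rule in \eqref{eq:wratio} must be a fixed deterministic choice, so that tracking stabilizes. I would first try to get the count convergence from the proof's high-probability bound on $\tau$ together with the C-tracking guarantee $N_a(t)/t\to w^*_a$, uniformly integrated using the same tail bounds used for $\E\tau$.
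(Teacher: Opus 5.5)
Your proof is correct. Part~1, the equivalence of $\Gamma_B=0$ with \eqref{eq:common-optimum}, and the necessity direction follow the paper essentially verbatim. One small point in part~1: if some $m_a=+\infty$, then $m/\kl(\delta,1-\delta)$ is not a point of $F_B$. Strict positivity of the costs then makes $\sum_{i\in B}R(\pi,E_i)=+\infty$, so the bound is immediate and no truncation is needed there.

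The real difference is in sufficiency. You assert that Theorem~\ref{thm:UB}(b) on the quotient is not enough because it controls only the $\bar g_B$-cost, so you plan to extract per-action count asymptotics from Appendix~\ref{sec:proofub}. The paper instead uses a squeeze:
\begin{enumerate}
\item Because transcript laws agree within $B$, the quotient cost of CTS is exactly $\sum_{i\in B}R(\pi_\delta,E_i)$. Theorem~\ref{thm:UB}(b) then gives $\limsup_{\delta\to0}\sum_{i\in B}R(\pi_\delta,E_i)/\log(1/\delta)\le J_B=\sum_{i\in B}C_i$.
\item Theorem~\ref{thm:LB} gives $\liminf_{\delta\to0}R(\pi_\delta,E_k)/\log(1/\delta)\ge C_k$ for every $k$.
\item Subtracting the finitely many other summands yields $\limsup R(\pi_\delta,E_i)/\log(1/\delta)\le C_i$.
\end{enumerate}
So the step you call the main obstacle is unnecessary.

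Your route nevertheless works, and more cheaply than you expect. You need only the one-sided bound $\limsup_{\delta\to0}\E[N_a(\tau)]/\log(1/\delta)\le w^*_a/S^*=n^*_a$. This follows at once from \eqref{eq:ub:trackexp} together with Theorem~\ref{thm:stoptime}, exactly as they are combined in the proof of Theorem~\ref{thm:costassembly}. No $L^1$ or almost-sure convergence and no uniform integrability are required. Your tightness argument then gives $g_i\cdot n^*=C_i$.

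The trade-off is as follows. The paper's squeeze treats Theorem~\ref{thm:UB}(b) as a black box and is insensitive to which minimizer CTS tracks or how ties are broken. Your argument relies on CTS internals (the deterministic selection of $w^*$ and the tracking lemma), but it yields more. It identifies the limiting per-action expenditure $n^*$ and shows explicitly that each $E_i\in B$ pays $g_i\cdot n^*$, which makes visible why a common minimizer is exactly the right condition.
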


\begin{proof}
All members of $B$ induce the same transcript law under any fixed policy,
including the law of the stopped action counts. If their expected losses
are finite, strict positivity of the costs implies finite expected stopping
time. Thus $x_a=\E_i[N_a(\tau)]$ is finite and independent of $i\in B$.
The transportation inequality against every opposite-label $j$ gives
$\sum_a x_a d(K_i^a\|K_j^a)\ge L_\delta$, where
$L_\delta=\kl(\delta,1-\delta)>0$. Hence $x/L_\delta\in F_B$.
Summing the costs and minimizing over $F_B$ proves
\eqref{eq:class-lower}. If a cost is infinite, that inequality is immediate.

Each objective on $F_B$ is at least $C_i$, so $J_B\ge\sum_{i\in B}C_i$.
The LP defining $J_B$ has a minimizer: it is feasible, and positive costs
make its bounded sublevel sets compact. Equality holds precisely when a
minimizer attains every $C_i$, proving the equivalence with
\eqref{eq:common-optimum}. If one policy family attains every $C_i$, divide
\eqref{eq:class-lower} by $\log(1/\delta)$ and take limits to obtain
$\sum_{i\in B}C_i\ge J_B$, proving necessity.

For sufficiency, replace each class by one hypothesis with its common
channels and label, and give it cost $\bar g_B$. Distinct quotient hypotheses
are pairwise distinguishable by definition; they satisfy the full-support,
positive-cost assumptions of Theorem~\ref{thm:UB}. Duplicate alternative
constraints do not change $F_B$, so the quotient's complexity is $J_B$.
Proposition~\ref{prop:table-reduction} allows application of the finite-label
CTS theorem without a new representation interpretation of $\bar g_B$.
Lifting that policy to the original class preserves correctness and gives
\[
  \limsup_{\delta\to0}
  \frac{\sum_{i\in B}R(\pi_\delta,E_i)}{\log(1/\delta)}
  \le J_B=\sum_{i\in B}C_i.
\]
The original environment-wise lower bound gives a liminf of at least $C_i$
for each summand. Since $B$ is finite, subtracting these lower bounds from
the sum proves the matching limsup for each individual environment.
\end{proof}

Theorem~\ref{thm:observable-compatibility} separates two requirements.
Opposite-label distinguishability makes the answer learnable. A common
optimal design within each observable class makes all the environment-wise
prices attainable by one policy. Assumption (H2$^+$) is a sufficient way to
ensure the second requirement, because it makes every class a singleton;
it is not necessary. Conversely, a positive $\Gamma_B$ prevents simultaneous
attainment even though the label is identifiable. This concerns the order
of quantifiers: separate pointwise infima over policies need not be realized
by the same policy family. The compatibility theorem is restricted to
positive finite losses. The boundary results below address pointwise zero
infima and exact zero-cost attainment separately; they do not extend its
full asymptotic criterion to arbitrary nonnegative losses.

\subsection{An adequacy instance with an exact factor-two obstruction}
\label{sec:observable:example}

Let $H\in\{0,1\}$ be uniform, let $f(H)$ be constant, and let the two task
actions be $a_1,a_2$. Fix $0<\eta\le1/4$. One loss function on hidden task
states $S=(i,H)$ is specified by the following rows, with $E_i$ fixing the
first coordinate of $S$:
\[
\begin{array}{c|cc}
  S & \ell(S,a_1)&\ell(S,a_2)\\\hline
  (1,0),(1,1)&\eta&3\eta\\
  (2,0),(2,1)&3\eta&\eta\\
  (3,0)&0&4\eta\\
  (3,1)&4\eta&0
\end{array}
\]
Consequently, the full-history and compressed Bayes risks are both $\eta$
in $E_1,E_2$, whereas they are $0$ and $2\eta$ in $E_3$. Thus
$(\Theta_1,\Theta_2,\Theta_3)=(1,1,0)$ and the expected task-loss table is
\[
  g_1=(\eta,3\eta),\qquad
  g_2=(3\eta,\eta),\qquad g_3=(2\eta,2\eta).
\]
For either selected action, supply independent feedback with law
$\mathrm{Bernoulli}(1/4)$ in $E_1,E_2$ and $\mathrm{Bernoulli}(3/4)$ in
$E_3$. The feedback is independent of the fresh task history; neither the
hidden state nor the realized loss is revealed to the certifier. This is
the observation protocol in M1. It derives the adequacy labels and the
charged losses from the same environment and loss function, rather than
assigning unrelated labels to a cost table.

Both directed Bernoulli divergences equal $d=\tfrac12\log3>0$. The only
non-singleton observable class is $B=\{1,2\}$, for which
\[
  F_B=\{n\ge0:n_1+n_2\ge1/d\},\quad
  C_1=C_2=\eta/d,\quad C_3=2\eta/d,\quad
  J_B=4\eta/d,\quad \Gamma_B=2\eta/d.
\]
The minimizing counts are $(1/d,0)$ for $E_1$ and $(0,1/d)$ for $E_2$.
Their intersection is empty, although certifying the shared label against
$E_3$ is straightforward. Equation~\eqref{eq:class-lower} gives, for every
$\delta$-correct policy,
\[
  \max_{i=1,2}R(\pi,E_i)\ge
  \frac{2\eta}{d}\,\kl(\delta,1-\delta).
\]
This factor two relative to $C_1=C_2$ is exact to first order. Alternate
$a_1,a_2$ and use the two-hypothesis GLR stopping rule on the observable
quotient. Since both actions have the same channels, the GLR stopping time has
the same law under any sampling rule, including unit-cost CTS. Its
bound therefore gives $\E_i\tau/\log(1/\delta)\to1/d$. Alternation makes each of
the first two task losses $2\eta\E_i\tau+O(\eta)$, attaining the displayed
bound asymptotically. Always using $a_1$, or always using $a_2$, instead
attains the pointwise constant for $E_1$, or for $E_2$, respectively;
both policies remain correct in every environment.

With a hypothesis-independent fee $g_i(a)=c(a)$, the objectives within an
observable class coincide, so $\Gamma_B=0$ automatically. In the usual bandit
feedback model, equal reward laws also force equal means and regret gaps.
Here the unobserved task losses can differ despite equal feedback laws.
That is the specific cost--observation separation used by this obstruction.
It is not a property exclusive to adequacy labels: the theorem also applies
to other finite-label tests with hidden, hypothesis-dependent losses.

\subsection{Free evidence: a zero infimum need not be a zero-cost policy}
\label{sec:free-evidence}

The pointwise optimization in Section~\ref{sec:model} permits a different globally
correct policy for each environment whose risk is being minimized. This
quantifier matters even at a fixed confidence level. Write
$Z_i=\{a:g_i(a)=0\}$ for the actions that are free in environment $i$.

\begin{proposition}[Zero pointwise infimum]\label{prop:zero-infimum}
Assume (H1)--(H4), both labels are present, and nonnegative finite costs.
For every $i$ and every fixed $\delta\in(0,1/2)$, the following are equivalent:
\begin{enumerate}
\item $C_i=0$;
\item for every $j\in\Alt(i)$ there exists $a\in Z_i$ with
$d(K_i^a\|K_j^a)>0$;
\item $R_\delta^*(E_i)=0$.
\end{enumerate}
In case (ii), for every $\epsilon>0$ there is a globally $\delta$-correct
policy with a deterministic finite bound on its stopping time and
$R(\pi,E_i)<\epsilon$. No assumption (H2$^+$) is needed.
\end{proposition}

\begin{proof}
The covering LP is feasible and attains its finite minimum. A zero
objective forces an optimizer to be supported on $Z_i$, proving (i)$\Rightarrow$(ii).
Conversely, sufficiently many copies of every action in $Z_i$ give a
feasible count vector of zero cost, proving (ii)$\Rightarrow$(i).
The lower bound of Theorem~\ref{thm:LB}, with
$\kl(\delta,1-\delta)>0$, proves (iii)$\Rightarrow$(i).

For (ii)$\Rightarrow$(iii), first construct a finite fallback test. One
block samples every action once. For opposite-label $k,l$, let
\[
 \rho_{kl}=\prod_{a\in\cA}\sum_y\sqrt{K_k^a(y)K_l^a(y)},
 \qquad \rho=\max_{\Theta_k\ne\Theta_l}\rho_{kl}<1.
\]
Full support gives $\rho>0$, and (H2) gives the strict upper bound.
After $m$ independent blocks, a maximum-likelihood environment estimate
has wrong label under $k$ with probability at most
$\sum_{l\in\Alt(k)}\rho_{kl}^m\le(K-1)\rho^m$: apply Markov's
inequality to the square root of each competing likelihood ratio.
Choose a finite integer $m=m_\delta$ making this at most $\delta/2$.
This fallback stops deterministically and its loss in any environment is
bounded by $B_\delta=m_\delta\max_k\sum_a g_k(a)<\infty$.

Fix a candidate $i$ before sampling. Take $N$ complete blocks of the
actions in $Z_i$, and accept label $\Theta_i$ only if
\[
 \LLR_{i,j}(N|Z_i|)\ge\beta_\delta:=\log(2/\delta)
 \quad\text{for every }j\in\Alt(i).
\]
Otherwise run the fallback on fresh samples. Under any opposite-label
$j$, false early acceptance has probability at most
$e^{-\beta_\delta}=\delta/2$, by the likelihood-ratio identity and
Markov's inequality for this deterministic probing schedule. Under a
same-label environment an early acceptance is correct. Conditional on
any probing history, (H1) makes the fresh fallback $\delta/2$-correct.
Thus every finite-$N$ policy is globally $\delta$-correct and stops by
$N|Z_i|+m_\delta|\cA|$ in every environment.

Under the target $i$, the mean log-likelihood ratio per free block is
$\sum_{a\in Z_i}d(K_i^a\|K_j^a)>0$ for each $j\in\Alt(i)$.
The strong law and finiteness of $\Alt(i)$ therefore imply that the
fallback probability $q_{i,N}$ tends to zero. The probing cost under $i$
is identically zero, so
\[
 0\le R(\pi_{i,N,\delta},E_i)\le B_\delta q_{i,N}\longrightarrow0
 \qquad(N\to\infty,\ \delta\text{ fixed}).
\]
Taking the infimum over these admissible policies proves (iii).
\end{proof}

The candidate $i$ is part of a policy's fixed specification, not knowledge
supplied during execution. Every policy just constructed is correct in
\emph{all} environments, while the pointwise infimum may choose different
candidates. The result neither asserts a single zero-cost policy for all
free-ride environments nor charges for the increasingly long free probe.

\begin{proposition}[When zero cost is attained]\label{prop:zero-attainment}
Under the assumptions of Proposition~\ref{prop:zero-infimum}, fix a
nonempty set $S\subseteq\{1,\ldots,K\}$ and put $Z_S=\bigcap_{i\in S}Z_i$.
For any $\delta\in(0,1/2)$, a single globally $\delta$-correct policy
has $R(\pi,E_i)=0$ for every $i\in S$ if and only if, for every pair
$k,l$ with opposite labels, some $a\in Z_S$ satisfies $K_k^a\ne K_l^a$.
\end{proposition}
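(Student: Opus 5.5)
The plan is to prove the two directions separately. Sufficiency will use an explicit policy that samples only from $Z_S$. Necessity will transfer the zero-cost property from the environments in $S$ to every environment through full support (H4), and then reduce to the indistinguishability argument behind boundary B1.

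\emph{Sufficiency.} Assume every opposite-label pair $k,l$ admits some $a\in Z_S$ with $K_k^a\ne K_l^a$. The policy plays the actions of $Z_S$ in round-robin blocks, and never any other action. Its stopping rule is a GLR rule restricted to opposite-label challengers:
\[
\tau=\inf\Bigl\{t:\max_k\min_{l\in\Alt(k)}\LLR_{k,l}(t)>\log\tfrac{K-1}{\delta}\Bigr\},
\]
with output $\Theta_{k}$ for a maximizing $k$. This policy has zero cost in every $E_i$ with $i\in S$, because $g_i(a)=0$ for all $a\in Z_S$.

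For correctness, work under the true index $m$. A wrong answer requires some $l\in\Alt(m)$ with $\LLR_{l,m}(t)$ exceeding the threshold at some time. Under $\Prob_m$ the process $\exp(\LLR_{l,m}(t))$ is a nonnegative martingale with mean one; this uses (H1) and full support. Ville's inequality therefore bounds this probability by $\delta/(K-1)$, and a union bound over $l\in\Alt(m)$ gives error at most $\delta$.

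For termination, again fix the true index $m$ and any $l\in\Alt(m)$. The blocks contain an action $a\in Z_S$ with $d(K_m^a\|K_l^a)>0$, and this divergence is finite by (H4). By the strong law, $\LLR_{m,l}(t)$ grows linearly almost surely, while the threshold is constant. Since $\Alt(m)$ is finite, $\tau<\infty$ almost surely. Only opposite-label pairs enter the rule, so (H2$^+$) is never needed.

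\emph{Necessity.} Let $\pi$ be globally $\delta$-correct with $R(\pi,E_i)=0$ for all $i\in S$. Fix $i\in S$ and $a\notin Z_i$. Since $g_i(a)>0$, we have $\Prob_i^\pi(N_a(\tau)>0)=0$.

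The key step is to transfer this null event to every environment. The event $\{N_a(\tau)>0\}$ is a countable union, over $t$, of events determined by the finite history $H_t$ and the seed, namely that $A_t=a$ and $t\le\tau$. Under (H4), a finite history has positive probability under $\Prob_i^\pi$ if and only if it has positive probability under $\Prob_m^\pi$. Both probabilities equal the policy's seed-dependent action probabilities times strictly positive channel probabilities. Hence $\Prob_m^\pi(N_a(\tau)>0)=0$ for every $m$.

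Taking the union over $i\in S$ and $a\notin Z_i$ shows that, under every environment, $\pi$ almost surely plays only actions in $Z_S$ up to $\tau$.

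Now suppose, for contradiction, that some opposite-label pair $k,l$ has $K_k^a=K_l^a$ for all $a\in Z_S$. Modify $\pi$ off this almost-sure event so that it never leaves $Z_S$; this changes nothing almost surely. The induction from the proof of Proposition~\ref{prop:table-reduction} then gives identical laws for $(\tau,\hat\Theta)$ under $\Prob_k^\pi$ and $\Prob_l^\pi$, with $\tau<\infty$ almost surely. This forces
\[
\Prob_k(\hat\Theta=\Theta_k)+\Prob_l(\hat\Theta=\Theta_l)=1,
\]
whereas $\delta$-correctness makes each term at least $1-\delta>1/2$. This is the desired contradiction.

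I expect the absolute-continuity transfer to be the main obstacle. It is the only place where zero cost under the members of $S$ must be converted into a statement about arbitrary, possibly opposite-label, environments. It also needs care with the randomized seed and with the unbounded stopping time, which is why the argument is organized as a countable union of finite-history events.
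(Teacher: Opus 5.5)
Your proof is correct and follows the paper's argument. Necessity is the same as the paper's: zero loss forces $\E_i[N_a(\tau)]=0$ for $a\notin Z_i$; full support transfers each null event $\{t\le\tau,\ A_t=a\}$ to every environment; the countable union over $t$ and the finite union over $i\in S$ confine play to $Z_S$; and identical kernels on $Z_S$ then give identical stopped decision laws for an opposite-label pair, contradicting $\delta<1/2$. The one difference is in sufficiency. The paper reuses the fixed-length block maximum-likelihood test of Proposition~\ref{prop:zero-infimum} restricted to $Z_S$, which gives a deterministic stopping time. Your constant-threshold GLR, justified by Ville's inequality and the strong law, is equally valid: it gives almost-sure rather than bounded termination, and almost-sure termination is all that global $\delta$-correctness requires.
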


\begin{proof}
If the condition holds, the finite-block maximum-likelihood test in the
preceding proof, using only $Z_S$ and enough blocks for error $\delta$,
is globally correct and has zero loss in every member of $S$.

Conversely, zero expected nonnegative loss under $i$ implies that no
positive-$g_i$ action is selected before stopping, with probability one.
Under full support, finite transcript laws under any two environments
are mutually absolutely continuous, including the policy's independent
random seed. For each fixed time $t$, the event
$\{t\le\tau,\ A_t\notin Z_i\}$ therefore has probability zero under
every environment. Taking the countable union over $t$ and the finite
union over $i\in S$ shows that the policy uses only $Z_S$ under every
environment, almost surely. If an opposite-label pair has identical
kernels on $Z_S$, its finite transcript laws, and hence its stopped
decision laws, coincide. Almost-sure termination and $\delta<1/2$ make
global correctness impossible for that pair.
\end{proof}

\paragraph{Nonattainment and simultaneous obstructions.}
For a concrete nonattainment example, take labels $(1,1,0)$, costs
$g_1=(0,\eta)$, $g_2=(\eta,0)$, $g_3=(\eta/2,\eta/2)$, and Bernoulli
feedback probabilities
\[
 \begin{array}{c|cc}
       &a_1&a_2\\\hline
 E_1&1/4&1/4\\
 E_2&3/4&1/4\\
 E_3&3/4&3/4
 \end{array}\qquad (0<\eta\le1).
\]
These labels and costs arise from the same task construction as above:
use constant loss rows $(0,\eta)$ in $E_1$ and $(\eta,0)$ in $E_2$,
and rows $(0,\eta),(\eta,0)$ for the two equiprobable histories in $E_3$.
In $E_1$, the free action $a_1$ separates the target from $E_3$, so
$R_\delta^*(E_1)=0$. But $a_1$ cannot separate $E_2$ from $E_3$, so
no globally correct policy attains zero loss in $E_1$.

If instead both actions have feedback probability $1/4$ in $E_1,E_2$
and $3/4$ in $E_3$, either free-ride environment admits its own
globally correct zero-cost policy. A common zero-cost policy is
impossible because $Z_1\cap Z_2=\varnothing$. More strongly, identical
transcript laws give
$R(\pi,E_1)+R(\pi,E_2)=\eta\E_1\tau
\ge(\eta/d)\kl(\delta,1-\delta)$, with $d=\tfrac12\log3$,
where infinite expected time makes the bound immediate. Thus even
simultaneously vanishing loss can fail. Sample budgets, time charges,
or common-policy criteria define further questions; the unconstrained
pointwise value in B2 is already determined by
Proposition~\ref{prop:zero-infimum}.

% ============================================================
% Section file: Switching Kernels, VERIFY/REPAIR, and Open Directions
% Label: sec:switching
% No numbered theorem-like environments are used in this file
% (open directions only; no new results are claimed here).
% ============================================================
\providecommand{\cE}{\mathcal{E}}
\providecommand{\cA}{\mathcal{A}}
\providecommand{\cY}{\mathcal{Y}}
\providecommand{\cF}{\mathcal{F}}
\providecommand{\E}{\mathbb{E}}
\providecommand{\Prob}{\mathbb{P}}
\providecommand{\ind}{\mathbf{1}}
\providecommand{\kl}{\mathrm{kl}}
\providecommand{\Alt}{\mathrm{Alt}}
\providecommand{\LLR}{\mathrm{LLR}}
\providecommand{\TV}{\mathrm{TV}}

\section{Switching Kernels, VERIFY/REPAIR, and Open Directions}\label{sec:switching}

The main results of Section~\ref{sec:model} are proved for a fixed-kernel model: every
action carries an a priori specified channel $K_i^{a}$ and loss $g_i(a)$, and the cost
account stops at the certification time $\tau$. This final section marks, as precisely
as we can, where that framework ends. We first work out the kernel-switching boundary
NT2 in detail (Section~\ref{sec:nt2example}), then discuss policy-dependent transcripts
in general and their neighbors in the control literature
(Section~\ref{sec:policydep}), then the \textsc{Repair} action and representation
dynamics (Section~\ref{sec:repair}). Section~\ref{sec:openproblems} collects the open
problems. Everything in this section is a boundary analysis or a research direction;
we claim no theorems here, and the honesty conventions of
Section~\ref{sec:intro-layers} apply with doubled force.

\subsection{A kernel-switching example: NT2 in detail}\label{sec:nt2example}

\paragraph{Example (kernel switching).}\label{ex:switching}
Consider two environments sharing the same representation $f$: $E_0$ is adequate
($\Theta_0=1$) and $E_1$ is inadequate ($\Theta_1=0$). Before any certification, the
available actions are \textsc{Act} and \textsc{Verify}. Acting executes the task policy
induced by $f$: under $E_0$ it costs $g_0(\textsc{Act})=0$ and generates observations
from $K_0^{\textsc{Act}}$; under $E_1$ the same action incurs the aliasing regret
$g_1(\textsc{Act})=r>0$ per round and generates observations from $K_1^{\textsc{Act}}$.
\textsc{Verify} purchases an external audit at price $c_{\mathrm{audit}}>0$; we take the
audit to be perfect---its channel is a point mass on the true label---and remark on the
noisy case below. Finally, and this is the ingredient that leaves Model
\hyperref[model:M1]{M1}: the environment admits an exogenously given \emph{fallback
rule} that is known to be optimal under inadequacy (zero loss under $E_1$), and once
the agent has certified $\hat\Theta=0$ it may switch to this rule permanently, after
which its observations follow a fallback channel $K^{\mathrm{fb}}$ and its task loss is
zero.

The switching policy $\pi^{\leftrightarrow}$ is: play \textsc{Verify} at $t=1$; if the
audit returns $\Theta=1$, act on $f$ thereafter; if it returns $\Theta=0$, switch to
the fallback rule thereafter. Its certification error is zero, hence $\delta$-correct
for every $\delta\ge 0$, and its total cost is
\[
R\big(\pi^{\leftrightarrow},E_0\big) \;=\; R\big(\pi^{\leftrightarrow},E_1\big)
\;=\; c_{\mathrm{audit}} \;=\; O(1),
\]
uniformly in $\delta$. In particular $R^{*}_{\delta}=o(\log(1/\delta))$ in the
strongest possible sense.

\paragraph{Cost structure: where the $O(1)$ comes from.}
The displayed $O(1)$ cost has two separate components. The perfect audit itself is
a singular-channel boundary case: as a fixed action it has infinite KL against the opposite label,
so the fixed-kernel certification coefficient is already zero. The second component
is the fallback regime, which changes the post-certification loss and transcript law
as a function of the policy history. That regime switch is outside M1. If the perfect
audit is removed, the information experiment changes and a positive fixed-kernel
coefficient may reappear under the assumptions of Theorem~\ref{thm:LB}; this is not
the same menu or the same statistical comparison. If the audit is replaced by a
fixed noisy channel, one purchase generally cannot meet arbitrarily small $\delta$,
and repeated or hybrid evidence accumulation is a separate problem.

\paragraph{What the static formula does and does not say here.}
Two separate points must be kept apart. First, the collapse of the \emph{certification}
cost is not, by itself, outside the static framework: a perfect audit has
$d(K_i^{\textsc{Verify}}\|K_j^{\textsc{Verify}})=+\infty$ for $j\in\Alt(i)$, so the LP
\eqref{eq:clp} already assigns $C_i=0$ to such instances. The explicit
one-audit policy costs $O(1)$, consistently giving zero first-order cost.
Neither Corollary~\ref{cor:main} nor Proposition~\ref{prop:zero-infimum}
is being applied to this singular channel: both require full support. Second, the genuinely non-static element is the \emph{switch}: after the
\textsc{Verify} outcome, the law of the agent's subsequent transcript depends on the
policy's own history---$K_0^{\textsc{Act}}$ or $K_1^{\textsc{Act}}$ before
certification, $K^{\mathrm{fb}}$ after a negative certification---and the objective
that the example prices (one audit fee, then a zero-loss regime under \emph{both}
environments) couples the certification purchase to the post-certification loss stream.
Model~\hyperref[model:M1]{M1} excludes both features by construction: (M1.2) fixes one
channel and one loss per action for the entire horizon, and \eqref{eq:cost} stops the
accounting at $\tau$. The change-of-measure argument behind Theorem~\ref{thm:LB}
requires the fixed-kernel premise exactly here: the transportation inequality compares
transcript laws generated by a single, policy-independent kernel family
\citep{kaufmann2016complexity}, and no per-environment constant computed from fixed
kernels can describe a class in which the kernel family is itself a function of the
policy. Here the audit menu and the post-certification fallback regime are both part
of a different model, so there is no contradiction between the example and
Theorem~\ref{thm:LB}. It is a scope statement: the static complexity $C(E,f)$
characterizes the fixed-kernel policy class, while the switching class requires its
own model and lower bound.

\paragraph{The noisy-audit variant.}
If the audit is imperfect, one purchase certifies the label only to the confidence the
audit channel affords, and repeated purchases or hybrid evidence accumulation become
necessary. A fixed noisy audit included in a fixed finite menu is
already an M1 action and is priced by its covering LP under the stated
assumptions. A purchase that changes the subsequent audit channel,
representation, or loss regime requires a different model, as in open
item (iii) below.

\subsection{Policy-dependent transcripts in general}\label{sec:policydep}

The example is an instance of a general phenomenon: whenever the certification outcome
is \emph{acted upon}---act on the representation while adequate, fall back when
inadequate---the certification process and the task process cease to be separable.
Three couplings arise simultaneously. (i) \emph{Information--loss coupling along the
path}: before certification, the f-rule is both the cheapest task action (under $E_0$)
and a source of label evidence; its value therefore depends on the posterior, not on
its static loss alone. (ii) \emph{Regime switching}: the action set effectively
available after certification is a function of the terminal decision, so the model
class for the post-certification phase is policy-dependent. (iii) \emph{Objective
coupling}: the natural performance measure is total---certification expenditure plus
subsequent task loss---so a verification is worth buying exactly when the regime it
unlocks dominates its price, which is the sequential analogue of the one-shot threshold
$c^{*}=(Tr/2)(1-\TV)$ of the static layer. In that layer, Theorem~6 of
Section~\ref{sec:static} prices a single certification-driven switch; the sequential
model of Section~\ref{sec:model} generalizes the acquisition side of that picture (the
one-shot model is the special case $n_{\textsc{Verify}}=1$ with $\delta$ fixed by the
audit channel's total variation), while the switching side remains open here.

A compact way to see why the static lower-bound technology does not extend by a routine
generalization is to follow the change-of-measure step of Appendix~\ref{sec:prooflb}
under a switching policy. The transportation inequality controls, for each alternative
$j$, the log-likelihood ratio accumulated along the realized action sequence; with
policy-dependent kernels, the likelihood ratio between $E_i$ and $E_j$ is no longer a
function of the action counts $N_a(\tau)$ alone, because the same action belongs to
different kernel families before and after the switch. The lower-bound variable is
therefore no longer an allocation $n_a$ over a fixed action set but an allocation over
\emph{regime-action pairs}, with regime transitions triggered by the certification
statistic itself; the covering LP \eqref{eq:clp} would have to be replaced by a dynamic
program over posterior-dependent regimes, and we do not know whether a finite-constant
characterization in the spirit of $C(E,f)$ survives. We record this as the technical
content of open item (i) rather than as a conjectured theorem.

The nearest classical frame for this phenomenon is dual control, where control actions
serve probing and regulation simultaneously \citep{feldbaum1960dual,feldbaum1963dual,
barshalom1974dual,tse1975generalized,witsenhausen1971separation,yame1987dual}; the
switching problem above can be read as dual control with a certification terminal and a
label-dependent regime. These references establish substantial overlap:
actions can affect both control and future information. We do not claim
an exhaustive theorem-level literature audit, or infer historical novelty
from not finding the same combination of terminology. POMDP bandits likewise
share actions between reward and observation \citep{krishnamurthy2009partially}, but
without a certification target, stopping rule, or label-defined alternative structure;
planning under belief-state sufficiency \citep{kaelbling1998,astrom1965} presumes the
representation question settled rather than pricing it. The lower-bound/matching-policy
pair for certification-driven switching therefore remains open, as stated in item (i).

\subsection{REPAIR and representation dynamics}\label{sec:repair}

The \textsc{Repair} action goes one step beyond switching: instead of replacing the
\emph{policy} on a certified negative, it modifies the \emph{representation} itself,
$f\mapsto f'$, attempting to enlarge the adequate region of the environment class. This
breaks the framework at a deeper level than kernel switching, for two distinct reasons.
First, the object of inference moves: the adequacy label becomes
$\Theta_t=\ind\{R^{*}_E(f_t(H))=R^{*}_E(H)\}$, a time-varying, policy-dependent
quantity, so the hypothesis being tested changes as a consequence of the agent's own
actions. Second, the alternative sets \eqref{eq:alt}, the channels, and the loss
profiles are all indexed by the representation (aliasing regret is a property of $f$),
so a \textsc{Repair} action rewrites the model class mid-stream. Certification with a
moving label is outside fixed-hypothesis sequential testing in the
\citet{chernoff1959sequential} line, outside the fixed-confidence pure-exploration
template of \citet{garivier2016optimal}---both presume a static hypothesis and a static
experiment family---and outside Model~\hyperref[model:M1]{M1} as defined. We know of no
standard template that absorbs a self-modifying hypothesis, and we therefore treat
representation dynamics strictly as a research direction, consistent with the
limitation recorded in the static layer, where the fallback and repair mechanisms are
explicitly exogenous inputs whose acquisition is not priced.

Two partial observations delimit the direction without resolving it. First, the
one-shot layer already contains the germ of the difficulty: the value of verification
in Theorem~6 of Section~\ref{sec:static} is priced against a \emph{fixed} fallback, and
the static layer is explicit that the acquisition of that fallback---its information
source and cost---is outside the model; \textsc{Repair} is precisely the attempt to
bring that acquisition inside the decision problem, and the analysis above indicates
that doing so changes the hypothesis space itself. Second, representation dynamics
interact with the certification label in both directions: a repair that succeeds
changes the set of adequate environments and hence the alternative sets \eqref{eq:alt}
of \emph{all} future certification problems, while a failed repair consumes the same
audit currency that certification uses. Any future theory of this interaction will
therefore have to price, jointly, evidence about a label and modifications of the
object the label describes; we state this as the content of open item (ii) and make no
further claim.

\subsection{Open problems}\label{sec:openproblems}

\begin{itemize}
\item[(i)] \emph{Coupling theorem for certification-driven switching.} Extend the
lower-bound/matching-policy pair of Theorems~\ref{thm:LB} and~\ref{thm:UB} to policy
classes whose post-certification kernels depend on the certification outcome
(f-rule$\,\leftrightarrow\,$fallback), with a total objective combining certification
expenditure and post-switch task loss. The example of Section~\ref{sec:nt2example}
illustrates a perfect audit and a post-certification regime switch;
it does not prove a separation from a positive same-menu M1 coefficient.
A characterization of the general dynamic scaling is missing.
\item[(ii)] \emph{\textsc{Repair} and representation dynamics.} Formulate certification
when the label $\Theta_t=\mathrm{Adeq}_E(f_t)$ is itself policy-dependent, including
the pricing of representation modifications against their certification value.
\item[(iii)] \emph{Common policies under time constraints; changing audit menus.}
The unconstrained B2 pointwise infimum is zero at each fixed $\delta$
(Proposition~\ref{prop:zero-infimum}), and exact zero-cost attainment is
characterized by Proposition~\ref{prop:zero-attainment}. Further questions
require an explicit additional constraint, such as a common-policy loss
tradeoff or a sample budget. Repeated noisy verification with a fixed
channel already belongs to M1; purchases that change future channels or
losses require a separate analysis.
\item[(iv)] \emph{Dual-control/POMDP extension.} Build and analyze a
model with a moving adequacy label and policy-dependent kernels, with
comparisons to the relevant control literature as its assumptions are
specified. The references here are a bounded comparison, not a completed
priority audit of this broad area.
\item[(v)] \emph{Non-asymptotic remainders.} Refine Corollary~\ref{cor:main} to
$R^{*}_{\delta}(E_i)=C_i\log(1/\delta)+r(\delta)$ with an explicit rate for
$r(\delta)=o(\log(1/\delta))$; already in pure exploration this is recognized as
delicate, and we expect no shortcut from the adequacy-label structure.
\end{itemize}

\subsection{Summary}\label{sec:switchingsummary}

The sequential layer of this paper prices the acquisition of adequacy evidence in a
fixed-kernel world: Theorem~\ref{thm:LB} shows that every $\delta$-correct policy pays
at least $C(E,f)\,\kl(\delta,1-\delta)$ in task loss, Theorem~\ref{thm:UB} shows the
CTS policy pays exactly that to first order, and Corollary~\ref{cor:main} identifies
the limit $R^{*}_{\delta}(E_i)/\log(1/\delta)\to C(E,f)$ with $C(E,f)$ computable as a
finite LP. The boundaries of the statement are as much a part of the result as the
limit: hopelessness is exactly LP infeasibility (B1), free rides are exactly zero LP
value (B2), and the two-point case recovers the classical controlled-testing constant
(B3). This section has argued that the framework's edge is sharp and locatable: the
moment certification is allowed to change the process being certified---through policy
switching, and a fortiori through representation repair---the fixed-kernel constant
does not by itself provide a theorem for the resulting dynamic objective.
Characterizing that regime is the natural continuation of the present work, and we have stated it as such,
without claiming it.

% ============================================================
% Section file: Numerical Illustrations on Finite Instances
% Label: sec:numerical
% Environments: none (text, tables, figures only).
% Requires: booktabs, graphicx, pgf/tikz (loaded by the assembler).
% Figures: fig_cts_convergence.pdf, fig_frontier.pdf, fig_nt1.pdf
%          + one self-contained TikZ schematic (fig:framework).
% ============================================================
\providecommand{\cE}{\mathcal{E}}
\providecommand{\cA}{\mathcal{A}}
\providecommand{\cY}{\mathcal{Y}}
\providecommand{\E}{\mathbb{E}}
\providecommand{\Prob}{\mathbb{P}}
\providecommand{\ind}{\mathbf{1}}
\providecommand{\kl}{\mathrm{kl}}
\providecommand{\Alt}{\mathrm{Alt}}
\providecommand{\LLR}{\mathrm{LLR}}

\section{Numerical Illustrations on Finite Instances}\label{sec:numerical}

This section collects numerical work on small, fully specified instances of
Model~\hyperref[model:M1]{M1}. Its role is deliberately limited, and we state the
limitation before any numbers: \emph{everything reported here is machine-checkable
verification on finite instances and a demonstration of the behavior predicted by
Theorems~\ref{thm:LB} and~\ref{thm:UB}; it is auxiliary evidence and is not a
substitute for the proofs.} The proofs stand on their own; the computations below
serve three narrower purposes. First, they make the abstract objects of
Section~\ref{sec:model} concrete: the certification complexity $C(E,f)$ is a finite
linear program, and we exhibit one instance in full---every channel, every loss, and
the numerically solved LP solutions---so that the reader can recompute every
constant we quote.
Second, they illustrate the predicted asymptotics: the simulated cost of the CTS
policy, normalized by $\log(1/\delta)$, approaches the LP constant from above as
$\delta$ decreases, at the pace that the conservative stopping threshold leads one
to expect. Third, they exercise the boundary statements B1--B3 and the
non-triviality statements NT1--NT2 on explicit numbers, so that the scope of the
assumptions is checked rather than merely asserted. All instance parameters are
given explicitly in the tables below, every linear program was solved by exhaustive
finite vertex enumeration from those tabulated parameters, and the simulation protocol is
specified completely in Section~\ref{sec:num-repro}. We report only what the
computations show; where the displayed behavior falls short of the limit theorems
(for instance, convergence that requires smaller $\delta$ than we simulate), we say
so.

\subsection{A four-environment instance, specified completely}\label{sec:num-instance}

Our running instance has $K=4$ environments sharing a fixed representation $f$,
with adequacy labels
\begin{equation}\label{eq:num-labels}
(\Theta_0,\Theta_1,\Theta_2,\Theta_3) \;=\; (1,1,0,0),
\end{equation}
so that $E_0,E_1$ are adequate, $E_2,E_3$ are inadequate, and the alternative sets
of Section~\ref{sec:modelM1} are $\Alt(0)=\Alt(1)=\{2,3\}$ and
$\Alt(2)=\Alt(3)=\{0,1\}$. The action set is
$\cA=\{\textsc{Act},\textsc{Ask},\textsc{Verify}\}$ and the observation space has
three elements, $\cY=\{y_1,y_2,y_3\}$. Table~\ref{tab:losses} gives the task
losses, including the label-coupled structure of \textsc{Act}: acting on the
representation is cheap ($0.05$ per round) when the representation is adequate and
expensive ($0.60$ per round, the per-round aliasing regret) when it is not, while
\textsc{Ask} costs a flat query fee $0.35$ and \textsc{Verify} costs the audit
price $c_v$, which is $1.0$ in the base instance and is varied in
Figure~\ref{fig:frontier}. Table~\ref{tab:channels} gives all twelve observation
channels. The tables are the instance: every number quoted in this section is
recomputed from them.

\begin{table}[t]
\centering
\caption{Task losses $g_i(a)$ and adequacy labels of the four-environment
instance. \textsc{Act} carries the label-coupled loss of
Section~\ref{sec:model}: cheap under adequacy, priced at the per-round aliasing
regret under inadequacy. The \textsc{Verify} loss is the audit price $c_v$,
equal to $1.0$ in the base instance and swept in
Figure~\ref{fig:frontier}.}\label{tab:losses}
\begin{tabular}{lcccc}
\toprule
 & $\Theta_i$ & $g_i(\textsc{Act})$ & $g_i(\textsc{Ask})$ & $g_i(\textsc{Verify})$ \\
\midrule
$E_0$ & $1$ & $0.05$ & $0.35$ & $c_v=1.0$ \\
$E_1$ & $1$ & $0.05$ & $0.35$ & $c_v=1.0$ \\
$E_2$ & $0$ & $0.60$ & $0.35$ & $c_v=1.0$ \\
$E_3$ & $0$ & $0.60$ & $0.35$ & $c_v=1.0$ \\
\bottomrule
\end{tabular}
\end{table}

\begin{table}[t]
\centering
\caption{Observation channels $K_i^{a}$ of the four-environment instance, as
probability rows on $\cY=\{y_1,y_2,y_3\}$. All entries are positive
(Assumption~(H4)); every ordered pair of environments is separated by at least
one action (Assumptions~(H2) and~(H2$^+$)), with \textsc{Verify} alone already
separating every pair. The \textsc{Act} rows of an adequate environment and of
its binding inadequate alternative are close but not equal, which is what makes
acting nearly uninformative about the label in this instance.}\label{tab:channels}
\begin{tabular}{llccc}
\toprule
Action & Environment & $K_i^{a}(y_1)$ & $K_i^{a}(y_2)$ & $K_i^{a}(y_3)$ \\
\midrule
\textsc{Act}
 & $E_0$ & $0.5000$ & $0.3000$ & $0.2000$ \\
 & $E_1$ & $0.5200$ & $0.2800$ & $0.2000$ \\
 & $E_2$ & $0.4500$ & $0.3000$ & $0.2500$ \\
 & $E_3$ & $0.4800$ & $0.3200$ & $0.2000$ \\
\midrule
\textsc{Ask}
 & $E_0$ & $0.7000$ & $0.2000$ & $0.1000$ \\
 & $E_1$ & $0.6800$ & $0.2200$ & $0.1000$ \\
 & $E_2$ & $0.3000$ & $0.4000$ & $0.3000$ \\
 & $E_3$ & $0.3200$ & $0.3800$ & $0.3000$ \\
\midrule
\textsc{Verify}
 & $E_0$ & $0.9000$ & $0.0700$ & $0.0300$ \\
 & $E_1$ & $0.8800$ & $0.0900$ & $0.0300$ \\
 & $E_2$ & $0.1000$ & $0.2000$ & $0.7000$ \\
 & $E_3$ & $0.1200$ & $0.1800$ & $0.7000$ \\
\bottomrule
\end{tabular}
\end{table}

Solving the covering LP \eqref{eq:clp} for each environment at $c_v=1.0$ gives
\begin{equation}\label{eq:num-cvals}
C(E_0,f)=0.6050,\qquad
C(E_1,f)=0.6264,\qquad
C(E_2,f)=0.4657,\qquad
C(E_3,f)=0.4783,
\end{equation}
each attained by a bang-bang LP solution that places all weight on
\textsc{Verify}: $n^{*}_{\textsc{Verify}}=C_i$ and
$n^{*}_{\textsc{Act}}=n^{*}_{\textsc{Ask}}=0$. Consequently the cost-ratio
optimizer \eqref{eq:wratio} that CTS tracks is the pure \textsc{Verify}
allocation, $w^{*}(i)=e_{\textsc{Verify}}$ for all four environments. The values
\eqref{eq:num-cvals} are finite LP outputs; recomputing the LP from the
four-decimal channel values of Table~\ref{tab:channels} reproduces them to three
decimals (solver agreement within $2.5\times 10^{-4}$), and a grid evaluation of
the max-min form \eqref{eq:cmm} agrees with the LP value to the same tolerance,
consistent with the duality of Section~\ref{sec:complexity}.

Table~\ref{tab:pernat} explains why the optimum is bang-bang on \textsc{Verify}
in this instance: it reports, for each environment and action, the divergence to
the \emph{binding} alternative (the alternative attaining
$\min_{j\in\Alt(i)}d(K_i^{\textsc{Verify}}\|K_j^{\textsc{Verify}})$) and the
associated loss-per-nat ratio $g_i(a)/d(K_i^{a}\|K_{j}^{a})$. \textsc{Verify}
dominates because it buys several times more divergence per round than
\textsc{Ask}, at less than three times the price; \textsc{Act}, despite being
nearly free under adequacy, is nearly uninformative about the label---its
channels under $E_0$ and its binding alternative $E_3$ differ by only
$0.001$~nats---so its loss-per-nat ratio is the worst of the three. This is the
regime of Section~\ref{sec:intro-motivation} in which certification must be
purchased rather than picked up as a by-product of acting; the free-ride regime
is exercised separately as boundary B2 below.

\begin{table}[t]
\centering
\caption{Divergences $d(K_i^{a}\|K_{j^{*}}^{a})$ to the binding alternative
$j^{*}(i)$ (in parentheses) and the resulting loss-per-nat ratios
$g_i(a)/d$, computed from Tables~\ref{tab:losses} and~\ref{tab:channels}.
\textsc{Verify} has the best ratio in every environment, which is why the LP
\eqref{eq:clp} puts all weight on it.}\label{tab:pernat}
\begin{tabular}{lcccccc}
\toprule
 & \multicolumn{2}{c}{\textsc{Act}} & \multicolumn{2}{c}{\textsc{Ask}}
 & \multicolumn{2}{c}{\textsc{Verify}} \\
\cmidrule(lr){2-3}\cmidrule(lr){4-5}\cmidrule(lr){6-7}
 & $d$ & $g/d$ & $d$ & $g/d$ & $d$ & $g/d$ \\
\midrule
$E_0$ ($j^{*}=3$) & $0.0010$ & $47.64$ & $0.3097$ & $1.130$ & $1.6528$ & $\mathbf{0.6050}$ \\
$E_1$ ($j^{*}=3$) & $0.0042$ & $11.81$ & $0.2825$ & $1.239$ & $1.5965$ & $\mathbf{0.6264}$ \\
$E_2$ ($j^{*}=1$) & $0.0114$ & $52.53$ & $0.3232$ & $1.083$ & $2.1471$ & $\mathbf{0.4657}$ \\
$E_3$ ($j^{*}=1$) & $0.0043$ & $139.23$ & $0.2961$ & $1.182$ & $2.0906$ & $\mathbf{0.4783}$ \\
\bottomrule
\end{tabular}
\end{table}

\subsection{Convergence of the CTS cost to the LP constant}
\label{sec:num-cts}

Figure~\ref{fig:cts-convergence} plots the simulated expected cost of the CTS
policy $\pi^{*}$ (Algorithm~\ref{alg:cts}) on the instance of
Tables~\ref{tab:losses}--\ref{tab:channels}, normalized by $\log(1/\delta)$,
against $\log(1/\delta)$ for the five confidence levels
$\delta=0.1$, $0.05$, $0.02$, $0.01$ and $0.005$. Each (environment, $\delta$)
combination was simulated with $1{,}000$ independent replications; the
sampling rule is C-tracking \eqref{eq:ctrack} of the plug-in cost-ratio
optimizer with the forced exploration \eqref{eq:forced}, and the stopping rule
is the GLR rule \eqref{eq:glrstop} with threshold
$\beta(t,\delta)=\log\big(2t(K-1)/\delta\big)$ at $K=4$. Horizontal lines mark
the LP constants \eqref{eq:num-cvals}, which by Corollary~\ref{cor:main} are the
limits the curves should approach; Table~\ref{tab:cts-trajectory} reports the
endpoint values underlying the plot.

\begin{figure}[t]
\centering
\includegraphics[width=0.78\textwidth]{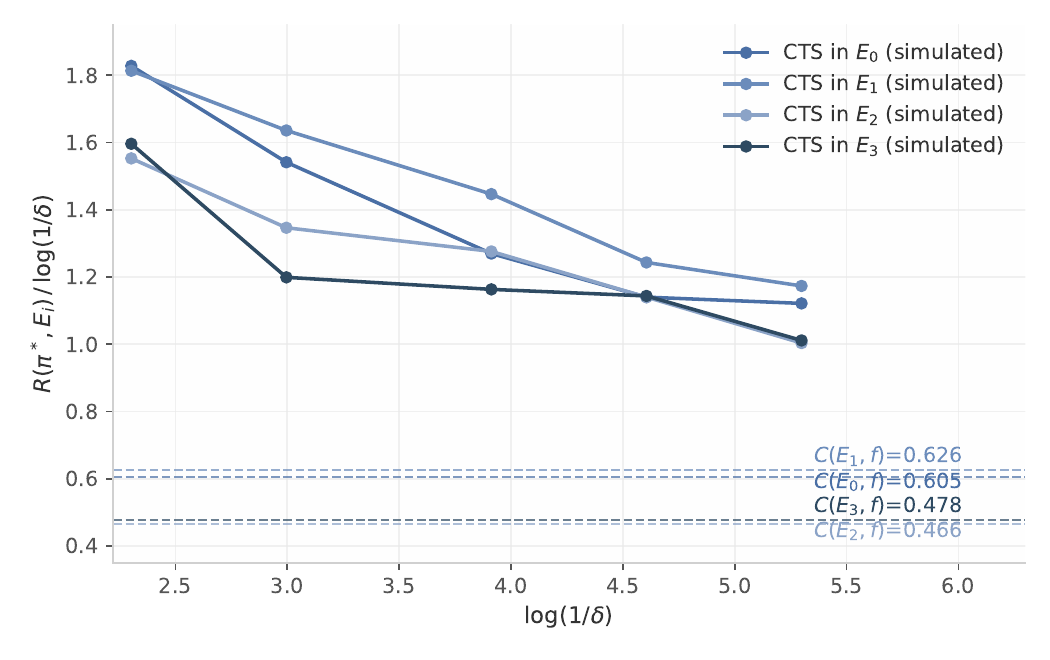}
\caption{Simulated certification cost of CTS on the four-environment instance,
normalized by $\log(1/\delta)$, as a function of $\log(1/\delta)$, with
$1{,}000$ independent replications per (environment, $\delta$) combination.
Horizontal lines mark the LP constants $C(E_i,f)$ of \eqref{eq:num-cvals}. All
four curves decrease monotonically toward their respective constants; at
$\delta=0.005$ ($\log(1/\delta)\approx 5.3$) they lie in the band
$1.0$--$1.2$, still above the limits $0.466$--$0.626$ predicted by
Corollary~\ref{cor:main}.}\label{fig:cts-convergence}
\end{figure}

\begin{table}[t]
\centering
\caption{Simulated values of $R(\pi^{*},E_i)/\log(1/\delta)$ underlying
Figure~\ref{fig:cts-convergence} at the two ends of the simulated range,
together with the LP constants \eqref{eq:num-cvals}. The third row is the
ratio of the $\delta=0.01$ entry to the corresponding constant: the simulated
cost at $\delta=0.01$ is still between $1.9$ and $2.5$ times the asymptotic
constant, and decreasing.}\label{tab:cts-trajectory}
\begin{tabular}{lcccc}
\toprule
 & $E_0$ & $E_1$ & $E_2$ & $E_3$ \\
\midrule
$R/\log(1/\delta)$ at $\delta=0.1$  & $1.665$ & $1.758$ & $1.610$ & $1.702$ \\
$R/\log(1/\delta)$ at $\delta=0.01$ & $1.173$ & $1.201$ & $1.120$ & $1.129$ \\
ratio to $C(E_i,f)$ at $\delta=0.01$ & $1.94$ & $1.92$ & $2.40$ & $2.36$ \\
\midrule
$C(E_i,f)$ (LP, \eqref{eq:num-cvals}) & $0.6050$ & $0.6264$ & $0.4657$ & $0.4783$ \\
\bottomrule
\end{tabular}
\end{table}

Three readings of Figure~\ref{fig:cts-convergence} and
Table~\ref{tab:cts-trajectory} are warranted, and we state them with the
conservatism they deserve. First, the qualitative prediction of
Corollary~\ref{cor:main} is visible: all four curves decrease monotonically and
are approaching their LP constants from above, with no sign of leveling off at
any other value. Second, the simulation is \emph{not} in the asymptotic regime:
at $\delta=0.01$ the normalized cost is between $1.9$ and $2.5$ times
$C(E_i,f)$. The bulk of this gap is explained by two standard and identified
mechanisms, not by a failure of the constant. The GLR threshold
$\beta(t,\delta)=\log(2t(K-1)/\delta)$ of \citet[Theorem~10]{garivier2016optimal}
absorbs a union bound over the $K-1$ challengers and a geometric time grid, and
is known to be conservative relative to mixture-based thresholds
\citep{kaufmann2021mixture}; and the stopping time inherits the usual
overshoot---the LLR process crosses the boundary with positive excess, so the
expected cost exceeds the first-order approximation by a quantity that decays
only relative to $\log(1/\delta)$. Both effects shrink as $\delta$ decreases,
which is exactly what the figure shows. We therefore report the honest
conclusion: asymptotic convergence to $C(E,f)$ requires smaller $\delta$ than
we simulate, and the trend displayed over the simulated range is consistent
with the theorem. Third, the certification guarantee holds with room to spare:
the empirical error frequency was at most $0.01$ in every cell, including those
with target $\delta=0.1$, reflecting the same threshold conservativeness; the
$\delta$-correctness of Theorem~\ref{thm:UB}(a) is satisfied in all runs.

\subsection{The complexity frontier in the audit price}
\label{sec:num-frontier}

Figure~\ref{fig:frontier} shows how the certification complexity of the same
instance moves with the audit price: we re-solve the LP \eqref{eq:clp} with
$g_i(\textsc{Verify})=c_v$ for $c_v\in[0.1,\,2.5]$, holding the channels and the
remaining losses of Tables~\ref{tab:losses}--\ref{tab:channels} fixed. Each
curve is the value of a parametric linear program, hence continuous,
nondecreasing, and concave piecewise linear in $c_v$; the figure displays
exactly this structure.

\begin{figure}[t]
\centering
\includegraphics[width=0.72\textwidth]{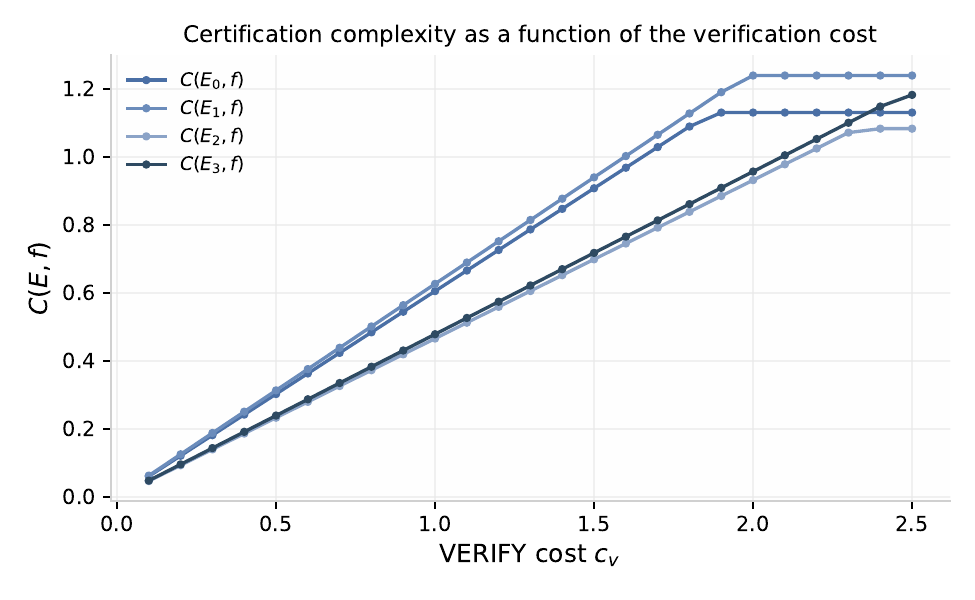}
\caption{Certification complexity $C(E_i,f)$ of the four-environment instance
as a function of the \textsc{Verify} price $c_v$, computed by re-solving the LP
\eqref{eq:clp} on a grid in $[0.1,2.5]$. Each curve is piecewise linear: a
pure-\textsc{Verify} segment through the origin up to an environment-dependent
kink, after which \textsc{Ask}-based plans take over and the curve flattens
toward the no-\textsc{Verify} LP value. The values at $c_v=1.0$ are those of
\eqref{eq:num-cvals}.}\label{fig:frontier}
\end{figure}

Two structural effects are visible. First, the initial segment of every curve is
the bang-bang line $c_v/d_i^{\textsc{Verify}}$, where
$d_i^{\textsc{Verify}}=\min_{j\in\Alt(i)}d(K_i^{\textsc{Verify}}\|K_j^{\textsc{Verify}})$:
while the audit is cheap enough, buying divergence through \textsc{Verify} is
optimal and the complexity is exactly linear in its price. The kink occurs where
the line meets the cost of cheaper plans; numerically the curves stay on the
pure-\textsc{Verify} line up to environment-dependent kink points in the range
$c_v\approx 1.9$--$2.5$ ($\approx 1.9$ for $E_0$, $\approx 2.0$ for $E_1$,
$\approx 2.3$ for $E_2$ and $\approx 2.5$ for $E_3$), beyond which
\textsc{Ask}-based plans take over and each curve flattens onto its
no-\textsc{Verify} LP value---$1.130,\ 1.239,\ 1.083,\ 1.182$ respectively,
the price of covering both alternative constraints with \textsc{Ask} alone.
The two adequate curves saturate first: at $c_v=2.0$ they already sit at
$1.130$ and $1.239$, while the inadequate curves are still on the
pure-\textsc{Verify} line there ($0.931$ and $0.957$) and keep rising past
$c_v\approx 2.2$--$2.3$. Second, the ordering of the
curves reflects the label-coupled loss structure rather than the channels
alone: the inadequate environments $E_2,E_3$ are \emph{cheaper} to certify than
the adequate ones throughout, not because their channels are more informative
in an absolute sense, but because the LP prices information in the currency of
the environment's own task loss, and the divergence profile of
\textsc{Verify} against the adequate alternatives is larger in that direction
($d$ values $2.15$ and $2.09$ against $1.65$ and $1.60$; see
Table~\ref{tab:pernat}). The frontier thus makes concrete the central feature
of the constant $C(E,f)$: it couples the statistical distinguishability of the
environments to the loss in which the certification bill is paid, and moving a
single price ($c_v$) re-optimizes the whole covering plan rather than rescaling
a fixed design.

\subsection{NT1 on numbers: equal kernels, different losses, different optima}
\label{sec:num-nt1}

Figure~\ref{fig:nt1} instantiates the non-triviality statement NT1 of
Section~\ref{sec:nontrivial}. The instance has two environments, $E_0$
adequate and $E_1$ inadequate, and two informative actions $a_1,a_2$ whose
observation channels are \emph{identical under both environments}: on a
two-point observation space $\cY=\{y_1,y_2\}$,
\begin{equation}\label{eq:nt1-kernels}
\begin{aligned}
K_0^{a_1}&=K_0^{a_2}=(0.8,\,0.2),\qquad
K_1^{a_1}=K_1^{a_2}=(0.3,\,0.7),\\
d\big(K_0^{a}\,\big\|\,K_1^{a}\big)&=0.5341,\qquad
d\big(K_1^{a}\,\big\|\,K_0^{a}\big)=0.5827\ \ \text{for } a\in\{a_1,a_2\}.
\end{aligned}
\end{equation}
The two actions are therefore statistically indistinguishable: any criterion
built from the observation kernels alone---in particular the Chernoff-type
most-informative-action rule of the classical sequential-design line
\citep{chernoff1959sequential,nitinawarat2015controlled}---assigns them the
same value. What differs is the loss: $a_1$ costs a fixed $0.5$ per round in
both environments, while $a_2$ is environment-dependent. Under loss profile~A,
$g_0(a_2)=0.01$ and $g_1(a_2)=1.0$; under profile~B these two values are
swapped, $g_0(a_2)=1.0$ and $g_1(a_2)=0.01$. Since $|\Alt(i)|=1$ the bang-bang
form \eqref{eq:bangbang} applies and the optimal certification action in environment
$i$ is the action with the smallest loss-to-directional-KL ratio. Under profile~A,
$E_0$ certifies through $a_2$ ($C_0=0.01/0.5341\approx0.019$) and $E_1$ through
$a_1$ ($C_1=0.5/0.5827\approx0.858$); under profile~B the choices interchange,
with $C_0=0.5/0.5341\approx0.936$ and $C_1=0.01/0.5827\approx0.017$.

\begin{figure}[t]
\centering
\includegraphics[width=0.72\textwidth]{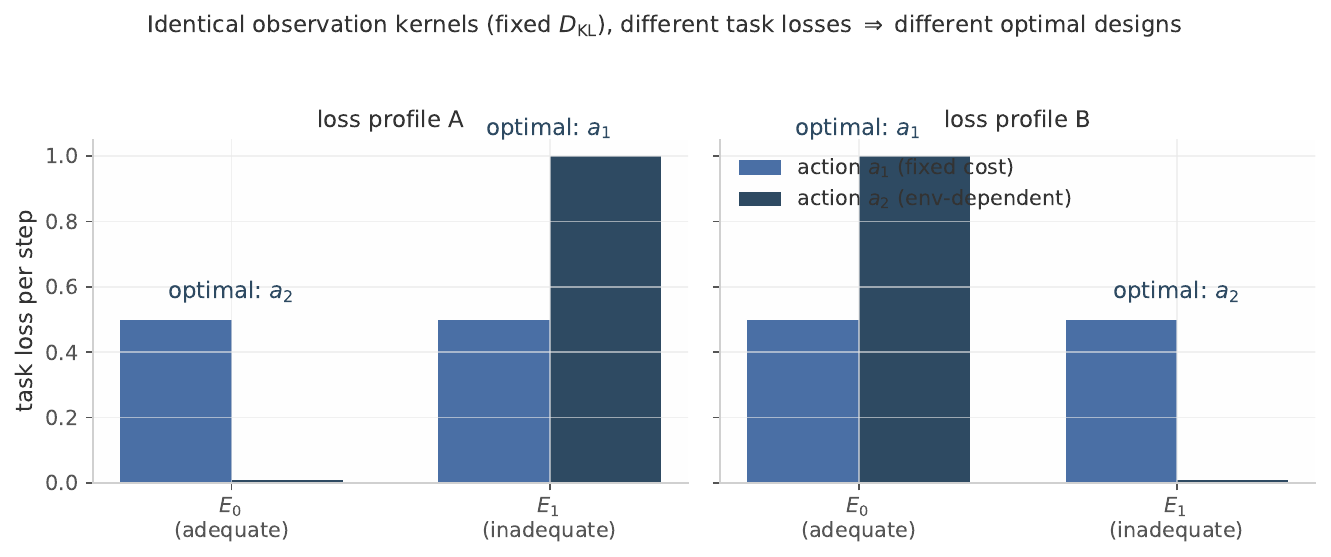}
\caption{The NT1 instance \eqref{eq:nt1-kernels}: per-round task losses of the
fixed-cost action $a_1$ and the environment-dependent action $a_2$ under the
two loss profiles. The observation kernels---and hence every kernel-only
information criterion---are identical across profiles and across the two
actions (the directional divergences are $0.5341$ and $0.5827$ nats), yet the LP-optimal certification action, marked in
each panel, swaps: profile~A selects $a_2$ at $E_0$ and $a_1$ at $E_1$, and
profile~B selects the reverse.}\label{fig:nt1}
\end{figure}

The point is quantitative, not cosmetic. A kernel-only criterion sees one
instance where there are two; in at least one of the two profiles it prescribes
an action whose loss-per-nat ratio exceeds the optimum by a factor of
$0.5/0.01=50$ in the affected environment---the same statistical evidence at
fifty times the asymptotic cost (and if the kernel-only rule happens to favor
$a_2$, profile~B inflates the factor to $1.0/0.01=100$). The example demonstrates sensitivity to the chosen cost objective.
It does not distinguish M1 from cost-aware controlled testing: that
literature also changes its design when the cost table changes. The
additional obstruction of Section~\ref{sec:observable} instead holds a
single model fixed and asks whether identical observations permit the
certifier to attain different environment-wise optimal designs.

\subsection{The sequential certification loop, schematically}
\label{sec:num-framework}

Figure~\ref{fig:framework} summarizes the model that all of the above numbers
come from: the interaction loop of Model~\hyperref[model:M1]{M1} with the four
components of the CTS policy marked at the points where they act. The diagram
is a rendering of definitions \eqref{eq:cost}--\eqref{eq:glrstop} rather than
new content, and is included to fix the data flow: the unknown environment
index $i$ selects a row of the channel and loss tables; the policy, driven by
the history, picks the action; the environment returns an observation through
the selected channel and charges the selected loss; the transcript feeds the
environment estimator, the plug-in allocation, and the GLR statistic; and on
stopping, the certified label is read off the estimated environment.

\begin{figure}[t]
\centering
\begin{tikzpicture}[
  font=\small,
  >=stealth,
  box/.style={draw=black!60, rounded corners=2pt, align=center,
              inner sep=5pt, minimum height=9mm},
  env/.style={box, fill=black!8},
  pol/.style={box, fill=blue!12},
  chn/.style={box, fill=orange!15},
  trn/.style={box, fill=black!5},
  stp/.style={box, fill=green!12},
  dec/.style={box, fill=blue!8},
  cst/.style={box, fill=red!8},
  lab/.style={font=\footnotesize\itshape, text=black!70}
]
% nodes
\node[env, minimum width=27mm] (env) at (0,0)
  {Environment $E_i$\\ \footnotesize index $i$ unknown, $\Theta_i$ realized};
\node[pol, minimum width=30mm] (pol) at (5.6,0)
  {Policy $\pi$\\ \footnotesize $A_t$ from $H_{t-1}$; CTS: MLE $\hat\imath_t$,\\ \footnotesize $w^{*}(\hat\imath_t)$, C-tracking};
\node[chn, minimum width=30mm] (chn) at (5.6,-2.8)
  {Action output triple\\ \footnotesize channel $K_i^{A_t}$, loss $g_i(A_t)$,\\ \footnotesize label information};
\node[trn, minimum width=27mm] (trn) at (0,-2.8)
  {Transcript $H_t$\\ \footnotesize $(A_1,Y_1,\dots,A_t,Y_t)$};
\node[stp, minimum width=27mm] (stp) at (0,-5.6)
  {GLR stopping\\ \footnotesize $\max_i\min_{j\in\Alt(i)}\LLR_{i,j}(t)>\beta(t,\delta)$};
\node[dec, minimum width=27mm] (dec) at (5.6,-5.6)
  {Certification output\\ \footnotesize $\hat\Theta=\Theta_{\hat\imath_\tau}\in\{0,1\}$};
\node[cst, minimum width=27mm] (cst) at (10.2,-2.8)
  {Cost accumulator\\ \footnotesize $R=\E_i[\sum_{t\le\tau}g_i(A_t)]$};
% arrows
\draw[->, thick, black!60] (env) -- (pol)
  node[lab, midway, above, align=center]{channel/loss\\ rows};
\draw[->, thick, black!60] (pol) -- (chn)
  node[lab, midway, right]{$A_t$};
\draw[->, thick, black!60] (chn) -- (trn)
  node[lab, midway, above]{$Y_t\sim K_i^{A_t}$};
\draw[->, thick, black!60] (trn) -- (stp)
  node[lab, midway, right, align=left]{LLR\\ increments};
\draw[->, thick, black!60] (stp) -- (dec)
  node[lab, midway, above]{$\tau$};
\draw[->, thick, black!60] (chn) -- (cst)
  node[lab, midway, above]{$g_i(A_t)$};
\draw[->, thick, black!60, dashed] (trn.north) -- (pol.south -| trn.north)
  node[lab, midway, left, align=right]{history-\\ driven};
\end{tikzpicture}
\caption{The sequential certification loop of Model~\hyperref[model:M1]{M1}
with the CTS components in place. The unknown environment fixes the channel
and loss rows (Tables~\ref{tab:channels} and~\ref{tab:losses}); the policy
selects actions from the history; each action returns an observation through
its channel and charges its loss, which accumulates into the objective
\eqref{eq:cost}; the transcript drives the estimator \eqref{eq:mle}, the
plug-in allocation \eqref{eq:wratio}, and the GLR statistic; stopping
\eqref{eq:glrstop} triggers the terminal decision \eqref{eq:decision}. Solid
arrows are the data flow; the dashed arrow is the information feedback that
makes the policy history-dependent.}\label{fig:framework}
\end{figure}

The schematic makes one modeling choice visible that the equations keep
implicit: the loss is charged \emph{per round, in parallel with} the
observation, and the certification constraint couples only to the transcript,
never directly to the loss stream. It also localizes precisely where the
boundary phenomena of the next subsection enter: B1 removes the divergence
between two rows of the channel box, so no transcript can separate them; B2
sets a loss entry to zero, so the accumulator stops charging for informative
rounds; B3 collapses the environment box to two rows; and NT2, which the model
excludes by fiat, would allow the certification output to rewire the channel
box itself after a \textsc{Verify} outcome, making the channel family a
function of the policy's own history (Section~\ref{sec:switching}).

\subsection{Boundary behavior in numbers: B1, B2, B3, and NT2}
\label{sec:num-boundary}

We close the numerical tour by evaluating the four boundary statements of
Sections~\ref{sec:nontrivial} and~\ref{sec:switching} on explicit instances
derived from Tables~\ref{tab:losses}--\ref{tab:channels}. Each boundary is
reported in the same format: the modification of the base instance, the
numerical outcome, and the consistency check against the theorems.

\paragraph{B1: hopeless instances, LP infeasibility.}
Modify the base instance by replacing every channel of $E_3$ with the
corresponding channel of $E_0$: $K_3^{a}\gets K_0^{a}$ for
$a\in\{\textsc{Act},\textsc{Ask},\textsc{Verify}\}$. Then
$3\in\Alt(0)$ and $d(K_0^{a}\|K_3^{a})=0$ for every action, so the covering LP
\eqref{eq:clp} for $E_0$ has an unsatisfiable constraint row. Solved from the
tabulated parameters, the LP returns \emph{infeasible} (\texttt{linprog}
status~$2$), i.e.\ $C(E_0,f)=+\infty$ under the convention of
Section~\ref{sec:complexity}. This matches Theorem~\ref{thm:LB} and the B1
statement of Section~\ref{sec:nontrivial} exactly: $E_0$ and $E_3$ induce
identical transcript laws under every policy, no $\delta$-correct policy with
$\delta<1/2$ and finite expected loss exists at $E_0$, and the failure is
certified by the same object that computes the constant---the LP
itself---rather than by a separate diagnostic. The situation is the
hopelessness of partial-monitoring games without informative feedback
\citep{bartok2014partial}, transplanted to the certification currency.

\paragraph{B2: free rides, LP value zero.}
Keep the channels of the base instance and change one loss:
$g_0(\textsc{Act})\gets 0$. Since both divergences
$d(K_0^{\textsc{Act}}\|K_2^{\textsc{Act}})=0.0081$ and
$d(K_0^{\textsc{Act}}\|K_3^{\textsc{Act}})=0.0010$ are strictly positive, the
zero-loss action covers every alternative constraint, and the LP value drops to
exactly $0$ (solved value $0.0$, status optimal).
Proposition~\ref{prop:zero-infimum} strengthens the first-order conclusion
to $R_\delta^*(E_0)=0$ at each fixed $\delta$. A finite free probe and a
fresh fallback approach this infimum with globally admissible policies.
CTS can still incur positive loss by exploring \textsc{Ask} and
\textsc{Verify}; that policy-specific residual is not the optimal value.
The separate finite checks in \texttt{check\_free\_evidence.py} evaluate
fallback probabilities and both zero-cost attainment examples of
Section~\ref{sec:free-evidence}.

\paragraph{B3: two-point degeneration, consistency with the binary constant.}
Restrict the environment class to $\{E_0,E_2\}$, so that $\Alt(0)=\{2\}$ and the
bang-bang form \eqref{eq:bangbang} applies. The three loss-per-nat ratios,
computed from the tabulated channels, are
\begin{equation}\label{eq:b3-ratios}
\begin{aligned}
\frac{g_0(\textsc{Act})}{d(K_0^{\textsc{Act}}\|K_2^{\textsc{Act}})}
&=\frac{0.05}{0.0081}\approx 6.21,\\
\frac{g_0(\textsc{Ask})}{d(K_0^{\textsc{Ask}}\|K_2^{\textsc{Ask}})}
&=\frac{0.35}{0.3446}\approx 1.016,\\
\frac{g_0(\textsc{Verify})}{d(K_0^{\textsc{Verify}}\|K_2^{\textsc{Verify}})}
&=\frac{1.0}{1.8095}\approx 0.5526,
\end{aligned}
\end{equation}
so on the restricted class
$C(E_0,f)=\min_a g_0(a)/d(K_0^{a}\|K_2^{a})=0.5526$, attained by
\textsc{Verify}. In the reverse direction the ratios are
$0.6/0.0084\approx 71.65$ (\textsc{Act}), $0.35/0.3527\approx 0.993$
(\textsc{Ask}) and $1.0/2.1952\approx 0.4555$ (\textsc{Verify}), so
$C(E_2,f)=0.4555$. Both binary constants lie below their four-environment
counterparts \eqref{eq:num-cvals}: in the full instance the binding
alternatives are $E_3$ for $E_0$ ($d=1.6528$) and $E_1$ for $E_2$
($d=2.1471$), and restricting the class to $\{E_0,E_2\}$ deletes exactly those
binding constraints from the covering LP. This is the
binary specialization of the controlled sequential testing constant of
\citet[Theorem~5.1]{nitinawarat2015controlled}, with the task loss in the role
of the control cost: with a single alternative there is no covering problem to
balance and the cheapest nat wins. The numerical agreement is a boundary
requirement, not a contribution---the theorems correctly contain the known
binary case, and the genuinely combinatorial content of \eqref{eq:clp} appears
only when $|\Alt(i)|\ge 2$.

\paragraph{NT2: a changed audit menu and a post-certification switch.}
Instantiate the switching example of Section~\ref{sec:nt2example} with explicit
channels: two environments, $E_0$ adequate and $E_1$ inadequate; before any
certification the only task action is \textsc{Act}, with
$g_0(\textsc{Act})=0$, $g_1(\textsc{Act})=r=0.6$, and channels
\begin{equation}\label{eq:nt2-kernels}
\begin{aligned}
K_0^{\textsc{Act}}&=(0.7583,\,0.1249,\,0.1168),\qquad
K_1^{\textsc{Act}}=(0.1168,\,0.1249,\,0.7583),\\
d\big(K_1^{\textsc{Act}}\,\big\|\,K_0^{\textsc{Act}}\big)&=1.2000.
\end{aligned}
\end{equation}
Viewed as a fixed-kernel instance (the audit removed, as in
Section~\ref{sec:nt2example}), it falls under B3: $C(E_1,f)=r/1.2=0.5$ per nat,
and Theorem~\ref{thm:LB} gives
$R(\pi,E_1)\ge 0.5\,\kl(\delta,1-\delta)\ge 0.5\log\big(1/(2.4\delta)\big)$ for
every $\delta$-correct fixed-kernel policy---numerically $\ge 1.86$ at
$\delta=0.01$ and $\ge 4.17$ at $\delta=10^{-4}$, growing without bound as
$\delta\to 0$. The switching policy of Section~\ref{sec:nt2example}---one
perfect \textsc{Verify} purchase at price $c_{\mathrm{audit}}=1.0$, then act on
$f$ if certified adequate and switch to the zero-loss fallback otherwise---has
certification error zero and total cost exactly $1.0$ at \emph{every} $\delta$.
These figures compare different information menus: the positive
coefficient $0.5$ belongs to the \textsc{Act}-only instance. Adding
perfect \textsc{Verify} gives infinite opposite-label KL and a static
coefficient of zero in both environments, so this calculation is not a
same-menu separation from the M1 lower bound. The additional change to
the post-certification loss regime illustrates the dynamic objective
outside M1. In the \textsc{Act}-only full-support instance, the free
$\textsc{Act}$ in $E_0$ also gives $R_\delta^*(E_0)=0$ by
Proposition~\ref{prop:zero-infimum}.

\subsection{Reproducibility}
\label{sec:num-repro}

All instance parameters are given in
Tables~\ref{tab:losses}--\ref{tab:channels} and in the displayed equations
\eqref{eq:nt1-kernels} and \eqref{eq:nt2-kernels} of this section; no other
inputs are used. Every linear program reported here---the constants
\eqref{eq:num-cvals}, the frontier of Figure~\ref{fig:frontier}, the per-nat
ratios of Table~\ref{tab:pernat}, and the boundary computations B1--B3---was
recomputed by the repository's finite covering-LP implementation, which
enumerates candidate vertices directly from the tabulated four-decimal
parameters and requires no external numerical optimizer. The quoted values
agree with an independent \texttt{scipy.optimize.linprog} (HiGHS dual simplex)
cross-check to within $2.5\times10^{-4}$, and the LP and max-min forms
\eqref{eq:clp}--\eqref{eq:cmm} were also cross-checked by grid evaluation of the
simplex and on randomly generated five-environment three-action instances,
with agreement within the grid tolerance (relative error below $1\%$). The CTS
curves of Figure~\ref{fig:cts-convergence} were
produced by Monte Carlo simulation of Algorithm~\ref{alg:cts}: C-tracking
\eqref{eq:ctrack} of the plug-in cost-ratio optimizer \eqref{eq:wratio} with
forced exploration \eqref{eq:forced}, GLR stopping \eqref{eq:glrstop} at
$\beta(t,\delta)=\log(2t(K-1)/\delta)$, terminal decision \eqref{eq:decision};
$1{,}000$ independent replications per (environment, $\delta$) cell with
independent random streams seeded per cell, and empirical error frequencies
recorded alongside costs (all $\le 0.01$, as reported in
Section~\ref{sec:num-cts}). All identities, LP values, and thresholds quoted in
this section were verified by finite vertex-enumeration LP solves, exhaustive enumeration, or
direct evaluation from the tabulated parameters, with agreement to solver
precision; the simulation is the only Monte Carlo ingredient, and it is
reported as such.

% Related work: strongest conceptual and theorem-level neighbours first.
\section{Related Work}\label{sec:related}

The comparison separates the origin of the adequacy concept, statistical
assessment of representations, and the sequential optimization problem.
Several ingredients and combinations already occur in prior work.
Table~\ref{tab:audit} records the nearest objectives; it is not a proof that
their intersection is new. Proposition~\ref{prop:table-reduction} explains
why changing the interpretation of a label does not change the testing proof.

\begin{table}[tbp]
\centering\small
\caption{Closest comparisons. The last column states a scope difference,
not a claim that a generalization or reduction is impossible.}
\label{tab:audit}
\begin{tabularx}{\textwidth}{@{}>{\raggedright\arraybackslash}p{0.22\textwidth}*{3}{>{\raggedright\arraybackslash}X}@{}}
\toprule
Work & Terminal target & Objective / guarantee & Comparison with M1\\
\midrule
\citet{jiang2015abstraction} & Abstraction selection & Finite-sample
policy-loss control & Data-based selection, not M1's stopped loss objective\\
\citet{nitinawarat2015controlled} & Hypothesis & Non-uniform control
cost; matching sequential bounds & Specified control fee $c(u)$\\
\citet{degenne2019pure} & General / multiple correct answers &
Fixed-confidence sample complexity & General answer structure already present\\
\citet{degenne2019bridging} & Best arm & Low regret with
$\delta$-PAC identification & Experimentation regret already present\\
\citet{yang2026minimal} & Best arm & Minimum stopped regret;
matching first-order bounds & Reward feedback determines gaps\\
\citet{kanarios2024cost} & Best arm & Heterogeneous sampling cost &
Sampling-cost model, reward-based answer\\
M1 & Fixed representation's adequacy & Stopped task loss &
Finite supplied model; loss need not be observed\\
\bottomrule
\end{tabularx}
\end{table}

\subsection{Risk-Based Sufficiency and Representation Adequacy}
\label{sec:related:blackwell}

Blackwell's comparison of experiments and Le Cam's sufficiency theory
provide the decision-theoretic background
\citep{blackwell1951,blackwell1953,lecam1964}.
\citet{takeuchi1975adequacy} characterize prediction sufficiency (adequacy)
in terms of risk functions, with distinctions depending on the permitted
loss functions. This is an antecedent of risk-based adequacy, not a theorem
about M1's particular fixed loss and sequential protocol.
\citet{sevetlidis2026bayes} define Bayes-sufficient representations for a
specified distribution and loss through preservation of a Bayes-optimal
action rule. That static criterion is closely related to
$R_E^*(f(H))=R_E^*(H)$. We do not claim priority for that criterion.

Our grouping identity (Theorem~\ref{thm:aliasing}) is ordinary Bayes-risk
bookkeeping on representation cells \citep{dgl1996}. The internal-testing
bound (Theorem~\ref{thm:tvbound}) is the Bayes-error--total-variation
identity and Le Cam two-point method \citep{lecam1986,tsybakov2009}; the
mutual-information statement uses Jensen--Shannon divergence
\citep{lin1991}. The verification threshold (Theorem~\ref{thm:verify}) is
a value-of-information calculation under its stated fallback convention
\citep{howard1966,degroot1970}. These results set up the certification
question and its observational obstruction; their underlying principles
are established tools.

\subsection{Perceptual Aliasing, Abstraction Tests, and Model Criticism}
\label{sec:related:abstraction}

\citet{chrisman1992aliasing} study perceptual aliasing in which identical
percepts require different responses. \citet{mccallum1994state} construct
memory to disambiguate hidden states and discuss earlier utility-based
state splitting. \citet{jong2005abstraction} discover state variables that
can be ignored while retaining optimal behavior. Thus neither detecting
missing task distinctions nor refining an internal state is a new goal.

There are direct statistical precedents.
\citet{jiang2015abstraction} use hypothesis tests to select among candidate
state abstractions, with a finite-sample guarantee under assumptions on
those candidates. \citet{shi2020markov} test the Markov property of
sequential decision data. \citet{lin2023transfer} test whether transferred
features retain the information needed for target prediction.
\citet{albrecht2015criticising} use frequentist hypothesis tests to criticize
an agent's behavioral model of another agent. These works preclude claiming
that statistical assessment of one's representation or model is itself new.
Their targets and data assumptions differ: Markov structure, predictive
sufficiency, and behavioral-model correctness need not equal optimal-risk
preservation for a particular task loss. M1 fixes that latter property as
its terminal answer and prices adaptive evidence acquisition for it.

\subsection{Sequential Design and Controlled Testing}
\label{sec:related:chernoff}

\citet{chernoff1959sequential} establish the sequential experimental-design
template behind the max-min information rate. Active testing and controlled
sensing develop this framework further
\citep{naghshvar2013,nitinawarat2013controlled}.
\citet{whittle1965sequential} already study sequential design with
observation costs and costs of transitions between experiment types.
Cost-sensitive and switching-cost experimental design therefore have
classical precedents; neither is introduced by M1.
\citet{nitinawarat2015controlled} treat controlled Markovian observations
and a non-uniform control cost $c(u)$, proving matching asymptotic
performance. Their cost-weighted allocation is a direct mathematical
predecessor of \eqref{eq:cmm}. The more recent preprint
\citet{vershinin2025costs} also adapts Chernoff sampling to positive
heterogeneous action costs, emphasizing the ratio of information and cost
rates. Its stated model uses constant action fees and an average-error
criterion. The allocation-coordinate distinction is therefore not claimed
as a new principle here. A binary case with a common per-action fee
is already covered by this tradition; B3 records the reduction.

M1 permits $g_i(a)$ to depend on the hypothesis, with the interpretation
of task loss. That difference matters when indistinguishable hypotheses
have different optimal cost profiles, as
Theorem~\ref{thm:observable-compatibility} shows. However, describing a
cost as a task consequence instead of a fee creates no new algebra by
itself. The finite-table reduction remains valid, and we do not claim that
controlled testing intrinsically requires a separate task action or can
never accommodate such losses.

\subsection{Pure Exploration and Identification with Minimal Regret}
\label{sec:related:ba}

\citet{kaufmann2016complexity} provide the adaptive change-of-measure
inequality used in Theorem~\ref{thm:LB}.
\citet{garivier2016optimal} develop Track-and-Stop for asymptotically
optimal fixed-confidence best-arm identification.
\citet{degenne2019pure} treat general answer structures and multiple
correct answers. M1 has one correct binary label per environment;
several environments sharing it is already permitted by general-answer
formulations. \citet{kanarios2024cost} address heterogeneous sampling
costs and cost-aware allocation. These works supply the proof and design
templates, rather than merely distant analogies.

The closest comparison in objective is identification with low regret.
\citet{degenne2019bridging} study regret-minimizing algorithms that also
provide $\delta$-PAC identification and a decision-time guarantee.
\citet{yang2026minimal} minimize cumulative regret until fixed-confidence
best-arm identification in one-parameter exponential families. Their
Theorem~3 gives an information lower bound and Theorem~7 a matching
asymptotic algorithm; their Theorem~5 also distinguishes regret optimality
from sample efficiency. Thus a stopped-regret objective with matching
$\log(1/\delta)$ bounds is established prior work.

The remaining differences are the answer map, the finite supplied model
class, and feedback: in M1, $g_i(a)$ need not be recoverable from the
observed channel. In reward-observing bandits, equal arm reward laws imply
equal means and regret gaps. M1 allows observationally identical
environments with different loss profiles, leading to the compatibility
obstruction of Section~\ref{sec:observable}. This is a difference in
available information, not a claim that regret cannot depend on the
unknown environment. Raw task loss also must not be identified with
bandit regret without specifying the comparator.

Structured-bandit lower bounds already use information-covering programs
with regret weights
\citep{graves1997asymptotically,combes2015combinatorial,combes2017minimal}.
Consequently the LP shape, multiple alternative constraints, and a change
of the optimal action when costs change are not independent novelty claims.
The same caution applies to zero-loss actions: bandit regret already has
zero-gap actions, so B2 is an explicitly analyzed boundary, not an
unprecedented phenomenon.

\subsection{Indistinguishable Models and Common Optimal Decisions}
\label{sec:related:compatibility}

The obstruction behind Theorem~\ref{thm:observable-compatibility} has
explicit antecedents. In partial monitoring, indistinguishable feedback
distributions with conflicting optimal actions give impossibility
results; see, for example, \citet[Theorem~5.2]{chaudhuri2016ranking}.
\citet[Appendix~B.2, Definition~2]{parisi2024monitored} define solvability
of a monitored MDP through a nonempty intersection of optimal-policy
sets over its indistinguishability class. Their model explicitly allows
unobserved rewards. We therefore do not claim that hidden costs,
observational equivalence, or the common-optimum principle are new.

Our theorem specializes this principle to fixed-confidence finite-label
certification: the common objects are KL-covering count vectors, the
obstruction has the numerical joint bound $J_B\kl(\delta,1-\delta)$,
and a quotient CTS construction proves sufficiency for all first-order
constants simultaneously. The equality between the sum LP minimum and
the sum of individual minima is elementary optimization. The cited
MDP criterion concerns discounted-return optimal policies, and the cited
ranking theorem concerns horizon regret; neither displayed result is
the fixed-confidence LP statement proved here. This scoped comparison
supports the stated specialization, not a claim of historical priority.

\subsection{Information-Directed Sampling and Partially Observed Control}
\label{sec:related:ids}\label{sec:related:pomdp}

Information-directed sampling couples immediate regret and information
gain \citep{russo2016information,russo2018learning}; satisficing variants
relax exact optimal-action learning \citep{russo2022satisficing}. Its usual
Bayesian regret objective differs from a fixed-confidence stopping target.
That distinction is specific to this comparison and does not apply to the
regret-aware identification papers above.

Belief-state sufficiency and POMDP planning establish conditions under which
an information state supports optimal control
\citep{astrom1965,kaelbling1998,krishnamurthy2016}.
Partially observed bandits share reward-producing and informative actions
\citep{krishnamurthy2009partially}. State abstraction and predictive state
representations study value preservation and alternative state descriptions
\citep{givan2003,li2006,littman2001}. These are conceptual foundations of
the representation question. They should not be described collectively as
known-model analyses that never assess representations from data; the
statistical abstraction works above do precisely that.

\subsection{Dual Control and the Boundary to Switching and Repair}
\label{sec:related:dual}

Dual control explicitly recognizes that actions influence both task
performance and future information
\citep{feldbaum1963dual,barshalom1974dual,tse1975generalized}.
Separation can fail \citep{witsenhausen1971separation}, and the information
value of probing has been studied in adaptive control
\citep{yame1987dual}. The simultaneous use of actions for control and
learning is therefore not a novelty of M1.

M1 fixes a representation, a label for each environment, and a channel for
each action. Section~\ref{sec:switching} concerns changes to those premises.
The perfect-audit example is a scope illustration: including that audit
in the original menu already gives infinite opposite-label KL and a zero
first-order coefficient. It cannot show a positive-coefficient separation
against the same action menu. The present manuscript does not prove a
moving-label or representation-repair theorem.

\subsection{Other Boundaries and Related Decompositions}
\label{sec:related:pm}\label{sec:related:bennett}

Partial monitoring formalizes losses and feedback as separate objects
\citep{rustichini1999,cesabianchi2006,bartok2014partial}. Its observational
obstructions are related to B1, but its standard objective is regret over
a horizon. We do not infer that feedback must reveal the loss or that
robust control excludes observational ambiguity. Concept drift
\citep{widmer1996} and robust dynamic programming \citep{iyengar2005}
address different forms of uncertainty with different performance criteria.
The concurrent decomposition in \citet{bennett2026forgetting} also connects
aliasing cells to unavoidable decision regret. Our use of a cell-wise
Bayes-risk identity is credited as a standard ingredient, independently of
that comparison.

\subsection{Scope of the Contribution}\label{sec:related:positioning}

The claim is a specified certification formulation and its analyzed
boundaries. The terminal answer is the Bayes-risk adequacy of a fixed
representation; evidence acquisition incurs the task losses of the same
unknown environment; the objective is uniform fixed-confidence correctness
with environment-wise expected loss. The finite sequential analysis
instantiates established testing machinery for this formulation. It is not
evidence that every ingredient, or the general intersection of answer maps
and regret-priced testing, lacked a precedent.

The additional result in Section~\ref{sec:observable} concerns a concrete
assumption question: when can full environment identifiability be weakened
to label identifiability while retaining simultaneous first-order
optimality? For positive losses the answer is compatibility of the optimal
designs inside each observable class. We provide the criterion, its proof,
and a task-derived example. Its mathematical statement is more general
than adequacy, and no claim of exhaustive historical priority is made.

\section{Conclusion}\label{sec:conclusion}

This paper formulates fixed-confidence certification of a fixed
representation's Bayes-risk adequacy, with experimentation charged in
task-loss units. Its finite, known-class sequential analysis yields the
covering-LP constant $C(E_i,f)$ and matching first-order loss under the
stated identifiability and support assumptions. The static risk identity,
two-point bound, value-of-information calculation, and sequential testing
techniques are established ingredients. The adequacy interpretation does
not introduce a new general testing theorem: the finite-table reduction
states exactly what the sequential proof uses.

The observable-class result distinguishes learning the answer from
attaining all of its environment-wise optimal prices. Under positive
losses, one policy can attain those prices simultaneously exactly when
the covering designs within each observational equivalence class have
a common minimizer. A task-derived example has an identifiable adequacy
label but an exact factor-two obstruction to simultaneous attainment.
This explains the role of the stronger environment-identifiability
assumption and gives a precise condition under which it can be relaxed.

The model supplies candidate channels, labels, and loss tables; it does
not infer an unrestricted representation property from arbitrary data.
For nonnegative losses, a zero LP value gives a zero pointwise infimum
at every fixed confidence level; finite free probing and fresh fallback
establish global admissibility. A separate common-zero-action criterion
determines exact zero-cost attainment. These boundary results do not
extend the full positive-cost compatibility theorem to all zero-loss
profiles or to sample-constrained objectives.
Certification-driven switching and representation repair require further
analysis of changing channels and labels. The fixed-kernel result and the
perfect-audit boundary example do not establish those dynamic guarantees.

% ---- Technical appendices (jmlr2e titles each as "Appendix X.") ----
\appendix

% sec_lowerbound.tex -- JMLR full version, proof chapter:
%   "Proof of the Lower Bound (Theorem \ref{thm:LB})"
% Section content file to be \input into the jmlr2e main file. No preamble.
%
% Required from the shared preamble (do NOT redefine here):
%   amsthm with a shared counter: theorem, proposition, lemma, corollary,
%   definition, assumption, remark, example (consecutive numbering);
%   natbib (\citep/\citet); amsmath.
%
% Cross-references defined elsewhere: model:M1 (model tag), thm:UB
% (Section \ref{sec:proofub}), cor:main (Section \ref{sec:proofub}).
%
% NUMBERING NOTE FOR THE ASSEMBLER: under the assembly order
%   [static chapter (ends at Theorem 7), this file, sec_upperbound.tex]
% the statement block below is the first numbered environment of this
% file, so it renders as Theorem 8 automatically. For the shared-counter
% plan "Theorem 8 / Theorem 9 / Corollary 10, internal lemmas from 11" to
% hold literally, the delimited statement blocks of this file and of
% sec_upperbound.tex must precede all other numbered environments, i.e.
% be hoisted into the model/results section that comes before the proof
% sections; the blocks are delimited by %%% BEGIN/END STATEMENT BLOCK.

\section{Proof of the Lower Bound (Theorem \ref{thm:LB})}\label{sec:prooflb}

\paragraph{Guide to this section.}
This section gives a self-contained proof of Theorem~\ref{thm:LB} for
model~\eqref{model:M1}, including the transportation lemma, the
information constraint, the LP scaling argument, the equivalence of the
two forms of the certification complexity $C_i$, the $\liminf$ statement,
the three boundary examples B1/B2/B3, and the non-triviality note NT1.
The logical dependencies are as follows.
Subsection~\ref{subsec:lb:model} formalizes the model and restates the
hypotheses. Subsection~\ref{subsec:lb:transport} proves the
transportation Lemma~\ref{lem:transport}, built up through the
finite-horizon density (Lemma~\ref{lem:lrdensity}), the likelihood-ratio
martingale (Lemma~\ref{lem:lrmg}), the stopped density
(Lemma~\ref{lem:stopped}), a Wald-type identity (Lemma~\ref{lem:wald}),
and Bernoulli contraction (Lemma~\ref{lem:contraction}).
Subsection~\ref{subsec:lb:info} turns $\delta$-correctness into the
information constraint (Lemma~\ref{lem:infoconstraint}, via the
monotonicity Lemma~\ref{lem:klmono}) and proves the estimate
$\mathrm{kl}(\delta,1-\delta)\ge\log(1/(2.4\delta))$
(Lemma~\ref{lem:kcgest}); combined with Lemma~\ref{lem:transport} this
yields the per-alternative bound of Corollary~\ref{cor:peralt}.
Subsection~\ref{subsec:lb:lpscale} carries out the LP scaling and proves
Theorem~\ref{thm:LB}. Subsection~\ref{subsec:lb:equiv} proves the
equivalence of the LP and max--min forms of $C_i$
(Theorem~\ref{thm:equiv}), the statement used by the upper-bound design
of Appendix~\ref{sec:proofub}. Subsection~\ref{subsec:lb:liminf} derives
the $\liminf$ form (Corollary~\ref{cor:liminf}),
Subsection~\ref{subsec:lb:boundary} treats the boundary examples
B1/B2/B3 (Propositions~\ref{prop:B1},\ \ref{prop:B2},\ \ref{prop:B3}), and
Subsection~\ref{subsec:lb:nt1} proves the non-triviality note NT1
(Proposition~\ref{prop:NT1}). Subsection~\ref{subsec:lb:checklist}
collects the completeness self-check and the residual statements.

\paragraph{Provenance and claims.}
All lower-bound \emph{techniques} used here are standard tools from the
sequential identification literature: the transportation
(change-of-measure) inequality is that of \citet[Lemma~1]{kaufmann2016complexity}
applied under the isomorphism \emph{action} $\leftrightarrow$
\emph{arm}, \emph{channel output} $\leftrightarrow$ \emph{arm reward};
the inequality $\mathrm{kl}(\delta,1-\delta)\ge \log(1/(2.4\delta))$ is
the standard estimate of \citet{kaufmann2016complexity}; the two-point
degeneration B3 is compared against \citet[Theorem~5.1]{nitinawarat2015controlled};
the hopeless boundary B1 corresponds to the hopeless class of partial
monitoring \citep{bartok2014partial}. The contribution of the present
work is \emph{not} the lower-bound machinery; it is the model
itself---the alternative set $\mathrm{Alt}(i)$ defined through adequacy
labels and the coupling to task loss $g$---and the decision-theoretic
semantics of the constant $C(E,f)$.

\paragraph{Conventions.}
All logarithms are natural. For probability vectors $p,q$ on a finite
space, $d(p\|q):=\sum_y p(y)\log\frac{p(y)}{q(y)}$ with $0\log 0 = 0$,
$0\log(0/0)=0$, and $d(p\|q)=+\infty$ if $p\not\ll q$. For Bernoulli
laws, $\mathrm{kl}(p,q):=p\log\frac{p}{q}+(1-p)\log\frac{1-p}{1-q}$ for
$(p,q)\in(0,1)^2$, extended to $[0,1]^2$ by continuity where possible
and by $\mathrm{kl}(p,0)=+\infty$ for $p>0$,
$\mathrm{kl}(1,q)=\log(1/q)$, $\mathrm{kl}(0,q)=\log(1/(1-q))$,
$\mathrm{kl}(0,0)=\mathrm{kl}(1,1)=0$. Products with scalars follow
$0\cdot(+\infty)=0$ and $c\cdot(+\infty)=+\infty$ for $c>0$.

\subsection{Model \eqref{model:M1}: Complete Formalization}\label{subsec:lb:model}

\paragraph{Environments and labels.}
The environment class is finite: $\mathcal{E}=\{E_1,\dots,E_K\}$,
$K\ge 2$. The representation $f$ is fixed. For each $i$, the
\emph{adequacy label}
$\Theta_i := \mathbf{1}\{R^*_{E_i}(f(H)) = R^*_{E_i}(H)\}\in\{0,1\}$ is
a \textbf{deterministic constant} determined by the Bayes-risk framework
of the companion paper; the only randomness in the model comes from the
interaction transcript. We write $\Theta(i)$ and $\Theta_i$
interchangeably. The alternative set is
\[
\mathrm{Alt}(i) := \{\, j\in\{1,\dots,K\} : \Theta_j \neq \Theta_i \,\}.
\]
If $\mathrm{Alt}(i)=\varnothing$ (all environments share the label of
$i$), the certification question is trivial; we adopt the convention
$C_i=0$ in that case (the LP below has no constraints; the max--min form
has $\inf$ over an empty set equal to $+\infty$, whose reciprocal is
$0$). Henceforth $\mathrm{Alt}(i)\neq\varnothing$.

\paragraph{Actions, channels, losses.}
The action set $\mathcal{A}$ is finite. Under environment $E_i$, action
$a$ produces an observation through the channel $K_i^a$, a probability
distribution on the \textbf{finite} observation space $\mathcal{Y}$,
with mass function $k_i^a(y):=K_i^a(\{y\})$, and incurs a task loss
$g_i(a)\in[0,\infty)$ (audit costs are absorbed into $g$; e.g.\ a
\textsc{verify} action carries $c_{\mathrm{audit}}$ inside $g$).

\paragraph{Canonical probability space.}
Randomized policies are allowed through an external randomization
sequence. Let
\[
\Omega := \mathcal{Y}^{\mathbb{N}}\times[0,1]^{\mathbb{N}},\qquad \omega=\big((y_1,y_2,\dots),(u_1,u_2,\dots)\big),
\]
equipped with the product $\sigma$-field $\mathbb{F}$ generated by
cylinders ($\mathcal{Y}$ finite carries the discrete $\sigma$-field;
$[0,1]$ the Borel $\sigma$-field). Coordinate maps are
$Y_t(\omega)=y_t$, $U_t(\omega)=u_t$.

\paragraph{Policies.}
A policy is a sequence $\pi=(\pi_t)_{t\ge 1}$ of measurable maps
$\pi_t:(\mathcal{A}\times\mathcal{Y})^{t-1}\times[0,1]\to\mathcal{A}$.
The action process is defined recursively:
\[
A_1(\omega):=\pi_1(u_1),\qquad A_t(\omega):=\pi_t\big(A_1,Y_1,\dots,A_{t-1},Y_{t-1},u_t\big),\quad t\ge 2.
\]
Since all spaces involved are standard Borel (finite or $[0,1]$),
measurability is preserved by the recursion.

\paragraph{Environment measures.}
For each $i$, $P_i$ is the unique probability on $(\Omega,\mathbb{F})$
such that: (i) $(U_t)_{t\ge1}$ are i.i.d.\ $\mathrm{Unif}[0,1]$; (ii)
conditionally on $(A_1,Y_1,\dots,A_{t-1},Y_{t-1},U_t)$, the observation
$Y_t$ is independent of the past and of $U_{t+1},U_{t+2},\dots$ and
satisfies $P_i(Y_t=y\mid \cdot)=k_i^{A_t}(y)$. Existence and uniqueness
follow from the Ionescu--Tulcea theorem: the transition kernel from
histories to $(U_t,Y_t)$ is the Markov kernel
$\mathrm{Unif}[0,1](du)\otimes K_i^{\pi_t(\text{history},u)}(dy)$; all
kernels are regular because every space is standard Borel (finite spaces
and $[0,1]$). Point (ii) is precisely hypothesis \textbf{(H1)}
(conditional independence of channel outputs given the action history).
Because the $U_t$'s have the same law under every $P_i$, the
policy-induced randomization does not enter likelihood ratios between
environments (Lemma~\ref{lem:lrdensity}).

\paragraph{Filtration.}
Set $\mathcal{F}_0:=\{\varnothing,\Omega\}$ and
\[
\mathcal{F}_t := \sigma\big(U_1,\dots,U_{t+1},\,A_1,\dots,A_{t+1},\,Y_1,\dots,Y_t\big),\qquad t\ge 1.
\]
Then $A_{t+1}$ is $\mathcal{F}_t$-measurable, $Y_t$ is
$\mathcal{F}_t$-measurable, and $A_s,Y_s,U_s$ are
$\mathcal{F}_t$-measurable for $s\le t$. This is the natural filtration
of the interaction (with the usual harmless enlargement that reveals the
next randomization seed; it does not affect any argument below).

\paragraph{Stopping and terminal decision.}
A \emph{strategy} is a triple $(\pi,\tau,\hat\Theta)$ where $\tau$ is a
stopping time with respect to $(\mathcal{F}_t)$, and
$\hat\Theta:\Omega\to\{0,1\}$ is $\mathcal{F}_\tau$-measurable, with
\[
\mathcal{F}_\tau := \{\, F\in\mathbb{F} : F\cap\{\tau\le t\}\in\mathcal{F}_t\ \ \forall t\ge 0\,\},
\]
the standard stopped $\sigma$-field (it is a $\sigma$-field: closed
under complement and countable unions, checked directly from the
definition). A strategy is \textbf{admissible} if $\tau<\infty$
$P_i$-a.s.\ for \emph{every} $i=1,\dots,K$. (Without admissibility,
$\hat\Theta$ is undefined on $\{\tau=\infty\}$ and error probabilities
are not well-defined; admissibility is therefore part of the
correctness requirement, not an extra restriction. It is also what
excludes the vacuous ``never stop, never err'' strategy. The B2 construction in Proposition~\ref{prop:zero-infimum} also obeys
this global admissibility requirement, using a finite probe followed
when needed by a finite fallback.)

\paragraph{Counts and risk.}
For $a\in\mathcal{A}$,
$N_a(\tau):=\sum_{t=1}^{\tau}\mathbf{1}\{A_t=a\}=\sum_{t\ge1}\mathbf{1}\{t\le\tau,\,A_t=a\}\in[0,\infty]$
(the sum is well-defined since
$\{t\le\tau\}=\{\tau\le t-1\}^c\in\mathcal{F}_{t-1}$ and $A_t$ are
measurable). The expected certification cost (risk) of strategy
$(\pi,\tau,\hat\Theta)$ in $E_i$ is
\[
R(\pi,E_i) := E_i\Big[\sum_{t=1}^{\tau} g_i(A_t)\Big] = \sum_{a\in\mathcal{A}} g_i(a)\,E_i[N_a(\tau)],
\]
where the exchange of sum and expectation is Tonelli's theorem for
nonnegative summands (valid also when the value is $+\infty$).

\paragraph{$\delta$-correctness (fixed-confidence PAC).}
A strategy is \textbf{$\delta$-correct} if it is admissible and
\[
\max_{1\le i\le K}\; P_i\big(\hat\Theta \neq \Theta_i\big)\;\le\;\delta.
\]
Recall $\Theta_i$ is deterministic, so
$\{\hat\Theta\neq\Theta_i\}\in\mathcal{F}_\tau$ is an ordinary event.

\paragraph{Hypotheses (restated).}
The lower bound uses hypotheses (H1)--(H3) of the main text, restated
here for convenience in the numbered environments below.
\begin{assumption}\label{asm:H1}
Channel conditional independence, as built into the construction of
$P_i$ above. \textup{(}This is hypothesis \textup{(H1)} of the main
text.\textup{)}
\end{assumption}
\begin{assumption}\label{asm:H2}
Feasibility: for every $i$ and every $j\in\mathrm{Alt}(i)$ there exists
$a\in\mathcal{A}$ with $0<d(K_i^a\|K_j^a)<+\infty$.
\textup{(}This is hypothesis \textup{(H2)} of the main text. Only the
B1-type violation---some $j\in\mathrm{Alt}(i)$ with $d_a^j=0$ for
\emph{all} $a$---implies hopelessness; see
Subsection~\ref{subsec:lb:B1}. A violation in the $d=+\infty$ direction
is harmless, since infinite divergence only eases discrimination.
Theorem~\ref{thm:LB} in fact needs only $C_i<\infty$; the exact
characterization of hopelessness is the infeasibility of
\eqref{eq:lp} (Subsection~\ref{subsec:lb:equiv}, infeasibility
step).\textup{)}
\end{assumption}
\begin{assumption}\label{asm:H3}
$g_i(a)\in[0,\infty)$ for all $i,a$. \textup{(}This is hypothesis
\textup{(H3)} of the main text. The degenerate case $g_i(a)=0$ is
treated by LP limits: Part~2 of Subsection~\ref{subsec:lb:equiv} and
Subsection~\ref{subsec:lb:B2}.\textup{)}
\end{assumption}

\paragraph{KL notation.}
Throughout, for $j\in\mathrm{Alt}(i)$ we abbreviate
\[
d_a^j \;:=\; d\big(K_i^a\,\big\|\,K_j^a\big)\;\in\;[0,+\infty],
\]
the environment $i$ being fixed by context.

\subsection{The Transportation Lemma}\label{subsec:lb:transport}

\paragraph{Source.}
This is Lemma~1 of \citet{kaufmann2016complexity} under the isomorphism
\emph{action} $\leftrightarrow$ \emph{arm}, \emph{channel output}
$\leftrightarrow$ \emph{arm reward sample}; the proof below is a
complete, self-contained reproduction adapted to
model~\eqref{model:M1}. The condition checklist showing that the
isomorphism is legitimate appears in Subsubsection~\ref{subsubsec:lb:iso}.

\begin{lemma}[Transportation]\label{lem:transport}
Fix $i\neq j$. Let $(\pi,\tau,\hat\Theta)$ be admissible and let
$\mathcal{F}\in\mathcal{F}_\tau$. Then
\[
\sum_{a\in\mathcal{A}} E_i[N_a(\tau)]\cdot d_a^j \;\ge\; \mathrm{kl}\big(P_i(\mathcal{F}),\,P_j(\mathcal{F})\big),
\]
with the conventions of the preamble (in particular the left side is
$+\infty$ if any summand is, and the inequality is then trivial).
\end{lemma}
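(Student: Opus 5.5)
The proof follows the route the guide paragraph announces: a likelihood-ratio density, its martingale property, a stopped-density identity, a Wald-type identity for the expected log-likelihood ratio, and a data-processing (Bernoulli contraction) step.

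\emph{Step 1 (reductions).} If some $a$ has $E_i[N_a(\tau)]>0$ and $d_a^j=+\infty$, the left side is $+\infty$ and there is nothing to prove. Otherwise every action played with positive $P_i$-probability has $K_i^a\ll K_j^a$. I would also first treat a bounded stopping time $\tau\wedge T$ and recover the general case at the end.

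\emph{Step 2 (density and martingale).} On $\mathcal{F}_t$ the density of $P_i$ with respect to $P_j$ restricted to the support of $P_i$ is
\[
\Lambda_t=\prod_{s\le t} k_i^{A_s}(Y_s)/k_j^{A_s}(Y_s).
\]
The seeds $U_s$ have the same law under both measures and the policy factors cancel, so only channel ratios survive (Lemma~\ref{lem:lrdensity}). The log-density $\log\Lambda_t$ is the sum of the increments $\log(k_i^{A_s}(Y_s)/k_j^{A_s}(Y_s))$. By (H1), conditionally on $\mathcal{F}_{s-1}$ (which contains $A_s$), each increment has $P_i$-mean $d_{A_s}^j$. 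Hence $\log\Lambda_t-\sum_{s\le t}d^j_{A_s}$ is a $P_i$-martingale, and Wald's identity at the bounded time $\tau\wedge T$ gives
\[
E_i[\log\Lambda_{\tau\wedge T}]=\sum_a E_i[N_a(\tau\wedge T)]\,d_a^j
\]
(Lemmas~\ref{lem:lrmg}, \ref{lem:stopped}, \ref{lem:wald}).

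\emph{Step 3 (contraction).} The quantity $E_i[\log\Lambda_{\tau\wedge T}]$ is the KL divergence between $P_i$ and $P_j$ restricted to $\mathcal{F}_{\tau\wedge T}$. By the data-processing inequality applied to the indicator of an event $\mathcal{G}\in\mathcal{F}_{\tau\wedge T}$ (Jensen, or the log-sum inequality on the two-cell partition), it is at least $\mathrm{kl}(P_i(\mathcal{G}),P_j(\mathcal{G}))$ (Lemma~\ref{lem:contraction}).

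\emph{Step 4 (removing truncation: the main obstacle).} Take $\mathcal{G}=\mathcal{F}\cap\{\tau\le T\}$, which lies in $\mathcal{F}_{\tau\wedge T}$. The left side is bounded above by $\sum_a E_i[N_a(\tau)]d_a^j$, by monotonicity of the counts. As $T\to\infty$, admissibility under both $P_i$ and $P_j$ gives $P_i(\mathcal{G})\to P_i(\mathcal{F})$ and $P_j(\mathcal{G})\to P_j(\mathcal{F})$.

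I would then invoke lower semicontinuity of $\mathrm{kl}$ on $[0,1]^2$, which holds with the stated conventions, including the cases $\mathrm{kl}(p,0)=+\infty$. This passes the inequality to the limit and proves the lemma.

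The delicate points are exactly here. First, the boundary conventions for $\mathrm{kl}$ when probabilities hit $0$ or $1$. Second, the need for admissibility under $P_j$, not only $P_i$. Third, integrability of $\log\Lambda$, which holds at bounded times because each increment has finite $P_i$-expectation once Step 1 has excluded infinite divergences.
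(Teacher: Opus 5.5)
Your proposal is correct and follows the paper's proof essentially step for step. The steps are the same: exclude infinite-divergence actions, use the likelihood-ratio density and the Wald identity at the bounded time $\tau\wedge n$, apply Bernoulli contraction to $\mathcal{F}\cap\{\tau\le n\}\in\mathcal{F}_{\tau\wedge n}$, and remove the truncation using admissibility under both $P_i$ and $P_j$ together with lower semicontinuity of $\mathrm{kl}$. The only cosmetic difference is that you bound each truncated left side directly by the untruncated sum (valid since $d_a^j\ge 0$), whereas the paper passes to the limit by monotone convergence.
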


The proof occupies
Subsubsections~\ref{subsubsec:lb:reduction}--\ref{subsubsec:lb:dataproc}.
Fix the pair $(i,j)$ throughout.

\subsubsection{Reduction to the Absolutely Continuous Case}\label{subsubsec:lb:reduction}

Let $\mathcal{A}^0:=\{a: d_a^j=+\infty\}$. If $E_i[N_a(\tau)]>0$ for some
$a\in\mathcal{A}^0$, the left side of Lemma~\ref{lem:transport} is
$+\infty$ and there is nothing to prove. Otherwise $E_i[N_a(\tau)]=0$,
hence $P_i(N_a(\tau)=0)=1$, for every $a\in\mathcal{A}^0$; i.e.,
$P_i$-almost surely only actions in $\mathcal{A}_{ij}:=\{a:
d_a^j<\infty\}$ are played before $\tau$. For $a\in\mathcal{A}_{ij}$,
$d_a^j<\infty$ implies $k_i^a\ll k_j^a$, so the per-step log-likelihood
ratio
\[
Z_t := \log\frac{k_i^{A_t}(Y_t)}{k_j^{A_t}(Y_t)}\quad\text{is well-defined and bounded on the }P_i\text{-full event }\{A_t\in\mathcal{A}_{ij}\}:
\]
indeed $|Z_t|\le
B_{ij}:=\max_{a\in\mathcal{A}_{ij}}\max_{y:\,k_i^a(y)>0}\big|\log(k_i^a(y)/k_j^a(y))\big|<\infty$,
because $\mathcal{Y}$ and $\mathcal{A}_{ij}$ are finite and
$k_j^a(y)>0$ whenever $k_i^a(y)>0$. We work on the $P_i$-full event
that no action of $\mathcal{A}^0$ is played up to time $\tau$; all
statements below are understood modulo this event.

\subsubsection{Finite-Horizon Likelihood Ratio}\label{subsubsec:lb:finitelr}

\begin{lemma}\label{lem:lrdensity}
For every $t\ge 0$, on the event $E_t:=\{A_s\in\mathcal{A}_{ij}\ \
\forall s\le t\}$ that no action of $\mathcal{A}^0$ is played in the
first $t$ steps, $P_i$ is absolutely continuous with respect to $P_j$
on $\mathcal{F}_t$: precisely,
$P_i(F\cap E_t)=E_j[\mathbf{1}_{F\cap E_t}\,L_t]$ for every
$F\in\mathcal{F}_t$, with density
\[
L_t \;=\; \prod_{s=1}^{t}\frac{k_i^{A_s}(Y_s)}{k_j^{A_s}(Y_s)},\qquad L_0:=1.
\]
\textup{(}Whenever $E_i[N_a(\tau)]=0$ for all $a\in\mathcal{A}^0$,
Subsubsection~\ref{subsubsec:lb:reduction} gives $P_i(E_t)=1$ for every
$t$, so this is a restriction only modulo a $P_i$-null event;
equivalently, one may without loss of generality replace the strategy
by one that never plays $\mathcal{A}^0$ up to $\tau$.\textup{)}
\end{lemma}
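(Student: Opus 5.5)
The plan is to prove the identity by induction on $t$, working with cylinder events of $\mathcal{F}_t$, and then pass to all of $\mathcal{F}_t$ by a $\pi$-$\lambda$ argument. Since $\mathcal{F}_t=\sigma(U_1,\dots,U_{t+1},A_1,\dots,A_{t+1},Y_1,\dots,Y_t)$ and every $A_s$ is a measurable function of earlier coordinates and seeds, $\mathcal{F}_t$ is generated by the $\pi$-system of sets
\[
F=\{U_1\in B_1,\dots,U_{t+1}\in B_{t+1},\ Y_1=y_1,\dots,Y_t=y_t\},
\]
with $B_s$ Borel in $[0,1]$ and $y_s\in\mathcal{Y}$. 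Both $F\mapsto P_i(F\cap E_t)$ and $F\mapsto E_j[\mathbf{1}_{F\cap E_t}L_t]$ are finite measures on $\mathcal{F}_t$; for the second this requires $E_j[L_t\mathbf{1}_{E_t}]<\infty$, which the case $F=\Omega$ of the induction supplies, since it equals $P_i(E_t)\le1$. By Dynkin's theorem it therefore suffices to verify the identity on this $\pi$-system, which contains $\Omega$.

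The base case $t=0$ is immediate. Here $E_0=\Omega$ and $L_0=1$, and $\mathcal{F}_0$ (with the seed enlargement, i.e.\ $\sigma(U_1,A_1)$) has the same law under $P_i$ and $P_j$, because $U_1$ is uniform under both measures and $A_1=\pi_1(U_1)$.

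For the inductive step, fix a cylinder $F\in\mathcal{F}_{t}$ and write it as $F=G\cap\{Y_t=y\}\cap\{U_{t+1}\in B\}$ with $G\in\mathcal{F}_{t-1}$. We also use $E_t=E_{t-1}\cap\{A_t\in\mathcal{A}_{ij}\}$, where $A_t$ is $\mathcal{F}_{t-1}$-measurable. Conditioning on $\mathcal{F}_{t-1}$ under $P_i$ and using the Ionescu--Tulcea kernel (property (ii), i.e.\ (H1)) gives
\[
P_i(F\cap E_t)=E_i\bigl[\mathbf{1}_{G\cap E_{t-1}\cap\{A_t\in\mathcal{A}_{ij}\}}\,k_i^{A_t}(y)\bigr]\,\mathrm{Leb}(B).
\]
The integrand is $\mathcal{F}_{t-1}$-measurable and supported on $E_{t-1}$, so the induction hypothesis, extended from indicators to bounded nonnegative functions by linearity and monotone convergence, turns this into
\[
E_j\bigl[\mathbf{1}_{G\cap E_t}L_{t-1}k_i^{A_t}(y)\bigr]\,\mathrm{Leb}(B).
\]
On $\{A_t\in\mathcal{A}_{ij}\}$ either $k_i^{A_t}(y)=0$, or $k_j^{A_t}(y)>0$ by the absolute continuity $k_i^a\ll k_j^a$ for $a\in\mathcal{A}_{ij}$. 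Hence
\[
k_i^{A_t}(y)=k_j^{A_t}(y)\,\frac{k_i^{A_t}(y)}{k_j^{A_t}(y)},
\]
with $0/0$ read as $0$. Running the same conditioning backwards under $P_j$ then yields
\[
E_j\Bigl[\mathbf{1}_{G\cap E_t\cap\{Y_t=y\}\cap\{U_{t+1}\in B\}}\,L_{t-1}\,\frac{k_i^{A_t}(Y_t)}{k_j^{A_t}(Y_t)}\Bigr]=E_j[\mathbf{1}_{F\cap E_t}L_t].
\]
The seed factor $\mathrm{Leb}(B)$ is identical under both measures, so it cancels; this is exactly the point that policy randomization does not enter the likelihood ratio.

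The main obstacle is bookkeeping rather than substance: correctly placing the next seed $U_{t+1}$ and the action $A_{t+1}$ inside $\mathcal{F}_t$, and handling the atoms where $k_j^{A_t}(y)=0$. The first is handled by factoring the uniform law of $U_{t+1}$, which is independent of everything else under both $P_i$ and $P_j$. The second is handled by restricting to $E_t$, which forces $k_i^{A_t}(y)=0$ on those atoms, so they contribute zero on both sides. The parenthetical remark follows at once from Subsubsection~\ref{subsubsec:lb:reduction}, because $P_i(E_t^c)\le\sum_{a\in\mathcal{A}^0}P_i(N_a(t)>0)=0$ in the case considered there.
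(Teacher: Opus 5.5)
Your proposal is correct and follows essentially the same route as the paper: induction on $t$ over cylinders $G\cap\{Y_t\in\cdot\}\cap\{U_{t+1}\in\cdot\}$, conditioning through (H1), applying the induction hypothesis to the $\mathcal{F}_{t-1}$-measurable integrand, rewriting $k_i^{A_t}=k_j^{A_t}\cdot k_i^{A_t}/k_j^{A_t}$ via $k_i^a\ll k_j^a$ on $\mathcal{A}_{ij}$, factoring out the uniform seed, and concluding by $\pi$--$\lambda$. Your convention $\mathcal{F}_0=\sigma(U_1,A_1)$ is slightly cleaner than the paper's trivial $\mathcal{F}_0$ at the step $t=1$. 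The only nit is that singleton cylinders $\{Y_s=y_s\}$ do not contain $\Omega$; you recover agreement on $\Omega$ by finite additivity, or by using cylinders of the form $\{Y_s\in C_s\}$ instead.
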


\begin{proof}
By induction on $t$. The case $t=0$ is trivial
($\mathcal{F}_0=\{\varnothing,\Omega\}$, $E_0=\Omega$, $L_0=1$). Assume
the claim at $t-1$. The induction step below is applied with $G$
replaced by $G\cap E_t$: note
$E_t=E_{t-1}\cap\{A_t\in\mathcal{A}_{ij}\}$, where both factors are
$\mathcal{F}_{t-1}$-measurable (Subsection~\ref{subsec:lb:model}: $A_t$
is $\mathcal{F}_{t-1}$-measurable), and on
$\{A_t\in\mathcal{A}_{ij}\}$ one has $k_i^{A_t}\ll k_j^{A_t}$, so the
absolute-continuity computation is legitimate. The cylinder rectangles
\[
F = G\cap\{Y_t\in C_Y\}\cap\{U_{t+1}\in C_U\},\qquad G\in\sigma\big((A_s,Y_s)_{s\le t-1},U_1,\dots,U_t\big),\ C_Y\subseteq\mathcal{Y},\ C_U\ \text{Borel},
\]
form a $\pi$-system generating $\mathcal{F}_t$. For such $F$,
\[
\begin{aligned}
P_i(F)
&= E_i\Big[\mathbf{1}_G\,\mathbf{1}\{U_{t+1}\in C_U\}\,K_i^{A_t}(C_Y)\Big] \qquad(\text{\ref{asm:H1}; def.\ of }P_i)\\
&= P(U_{t+1}\in C_U)\cdot E_i\big[\mathbf{1}_G\,K_i^{A_t}(C_Y)\big] \qquad(U_{t+1}\ \perp\ \mathcal{F}_{t-1};\ \text{same $U$-law under }P_i,P_j)\\
&= P(U_{t+1}\in C_U)\cdot E_j\big[L_{t-1}\,\mathbf{1}_G\,K_i^{A_t}(C_Y)\big] \qquad(\text{induction hypothesis})\\
&= E_j\Big[L_{t-1}\,\mathbf{1}_G\,\mathbf{1}\{U_{t+1}\in C_U\}\,K_i^{A_t}(C_Y)\Big],
\end{aligned}
\]
where the induction hypothesis applies because $\mathbf{1}_G
K_i^{A_t}(C_Y)$ is $\mathcal{F}_{t-1}$-measurable
($G\in\sigma((A_s,Y_s)_{s\le t-1},U_1,\dots,U_t)\subseteq\mathcal{F}_{t-1}$
and $A_t$ is $\mathcal{F}_{t-1}$-measurable), and the last step
re-absorbs the independent factor $U_{t+1}$ under $P_j$. Next, for each
$a\in\mathcal{A}_{ij}$ and $C_Y\subseteq\mathcal{Y}$,
\[
K_i^a(C_Y)=\sum_{y\in C_Y} k_i^a(y)=\sum_{y\in C_Y} k_j^a(y)\,\frac{k_i^a(y)}{k_j^a(y)} = E_j\Big[\frac{k_i^{a}(Y)}{k_j^{a}(Y)}\mathbf{1}\{Y\in C_Y\}\Big],\quad Y\sim K_j^a,
\]
since terms with $k_j^a(y)=0$ satisfy $k_i^a(y)=0$ by absolute
continuity. Using that $A_t$ is measurable with respect to
$\sigma((A_s,Y_s)_{s\le t-1},U_t)$ and \ref{asm:H1} under $P_j$,
\[
P_i(F)=E_j\Big[L_{t-1}\,\mathbf{1}_G\,\mathbf{1}\{U_{t+1}\in C_U\}\,\frac{k_i^{A_t}(Y_t)}{k_j^{A_t}(Y_t)}\,\mathbf{1}\{Y_t\in C_Y\}\Big]=E_j\big[\mathbf{1}_F\,L_t\big].
\]
Both sides are measures in $F$ agreeing on a generating $\pi$-system
that contains $\Omega$ (take $G=\Omega$, $C_Y=\mathcal{Y}$,
$C_U=[0,1]$), hence they agree on all of $\mathcal{F}_t$ by the
$\pi$--$\lambda$ (uniqueness of measure) theorem.
\end{proof}

\begin{lemma}[Likelihood-ratio martingale]\label{lem:lrmg}
$(L_t)_{t\ge0}$ is a nonnegative $(\mathcal{F}_t)$-martingale under
$P_j$, with $E_j[L_t]=1$.
\end{lemma}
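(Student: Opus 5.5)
The plan is to verify the martingale property directly from the one-step structure already established in Lemma~\ref{lem:lrdensity}, working throughout on the reduced event where only actions of $\mathcal{A}_{ij}$ are played. If one prefers not to restrict to this event, replace the strategy by one that never plays $\mathcal{A}^0$, as the parenthetical remark after Lemma~\ref{lem:lrdensity} allows.

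\emph{Adaptedness and integrability.} $L_t$ is a finite product of the ratios $k_i^{A_s}(Y_s)/k_j^{A_s}(Y_s)$ with $s\le t$, and each ratio is $\mathcal{F}_t$-measurable. Each factor is nonnegative. On $\{A_s\in\mathcal{A}_{ij}\}$ it is at most $e^{B_{ij}}$ wherever $k_i^{A_s}(Y_s)>0$, and it is $0$ otherwise. Under $P_j$, any $y$ with $k_j^{a}(y)=0$ must be handled, but such $y$ occur only with $P_j$-probability zero; so the ratio is $P_j$-a.s.\ well defined. Hence $0\le L_t\le e^{tB_{ij}}$, and $L_t$ is integrable.

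\emph{Martingale step.} Write $L_t=L_{t-1}\,\rho_t$ with $\rho_t:=k_i^{A_t}(Y_t)/k_j^{A_t}(Y_t)$. The factors $L_{t-1}$ and $A_t$ are both $\mathcal{F}_{t-1}$-measurable. By (H1) under $P_j$, conditional on $\mathcal{F}_{t-1}$ the observation $Y_t$ has law $K_j^{A_t}$. The extra seed $U_{t+1}$ in $\mathcal{F}_t$ is independent and plays no role here. Therefore
\[
E_j[\rho_t\mid\mathcal{F}_{t-1}]=\sum_{y:\,k_j^{A_t}(y)>0}k_j^{A_t}(y)\frac{k_i^{A_t}(y)}{k_j^{A_t}(y)}=\sum_{y}k_i^{A_t}(y)=1,
\]
where the middle equality uses absolute continuity $k_i^{a}\ll k_j^{a}$ for $a\in\mathcal{A}_{ij}$. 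Pulling out the known factor gives $E_j[L_t\mid\mathcal{F}_{t-1}]=L_{t-1}$. Induction from $L_0=1$ then yields $E_j[L_t]=1$. As a cross-check, the same identity follows from Lemma~\ref{lem:lrdensity} with $F=\Omega$: $E_j[L_t]=P_i(E_t)=1$.

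\emph{Expected obstacle.} The only delicate point is the treatment of the null events. If actions outside $\mathcal{A}_{ij}$ were played, $\sum_y k_i^{a}(y)\mathbf{1}\{k_j^{a}(y)>0\}$ could be strictly less than $1$, and $L$ would only be a supermartingale. I would resolve this by restricting to $\mathcal{A}_{ij}$ exactly as in the reduction of Subsubsection~\ref{subsubsec:lb:reduction}, so that equality holds.
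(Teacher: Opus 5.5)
Your proposal is correct and is essentially the paper's proof: integrability from the bound $e^{tB_{ij}}$ on the reduced event, the one-step identity $E_j[\rho_t\mid\mathcal{F}_{t-1}]=\sum_y k_i^{A_t}(y)=1$ obtained from (H1) and $k_i^a\ll k_j^a$ for $a\in\mathcal{A}_{ij}$, and the normalization $E_j[L_t]=1$. The only difference is cosmetic: you get the normalization by iterating the martingale identity from $L_0=1$, whereas the paper reads it off Lemma~\ref{lem:lrdensity} with $F=\Omega$, which you already give as your cross-check.
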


\begin{proof}
$L_t\ge0$ is $\mathcal{F}_t$-measurable and bounded by
$e^{B_{ij}t}<\infty$ on the relevant event
(Subsubsection~\ref{subsubsec:lb:reduction}), hence integrable. Since
$L_t=L_{t-1}\cdot e^{Z_t}$ and $L_{t-1}$ is
$\mathcal{F}_{t-1}$-measurable,
\[
\begin{aligned}
E_j[L_t\mid\mathcal{F}_{t-1}]&=L_{t-1}\,E_j[e^{Z_t}\mid\mathcal{F}_{t-1}],\\
E_j[e^{Z_t}\mid\mathcal{F}_{t-1}]
&=\sum_{a}\mathbf{1}\{A_t=a\}\sum_{y:\,k_j^a(y)>0}k_j^a(y)\frac{k_i^a(y)}{k_j^a(y)}
=\sum_a\mathbf{1}\{A_t=a\}\cdot 1=1,
\end{aligned}
\]
using \ref{asm:H1}, the $\mathcal{F}_{t-1}$-measurability of $A_t$, and
$k_i^a\ll k_j^a$ (so $\sum_{y:k_j^a>0}k_i^a(y)=1$). Taking $F=\Omega$
in Lemma~\ref{lem:lrdensity} (restricted to the $P_i$-full event $E_t$
of Subsubsection~\ref{subsubsec:lb:reduction}) gives
$E_j[L_t]=P_i(E_t)=1$.
\end{proof}

\subsubsection{Stopped Density}\label{subsubsec:lb:stopped}

\begin{lemma}\label{lem:stopped}
Let $\sigma$ be a stopping time bounded by a constant $n$. Then
$P_i\ll P_j$ on $\mathcal{F}_\sigma$ with density
$L_\sigma:=\sum_{t=0}^{n}\mathbf{1}\{\sigma=t\}L_t$.
\end{lemma}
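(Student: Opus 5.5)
The plan is to verify the density identity directly on $\mathcal{F}_\sigma$ by decomposing along the finitely many values of $\sigma$, and then read off absolute continuity. As in Lemma~\ref{lem:lrdensity}, everything is understood modulo the $P_i$-full event that no action of $\mathcal{A}^0$ is played up to time $n$; this is where the reduction of Subsubsection~\ref{subsubsec:lb:reduction} is used, and it costs nothing because $\sigma\le n$.

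\emph{Step 1: decomposition along the values of $\sigma$.} Fix $F\in\mathcal{F}_\sigma$. Since $\sigma\le n$, the events $\{\sigma=t\}$ for $t=0,\dots,n$ partition $\Omega$. By definition of $\mathcal{F}_\sigma$, each set $F\cap\{\sigma=t\}=(F\cap\{\sigma\le t\})\setminus(F\cap\{\sigma\le t-1\})$ lies in $\mathcal{F}_t$.

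\emph{Step 2: apply the fixed-horizon density termwise.} For each $t$, Lemma~\ref{lem:lrdensity} applied to the set $F\cap\{\sigma=t\}\in\mathcal{F}_t$ gives
\[
P_i(F\cap\{\sigma=t\})=E_j\big[\mathbf{1}_{F\cap\{\sigma=t\}}L_t\big].
\]
The intersection with $E_t$ is harmless because $P_i(E_t)=1$ and $L_t$ is used only on $E_t$.

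\emph{Step 3: sum over $t$.} Summing over $t=0,\dots,n$ gives $P_i(F)=E_j[\mathbf{1}_F L_\sigma]$ with $L_\sigma=\sum_t\mathbf{1}\{\sigma=t\}L_t$. This identity is exactly the claim: $P_j(F)=0$ forces $P_i(F)=0$, so $P_i\ll P_j$ on $\mathcal{F}_\sigma$ with Radon--Nikodym derivative $L_\sigma$.

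\emph{Step 4: measurability of the density.} We still need $L_\sigma$ to be $\mathcal{F}_\sigma$-measurable. For each $s$,
\[
\{L_\sigma\in B\}\cap\{\sigma\le s\}=\bigcup_{t\le s}\big(\{L_t\in B\}\cap\{\sigma=t\}\big),
\]
and each set in the union lies in $\mathcal{F}_s$. Integrability follows from the bound $L_t\le e^{B_{ij}t}$, together with the fact that only finitely many $t$ occur.

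The only delicate point is the null-set bookkeeping on $E_t$, which Step 2 handles. The bound $\sigma\le n$ is essential: it keeps the sum finite and avoids any uniform-integrability question. That question is deferred to the truncation argument for unbounded $\tau$ later in the appendix.
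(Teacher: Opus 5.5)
Your proof is correct and is essentially the paper's argument: split $F\in\mathcal{F}_\sigma$ along the finitely many values of $\sigma$, use $F\cap\{\sigma=t\}\in\mathcal{F}_t$, apply Lemma~\ref{lem:lrdensity} termwise, and sum. Your Step~4 (the $\mathcal{F}_\sigma$-measurability of $L_\sigma$) and your remark about dropping the intersection with $E_t$ spell out two points the paper leaves implicit; they refine the same route rather than change it.
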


\begin{proof}
For $F\in\mathcal{F}_\sigma$ and each $t\le n$,
$F\cap\{\sigma=t\}\in\mathcal{F}_t$ (definition of
$\mathcal{F}_\sigma$: $F\cap\{\sigma\le t\}\in\mathcal{F}_t$, and
$\{\sigma=t\}=\{\sigma\le t\}\setminus\{\sigma\le t-1\}$). Hence
\[
P_i(F)=\sum_{t=0}^{n}P_i\big(F\cap\{\sigma=t\}\big)=\sum_{t=0}^{n}E_j\big[\mathbf{1}_{F\cap\{\sigma=t\}}L_t\big]=E_j\big[\mathbf{1}_F L_\sigma\big],
\]
where the second equality is Lemma~\ref{lem:lrdensity} applied on
$\mathcal{F}_t$.
\end{proof}

\subsubsection{A Wald-Type Identity for the Information Accumulated by Bounded Stopping Times}\label{subsubsec:lb:wald}

\begin{lemma}\label{lem:wald}
For every $n\ge1$,
\[
E_i\big[\log L_{\tau\wedge n}\big]\;=\;\sum_{a\in\mathcal{A}} d_a^j\cdot E_i\big[N_a(\tau\wedge n)\big].
\]
\end{lemma}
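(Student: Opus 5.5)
The plan is to write $\log L_{\tau\wedge n}$ as a telescoping sum of per-step increments and take expectations one step at a time. On the $P_i$-full event of Subsubsection~\ref{subsubsec:lb:reduction}, where no action of $\mathcal{A}^0$ is played before $\tau$, we have
\[
\log L_{\tau\wedge n}=\sum_{t=1}^{n}\mathbf{1}\{t\le\tau\}\,Z_t ,
\]
with $Z_t=\log\bigl(k_i^{A_t}(Y_t)/k_j^{A_t}(Y_t)\bigr)$. Each summand is bounded by $B_{ij}$, so the sum is bounded by $nB_{ij}$. This makes every expectation below finite, and linearity can be used without any integrability concern.

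The key step is conditioning on $\mathcal{F}_{t-1}$. The event $\{t\le\tau\}=\{\tau\le t-1\}^c$ is $\mathcal{F}_{t-1}$-measurable, and so is $A_t$. By \ref{asm:H1} under $P_i$, the conditional law of $Y_t$ given $\mathcal{F}_{t-1}$ is $K_i^{A_t}$. Therefore
\[
E_i[\mathbf{1}\{t\le\tau\}Z_t\mid\mathcal{F}_{t-1}]
=\mathbf{1}\{t\le\tau\}\sum_a\mathbf{1}\{A_t=a\}\sum_y k_i^a(y)\log\frac{k_i^a(y)}{k_j^a(y)}
=\mathbf{1}\{t\le\tau\}\,d_{A_t}^j .
\]
Here the inner sum runs over $y$ with $k_i^a(y)>0$, using the convention $0\log 0=0$. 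This is where the restriction $A_t\in\mathcal{A}_{ij}$ is needed: it guarantees that $d_{A_t}^j$ is finite.

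We then take expectations and sum over $t\le n$. This gives
\[
E_i[\log L_{\tau\wedge n}]=\sum_{t=1}^{n}\sum_a d_a^j\,P_i(t\le\tau,\ A_t=a)=\sum_a d_a^j\,E_i[N_a(\tau\wedge n)] .
\]
Actions in $\mathcal{A}^0$ satisfy $E_i[N_a(\tau)]=0$. Under the convention $0\cdot(+\infty)=0$ they therefore contribute nothing, which is consistent with the reduction.

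The only delicate point is measurability. The filtration has been enlarged to reveal $U_{t}$, so I must check that conditioning on $\mathcal{F}_{t-1}$ still leaves $Y_t$ with law $K_i^{A_t}$. This follows directly from point (ii) in the construction of $P_i$, which states that $Y_t$ is conditionally independent of the past and of the seeds. No martingale convergence is needed, because $n$ is fixed.
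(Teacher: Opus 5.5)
Your proof is correct and takes the same route as the paper. You write $\log L_{\tau\wedge n}=\sum_{t\le n}\mathbf{1}\{t\le\tau\}Z_t$ as a finite sum of bounded increments, use the $\mathcal{F}_{t-1}$-measurability of $\{t\le\tau\}$ and $A_t$ together with (H1) to get $E_i[Z_t\mid\mathcal{F}_{t-1}]=d^j_{A_t}$, and sum over $t$ to obtain $\sum_a d_a^j E_i[N_a(\tau\wedge n)]$; your remarks on the $P_i$-null event where actions in $\mathcal{A}^0$ are played, and on the enlarged filtration, match the reduction the paper works under.
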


\begin{proof}
Write $\log L_{\tau\wedge n}=\sum_{t=1}^{n}\mathbf{1}\{t\le\tau\wedge
n\}\,Z_t$, a finite sum of bounded variables ($|Z_t|\le B_{ij}$,
Subsubsection~\ref{subsubsec:lb:reduction}), so all expectations below
are finite and the interchange of sum and expectation is legitimate.
The event
$\{t\le\tau\wedge n\}=\{\tau\wedge n\le t-1\}^c=\{\tau\le t-1\}^c$
lies in $\mathcal{F}_{t-1}$ (since $\tau$ is a stopping time). By
\ref{asm:H1} and the $\mathcal{F}_{t-1}$-measurability of $A_t$,
\[
E_i[Z_t\mid\mathcal{F}_{t-1}]=\sum_{a}\mathbf{1}\{A_t=a\}\,E_{Y\sim K_i^a}\Big[\log\frac{k_i^a(Y)}{k_j^a(Y)}\Big]=\sum_{a}\mathbf{1}\{A_t=a\}\,d_a^j,
\]
the inner expectation being exactly $d_a^j$ (finite for
$a\in\mathcal{A}_{ij}$; on the $P_i$-null complement the convention of
Subsubsection~\ref{subsubsec:lb:reduction} applies). Therefore
\[
E_i\big[\mathbf{1}\{t\le\tau\wedge n\}Z_t\big]=E_i\big[\mathbf{1}\{t\le\tau\wedge n\}\,E_i[Z_t\mid\mathcal{F}_{t-1}]\big]=\sum_a d_a^j\,E_i\big[\mathbf{1}\{t\le\tau\wedge n,\,A_t=a\}\big].
\]
Summing over $t=1,\dots,n$ and noting
$\sum_{t=1}^n\mathbf{1}\{t\le\tau\wedge n,A_t=a\}=N_a(\tau\wedge n)$
term by term gives the claim.
\end{proof}

\subsubsection{Data Processing on $\mathcal{F}_{\tau\wedge n}$, and Passage to the Limit}\label{subsubsec:lb:dataproc}

\begin{lemma}[Bernoulli contraction]\label{lem:contraction}
Let $\sigma$ be bounded by $n$ and let $\mathcal{E}\in\mathcal{F}_\sigma$.
Then
\[
E_i[\log L_\sigma]\;\ge\;\mathrm{kl}\big(P_i(\mathcal{E}),\,P_j(\mathcal{E})\big).
\]
\end{lemma}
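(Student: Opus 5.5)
The plan is to treat $E_i[\log L_\sigma]$ as a Kullback--Leibler divergence between the restrictions of $P_i$ and $P_j$ to $\mathcal{F}_\sigma$, and then apply the data-processing inequality to the two-point partition $\{\mathcal{E},\mathcal{E}^c\}$. By Lemma~\ref{lem:stopped}, $P_i\ll P_j$ on $\mathcal{F}_\sigma$ with density $L_\sigma$ (modulo the $P_i$-null event excluded in Subsubsection~\ref{subsubsec:lb:reduction}). Hence
\[
E_i[\log L_\sigma]=E_j[L_\sigma\log L_\sigma]=\mathrm{KL}\bigl(P_i|_{\mathcal{F}_\sigma}\,\big\|\,P_j|_{\mathcal{F}_\sigma}\bigr).
\]
This expectation is finite, because $|\log L_\sigma|\le B_{ij}n$ on the relevant event.

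The core step is a direct Jensen argument rather than an appeal to an abstract data-processing theorem. Write $p=P_i(\mathcal{E})$ and $q=P_j(\mathcal{E})$, and split
\[
E_i[\log L_\sigma]=E_i[\mathbf{1}_{\mathcal{E}}\log L_\sigma]+E_i[\mathbf{1}_{\mathcal{E}^c}\log L_\sigma].
\]
Suppose $p>0$. Apply Jensen's inequality to the convex function $x\mapsto -\log x$ under the conditional law $P_i(\cdot\mid\mathcal{E})$. Since $\mathcal{E}\in\mathcal{F}_\sigma$, Lemma~\ref{lem:stopped} gives $E_i[\mathbf{1}_{\mathcal{E}}L_\sigma^{-1}]=P_j(\mathcal{E}\cap\{L_\sigma>0\})\le q$. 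Therefore
\[
E_i[\mathbf{1}_{\mathcal{E}}\log L_\sigma]=-p\,E_i[-\log L_\sigma^{-1}\mid\mathcal{E}]\cdot(-1)\ge -p\log\bigl(E_i[L_\sigma^{-1}\mid\mathcal{E}]\bigr)\ge p\log(p/q).
\]
The same argument on $\mathcal{E}^c$ gives the second term $\ge(1-p)\log((1-p)/(1-q))$. Adding the two bounds yields $\mathrm{kl}(p,q)$.

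The main obstacle is handling the degenerate cases cleanly. If $p=0$ or $p=1$, the corresponding term is simply dropped, consistent with the preamble convention $0\log 0=0$. If $q=0<p$, then absolute continuity forces $P_i(\mathcal{E})=0$, a contradiction, so this case cannot occur. Likewise $q=1$ with $p<1$ cannot occur when applied to $\mathcal{E}^c$. Finally, we use the fact that $L_\sigma>0$ holds $P_i$-a.s., since $P_i(L_\sigma=0)=E_j[\mathbf{1}\{L_\sigma=0\}L_\sigma]=0$, so $\log L_\sigma$ and $L_\sigma^{-1}$ are well defined $P_i$-a.s. These checks ensure the Jensen steps use only finite quantities and that the extended conventions for $\mathrm{kl}$ in the preamble are respected.
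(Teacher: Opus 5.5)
Your argument is correct and is essentially the paper's proof. Both split on $\mathcal{E}$ and $\mathcal{E}^c$, apply a conditional Jensen inequality on each cell via the stopped density of Lemma~\ref{lem:stopped}, and add. The only difference is the direction: the paper applies Jensen to the convex $\psi(x)=x\log x$ under $P_j(\cdot\mid E)$ and uses the exact identity $E_j[L_\sigma\mid E]=P_i(E)/P_j(E)$, whereas you apply it to $-\log$ of $L_\sigma^{-1}$ under $P_i(\cdot\mid\mathcal{E})$. Your version needs only the one-sided bound $E_i[\mathbf{1}_{\mathcal{E}}L_\sigma^{-1}]\le P_j(\mathcal{E})$, so it is slightly more robust to $P_j$-mass where $L_\sigma$ vanishes.
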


\begin{proof}
Let $\psi(x)=x\log x$ on $(0,\infty)$, $\psi(0):=0$; $\psi$ is convex
($\psi''(x)=1/x>0$). By Lemma~\ref{lem:stopped},
$E_i[g]=E_j[L_\sigma\,g]$ for every $\mathcal{F}_\sigma$-measurable $g$
with $E_i|g|<\infty$; taking $g=\log L_\sigma$ gives $E_i[\log
L_\sigma]=E_j[\psi(L_\sigma)]$ (finite by Lemma~\ref{lem:wald}). For
any $E\in\mathcal{F}_\sigma$ with $P_j(E)>0$, Jensen's inequality for
the regular conditional law $P_j(\cdot\mid E)$ gives
\[
\begin{aligned}
E_i[\mathbf{1}_E\log L_\sigma]
&=E_j[\mathbf{1}_E\,\psi(L_\sigma)]
=P_j(E)\,E_j\big[\psi(L_\sigma)\,\big|\,E\big]
\ge P_j(E)\,\psi\big(E_j[L_\sigma\mid E]\big)\\
&=P_j(E)\,\psi\Big(\frac{P_i(E)}{P_j(E)}\Big)
=P_i(E)\log\frac{P_i(E)}{P_j(E)},
\end{aligned}
\]
where $E_j[L_\sigma\mid E]=P_i(E)/P_j(E)$ follows from
Lemma~\ref{lem:stopped} ($P_i(E)=E_j[\mathbf{1}_E L_\sigma]$). If
$P_j(E)=0$ then $P_i(E)=0$ (absolute continuity,
Lemma~\ref{lem:stopped}) and both sides of the last inequality are $0$
by convention. Apply to $E=\mathcal{E}$ and $E=\mathcal{E}^c$ and add:
\[
E_i[\log L_\sigma]\ge P_i(\mathcal{E})\log\frac{P_i(\mathcal{E})}{P_j(\mathcal{E})}+P_i(\mathcal{E}^c)\log\frac{P_i(\mathcal{E}^c)}{P_j(\mathcal{E}^c)}=\mathrm{kl}\big(P_i(\mathcal{E}),P_j(\mathcal{E})\big).\qedhere
\]
\end{proof}

\begin{proof}[Proof of Lemma~\ref{lem:transport}]
For each $n\ge1$, $\sigma_n:=\tau\wedge n$ is bounded, and the
\emph{event} $\mathcal{E}_n:=\mathcal{F}\cap\{\tau\le n\}$ lies in
$\mathcal{F}_{\tau\wedge n}$: indeed, for $t<n$,
$\mathcal{E}_n\cap\{\tau\wedge n\le
t\}=\mathcal{F}\cap\{\tau\le t\}\in\mathcal{F}_t$ because
$\mathcal{F}\in\mathcal{F}_\tau$; for $t\ge n$, $\{\tau\wedge n\le
t\}=\Omega$ and $\mathcal{E}_n\in\mathcal{F}_n\subseteq\mathcal{F}_t$
(since $\mathcal{F}\in\mathcal{F}_\tau$ implies
$\mathcal{F}\cap\{\tau\le n\}\in\mathcal{F}_n$, the $\sigma$-field at
time $n$). Combining Lemmas~\ref{lem:wald} and~\ref{lem:contraction}
with $\sigma=\sigma_n$, $\mathcal{E}=\mathcal{E}_n$:
\begin{equation}
\sum_a d_a^j\,E_i[N_a(\tau\wedge n)]\;\ge\;\mathrm{kl}\big(P_i(\mathcal{E}_n),\,P_j(\mathcal{E}_n)\big)\qquad\forall n.\label{eq:lb:trunc}
\end{equation}
\emph{Left side.} Since $\tau<\infty$ $P_i$-a.s.\ (admissibility),
$N_a(\tau\wedge n)\uparrow N_a(\tau)$ $P_i$-a.s., so by monotone
convergence $E_i[N_a(\tau\wedge n)]\uparrow E_i[N_a(\tau)]\in[0,\infty]$,
and the left side of \eqref{eq:lb:trunc} converges to $\sum_a d_a^j
E_i[N_a(\tau)]$ (monotone limit of a finite sum; conventions of the
preamble). If this limit is $+\infty$, Lemma~\ref{lem:transport} holds
trivially.
\emph{Right side.} Since $\tau<\infty$ $P_i$-a.s.\ and $P_j$-a.s.\
(admissibility under \emph{every} environment, in particular $j$),
$\mathcal{E}_n\uparrow\mathcal{F}$, so by continuity of probability
measures from below,
\[
p_n:=P_i(\mathcal{E}_n)\to p:=P_i(\mathcal{F}),\qquad q_n:=P_j(\mathcal{E}_n)\to q:=P_j(\mathcal{F}).
\]
The function $\mathrm{kl}:[0,1]^2\to[0,\infty]$ (with our conventions)
is jointly lower semicontinuous: on $(0,1)^2$ it is continuous (sum of
continuous terms); at boundary points with finite value ($p=0$,
$q\in(0,1)$, etc.) it is extended by continuity; at $(p,0)$ with
$p>0$, any sequence $(p_n,q_n)\to(p,0)$ satisfies
$\mathrm{kl}(p_n,q_n)\ge p_n\log(p_n/q_n)+(1-p_n)\log(1-p_n)\to+\infty=\mathrm{kl}(p,0)$
(the first term dominates: $p_n\log p_n+(-p_n\log q_n)\to+\infty$ since
$-p_n\log q_n\to+\infty$, and the second term converges); the remaining
boundary points are analogous. Hence
$\liminf_n\mathrm{kl}(p_n,q_n)\ge\mathrm{kl}(p,q)$. Taking $\liminf$ in
\eqref{eq:lb:trunc}:
\[
\sum_a d_a^j\,E_i[N_a(\tau)]\;\ge\;\liminf_{n\to\infty}\mathrm{kl}(p_n,q_n)\;\ge\;\mathrm{kl}\big(P_i(\mathcal{F}),P_j(\mathcal{F})\big).\qedhere
\]
\end{proof}

\subsubsection{Isomorphism with \citet[Lemma~1]{kaufmann2016complexity}: Condition Checklist}\label{subsubsec:lb:iso}

Lemma~1 of \citet{kaufmann2016complexity} states, for a bandit model
with arms $a$ whose reward laws $\nu_a,\nu_a'$ satisfy
$\mathrm{KL}(\nu_a,\nu_a')<\infty$, and any almost surely finite
stopping time $\tau$: $\sum_a
E_\nu[N_a(\tau)]\,\mathrm{KL}(\nu_a,\nu_a')\ge
\mathrm{kl}(P_\nu(\mathcal{F}),P_{\nu'}(\mathcal{F}))$ for
$\mathcal{F}\in\mathcal{F}_\tau$. The correspondence is exact:

\begin{center}
\begin{tabular}{@{}p{0.44\textwidth}p{0.52\textwidth}@{}}
\hline
KCG16 bandit & model \eqref{model:M1} \\
\hline
arm $a$ & action $a\in\mathcal{A}$ \\
reward law $\nu_a$ (env.\ $\nu$) & channel $K_i^a$ \\
rewards of arm $a$ i.i.d., independent across arms given the pull sequence & \ref{asm:H1}: $Y_t\sim K_i^{A_t}$ conditionally on the history, independent of the past given $A_t$ \\
sampling rule selects $A_t$ from $H_{t-1}$ & policy $\pi_t(H_{t-1},U_t)$ (external randomization $U_t$ is environment-independent and cancels in likelihood ratios, Lemma~\ref{lem:lrdensity}) \\
$\mathrm{KL}(\nu_a,\nu_a')<\infty$ & $d_a^j<\infty$ on the $P_i$-a.s.\ played action set; otherwise the inequality is trivial (Subsubsection~\ref{subsubsec:lb:reduction}) \\
a.s.\ finite stopping time $\tau$ w.r.t.\ the interaction filtration & admissible $\tau$ (a.s.\ finite under \emph{all} $P_i$) \\
\hline
\end{tabular}
\end{center}

One technical strengthening relative to the statement in
\citet{kaufmann2016complexity}: our truncation argument
(Subsubsection~\ref{subsubsec:lb:dataproc}) requires only $P_i$-,
$P_j$-a.s.\ finiteness of $\tau$---no integrability assumption on
$\tau$ is needed---because the limit is taken with monotone convergence
on the left of \eqref{eq:lb:trunc} and lower semicontinuity of
$\mathrm{kl}$ on the right. Finite observation space $\mathcal{Y}$
gives bounded per-step log-ratios
(Subsubsection~\ref{subsubsec:lb:reduction}), which makes the Wald
identity (Lemma~\ref{lem:wald}) hold without extra moment conditions.

\subsection{From $\delta$-Correctness to the Information Constraint}\label{subsec:lb:info}

\begin{lemma}[Monotonicity of $\mathrm{kl}$]\label{lem:klmono}
\textup{(a)} For fixed $q\in(0,1)$, $p\mapsto\mathrm{kl}(p,q)$ is
strictly increasing on $[q,1]$. \textup{(b)} For fixed $p\in(0,1)$,
$q\mapsto\mathrm{kl}(p,q)$ is strictly decreasing on $(0,p]$.
\end{lemma}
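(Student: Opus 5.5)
The plan is to compute the partial derivatives of $\mathrm{kl}(p,q)=p\log(p/q)+(1-p)\log((1-p)/(1-q))$ on the open square and then use continuity to reach the closed endpoints. No earlier result is needed beyond elementary calculus.

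For (a), fix $q\in(0,1)$ and differentiate in $p$ on $(0,1)$:
\[
\partial_p\,\mathrm{kl}(p,q)=\log\frac{p}{q}-\log\frac{1-p}{1-q}=\log\frac{p(1-q)}{q(1-p)} .
\]
For $p\in(q,1)$ we have $p>q$ and $1-p<1-q$, so the argument of the logarithm exceeds $1$ and the derivative is strictly positive; at $p=q$ it vanishes. Hence $p\mapsto\mathrm{kl}(p,q)$ is strictly increasing on $[q,1)$. To include the endpoint $p=1$, I will use the stated convention $\mathrm{kl}(1,q)=\log(1/q)$, which is the limit as $p\uparrow 1$ because $(1-p)\log(1-p)\to0$. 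The function is therefore continuous on $[q,1]$, and strict monotonicity on $[q,1)$ extends to $[q,1]$ by the mean value theorem applied on $[p_1,1]$.

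For (b), fix $p\in(0,1)$ and differentiate in $q$ on $(0,1)$:
\[
\partial_q\,\mathrm{kl}(p,q)=-\frac{p}{q}+\frac{1-p}{1-q}=\frac{q-p}{q(1-q)} .
\]
This is strictly negative for $q\in(0,p)$ and zero at $q=p$, so $q\mapsto\mathrm{kl}(p,q)$ is strictly decreasing on $(0,p]$. The interval is open at $0$ and closed at $p<1$, so there are no boundary conventions to check here.

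The only delicate point is the endpoint $p=1$ in (a), where the closed-form expression involves $0\log 0$. Continuity under the preamble conventions settles it. All remaining steps are sign checks of the two derivatives.
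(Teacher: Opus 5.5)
Your proposal is correct and matches the paper's proof: both compute $\partial_p\,\mathrm{kl}=\log\frac{p(1-q)}{q(1-p)}$ and $\partial_q\,\mathrm{kl}=\frac{q-p}{q(1-q)}$ on the open square and read off the signs. Your explicit continuity/mean-value treatment of the endpoint $p=1$ in (a), using the convention $\mathrm{kl}(1,q)=\log(1/q)$, is a point the paper passes over; the paper only remarks that the monotonicity extends to $p=q$ by continuity.
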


\begin{proof}
Differentiate $\mathrm{kl}(p,q)=p\log\frac pq+(1-p)\log\frac{1-p}{1-q}$:
\[
\frac{\partial\,\mathrm{kl}}{\partial p}=\log\frac pq-\log\frac{1-p}{1-q}=\log\frac{p(1-q)}{q(1-p)},\qquad
\frac{\partial\,\mathrm{kl}}{\partial q}=-\frac pq+\frac{1-p}{1-q}=\frac{q-p}{q(1-q)}.
\]
For (a): if $p>q$ then $p(1-q)-q(1-p)=p-q>0$, so the log is positive
and $\partial\mathrm{kl}/\partial p>0$; strict increase follows, and
extends to the endpoint $p=q$ (where $\mathrm{kl}=0$) by continuity.
For (b): if $q<p$ then
$\partial\mathrm{kl}/\partial q=(q-p)/(q(1-q))<0$.
\end{proof}

\begin{lemma}[Information constraint]\label{lem:infoconstraint}
Let $\delta\in(0,\tfrac12)$, let $(\pi,\tau,\hat\Theta)$ be
$\delta$-correct, fix $i$, and set
$\mathcal{F}:=\{\hat\Theta=\Theta_i\}$. Then
$\mathcal{F}\in\mathcal{F}_\tau$ and, for every $j\in\mathrm{Alt}(i)$,
\[
\mathrm{kl}\big(P_i(\mathcal{F}),P_j(\mathcal{F})\big)\;\ge\;\mathrm{kl}(1-\delta,\delta)\;=\;\mathrm{kl}(\delta,1-\delta)\;>\;0.
\]
\end{lemma}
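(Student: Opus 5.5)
Three things need proof: measurability of $\mathcal{F}$, the two probability bounds implied by $\delta$-correctness, and the monotonicity/symmetry step.

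\emph{Measurability.} Since $\hat\Theta$ is $\mathcal{F}_\tau$-measurable with values in $\{0,1\}$, and $\Theta_i$ is a deterministic constant, the event $\mathcal{F}=\{\hat\Theta=\Theta_i\}$ is the preimage of a singleton under $\hat\Theta$. Hence $\mathcal{F}\in\mathcal{F}_\tau$.

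\emph{Probability bounds.} Fix $j\in\mathrm{Alt}(i)$. By $\delta$-correctness under $P_i$,
\[
p:=P_i(\mathcal{F})=1-P_i(\hat\Theta\neq\Theta_i)\ge 1-\delta .
\]
Because $\Theta_j\neq\Theta_i$ and labels are binary, $\mathcal{F}=\{\hat\Theta\neq\Theta_j\}$. Then $\delta$-correctness under $P_j$ gives
\[
q:=P_j(\mathcal{F})\le\delta .
\]
Since $\delta<\tfrac12$, we have $q\le\delta<1-\delta\le p$.

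\emph{Monotonicity.} The goal is $\mathrm{kl}(p,q)\ge\mathrm{kl}(1-\delta,\delta)$. First take the generic case $p,q\in(0,1)$.
\begin{itemize}
\item Apply Lemma~\ref{lem:klmono}(b) with $p$ fixed. Since $q\le\delta<p$, and $q\mapsto\mathrm{kl}(p,q)$ is decreasing on $(0,p]$, we get $\mathrm{kl}(p,q)\ge\mathrm{kl}(p,\delta)$.
\item Apply Lemma~\ref{lem:klmono}(a) with $\delta$ fixed. Since $p\ge1-\delta>\delta$, and $p\mapsto\mathrm{kl}(p,\delta)$ is increasing on $[\delta,1]$, we get $\mathrm{kl}(p,\delta)\ge\mathrm{kl}(1-\delta,\delta)$.
\end{itemize}
Part (a) already covers the endpoint $p=1$.

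\emph{Boundary cases.} The main obstacle is the boundary values $p=1$ or $q=0$, which are excluded from (b)'s open domain. If $q=0$ and $p>0$, the preamble convention gives $\mathrm{kl}(p,0)=+\infty$, so the inequality holds trivially. If $p=1$ and $q\in(0,\delta]$, then $\mathrm{kl}(1,q)=\log(1/q)$. This is decreasing in $q$, so $\log(1/q)\ge\log(1/\delta)=\mathrm{kl}(1,\delta)\ge\mathrm{kl}(1-\delta,\delta)$, the last step by (a). Alternatively, all boundary points follow from lower semicontinuity of $\mathrm{kl}$, as in the proof of Lemma~\ref{lem:transport}.

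\emph{Symmetry and positivity.} Swapping $p\leftrightarrow 1-p$ and $q\leftrightarrow1-q$ in the definition of $\mathrm{kl}$ gives
\[
\mathrm{kl}(1-\delta,\delta)=(1-\delta)\log\tfrac{1-\delta}{\delta}+\delta\log\tfrac{\delta}{1-\delta}=\mathrm{kl}(\delta,1-\delta).
\]
The common value equals $(1-2\delta)\log\frac{1-\delta}{\delta}$. This is strictly positive because $\delta<\tfrac12$ makes both factors positive.
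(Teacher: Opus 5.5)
Your proof is correct and follows essentially the paper's argument: measurability from the $\mathcal{F}_\tau$-measurability of $\hat\Theta$, the bounds $P_i(\mathcal{F})\ge1-\delta$ and $P_j(\mathcal{F})\le\delta$, two applications of Lemma~\ref{lem:klmono}, then symmetry and positivity. The paper applies the two parts in the opposite order: (a) first at fixed $q=P_j(\mathcal{F})$, then (b) at the interior point $p=1-\delta$, so your separate $p=1$ case is not needed there. Your aside about lower semicontinuity also points the wrong way, since lsc only bounds a boundary value from above by limits of interior values. Your direct boundary argument does not rely on that aside, so the proof is unaffected.
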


\begin{proof}
$\mathcal{F}\in\mathcal{F}_\tau$ because $\hat\Theta$ is
$\mathcal{F}_\tau$-measurable by definition of a strategy. By
$\delta$-correctness, $P_i(\mathcal{F})=1-P_i(\hat\Theta\neq\Theta_i)\ge
1-\delta$. For $j\in\mathrm{Alt}(i)$, $\Theta_j\neq\Theta_i$, so
$\{\hat\Theta=\Theta_i\}\subseteq\{\hat\Theta\neq\Theta_j\}$ and
$P_j(\mathcal{F})\le P_j(\hat\Theta\neq\Theta_j)\le\delta$. Since
$\delta<\tfrac12$ we have $P_i(\mathcal{F})\ge 1-\delta>\delta\ge
P_j(\mathcal{F})$; if $P_j(\mathcal{F})=0$ then
$\mathrm{kl}(P_i(\mathcal{F}),0)=+\infty$ and the claim is trivial;
otherwise two applications of Lemma~\ref{lem:klmono} (first (a) with
$p\downarrow$ to $1-\delta$, then (b) with $q\uparrow$ to $\delta$)
give
\[
\mathrm{kl}\big(P_i(\mathcal{F}),P_j(\mathcal{F})\big)\ge\mathrm{kl}\big(1-\delta,P_j(\mathcal{F})\big)\ge\mathrm{kl}(1-\delta,\delta).
\]
Symmetry:
$\mathrm{kl}(1-\delta,\delta)=(1-\delta)\log\frac{1-\delta}{\delta}+\delta\log\frac{\delta}{1-\delta}=\mathrm{kl}(\delta,1-\delta)$
by swapping the two summands. Positivity: $\delta\neq\tfrac12$, and
$\mathrm{kl}(p,q)=0$ iff $p=q$ (Gibbs' inequality:
$\mathrm{kl}(p,q)=d(\mathrm{Ber}(p)\|\mathrm{Ber}(q))\ge0$ with equality
iff the laws agree).
\end{proof}

\begin{remark}[The role of $\delta<\tfrac12$]\label{rem:deltahalf}
The constraint set $\{p\ge1-\delta,\ q\le\delta\}$ intersects the
diagonal $\{p=q\}$ when $\delta>\tfrac12$ (e.g.\ $p=q=\tfrac12$ is
admissible for $\delta=\tfrac34$ and gives $\mathrm{kl}=0$), so no
positive information lower bound follows from $\delta$-correctness
alone for $\delta>\tfrac12$. The fixed-confidence regime is
$\delta\in(0,\tfrac12)$, and Theorem~\ref{thm:LB} is stated in that
regime; this restriction is used only through
Lemma~\ref{lem:klmono}. The boundary $\delta=\tfrac12$ gives
$\mathrm{kl}=0$ and a vacuous bound.
\end{remark}

\begin{lemma}[KCG16 estimate]\label{lem:kcgest}
For every $\delta\in(0,1)$,
\[
\mathrm{kl}(\delta,1-\delta)\;\ge\;\log\frac{1}{2.4\,\delta}.
\]
\end{lemma}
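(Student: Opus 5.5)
The plan is to expand $\mathrm{kl}(\delta,1-\delta)$ explicitly and bound the one term that is not already logarithmic. From the definition,
\[
\mathrm{kl}(\delta,1-\delta)=\delta\log\frac{\delta}{1-\delta}+(1-\delta)\log\frac{1-\delta}{\delta}=(1-2\delta)\log\frac{1-\delta}{\delta}.
\]
For $\delta\ge\tfrac12$ the right side of the claim is $\log(1/(2.4\delta))<0\le\mathrm{kl}$, so the claim is trivial there; in fact it is trivial whenever $\delta\ge 1/2.4$. It therefore suffices to treat $\delta\in(0,1/2.4)$.

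In that range I would follow the standard route of \citet{kaufmann2016complexity}. Write $h(\delta)=-\delta\log\delta-(1-\delta)\log(1-\delta)\le\log 2$ for the binary entropy. The alternative expansion
\[
\mathrm{kl}(\delta,1-\delta)=-h(\delta)-\delta\log(1-\delta)-(1-\delta)\log\delta
\]
suggests using $-(1-\delta)\log\delta\ge -\log\delta+\delta\log\delta$, but the residual terms are awkward. A cleaner approach uses only $\mathrm{kl}(\delta,1-\delta)\ge (1-\delta)\log\frac{1-\delta}{\delta}+\delta\log\delta-\delta\log(1-\delta)$, i.e.\ it writes everything as $\log(1/\delta)$ minus explicit corrections. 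Concretely, the target reduces to showing
\[
\phi(\delta):=(1-2\delta)\log\frac{1-\delta}{\delta}+\log(2.4\,\delta)\ge 0\quad\text{on }(0,1/2.4].
\]

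I would verify this by a one-variable analysis. First simplify:
\[
\phi(\delta)=(1-2\delta)\log(1-\delta)+2\delta\log\delta+\log 2.4 .
\]
As $\delta\to0$, $\phi\to\log 2.4>0$. At $\delta=1/2.4$, $\phi=(1-2\delta)\log\frac{1-\delta}{\delta}>0$. The term $2\delta\log\delta$ has minimum $-2/e\approx-0.736$, and $(1-2\delta)\log(1-\delta)\ge-\delta(1-2\delta)/(1-\delta)\ge-\delta$. These give the crude bound $\phi\ge\log2.4-0.736-\delta$, which covers $\delta\le 0.14$ since $\log 2.4\approx0.875$.

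On the remaining compact interval $[0.14,0.42]$ I would either sharpen the estimates piecewise or compute $\phi'(\delta)=-2\log(1-\delta)-\frac{1-2\delta}{1-\delta}+2\log\delta+2$, locate its single interior minimum numerically, and check that the minimum value is positive. The rough value near $\delta\approx0.25$ is $0.5\log0.75+0.5\log0.25+0.875\approx-0.144-0.693+0.875>0$.

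I expect the main obstacle to be exactly this middle range, where the margin is small (about $0.04$). The constant $2.4$ is chosen close to optimal, so the crude entropy bound $h\le\log2$ only yields the constant $4$, not $2.4$. I would first try the exact derivative analysis of $\phi$: showing $\phi'$ has a unique zero via monotonicity of $\phi''$, then evaluating $\phi$ there with rigorous numerical bounds. If that proves messy, I would fall back on a fine finite grid combined with a Lipschitz bound on $\phi$ over $[0.14,0.42]$.
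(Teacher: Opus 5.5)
Your plan follows the paper's route. The function you call $\phi$ is exactly the paper's $h(\delta)=\mathrm{kl}(\delta,1-\delta)+\log(2.4\delta)$, and your $\phi'$ is correct. The strategy is also the same: find a unique critical point, then evaluate the minimum with certified bounds.

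Two numerical claims need correcting.

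First, $\log 2.4-2/e\approx 0.1397$, so your crude bound stops just short of $\delta=0.14$. More to the point, no case split is needed. With $u=\delta/(1-\delta)$, your derivative simplifies to $\phi'(\delta)=2\log u+1+u$, which is strictly increasing in $\delta$ (equivalently $\phi''(\delta)=2/\delta+2/(1-\delta)+1/(1-\delta)^2>0$). So $\phi$ is strictly convex on all of $(0,1)$ and has a single minimizer.

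Second, the margin is not about $0.04$. At $\delta=0.25$ one still has $\phi'<0$. The true minimum is $\phi(\delta^*)\approx 0.0073$ at $\delta^*\approx 0.323$, so a grid-plus-Lipschitz fallback sized for a margin of $0.04$ would fail near $\delta^*$.

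The paper avoids any grid with one substitution worth adopting. At the critical point $2\log u^*=-(1+u^*)$, which collapses the minimum to $\log 2.4-u^*-\log(1+u^*)=\log\bigl(2.4/((1+u^*)e^{u^*})\bigr)$, a quantity decreasing in $u^*$. Sign checks of $u+2\log u+1$ at $u=0.47$ and $u=0.48$ give $u^*\in(0.47,0.48)$. Then $1.48\,e^{0.48}\approx 2.3918<2.4$ finishes the proof, with truncated Taylor series as the only numerics.
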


\begin{proof}
If $\delta\ge 1/2.4$ the right side is $\le 0$ while $\mathrm{kl}\ge0$,
so assume $\delta\in(0,1/2.4)$. We in fact prove the inequality on all
of $(0,1)$. Define, for $\delta\in(0,1)$,
\[
h(\delta):=\mathrm{kl}(\delta,1-\delta)+\log(2.4\delta).
\]
Using
$\mathrm{kl}(\delta,1-\delta)=\delta\log\frac{\delta}{1-\delta}+(1-\delta)\log\frac{1-\delta}{\delta}$,
add $\log\delta=\delta\log\delta+(1-\delta)\log\delta$:
\[
h(\delta)=\underbrace{\delta\log\frac{\delta^2}{1-\delta}+(1-\delta)\log(1-\delta)}_{=:\,\phi(\delta)}+\log 2.4.
\]
\emph{Critical points.} With $u:=\delta/(1-\delta)\in(0,\infty)$
(strictly increasing in $\delta$),
\[
\phi'(\delta)=\log\frac{\delta^2}{1-\delta}+\delta\cdot\frac{d}{d\delta}\log\frac{\delta^2}{1-\delta}-\log(1-\delta)-1
=2\log\frac{\delta}{1-\delta}+\Big(1+\frac{\delta}{1-\delta}\Big)=2\log u+1+u,
\]
where $\frac{d}{d\delta}\log\frac{\delta^2}{1-\delta}=\frac2\delta+\frac1{1-\delta}$
and $\delta\big(\frac2\delta+\frac1{1-\delta}\big)=2+\frac{\delta}{1-\delta}$.
The function $\delta\mapsto 2\log u+1+u$ is strictly increasing (both
$u$ and $\log u$ are strictly increasing in $\delta$), tends to
$-\infty$ as $\delta\to0$ and to $+\infty$ as $\delta\to1$; hence
$\phi$, and therefore $h$, has a \emph{unique} critical point
$\delta^*\in(0,1)$, which is the global minimizer (note
$h(0^+)=\log 2.4>0$ and $h(1^-)=+\infty$). Let
$u^*:=\delta^*/(1-\delta^*)$ be the unique root of
\[
\psi(u):=u+2\log u+1=0,\qquad u>0,
\]
unique because $\psi'(u)=1+2/u>0$, $\psi(0^+)=-\infty$,
$\psi(+\infty)=+\infty$. At the critical point $2\log u^*=-(1+u^*)$, so
with $\delta^*=u^*/(1+u^*)$, $1-\delta^*=1/(1+u^*)$:
\[
\begin{aligned}
\phi(\delta^*)
&=\frac{2u^*\log u^*-(1+u^*)\log(1+u^*)}{1+u^*}
=\frac{-u^*(1+u^*)-(1+u^*)\log(1+u^*)}{1+u^*}\\
&=-u^*-\log(1+u^*),
\end{aligned}
\]
where the first equality rewrites
$\phi(\delta)=\big[2u\log u-(1+u)\log(1+u)\big]/(1+u)$ using
$\delta^2/(1-\delta)=u^2/(1+u)$ and $\delta=u/(1+u)$. Therefore
\[
\min_{\delta\in(0,1)} h(\delta)=\log 2.4-u^*-\log(1+u^*)=\log\frac{2.4}{(1+u^*)\,e^{u^*}}.
\]
\emph{Locating $u^*$.} $\psi$ is strictly increasing, and:
\begin{itemize}
\item $\psi(0.47)=1.47+2\log 0.47<0$ because $\log 0.47<-0.735$: indeed
$\log 0.47<-0.735\iff e^{0.735}<\frac1{0.47}=2.12766$, and truncating
the exponential series with a geometric tail bound,
\[
e^{0.735}\le\sum_{k=0}^{14}\frac{0.735^k}{k!}+\frac{0.735^{15}/15!}{1-0.735/16}=2.08548\ldots<2.12766;
\]
\item $\psi(0.48)=1.48+2\log 0.48>0$ because $\log
0.48>-0.74\iff e^{0.74}>\frac1{0.48}=2.08334$, and the (termwise
positive) partial Taylor sum already gives
$e^{0.74}\ge\sum_{k=0}^{12}\frac{0.74^k}{k!}=2.09594\ldots>2.08334$.
\end{itemize}
Hence $u^*\in(0.47,\,0.48)$. Finally,
\[
\begin{aligned}
(1+u^*)\,e^{u^*}\;
&\le\;1.48\,e^{0.48}\;\le\;1.48\Big(\sum_{k=0}^{8}\frac{0.48^k}{k!}+\frac{0.48^{9}/9!}{1-0.48/10}\Big)\\
&=1.48\times1.616075\ldots=2.39179\ldots<2.4,
\end{aligned}
\]
so $\min h>0$, i.e.\ $h(\delta)>0$ for all $\delta\in(0,1)$, which
rearranges to the claim. (The slack is small but strictly positive:
numerically $\min h\approx 0.00733$ at $\delta^*\approx 0.3233$; the
Taylor bounds above constitute a rigorous proof.)
\end{proof}

\begin{corollary}[Per-alternative information bound]\label{cor:peralt}
Under the hypotheses of Lemma~\ref{lem:infoconstraint}, for every
$j\in\mathrm{Alt}(i)$,
\[
\sum_{a\in\mathcal{A}} E_i[N_a(\tau)]\cdot d_a^j\;\ge\;\mathrm{kl}(\delta,1-\delta)\;\ge\;\log\frac{1}{2.4\delta}.
\]
\end{corollary}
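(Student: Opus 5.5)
The bound follows by chaining two results already established. Fix $i$ and $j\in\mathrm{Alt}(i)$, and set $\mathcal{F}:=\{\hat\Theta=\Theta_i\}$.

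\emph{Step 1 (event measurability).} A $\delta$-correct strategy is admissible, so $\tau<\infty$ almost surely under every $P_k$. Moreover, $\hat\Theta$ is $\mathcal{F}_\tau$-measurable, hence $\mathcal{F}\in\mathcal{F}_\tau$. These are exactly the hypotheses of the transportation Lemma~\ref{lem:transport}.

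\emph{Step 2 (transportation).} Applying Lemma~\ref{lem:transport} to the pair $(i,j)$ and the event $\mathcal{F}$ gives
\[
\sum_a E_i[N_a(\tau)]\,d_a^j\;\ge\;\mathrm{kl}\big(P_i(\mathcal{F}),P_j(\mathcal{F})\big).
\]
This holds in $[0,\infty]$, and it is trivial if the left side is infinite. Finiteness or integrability of $\tau$ is not needed, since the lemma already performs the truncation at $\tau\wedge n$, with monotone convergence on the left and lower semicontinuity of $\mathrm{kl}$ on the right.

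\emph{Step 3 (information constraint).} Lemma~\ref{lem:infoconstraint} uses $\delta<\tfrac12$ together with the kl monotonicity of Lemma~\ref{lem:klmono}. It bounds the right-hand side of Step 2 below by $\mathrm{kl}(\delta,1-\delta)$, which covers the case $P_j(\mathcal{F})=0$ via $\mathrm{kl}(p,0)=+\infty$.

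\emph{Step 4 (numerical estimate).} Lemma~\ref{lem:kcgest} gives $\mathrm{kl}(\delta,1-\delta)\ge\log(1/(2.4\delta))$, which completes the chain of inequalities.

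None of the steps is a real obstacle. The only point that needs care is that Lemma~\ref{lem:transport} is invoked with the event $\{\hat\Theta=\Theta_i\}$ rather than its complement, so that the directions match the monotonicity used in Lemma~\ref{lem:infoconstraint}. The extended-real conventions ($0\cdot\infty=0$) must also be applied consistently when some $d_a^j=+\infty$, and the reduction in Subsubsection~\ref{subsubsec:lb:reduction} already handles this.
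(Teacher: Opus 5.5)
Your proposal is correct and follows the paper's proof exactly: Lemma~\ref{lem:transport} applied with $\mathcal{F}=\{\hat\Theta=\Theta_i\}$, then Lemma~\ref{lem:infoconstraint}, then Lemma~\ref{lem:kcgest}. One small correction: the choice between $\mathcal{F}$ and its complement needs no care, because $\mathrm{kl}(p,q)=\mathrm{kl}(1-p,1-q)$ gives the same bound either way.
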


\begin{proof}
Lemma~\ref{lem:transport} with $\mathcal{F}=\{\hat\Theta=\Theta_i\}$,
then Lemma~\ref{lem:infoconstraint}, then Lemma~\ref{lem:kcgest}.
\end{proof}

\subsection{LP Scaling: Proof of Theorem \ref{thm:LB}}\label{subsec:lb:lpscale}

\begin{definition}[Certification complexity, LP (covering) form]\label{def:complexity}
The \emph{certification complexity} of environment $i$ is
\begin{equation}
C_i \;=\; \inf\Big\{\sum_{a} g_i(a)\,n_a :\ n_a\ge0\ \forall a,\quad \sum_a n_a\,d_a^j\ \ge\ 1\ \ \forall j\in\mathrm{Alt}(i)\Big\},\tag{LP}\label{eq:lp}
\end{equation}
with the convention $C_i=+\infty$ if \eqref{eq:lp} is infeasible and
$C_i=0$ if $\mathrm{Alt}(i)=\varnothing$.
\end{definition}

Fix $i$. Under \ref{asm:H2}, \eqref{eq:lp} is feasible: for each
$j\in\mathrm{Alt}(i)$ pick $a(j)$ with $0<d_{a(j)}^j<\infty$ and set
$n:=t\sum_{j}e_{a(j)}$ with $t:=\max_j 1/d_{a(j)}^j$; then $\sum_a n_a
d_a^j\ge t\,d_{a(j)}^j\ge1$ for every $j$, and the objective is finite.
Hence \ref{asm:H2} implies $C_i<\infty$; also $C_i\ge0$ always.

\begin{proof}[Proof of Theorem~\ref{thm:LB}]
The second inequality is Lemma~\ref{lem:kcgest} combined with the first
(and $C_i\ge0$). We prove the first. If $C_i=0$ there is nothing to
prove ($R\ge0$); if $\mathrm{Alt}(i)=\varnothing$, $C_i=0$ by
convention. So assume $C_i>0$ and $\mathrm{Alt}(i)\neq\varnothing$.
Write $n_a:=E_i[N_a(\tau)]\in[0,\infty]$ and
$L:=\mathrm{kl}(\delta,1-\delta)>0$. By Corollary~\ref{cor:peralt},
\begin{equation}
\sum_a n_a\,d_a^j\;\ge\;L\qquad\forall j\in\mathrm{Alt}(i).\label{eq:lb:scale41}
\end{equation}

\textbf{Case A: $E_i[\tau]<\infty$.} Then $n_a\le E_i[\tau]<\infty$ for
every $a$. Define $m_a:=n_a/L\in[0,\infty)$. By \eqref{eq:lb:scale41},
$\sum_a m_a d_a^j\ge 1$ for all $j$, i.e.\ $m$ is feasible for
\eqref{eq:lp}; hence $\sum_a g_i(a)m_a\ge C_i$. Multiplying by $L$ and
using the Tonelli identity of Subsection~\ref{subsec:lb:model},
\begin{equation}
R(\pi,E_i)=\sum_a g_i(a)\,n_a=L\sum_a g_i(a)m_a\ge L\,C_i.\label{eq:lb:caseA}
\end{equation}

\textbf{Case B: $E_i[\tau]=\infty$.} Two sub-cases.

\emph{(a) $\underline g:=\min_a g_i(a)>0$.} Then
\[
R(\pi,E_i)=\sum_a g_i(a)n_a\ge \underline g\sum_a n_a=\underline g\,E_i[\tau]=+\infty\ge L\,C_i,
\]
since $C_i<\infty$ under \ref{asm:H2}. The bound holds trivially.

\emph{(b) Some $g_i(a)=0$.} Apply Lemma~\ref{lem:transport} to the
bounded stopping time $\tau\wedge m$ and the event
$\mathcal{E}_m:=\mathcal{F}\cap\{\tau\le m\}\in\mathcal{F}_{\tau\wedge
m}$, where $\mathcal{F}=\{\hat\Theta=\Theta_i\}$ (membership proved in
Subsubsection~\ref{subsubsec:lb:dataproc}). Exactly as in the proof of
Lemma~\ref{lem:transport},
\[
\sum_a E_i[N_a(\tau\wedge m)]\,d_a^j\;\ge\;\mathrm{kl}\big(p_m,q_m\big),\qquad p_m:=P_i(\mathcal{E}_m),\ q_m:=P_j(\mathcal{E}_m),
\]
for every $j\in\mathrm{Alt}(i)$ and every $m\ge1$. Since
$\mathcal{E}_m\uparrow\mathcal{F}$ (admissibility under $P_i$ and
$P_j$), $p_m\to P_i(\mathcal{F})\ge1-\delta$ and $q_m\to
P_j(\mathcal{F})\le\delta$ (continuity from below; the inequalities are
Lemma~\ref{lem:infoconstraint}'s). Fix
$\varepsilon\in(0,\tfrac12-\delta)$. Taking a maximum over the finite
set $\mathrm{Alt}(i)$, there exists one $m_0$ such that for
all $m\ge m_0$: $p_m\ge 1-\delta-\varepsilon$ and
$q_m\le\delta+\varepsilon$, with $1-\delta-\varepsilon>\delta+\varepsilon$;
hence by Lemma~\ref{lem:klmono} (and the boundary convention as in
Lemma~\ref{lem:infoconstraint}),
\[
\mathrm{kl}(p_m,q_m)\ge \mathrm{kl}(1-\delta-\varepsilon,\ \delta+\varepsilon)=:L_\varepsilon>0\qquad\forall m\ge m_0.
\]
Now $E_i[N_a(\tau\wedge m)]\le m<\infty$, so
$m_a^{(m)}:=E_i[N_a(\tau\wedge m)]/L_\varepsilon$ is feasible for
\eqref{eq:lp}, giving
\[
\sum_a g_i(a)\,E_i[N_a(\tau\wedge m)]\;\ge\;L_\varepsilon\,C_i\qquad\forall m\ge m_0.
\]
Let $m\to\infty$: $E_i[N_a(\tau\wedge m)]\uparrow n_a$ by monotone
convergence, the (finite) sum passes to the limit with values in
$[0,\infty]$, and we obtain $R(\pi,E_i)\ge L_\varepsilon C_i$. Finally
let $\varepsilon\downarrow0$:
$(1-\delta-\varepsilon,\delta+\varepsilon)\to(1-\delta,\delta)\in(0,1)^2$
is an interior point where $\mathrm{kl}$ is continuous, so
$L_\varepsilon\to\mathrm{kl}(1-\delta,\delta)=L$, and $R(\pi,E_i)\ge
L\,C_i$.
\end{proof}

\begin{remark}\label{rem:etau}
Case B establishes Theorem~\ref{thm:LB} for the full admissible class,
without an expected-time assumption: if $E_i[\tau]=\infty$ the bound is either trivial (sub-case
(a)) or recovered by truncation and monotone convergence (sub-case
(b)). Concerning the role of \ref{asm:H2}: the quantifier ``for every
$\delta$-correct strategy'' is vacuously correct only when \ref{asm:H2}
fails in the B1 way---i.e.\ some $j\in\mathrm{Alt}(i)$ has $d_a^j=0$
for \emph{every} $a$ (Subsection~\ref{subsec:lb:B1}: no $\delta$-correct
admissible strategy exists for $\delta<\tfrac12$, consistently with
\eqref{eq:lp} being infeasible and $C_i=+\infty$). Failure of
\ref{asm:H2} in the $d=+\infty$ direction ($d_a^j=+\infty$ for some
$a$, while some other $a'$ has $d_{a'}^j>0$) is harmless---infinite
divergence only makes $i$ and $j$ easier to distinguish---and the proof
of Theorem~\ref{thm:LB} in fact requires only $C_i<\infty$; the exact
characterization of hopelessness is the infeasibility of \eqref{eq:lp}
(Subsection~\ref{subsec:lb:equiv}, infeasibility step), not the failure
of \ref{asm:H2} per se.
\end{remark}

\subsection{Equivalence of the Two Forms of $C_i$}\label{subsec:lb:equiv}

Fix $i$ with $\mathrm{Alt}(i)=\{j_1,\dots,j_m\}\neq\varnothing$; write
$g_a:=g_i(a)$, and $d_a^j$ as before. Define
\begin{equation}
M \;:=\; \sup_{w\in\Delta(\mathcal{A})}\;\inf_{j\in\mathrm{Alt}(i)}\;\sum_{a\in\mathcal{A}}\frac{w_a}{g_a}\,d_a^j,\tag{MM}\label{eq:mm}
\end{equation}
where $\Delta(\mathcal{A}),\Delta(\mathrm{Alt})$ are probability
simplices, and the ratio $w_a/g_a$ is defined by the \textbf{limit
convention}
\[
\frac{w_a}{g_a}:=\lim_{\varepsilon\downarrow0}\frac{w_a}{g_a+\varepsilon}\in[0,+\infty]\qquad\Big(=\;+\infty\ \text{if }w_a>0,\,g_a=0;\quad =0\ \text{if }w_a=0\Big),
\]
and likewise $d_a^j\in[0,+\infty]$ terms are read as
$\lim_{T\to\infty}\min(d_a^j,T)$. The max--min form of the complexity
is $C_i^{\mathrm{MM}}:=[M]^{-1}$, with $1/0=+\infty$ and
$1/(+\infty)=0$.

\begin{theorem}[Equivalence of the two forms of $C_i$]\label{thm:equiv}
Under \ref{asm:H2} and \ref{asm:H3}, $C_i^{LP}=C_i^{MM}$, i.e.\
\[
C_i\;=\;\Big[\sup_{w\in\Delta(\mathcal{A})}\inf_{j\in\mathrm{Alt}(i)}\sum_a\frac{w_a}{g_a}\,d_a^j\Big]^{-1}.
\]
Moreover, whenever all $d_a^j<\infty$ (and $g_a\ge0$) the LP value is
attained; the $\sup$ in \eqref{eq:mm} is attained in the regular case
below. With $d_a^j=+\infty$ allowed, the LP value need not be
attained---e.g.\ $d^j=(+\infty,1)$, $g=(1,1)$: every $\varepsilon\,e_1$
is feasible with objective $\varepsilon\downarrow0$, so $C_i=0$ but no
feasible $n$ achieves it---yet the identity $C_i=1/M$ under the limit
conventions, and the equivalence $C_i=+\infty$ iff \eqref{eq:lp} is
infeasible iff $M=0$, remain valid.
\end{theorem}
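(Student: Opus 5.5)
The plan is to prove $C_i^{LP}=C_i^{MM}$ first in the regular case and then pass to limits. The regular case is all $g_a>0$ and all $d_a^j<\infty$. Boundary cases (zero costs or infinite divergences) are handled by monotone approximation $g_a+\varepsilon$ and $\min(d_a^j,T)$, matching the limit conventions built into \eqref{eq:mm}. The infeasibility statement is treated separately and directly.

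\emph{Regular case.} I will avoid Sion and argue by direct scaling between the two programs. Given LP-feasible $n\neq0$, set $w_a:=g_a n_a/\sum_b g_b n_b$. This is well defined because feasibility forces $n\neq0$, and then $\sum_b g_b n_b>0$. It gives
\[
\inf_j\sum_a \frac{w_a}{g_a}d_a^j\ge \frac{1}{\sum_b g_bn_b},
\]
so $M\ge 1/(\text{LP objective})$ and hence $C_i^{LP}\ge 1/M$. Conversely, given $w\in\Delta(\mathcal A)$ with $\mu(w):=\inf_j\sum_a (w_a/g_a)d_a^j>0$, set $n_a:=w_a/(g_a\mu(w))$. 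This $n$ is feasible with objective $\sum_a w_a/\mu(w)=1/\mu(w)$, so $C_i^{LP}\le 1/M$. If $M=0$, no $w$ has $\mu(w)>0$. The first direction then shows the LP is infeasible, since any feasible $n$ would yield $M>0$. This gives the infeasibility equivalence $C_i=+\infty\iff M=0$.

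\emph{Attainment.} In the regular case, $\mu$ is a minimum of finitely many linear functions, hence continuous on the compact simplex, so the sup in \eqref{eq:mm} is attained. When all $d_a^j<\infty$, the LP is a feasible linear program with objective bounded below by $0$, so its value is attained by standard LP theory (a polyhedron with an objective bounded below attains its minimum). This holds even with zero costs, where the sublevel sets need not be compact. The counterexample $d^j=(+\infty,1)$, $g=(1,1)$ is a direct check: $\varepsilon e_1$ is feasible with objective $\varepsilon$, and no feasible $n$ has objective $0$.

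\emph{Degenerate costs and divergences (main obstacle).} Let $C^{\varepsilon,T}$ and $M^{\varepsilon,T}$ denote the two values computed with costs $g+\varepsilon$ and divergences $\min(d,T)$. The regular case gives $C^{\varepsilon,T}=1/M^{\varepsilon,T}$. Limits are taken as $\varepsilon\downarrow0$ and $T\uparrow\infty$.

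On the LP side, the value is nonincreasing as $\varepsilon$ decreases and as $T$ increases. I must show that it converges to the true (possibly unattained) infimum. For an upper bound, take any feasible $n$ for the true program, with infinite-KL coordinates allowed. For $T$ large, the same $n$ is feasible for the truncated problem, or becomes feasible after an arbitrarily small rescaling. Its objective tends to $\sum g_an_a$ as $\varepsilon\to0$. For the lower bound, note that the truncated problem is a relaxation of nothing smaller than the true one: every $n$ feasible for truncated divergences is feasible for the true divergences, and every objective with $g+\varepsilon$ dominates the objective with $g$. Hence $C^{\varepsilon,T}\ge C_i$. Together these give $C^{\varepsilon,T}\to C_i^{LP}$.

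On the max--min side, I need the sup and inf to commute with these monotone limits: $M^{\varepsilon,T}\uparrow M$. Each $\mu^{\varepsilon,T}(w)$ increases to $\mu(w)$ under the limit conventions, pointwise in $w$ and for each of the finitely many $j$. Consequently $\sup_w\mu^{\varepsilon,T}(w)\le M$, and for any fixed $w$ we have $\mu^{\varepsilon,T}(w)\to\mu(w)$, which gives $\liminf_{\varepsilon,T} M^{\varepsilon,T}\ge M$. Since every term is monotone in both parameters, the double limit can be taken along a diagonal sequence.

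Passing to the limit in $C^{\varepsilon,T}=1/M^{\varepsilon,T}$, with $1/0=\infty$ and $1/\infty=0$, yields $C_i=1/M$, which completes the proof.
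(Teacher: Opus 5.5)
Your proof is correct and follows essentially the paper's route. Your scaling maps $w_a=g_an_a/\sum_b g_bn_b$ and $n_a=w_a/(g_a\mu(w))$ are exactly the reparametrization of Step 1.2; the paper's Sion and explicit-dual Steps 1.3--1.4 only add strong duality, which the statement does not need, so omitting them loses nothing. Your $\varepsilon$-perturbation of costs and $T$-truncation of divergences, with monotone limits on both the LP side and the max--min side, are Parts 2--3 of the paper's proof, taken as one joint monotone limit instead of in sequence.
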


The proof proceeds in three parts: Part~1 treats the \textbf{regular
case} $g_a>0$ and $d_a^j<\infty$ for all $a,j$; Part~2 extends to
$g_a\ge0$ by perturbation; Part~3 extends to $d_a^j\in[0,\infty]$ by
truncation and settles the infeasibility convention.

\subsubsection{Part 1: Regular Case ($g_a>0$, $d_a^j<\infty$ for all $a,j$)}

\paragraph{Step 1.1 (primal LP: attainment, finiteness, positivity).}
Feasibility holds as shown above (without \ref{asm:H2}'s strictness,
since now all $d_a^j$ are finite, feasibility needs for each $j$ some
$a$ with $d_a^j>0$, which \ref{asm:H2} supplies). The objective
$n\mapsto\sum_a g_a n_a$ is continuous; the feasible set
$\{n\ge0:\sum_a n_a d_a^j\ge1\ \forall j\}$ is closed and nonempty;
since $g_a>0$ for all $a$, the sublevel set $\{n\ge0:\sum_a g_a n_a\le
c\}$ is bounded, hence its intersection with the feasible set is
compact for $c$ larger than the objective value at any feasible point;
the infimum $C$ is therefore attained at some $n^*$, with $C<\infty$.
Moreover $C>0$: if $C=0$ then $\sum_a g_a n_a^*=0$ forces $n^*=0$ (all
$g_a>0$), which violates every constraint $\sum_a n_a^* d_a^j\ge1$.

\paragraph{Step 1.2 (reparametrization of the LP as a max--min).}
For $w\in\Delta(\mathcal{A})$ define
\[
M(w):=\min_{j}\ \sum_a\frac{w_a}{g_a}\,d_a^j\ \in\ [0,\infty).
\]
There is a bijection between feasible $n$ with $t:=\sum_a g_a n_a>0$
and pairs $(w,t)\in\Delta(\mathcal{A})\times(0,\infty)$ with $M(w)\ge
1/t$: set $w_a:=g_a n_a/t$ (then $w\ge0$, $\sum_a w_a=1$) and compute
\[
\sum_a n_a d_a^j\ \ge\ 1\iff t\sum_a\frac{w_a}{g_a}d_a^j\ \ge\ 1\iff \sum_a\frac{w_a}{g_a}d_a^j\ge\frac1t;
\]
minimizing over $j$ turns the family of constraints into the single
inequality $M(w)\ge 1/t$. Since every feasible $n\neq0$ has $t>0$ and
$n=0$ is infeasible,
\begin{equation}
C=\inf\big\{\,t:\ \exists\, w\in\Delta(\mathcal{A}),\ M(w)\ge 1/t\,\big\}=\inf_{w:\,M(w)>0}\frac{1}{M(w)}=\frac{1}{\sup_{w}M(w)}.\label{eq:lb:reparam}
\end{equation}
The $\sup$ is attained: $M(w)=\min_j\sum_a(w_a/g_a)d_a^j$ is the
minimum of finitely many linear (hence continuous) functions of $w$ on
the compact simplex, hence continuous, so $w^*\in\arg\max M(w)$ exists;
and $M^*:=M(w^*)>0$, because with
$w^\circ:=\frac1m\sum_{j}e_{a(j)}$ (uniform over the distinguishing
actions supplied by \ref{asm:H2}),
\[
M(w^\circ)=\min_j\sum_a\frac{w^\circ_a}{g_a}d_a^j\ \ge\ \min_j\ \frac{1}{m\,g_{a(j)}}\,d_{a(j)}^j\ >\ 0.
\]
Hence $C=1/M^*\in(0,\infty)$.

\paragraph{Step 1.3 (Sion minimax: condition checklist).}
Consider the two-player zero-sum game on
$\Delta(\mathcal{A})\times\Delta(\mathrm{Alt})$ with payoff
\[
F(w,\mu):=\sum_{a}\sum_{j}w_a\,\mu_j\,\frac{d_a^j}{g_a}.
\]
Checklist for Sion's minimax theorem (Sion 1958): (i)
$\Delta(\mathcal{A})$ and $\Delta(\mathrm{Alt})$ are \textbf{compact
convex} subsets of Euclidean spaces (finite-dimensional simplices);
(ii) $F$ is \textbf{jointly continuous} (bilinear with finite
coefficients $d_a^j/g_a$); (iii) $w\mapsto F(w,\mu)$ is
\textbf{convex} for every fixed $\mu$ (it is linear); (iv) $\mu\mapsto
F(w,\mu)$ is \textbf{concave} for every fixed $w$ (it is linear). All
conditions hold, so
\[
\sup_{w}\inf_{\mu}F(w,\mu)=\inf_{\mu}\sup_{w}F(w,\mu).
\]
Both extremizations over a simplex of a linear function are attained at
vertices: $\inf_\mu F(w,\mu)=\min_j F(w,e_j)=\min_j\sum_a(w_a/g_a)d_a^j=M(w)$
(the minimum of a linear function over the simplex equals the minimum
over its vertices $e_j$: $F(w,\mu)=\sum_j\mu_j F(w,e_j)\ge\min_j
F(w,e_j)$ with equality at the minimizing vertex); similarly $\sup_w
F(w,\mu)=\max_a\sum_j\mu_j d_a^j/g_a$. Therefore
\begin{equation}
M^*=\sup_w M(w)=\inf_{\mu\in\Delta(\mathrm{Alt})}\ \max_{a}\ \frac{\sum_j\mu_j d_a^j}{g_a}.\label{eq:lb:sion}
\end{equation}

\paragraph{Step 1.4 (explicit dual LP and equality of values).}
The Lagrangian/LP dual of \eqref{eq:lp} is
\begin{equation}
D^*:=\sup\Big\{\sum_{j}\lambda_j:\ \lambda_j\ge0\ \forall j,\quad \sum_j\lambda_j d_a^j\le g_a\ \forall a\Big\}.\tag{D}\label{eq:lpdual}
\end{equation}
\emph{Weak duality} (one line, no assumptions): for primal-feasible $n$
and dual-feasible $\lambda$,
\[
\sum_j\lambda_j\ \le\ \sum_j\lambda_j\sum_a n_a d_a^j=\sum_a n_a\sum_j\lambda_j d_a^j\ \le\ \sum_a n_a g_a,
\]
the first inequality using $\sum_a n_a d_a^j\ge1$ and $\lambda_j\ge0$,
the second using $\sum_j\lambda_j d_a^j\le g_a$ and $n_a\ge0$; hence
$D^*\le C$. \emph{Strong duality} is obtained constructively through
Steps~1.2--1.3: reparametrize dual-feasible $\lambda\neq0$ as
$\lambda=s\mu$ with $s:=\sum_j\lambda_j>0$ and
$\mu:=\lambda/s\in\Delta(\mathrm{Alt})$; the dual constraints read
\[
s\sum_j\mu_j d_a^j\le g_a\ \ \forall a\iff s\,\max_a\frac{\sum_j\mu_j d_a^j}{g_a}\le1\iff s\le\Big[\max_a\frac{\sum_j\mu_j d_a^j}{g_a}\Big]^{-1}
\]
(the max is positive for every $\mu$: for $\mu=e_j$ it is $\max_a
d_a^j/g_a>0$ by \ref{asm:H2}, and a general $\mu$ puts mass $\mu_j>0$
on some $j$, giving $\sum_j\mu_j d_a^j\ge\mu_j d_a^j$, positive at
$a=a(j)$). Hence
\begin{equation}
D^*=\sup_{\mu\in\Delta(\mathrm{Alt})}\Big[\max_a\frac{\sum_j\mu_j d_a^j}{g_a}\Big]^{-1}=\Big[\inf_{\mu}\max_a\frac{\sum_j\mu_j d_a^j}{g_a}\Big]^{-1}\overset{\eqref{eq:lb:sion}}{=}\frac1{M^*}\overset{\eqref{eq:lb:reparam}}{=}C.\label{eq:lb:duality}
\end{equation}
This proves strong duality $D^*=C$ \emph{and} simultaneously the
identity $C=[M^*]^{-1}=C_i^{MM}$ in the regular case. (The
justification for strong duality is therefore: primal feasible with
finite value $\Rightarrow$ the reparametrization \eqref{eq:lb:reparam}
$\Rightarrow$ Sion minimax \eqref{eq:lb:sion} $\Rightarrow$ explicit
dual value \eqref{eq:lb:duality}; no appeal to a black-box duality
theorem is needed, though the conclusion agrees with standard LP strong
duality, e.g.\ Bertsimas--Tsitsiklis, Theorem~4.4.)

\subsubsection{Part 2: Extension to $g_a\ge0$ (Zero-Cost Actions)}

Let $\mathcal{A}_0:=\{a:g_a=0\}\neq\varnothing$ possibly. For
$\varepsilon>0$ set $g_a^{(\varepsilon)}:=g_a+\varepsilon>0$; let
$C(\varepsilon)$ be the \eqref{eq:lp} value with costs
$g^{(\varepsilon)}$ and
$M(\varepsilon):=\sup_w\min_j\sum_a w_a d_a^j/(g_a+\varepsilon)$. Part~1
applies for each $\varepsilon>0$: $C(\varepsilon)=1/M(\varepsilon)$,
with $C(\varepsilon)\in(0,\infty)$.

\paragraph{Step 2.1 (attainment of the LP value with $g\ge0$).}
A finite linear program that is feasible and whose objective is bounded
below attains its value: the feasible set $F=\{n\ge0:\sum_a n_a
d_a^j\ge1\ \forall j\}$ is a nonempty closed polyhedron; if $F$
contained a line $\{n+tv:t\in\mathbb R\}$ ($v\neq0$) the linear
objective would have to be constant along it (otherwise it is unbounded
below on $F$); quotienting out the lineality space, $F$ becomes a
pointed polyhedron, i.e.\ the Minkowski sum of the convex hull of its
extreme points and its recession cone $\{v\ge0:\sum_a v_a d_a^j\ge0\
\forall j\}=\mathbb R_+^{\mathcal A}$, on which the objective $\sum_a
g_a v_a\ge0$ is bounded below; hence the minimum is attained at an
extreme point. Denote the value by $C(0)$ and a minimizer by $n^*$.

\paragraph{Step 2.2 (continuity of the LP value as $\varepsilon\downarrow0$).}
Since $g^{(\varepsilon)}\ge g^{(\varepsilon')}\ge g$ for
$0<\varepsilon\le\varepsilon'$, $C(\varepsilon)$ is nonincreasing as
$\varepsilon\downarrow0$ and $C(\varepsilon)\ge C(0)$ (same feasible
set, larger objective). Conversely, feasibility of $n^*$ is independent
of $\varepsilon$, so
\[
C(\varepsilon)\ \le\ \sum_a g_a^{(\varepsilon)}n_a^*\ =\ C(0)+\varepsilon\sum_a n_a^*\ \xrightarrow[\varepsilon\downarrow0]{}\ C(0),
\]
hence $C(\varepsilon)\downarrow C(0)$.

\paragraph{Step 2.3 (the max--min side converges to the limit convention).}
For each $w$ and $j$, the bracket
$b_j(w,\varepsilon):=\sum_a w_a d_a^j/(g_a+\varepsilon)$ is
nondecreasing as $\varepsilon\downarrow0$ and converges to
$b_j(w,0):=\sum_a w_a d_a^j/g_a\in[0,\infty]$ under the limit
convention (each summand converges monotonically). Hence $\min_j
b_j(w,\varepsilon)\uparrow\min_j b_j(w,0)$ (finite min of increasing
sequences converges to the min of the limits), and
\[
M(\varepsilon)=\sup_w\min_j b_j(w,\varepsilon)\ \uparrow\ \sup_w\sup_{\varepsilon>0}\min_j b_j(w,\varepsilon)=\sup_w\min_j b_j(w,0)=:M(0),
\]
where interchanging $\sup_w$ and $\sup_\varepsilon$ is legitimate
(suprema over independent index sets commute), and
$\sup_\varepsilon\min_j b_j(w,\varepsilon)=\min_j b_j(w,0)$ by the
monotone convergence just noted. Therefore
\[
C(0)=\lim_{\varepsilon\downarrow0}C(\varepsilon)=\lim_{\varepsilon\downarrow0}\frac1{M(\varepsilon)}=\frac1{M(0)}=C_i^{MM},
\]
where the reciprocal commutes with the monotone limits (including the
cases $C(0)=0\iff M(0)=+\infty$, since
$C(\varepsilon)\downarrow0\iff M(\varepsilon)\uparrow\infty$).

\paragraph{Step 2.4 (free-rider sanity check).}
If there exists $a^*\in\mathcal{A}_0$ with $d_{a^*}^j>0$ for all
$j\in\mathrm{Alt}(i)$, then $n=t\,e_{a^*}$ with $t:=1/\min_j d_{a^*}^j$
is feasible with objective $0$, so $C(0)=0$; and $w=e_{a^*}$ gives
$b_j(w,0)=d_{a^*}^j/0=+\infty$ for every $j$, so $M(0)=+\infty$ and
$1/M(0)=0$: the two forms agree (this is B2,
Subsection~\ref{subsec:lb:B2}). Conversely, if every
$a\in\mathcal{A}_0$ has $d_a^j=0$ for some $j$, then only costly
actions can cover that $j$, and \ref{asm:H2} guarantees some
$a\notin\mathcal{A}_0$ with $d_a^j>0$ does, keeping the restricted LP
feasible.

\subsubsection{Part 3: Extension to $d_a^j\in[0,\infty]$ and the Infeasibility Convention}

\paragraph{Step 3.1 (truncation in $d$).}
For $T>0$ let $d_a^j(T):=\min(d_a^j,T)<\infty$. Parts~1--2 apply to the
truncated data (feasibility is preserved: \ref{asm:H2} provides
$0<d_{a(j)}^j$, hence $d_{a(j)}^j(T)>0$ for all $T$), giving
$C(T)=1/M(T)$ for the corresponding values. As $T\uparrow\infty$:
feasible sets of \eqref{eq:lp} grow, so $C(T)$ is nonincreasing and
$C(T)\ge C$. (The untruncated LP value need not be attained when some
$d_a^j=+\infty$---see the example in the statement of
Theorem~\ref{thm:equiv}---so the following near-optimality argument
avoids selecting a minimizer.) Let $n$ be any feasible solution of the
untruncated \eqref{eq:lp}. For each $j$, monotone convergence in each
summand gives $\sum_a n_a d_a^j(T)\uparrow\sum_a n_a d_a^j\ge1$ as
$T\uparrow\infty$; if the limit sum is finite then $d_a^j(T)=d_a^j$ on
$\{a:n_a>0\}$ for all large $T$ and the constraint holds with the same
value, and if the limit sum is $+\infty$ the truncated sum eventually
exceeds $1$; in either case there is $T_j(n)$ with $n$ feasible for the
truncated problem for all $T\ge T_j(n)$. Hence for $T\ge\max_j T_j(n)$,
$C(T)\le\sum_a g_a n_a$, and therefore $\lim_{T\to\infty}C(T)\le\sum_a
g_a n_a$; taking the infimum over all untruncated-feasible $n$ yields
$\lim_{T}C(T)\le C$. Combined with $C(T)\ge C$, $C(T)\downarrow C$. On
the max--min side,
$b_j(w,T):=\sum_a(w_a/g_a)\min(d_a^j,T)\uparrow b_j(w,\infty)$ for each
$(w,j)$ by monotone convergence of each summand, and $\sup_w\min_j$
passes to the limit exactly as in Step~2.3, giving $M(T)\uparrow M$.
Hence $C=1/M$ with the stated conventions.

\paragraph{Step 3.2 (infeasibility $\iff$ $M=0$ $\iff$ $C=+\infty$).}
\eqref{eq:lp} is infeasible iff there exists $j_0$ with $d_a^{j_0}=0$
for all $a$ (constraint $j_0$ reads $0\ge1$). If this holds, then for
every $w$: $\sum_a(w_a/g_a)d_a^{j_0}=0$ (each term is $0$, including
$\infty\cdot0=0$ under the convention, since $d_a^{j_0}=0$ kills the
term), so $\min_j\le 0$ and $M=0$. Conversely, if \eqref{eq:lp} is
feasible, the construction below Step~1.2 (using \ref{asm:H2}) produces
$w^\circ$ with
$M(w^\circ)\ge\min_j w^\circ_{a(j)}d_{a(j)}^j/g_{a(j)}>0$---with the
convention that if $g_{a(j)}=0$ the term is $+\infty$, which only
helps---so $M>0$. Hence $C=+\infty\iff M=0\iff1/M=+\infty$, consistent
with the conventions. (B1, Subsection~\ref{subsec:lb:B1}, shows this
case is genuinely hopeless information-theoretically.)

This completes the proof of Theorem~\ref{thm:equiv}. \hfill$\square$

\begin{remark}[What the equivalence buys]\label{rem:equivbuys}
The lower bound (Theorem~\ref{thm:LB}) is stated against the LP form.
The matching upper bound (Theorem~\ref{thm:UB}, CTS) uses as its
sampling design the \textbf{cost-ratio maximizer}
\[
w^*(i)\in\arg\sup_{w\in\Delta(\mathcal A)}\ \inf_{j\in\mathrm{Alt}(i)}\ \frac{\sum_a w_a\,d_a^j}{\sum_a g_a\,w_a},
\]
i.e.\ the normalized LP solution $n^*/\sum_a n_a^*$ (the
characteristic-time design of the CABAI template
\citep{kanarios2024cost}). Theorem~\ref{thm:equiv} certifies that the
\emph{constant} driving both bounds is the same, $C_i=1/M$. The link
$n_a^*=C_i\,w_a^*/g_a$ between an LP optimizer and a maximizer $w^*$
of the $(w/g)$-weighted max--min form \eqref{eq:mm} is mathematically
valid wherever both exist (the reparametrization of Step~1.2 at the
optimum) and may be retained; but the $(w/g)$-maximizer is in general a
\emph{different} allocation---its asymptotic cost constant
$\big(\sum_a g_a w_a\big)\big/\big(\inf_j\sum_a w_a d_a^j\big)$ can
exceed $C_i$ (numerical $2\times2$ counterexample: $4.17$ vs.\ the
optimal $2.73$, see the main text)---and it is \emph{not} the sampling
design of Theorem~\ref{thm:UB}.
\end{remark}

\subsection{From the Pointwise Bound to the $\liminf$ Statement}\label{subsec:lb:liminf}

Define the optimal risk at confidence $\delta$:
\[
R_\delta^*(E_i)\;:=\;\inf\big\{\,R(\pi,E_i):\ (\pi,\tau,\hat\Theta)\ \text{$\delta$-correct and admissible}\,\big\},\qquad \delta\in(0,\tfrac12).
\]

\begin{corollary}[$\liminf$ form of Theorem~\ref{thm:LB}]\label{cor:liminf}
Under \ref{asm:H1}--\ref{asm:H3}, for every $i$,
\[
\liminf_{\delta\to0}\ \frac{R_\delta^*(E_i)}{\log(1/\delta)}\ \ge\ C_i.
\]
\end{corollary}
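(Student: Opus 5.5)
The plan is to derive the statement directly from the pointwise bound of Theorem~\ref{thm:LB}, by taking an infimum over policies and then a limit. Fix $i$ and $\delta\in(0,\tfrac12)$. By Theorem~\ref{thm:LB} (proved under \ref{asm:H1}--\ref{asm:H3}), every $\delta$-correct admissible strategy satisfies $R(\pi,E_i)\ge C_i\,\mathrm{kl}(\delta,1-\delta)\ge C_i\log\frac{1}{2.4\delta}$. The right-hand side does not depend on the strategy. Taking the infimum over all $\delta$-correct admissible strategies therefore gives
\[
R_\delta^*(E_i)\ \ge\ C_i\log\frac{1}{2.4\,\delta}.
\]
If no $\delta$-correct strategy exists, the infimum over the empty set is $+\infty$ and the inequality holds trivially.

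Next we dispose of the degenerate cases. If $C_i=0$, including the convention $\mathrm{Alt}(i)=\varnothing$, the claim follows from $R_\delta^*\ge0$. Under \ref{asm:H2}, $C_i<\infty$ (shown just after Definition~\ref{def:complexity}), so we may assume $0<C_i<\infty$. We do not need to treat $C_i=+\infty$ separately: the pointwise bound then makes $R_\delta^*(E_i)=+\infty$ for every small $\delta$, and the liminf is $+\infty$.

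For the main case, divide by $\log(1/\delta)>0$, which holds for $\delta<1$:
\[
\frac{R_\delta^*(E_i)}{\log(1/\delta)}\ \ge\ C_i\Big(1-\frac{\log 2.4}{\log(1/\delta)}\Big).
\]
The bracket tends to $1$ as $\delta\to0$. Taking $\liminf_{\delta\to0}$ on both sides therefore yields the statement.

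No step here is a real obstacle. The only points needing care are bookkeeping of the conventions: values lie in $[0,\infty]$, the infimum over an empty class is $+\infty$, and $\liminf$ is monotone in $[0,\infty]$. If one prefers the sharper constant, the same argument goes through with $\mathrm{kl}(\delta,1-\delta)/\log(1/\delta)\to1$, which follows from the expansion $\mathrm{kl}(\delta,1-\delta)=(1-2\delta)\log\frac{1-\delta}{\delta}$.
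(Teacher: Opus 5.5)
Your proof is correct and follows the paper's argument: apply Theorem~\ref{thm:LB} to every $\delta$-correct strategy, pass to the infimum $R_\delta^*(E_i)$, dispose of $C_i=0$ (noting $C_i<\infty$ under \ref{asm:H2}), divide by $\log(1/\delta)$, and take the $\liminf$. The only difference is cosmetic: you use the weaker bound $C_i\log\frac{1}{2.4\delta}$, so the correction term $\log 2.4/\log(1/\delta)\to0$ is immediate, whereas the paper keeps $\mathrm{kl}(\delta,1-\delta)$ and checks $\mathrm{kl}(\delta,1-\delta)/\log(1/\delta)\to1$ term by term (your identity $\mathrm{kl}(\delta,1-\delta)=(1-2\delta)\log\frac{1-\delta}{\delta}$ would shorten that step).
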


\begin{proof}
If $C_i=0$ the statement is $R_\delta^*\ge0$, trivial. If
$C_i=+\infty$ (only possible when \ref{asm:H2} fails), the infimum is
over an empty set for $\delta<\tfrac12$ (B1,
Subsection~\ref{subsec:lb:B1}: no admissible $\delta$-correct strategy
exists), and the statement is vacuous; the convention $C_i=+\infty$ is
consistent. Assume $0<C_i<\infty$. By Theorem~\ref{thm:LB} (valid for
every $\delta$-correct admissible strategy, Case~B of
Subsection~\ref{subsec:lb:lpscale} removing any integrability
requirement), for every $\delta\in(0,\tfrac12)$,
\begin{equation}
\frac{R_\delta^*(E_i)}{\log(1/\delta)}\ \ge\ C_i\cdot\frac{\mathrm{kl}(\delta,1-\delta)}{\log(1/\delta)}.\label{eq:lb:liminfineq}
\end{equation}
It remains to compute the asymptotics of the $\mathrm{kl}$ factor.
Expand:
\[
\mathrm{kl}(\delta,1-\delta)=\delta\log\frac{\delta}{1-\delta}+(1-\delta)\log\frac{1-\delta}{\delta}=(1-\delta)\log\frac1\delta+(1-\delta)\log(1-\delta)+\delta\log\frac{\delta}{1-\delta}.
\]
As $\delta\to0$: $(1-\delta)\log(1-\delta)\to0$ (since $x\log x\to0$
as $x\to1^-$; precisely $(1-\delta)\log(1-\delta)\in[-1/e,0]$);
$\delta\log\frac{\delta}{1-\delta}=\delta\log\delta-\delta\log(1-\delta)\to0-0=0$;
and $(1-\delta)\log(1/\delta)=\log(1/\delta)+o\big(\log(1/\delta)\big)$
since $\delta\log(1/\delta)\to0$. Hence
\begin{equation}
\mathrm{kl}(\delta,1-\delta)=\big(1+o(1)\big)\log\frac1\delta,\qquad\text{i.e.}\qquad \lim_{\delta\to0}\frac{\mathrm{kl}(\delta,1-\delta)}{\log(1/\delta)}=1.\label{eq:lb:klasymp}
\end{equation}
Taking $\liminf_{\delta\to0}$ on both sides of
\eqref{eq:lb:liminfineq} (the right side is $C_i$ times a convergent
sequence with limit $1$) gives the claim.
\end{proof}

\begin{remark}\label{rem:log24}
The alternative form $R(\pi,E_i)\ge
C_i\log(1/(2.4\delta))=C_i\big(\log(1/\delta)-\log 2.4\big)$ loses only
an additive $C_i\log 2.4$, negligible on the $\log(1/\delta)$ scale;
the sharp first-order constant is $C_i$.
\end{remark}

\subsection{Boundary Examples B1, B2, B3 (Formal Statements and Proofs)}\label{subsec:lb:boundary}

\subsubsection{B1: Hopeless Pairs}\label{subsec:lb:B1}

\begin{proposition}\label{prop:B1}
Suppose that for some $i$ there exists $j\in\mathrm{Alt}(i)$ with
$d(K_i^a\|K_j^a)=0$ for every $a\in\mathcal{A}$. Then:
\begin{enumerate}
\item[(i)] for every admissible strategy, the transcript laws coincide:
$P_i|_{\mathcal{F}_t}=P_j|_{\mathcal{F}_t}$ for all $t$, and hence on
$\mathcal{F}_\infty:=\sigma(\bigcup_t\mathcal{F}_t)$;
\item[(ii)] no strategy is $\delta$-correct for any $\delta<\tfrac12$;
\item[(iii)] \eqref{eq:lp} is infeasible, consistently with the
convention $C_i=+\infty$; in the max--min form, $M=0$.
\end{enumerate}
\end{proposition}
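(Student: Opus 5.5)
The plan has three steps, matching items (i)--(iii). The key input is that $d(K_i^a\|K_j^a)=0$ forces $K_i^a=K_j^a$ for every $a$, by Gibbs' inequality: KL vanishes only at equality of its arguments. Since $\mathcal{Y}$ is finite, the mass functions then agree, $k_i^a=k_j^a$, for every $a\in\mathcal{A}$.

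For (i), we have $\mathcal{A}_{ij}=\mathcal{A}$ and the per-step ratio $k_i^{A_t}(Y_t)/k_j^{A_t}(Y_t)$ equals $1$ wherever $k_j^{A_t}(Y_t)>0$. Lemma~\ref{lem:lrdensity} therefore applies with $E_t=\Omega$ and density $L_t\equiv1$ $P_j$-a.s. This gives $P_i(F)=P_j(F)$ for every $F\in\mathcal{F}_t$ and every $t$. Alternatively, one can rerun the cylinder induction of that lemma directly: the Ionescu--Tulcea kernels $\mathrm{Unif}[0,1]\otimes K^{\pi_t(\cdot)}$ are literally identical under $i$ and $j$. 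The two measures then agree on the $\pi$-system $\bigcup_t\mathcal{F}_t$, which generates $\mathcal{F}_\infty$. By the $\pi$--$\lambda$ theorem they coincide on $\mathcal{F}_\infty$. This step does not use admissibility.

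For (ii), suppose the strategy is admissible. Then $\tau<\infty$ a.s.\ under both measures. I would show that $\mathcal{F}_\tau\cap\{\tau<\infty\}\subseteq\mathcal{F}_\infty$: for $F\in\mathcal{F}_\tau$, the set $F\cap\{\tau<\infty\}=\bigcup_t F\cap\{\tau\le t\}$ lies in $\mathcal{F}_\infty$. Hence $P_i(\hat\Theta=1)=P_j(\hat\Theta=1)=:p$. Since $\Theta_i\neq\Theta_j$, the two error probabilities are $p$ and $1-p$ in some order. Their maximum is therefore at least $\tfrac12>\delta$, and $\delta$-correctness fails. This is the only step requiring care, namely the measurability of $\{\hat\Theta\ne\Theta_i\}$ modulo the null set $\{\tau=\infty\}$. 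I expect no real obstacle here. As a cross-check, Lemma~\ref{lem:transport} gives $0\ge\mathrm{kl}(P_i(\mathcal{F}),P_j(\mathcal{F}))$ with $\mathcal{F}=\{\hat\Theta=\Theta_i\}$. Together with Lemma~\ref{lem:infoconstraint}, this would require $0\ge\mathrm{kl}(\delta,1-\delta)>0$, which is a contradiction.

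For (iii), the LP constraint indexed by $j$ reads $\sum_a n_a\cdot 0\ge1$, which is unsatisfiable, so \eqref{eq:lp} is infeasible and $C_i=+\infty$ by convention. On the max--min side, every summand $(w_a/g_a)d_a^j$ is zero under the stated conventions; this includes the case $w_a/g_a=+\infty$, because $d_a^j=0$ kills the term via the truncation limit $\min(d_a^j,T)=0$. Hence $\inf_{j'}\le0$ for every $w$, and $M=0$. This matches Step~3.2 of the proof of Theorem~\ref{thm:equiv}.
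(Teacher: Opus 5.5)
Your proposal is correct and follows the paper's proof essentially step for step. The shared steps are: Gibbs' inequality gives $k_i^a=k_j^a$; the cylinder induction of Lemma~\ref{lem:lrdensity} runs with unit likelihood ratios and extends to $\mathcal{F}_\infty$ by $\pi$--$\lambda$; equality of $P_i(\hat\Theta=1)$ and $P_j(\hat\Theta=1)$ on $\{\tau<\infty\}$ forces maximal error at least $\tfrac12$; and the $j$-th LP row is unsatisfiable.

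Two minor remarks. Your transportation cross-check is a valid standalone proof of (ii), since the left side of Lemma~\ref{lem:transport} is identically $0$ here. In (iii), the $(+\infty)\cdot 0$ summands vanish by the $\varepsilon$-limit convention for $w_a/g_a$ (equivalently $0\cdot(+\infty)=0$), not by truncating $d$. The conclusion $M=0$ is unaffected.
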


\begin{proof}
(i) $d(K_i^a\|K_j^a)=0$ iff $k_i^a=k_j^a$ pointwise (Gibbs' inequality
on a finite space: the KL divergence vanishes only between equal
distributions). Hence $K_i^{A_t}=K_j^{A_t}$ as kernels at every $t$. We
show $P_i(F)=P_j(F)$ for all $F\in\mathcal{F}_t$ by induction on $t$,
exactly as in Lemma~\ref{lem:lrdensity} but with all likelihood ratios
equal to $1$: the case $t=0$ is trivial; for the step, on cylinder
rectangles $F=G\cap\{Y_t\in C_Y\}\cap\{U_{t+1}\in C_U\}$,
\[
P_i(F)=E_i\big[\mathbf1_G\mathbf1\{U_{t+1}\in C_U\}K_i^{A_t}(C_Y)\big]=E_j\big[\mathbf1_G\mathbf1\{U_{t+1}\in C_U\}K_j^{A_t}(C_Y)\big]=P_j(F),
\]
using the induction hypothesis for the first equality (together with
$K_i^{A_t}=K_j^{A_t}$ and the identical Uniform law of $U_{t+1}$), and
agreement on a generating $\pi$-system extends to $\mathcal{F}_t$ by
uniqueness of measure. Equality on $\mathcal{F}_\infty$ follows since
$\bigcup_t\mathcal{F}_t$ is an algebra (it is closed under finite
unions because the filtration is nested) generating
$\mathcal{F}_\infty$, and two probability measures agreeing on it agree
on the generated $\sigma$-field ($\pi$--$\lambda$ theorem).

(ii) Let $(\pi,\tau,\hat\Theta)$ be admissible. Since $\tau<\infty$
$P_i$- and $P_j$-a.s., $\hat\Theta$ is defined a.s.\ under both laws,
and $\{\hat\Theta=\Theta_i\}\in\mathcal{F}_\tau\subseteq\mathcal{F}_\infty$.
Because $\Theta_j\neq\Theta_i$ (as $j\in\mathrm{Alt}(i)$),
$\{\hat\Theta\neq\Theta_j\}\supseteq\{\hat\Theta=\Theta_i\}$. By (i),
and because $\mathcal{F}_\tau$-measurable events have equal
probabilities under $P_i,P_j$ (for $F\in\mathcal{F}_\tau$:
$F\cap\{\tau\le t\}\in\mathcal{F}_t$ has equal probabilities for every
$t$, and $F=\bigcup_t(F\cap\{\tau\le t\})$ up to null sets since
$\tau<\infty$ a.s.\ under both laws; continuity from below gives
$P_i(F)=P_j(F)$):
\[
P_j\big(\hat\Theta\neq\Theta_j\big)\ \ge\ P_j\big(\hat\Theta=\Theta_i\big)\ =\ P_i\big(\hat\Theta=\Theta_i\big)\ =\ 1-P_i\big(\hat\Theta\neq\Theta_i\big).
\]
Therefore
\[
\max_i P_i(\hat\Theta\neq\Theta_i)\ \ge\ \tfrac12\Big[P_i\big(\hat\Theta\neq\Theta_i\big)+P_j\big(\hat\Theta\neq\Theta_j\big)\Big]\ \ge\ \tfrac12.
\]
So $\delta$-correctness is impossible for every $\delta<\tfrac12$:
certification between $i$ and $j$ is information-theoretically
hopeless. (ii) is exactly the statement that the transportation lower
bound is $+\infty$: with $d_a^j=0$ for all $a$,
Lemma~\ref{lem:transport} imposes no constraint, but no strategy meets
the correctness requirement either.

(iii) The constraint of \eqref{eq:lp} indexed by this $j$ reads
$\sum_a n_a\cdot0\ge1$, impossible for any $n\ge0$; hence \eqref{eq:lp}
is infeasible and $C_i=+\infty$ by convention. For the max--min form:
for every $w$, $\sum_a(w_a/g_a)d_a^j=0$, so the infimum over $j$ is $0$
and $M=0$, i.e.\ $C_i^{MM}=1/0=+\infty$ (Theorem~\ref{thm:equiv},
infeasibility step).
\end{proof}

\paragraph{Correspondence with partial monitoring \citep{bartok2014partial}.}
In the taxonomy of finite partial monitoring of \citet{bartok2014partial},
a game is \emph{hopeless} when two Pareto-optimal environments/actions
are indistinguishable under every available observation structure; the
minimax regret then grows linearly and no consistent learning is
possible. B1 is the analogue for fixed-confidence certification: the
pair $(i,j)$ is indistinguishable under \emph{every} action, so the
certification question between the adequacy labels
$\Theta_i\neq\Theta_j$ admits no $\delta$-correct answer at any
confidence better than chance, and the certification complexity is
$+\infty$. The LP feasibility hypothesis \ref{asm:H2} is precisely the
exclusion of hopeless pairs.

\subsubsection{B2: Free Riding on a Zero-Cost Discriminating Action}\label{subsec:lb:B2}

\begin{proposition}[A single free discriminating action]\label{prop:B2}
Under (H1)--(H4), suppose that for some $i$ there exists
$a^*\in\mathcal A$ with $g_i(a^*)=0$ and
$d(K_i^{a^*}\|K_j^{a^*})>0$ for every $j\in\mathrm{Alt}(i)$.
Then $C_i=0$ and $R_\delta^*(E_i)=0$ for every fixed
$\delta\in(0,1/2)$. For every $\epsilon>0$ there is a globally
$\delta$-correct policy with bounded stopping time and target loss
less than $\epsilon$; exact zero-cost attainment is not asserted.
\end{proposition}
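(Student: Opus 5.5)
The plan is to obtain Proposition~\ref{prop:B2} as a special case of Proposition~\ref{prop:zero-infimum}, after first recording the LP computation directly.

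\emph{Step 1 (LP value).} I will exhibit a feasible zero-cost point. The set $\mathrm{Alt}(i)$ is finite and every $d(K_i^{a^*}\|K_j^{a^*})$ is positive. Under (H4) it is also finite, because full support gives absolute continuity. So I set $t:=1/\min_{j\in\mathrm{Alt}(i)}d_{a^*}^j$ and $n:=t\,e_{a^*}$. Then $\sum_a n_a d_a^j=t\,d_{a^*}^j\ge1$ for every $j\in\mathrm{Alt}(i)$, and the objective is $g_i(a^*)\,t=0$. Since $C_i\ge0$ always, this gives $C_i=0$. This is exactly Step~2.4 of Subsection~\ref{subsec:lb:equiv}, which also shows that the max--min form gives $M=+\infty$ and hence agrees with the LP form.

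\emph{Step 2 (zero infimum).} The hypothesis of the proposition is a special case of condition (ii) of Proposition~\ref{prop:zero-infimum}. Indeed, $a^*\in Z_i$ and it separates $i$ from every $j\in\mathrm{Alt}(i)$. The standing assumptions (H1)--(H4), nonnegative finite costs, and the presence of both labels are the same as in that proposition. So the implication (ii)$\Rightarrow$(iii) gives $R_\delta^*(E_i)=0$ for each fixed $\delta\in(0,1/2)$. The same proposition also supplies, for every $\epsilon>0$, a globally $\delta$-correct policy with a deterministic bound on its stopping time and target loss below $\epsilon$.

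To keep the argument self-contained, I would also describe that policy specialized to the single action. The policy first probes with $N$ plays of $a^*$. It accepts $\Theta_i$ early if $\mathrm{LLR}_{i,j}(N)\ge\log(2/\delta)$ for all $j\in\mathrm{Alt}(i)$. Otherwise it runs the fresh finite block-ML fallback test, calibrated to error $\delta/2$.

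\begin{itemize}
\item \textbf{Correctness.} Under any opposite-label $j$, false early acceptance has probability at most $\delta/2$, by the likelihood-ratio identity and Markov's inequality. The fallback contributes at most $\delta/2$ more.
\item \textbf{Loss under $E_i$.} The probe costs nothing under $E_i$, so the loss is at most $B_\delta\,q_{i,N}$. Here $B_\delta$ is the fallback's bounded loss and $q_{i,N}$ is the probability of falling back.
\item \textbf{Vanishing loss.} By the strong law, $\mathrm{LLR}_{i,j}(N)/N\to d_{a^*}^j>0$ under $P_i$. Since $\mathrm{Alt}(i)$ is finite, $q_{i,N}\to0$, so the loss can be made smaller than any $\epsilon$.
\end{itemize}

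\emph{Main obstacle.} The delicate point is not the computation but the scope of the claim. The statement must not assert that zero cost is actually attained: the fallback can carry positive loss under $E_i$ with positive probability for every finite $N$. Exact attainment is governed separately by Proposition~\ref{prop:zero-attainment}, which requires $a^*$, or more generally $Z_i$, to separate \emph{all} opposite-label pairs, not only the pairs involving $i$. I will point out this distinction explicitly, and I will note that Theorem~\ref{thm:LB} is consistent with the result because its bound is $C_i\,\mathrm{kl}(\delta,1-\delta)=0$.
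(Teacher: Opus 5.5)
Your proposal is correct and follows the paper's own proof. The paper uses the same zero-cost feasible vector $e_{a^*}/\min_j d_{a^*}^j$ to get $C_i=0$, and takes both the pointwise infimum and the approximating finite-probe/fresh-fallback policies from Proposition~\ref{prop:zero-infimum}. The only difference is that the paper also handles the case where all labels coincide, by outputting the common label at time zero; you exclude that case through the standing assumption (M1.1) that both labels are present.
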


\begin{proof}
Set $n=e_{a^*}/\min_{j\in\mathrm{Alt}(i)}d(K_i^{a^*}\|K_j^{a^*})$.
This is a finite feasible count vector with objective zero, proving
$C_i=0$. Proposition~\ref{prop:zero-infimum} supplies the globally
admissible finite-probe/fresh-fallback policies and proves the stated
pointwise infimum. When all labels coincide, outputting that label at
time zero gives the same conclusion directly.
\end{proof}

\begin{remark}\label{rem:b2}
A one-sided likelihood-ratio test that only stops upon accepting the
target need not terminate under an opposite-label environment. It
therefore cannot establish this result in our admissible class. The
finite probing horizon and fresh fallback in
Proposition~\ref{prop:zero-infimum} ensure termination in \emph{every}
environment for each approximating policy. The limit is over policies
at a fixed $\delta$; an infinitely long free probe is not itself used
as a policy. Proposition~\ref{prop:zero-attainment} gives the separate
criterion for whether a zero-cost policy exists.
\end{remark}

\subsubsection{B3: Two-Point Degeneration and Consistency with \citet{nitinawarat2015controlled}}\label{subsec:lb:B3}

\begin{proposition}[Bang-bang]\label{prop:B3}
Suppose $|\mathrm{Alt}(i)|=1$, say $\mathrm{Alt}(i)=\{j\}$ (in
particular this holds whenever $|\mathcal E|=2$). Then
\begin{equation}
C_i\ =\ \min_{a:\,d_a^j>0}\ \frac{g_i(a)}{d_a^j},\label{eq:lb:bangbang}
\end{equation}
with the convention that the minimum is $+\infty$ (infeasible) if no
$a$ has $d_a^j>0$---excluded by \ref{asm:H2}---and $0$ if some $a^*$
with $g_i(a^*)=0$ has $d_{a^*}^j>0$.
\end{proposition}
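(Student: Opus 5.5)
With a single alternative the covering LP \eqref{eq:lp} has exactly one constraint, $\sum_a n_a d_a^j\ge1$. I will therefore compute its value directly and will not go through the max--min form. The plan splits into three cases: infeasible, a free discriminating action, and the regular case. The only delicacy is the bookkeeping for $d_a^j=+\infty$ and $g_i(a)=0$, handled with the preamble conventions.

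\emph{Infeasible and free cases.} If no $a$ has $d_a^j>0$, the single constraint reads $0\ge1$. The LP is then infeasible and $C_i=+\infty$, which matches the stated convention; this is also Proposition~\ref{prop:B1}(iii). If some $a^*$ has $g_i(a^*)=0$ and $d_{a^*}^j>0$, take $n=t\,e_{a^*}$ with $t=1/d_{a^*}^j$, or any $t>0$ when $d_{a^*}^j=+\infty$. This is feasible with objective $0$, so $C_i=0$, which is the minimum of the nonnegative ratios (the ratio at $a^*$ is $0$). This repeats Step~2.4 of Theorem~\ref{thm:equiv}.

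\emph{Regular case.} Otherwise every $a$ with $d_a^j>0$ has $g_i(a)>0$. Let $\rho:=\min_{a:d_a^j>0}g_i(a)/d_a^j$, with $g/\infty=0$.

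For the upper bound, I take $a^\dagger$ attaining $\rho$. If $d_{a^\dagger}^j<\infty$, the vector $n=e_{a^\dagger}/d_{a^\dagger}^j$ is feasible with cost exactly $\rho$. If $d_{a^\dagger}^j=+\infty$, then $\rho=0$ and $\varepsilon e_{a^\dagger}$ is feasible with cost $\varepsilon g_i(a^\dagger)\to0$. In both cases $C_i\le\rho$.

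For the lower bound, I take any feasible $n$ and write $S=\{a:d_a^j>0\}$. Actions outside $S$ contribute nothing to the constraint and only add nonnegative cost, so I may drop them. If some $a\in S$ with $n_a>0$ has $d_a^j=\infty$, then $\rho=0$ and there is nothing to prove. Otherwise I use $g_i(a)\ge\rho\,d_a^j$ on $S$:
\[
\sum_a g_i(a)n_a\ \ge\ \sum_{a\in S}\rho\,d_a^j n_a\ \ge\ \rho .
\]
Taking the infimum over feasible $n$ gives $C_i\ge\rho$. This is the LP duality step with dual variable $\lambda=\rho$: the dual constraint $\lambda d_a^j\le g_i(a)$ must hold for every $a$, and it binds at the minimizing action.

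Finally, $|\mathcal E|=2$ together with both labels being present forces $|\mathrm{Alt}(i)|=1$, which is the parenthetical claim. I expect the main obstacle to be only the infinite-divergence bookkeeping, where the minimum of \eqref{eq:lb:bangbang} may be $0$ but not attained by the LP, just as in the example after Theorem~\ref{thm:equiv}. I would state explicitly that \eqref{eq:lb:bangbang} is an identity of values, not an attainment claim.
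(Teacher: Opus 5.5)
Your proposal is correct and follows essentially the same route as the paper. Both compute the one-constraint covering LP directly: the lower bound comes from $g_i(a)\ge\rho\,d_a^j$ on $\{a:d_a^j>0\}$ together with feasibility, and the upper bound comes from the single-action plan $e_{a^\dagger}/d_{a^\dagger}^j$. The differences are cosmetic: you fold the $d_a^j=+\infty$ bookkeeping (value $0$, possibly unattained) into the main argument, where the paper handles it in a separate edge-case paragraph, and you omit the paper's max--min cross-check, which the statement does not need.
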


\begin{proof}
Let $\rho:=\min_a g_i(a)/d_a^j$ over $\{a:d_a^j>0\}$, nonempty and all
ratios in $[0,\infty)$ by \ref{asm:H2}/\ref{asm:H3} (if $g_i(a)=0$ the
ratio is $0$). \emph{Lower bound:} for any \eqref{eq:lp}-feasible $n$,
\[
\sum_a g_i(a)\,n_a\ =\ \sum_{a:d_a^j>0}\frac{g_i(a)}{d_a^j}\,d_a^j n_a+\sum_{a:d_a^j=0}g_i(a)n_a\ \ge\ \rho\sum_a d_a^j n_a\ \ge\ \rho,
\]
since the second sum is $\ge0$ and feasibility gives $\sum_a d_a^j
n_a\ge1$. \emph{Upper bound:} with $a^*\in\arg\min_a g_i(a)/d_a^j$ and
$n^*:=(1/d_{a^*}^j)e_{a^*}$, $n^*$ is feasible ($\sum_a n_a^*
d_a^j=1$) with objective $g_i(a^*)/d_{a^*}^j=\rho$. Hence $C_i=\rho$,
attained by a single-action (``bang-bang'') plan. \emph{Max--min
cross-check:} with a single alternative, $M=\sup_w\sum_a w_a
d_a^j/g_i(a)=\max_a d_a^j/g_i(a)$ (supremum of a linear function over
the simplex, attained at a vertex; the limit convention for $g_i(a)=0$
gives $d_a^j/0=+\infty$ when $d_a^j>0$, consistent with $\rho=0$), and
$1/M=\min_a g_i(a)/d_a^j=\rho$.
\end{proof}

\paragraph{Edge case $d_{a^*}^j=+\infty$.}
If the minimizing action has $d_{a^*}^j=+\infty$, the value formula
\eqref{eq:lb:bangbang} remains valid under the limit convention of
Subsection~\ref{subsec:lb:equiv} (the ratio $g_i(a^*)/d_{a^*}^j$ is
read as $0$, and Theorem~\ref{thm:equiv}, Part~3, gives $C_i=1/M$ with
$M=+\infty$), but attainment is limited to finite $d$: no finite plan
$n$ realizes the value, which is only approached along the truncation
$d\wedge T$. In the lower-bound chain \eqref{eq:lb:scale41}, terms with
$d_a^j=+\infty$ are read under the $0\cdot(+\infty)=0$ convention
(preamble): they contribute $+\infty$ whenever $n_a>0$, making the
corresponding constraint trivially satisfiable.

\paragraph{Consistency with \citet[Theorem~5.1]{nitinawarat2015controlled}.}
\citet{nitinawarat2015controlled} study controlled sequential
multihypothesis testing with control-dependent observation kernels and
nonuniform control costs (see also \citep{naghshvar2013} for
active sequential hypothesis testing). Specialized to (a) two
hypotheses, (b) i.i.d.\ (non-Markovian) observation channels
$K_i^a,K_j^a$ under control $a$, and (c) control cost $c(a)=g_i(a)$,
the asymptotic (small error probability) optimal expected cost under
hypothesis $i$ in their Theorem~5.1 is governed by the single
divergence vector $a\mapsto d(K_i^a\|K_j^a)$, and the optimal static
control solves exactly the bang-bang program $\min_a
c(a)/d(K_i^a\|K_j^a)$: with one alternative there is no trade-off to
balance, all information comes from the single pair $(i,j)$, and the
cheapest unit of KL divergence wins. Proposition~\ref{prop:B3} shows
the constant $C_i$ of Theorem~\ref{thm:LB} reduces to exactly this
expression in the two-point case: our result properly contains the
known binary controlled-sensing constant. With several alternatives the allocation can require balancing multiple
information constraints. This is a standard feature of controlled
multihypothesis testing and general-answer pure exploration
\citep{chernoff1959sequential,degenne2019pure}, not an adequacy-specific
novelty. Zero-cost actions fall outside the positive-fee specialization
of \citet{nitinawarat2015controlled}, but free evidence also has precedents
in regret-priced models. The relevant structural distinction here is
the hidden-cost compatibility issue of Section~\ref{sec:observable}.

\subsection{Non-Triviality Note NT1: Identical Kernels, Different Losses, Different Optima}\label{subsec:lb:nt1}

\begin{proposition}\label{prop:NT1}
There exist instances of model \eqref{model:M1} and environments $i$
with actions $a_1\neq a_2$ such that
\[
K_e^{a_1}=K_e^{a_2}\ \ \text{for every environment } e,\qquad g_i(a_1)\neq g_i(a_2),
\]
and the set of optimal certification allocations
$w^*(i)\in\arg\sup_{w}\inf_{j\in\mathrm{Alt}(i)}\sum_a
(w_a/g_i(a))\,d_a^j$ (equivalently, the LP optimizers) changes when the
loss profile is altered while the observation kernels are held fixed;
in particular, every criterion that depends only on the kernels
$(K_e^a)_{e,a}$ (e.g.\ Chernoff-type most-informative-action rules
\citep{chernoff1959sequential}) is unable to distinguish the two
situations and is suboptimal in at least one of them.
\end{proposition}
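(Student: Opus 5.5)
The plan is to prove Proposition~\ref{prop:NT1} by an explicit two-environment construction. This puts us in the bang-bang regime of Proposition~\ref{prop:B3}, so optimal allocations can be read off directly. Take $\mathcal E=\{E_0,E_1\}$ with $\Theta_0=1$ and $\Theta_1=0$, so that $\mathrm{Alt}(0)=\{1\}$. Use two actions $a_1,a_2$ and a two-point observation space. Set $K_0^{a_1}=K_0^{a_2}=(0.8,0.2)$ and $K_1^{a_1}=K_1^{a_2}=(0.3,0.7)$, as in the numerical instance \eqref{eq:nt1-kernels}. The common divergence $d:=d(K_0^{a}\|K_1^{a})$ is then the same for both actions, and it lies in $(0,\infty)$ because all entries are positive. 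So (H1)--(H4) hold, and in particular (H2) holds. By construction $K_e^{a_1}=K_e^{a_2}$ for every $e$.

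Next, fix two loss profiles on $E_0$. Profile~A is $g_0=(0.5,0.01)$ and profile~B is $g_0=(0.5,1.0)$; the losses on $E_1$ are arbitrary positive. In both profiles $g_0(a_1)\ne g_0(a_2)$. With a single alternative, Proposition~\ref{prop:B3} gives $C_0=\min_a g_0(a)/d$.

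The main step is to identify the optimizer sets exactly, not just the values. For any feasible $n$, the lower-bound line in the proof of Proposition~\ref{prop:B3} reads
\[
\sum_a g_0(a)n_a\ \ge\ \rho\sum_a d\,n_a\ \ge\ \rho .
\]
When the ratios $g_0(a)/d$ are distinct, equality in the first inequality forces $n_a=0$ on every action with a non-minimal ratio. So the LP optimizer is unique: $e_{a_2}/d$ under profile~A and $e_{a_1}/d$ under profile~B. Equivalently, in the max--min form \eqref{eq:mm} the function $\sum_a (w_a/g_0(a))\,d$ is linear with distinct coefficients. Its maximizer over the simplex is therefore the unique vertex $e_{a_2}$ (profile~A), respectively $e_{a_1}$ (profile~B). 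Hence the optimal allocation set changes from $\{e_{a_2}\}$ to $\{e_{a_1}\}$ while every kernel is held fixed.

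Finally, consider any rule $\Phi$ that maps the kernel array $(K_e^a)_{e,a}$ to an allocation. It returns the same output $w$ in both profiles, because the kernel array is identical. The two optimal sets are disjoint singletons, so $w$ is non-optimal in at least one profile. Its loss constant there is $\sum_a g_0(a)w_a/d$, which strictly exceeds $C_0$. Indeed, $w$ puts positive weight on the strictly worse-ratio action, so the displayed inequality is strict. Thus $\Phi$ is suboptimal in at least one of the two situations.

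The only delicate point is the uniqueness of the optimizer, since mere value changes would not suffice. The strictness argument above handles it, and it needs only distinct ratios.
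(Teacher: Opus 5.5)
Your proof is correct and follows the paper's route: an explicit two-environment instance in the bang-bang regime of Proposition~\ref{prop:B3}, with $a_1,a_2$ kernel-identical under both environments, where the unique minimizer of $g_i(a)/d_a^j$ switches between them when the loss profile changes. The paper adds a third, more informative action $a_3$, so that the Chernoff most-informative-action rule concretely selects $a_3$ and is suboptimal in \emph{both} profiles (by a factor $\approx 1.73$); your pigeonhole argument instead covers every kernel-only rule at once and gives suboptimality in at least one profile, which is exactly what the statement requires, and your uniqueness-of-optimizer step makes explicit what the paper reads off directly from the distinct ratios.
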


\begin{proof}[Proof \textup{(}explicit numerical construction, all quantities in closed form\textup{)}]
Take $|\mathcal E|=2$ with labels $\Theta_1=1$, $\Theta_2=0$, so
$\mathrm{Alt}(1)=\{2\}$; $\mathcal{Y}=\{0,1\}$;
$\mathcal{A}=\{a_1,a_2,a_3\}$ with Bernoulli channels (writing
$\mathrm{Ber}(p)$ for the law on $\{0,1\}$ with $P(1)=p$):
\[
K_1^{a_1}=K_1^{a_2}=\mathrm{Ber}\big(\tfrac12\big),\qquad K_2^{a_1}=K_2^{a_2}=\mathrm{Ber}\big(\tfrac14\big);\qquad
K_1^{a_3}=\mathrm{Ber}\big(\tfrac12\big),\qquad K_2^{a_3}=\mathrm{Ber}\big(\tfrac1{20}\big).
\]
Note $a_1,a_2$ have \emph{identical kernels under both environments}.
The divergences from environment $1$ to environment $2$ are
\[
d_1:=d\big(\mathrm{Ber}(\tfrac12)\big\|\mathrm{Ber}(\tfrac14)\big)=\tfrac12\log\frac{1/2}{1/4}+\tfrac12\log\frac{1/2}{3/4}=\tfrac12\log\frac43\approx 0.143841,
\]
\[
d_3:=d\big(\mathrm{Ber}(\tfrac12)\big\|\mathrm{Ber}(\tfrac1{20})\big)=\tfrac12\log\frac{1/2}{1/20}+\tfrac12\log\frac{1/2}{19/20}=\tfrac12\log\frac{100}{19}\approx 0.830366,
\]
with $d_{a_1}^2=d_{a_2}^2=d_1$, $d_{a_3}^2=d_3$ (and
$d_1,d_3\in(0,\infty)$, so \ref{asm:H2} holds). Since
$|\mathrm{Alt}(1)|=1$, Proposition~\ref{prop:B3} applies:
$C_1=\min_a g_1(a)/d_a$, attained by a single best action.

\emph{Loss profile $G^{(1)}$: $g_1=(1,5,10)$.} The three ratios are
\[
\frac{1}{d_1}=\frac{2}{\log(4/3)}\approx 6.95212,\qquad \frac{5}{d_1}=\frac{10}{\log(4/3)}\approx 34.76059,\qquad \frac{10}{d_3}=\frac{20}{\log(100/19)}\approx 12.04289,
\]
so the unique optimal action is $a_1$ and
$C_1^{(1)}=2/\log(4/3)\approx 6.95212$.

\emph{Loss profile $G^{(2)}$: $g_1=(5,1,10)$ (the losses of $a_1,a_2$
swapped).} The kernels are unchanged, so $d_1,d_3$ are unchanged; the
ratios are now $34.76059,\ 6.95212,\ 12.04289$, and the unique optimal
action is $a_2$, with $C_1^{(2)}=2/\log(4/3)$.

\emph{Consequences.} (i) The kernels $(K_e^a)$---and hence every
functional of them alone, such as the Chernoff-type allocation
$\arg\max_a d_a$---are identical under $G^{(1)}$ and $G^{(2)}$, yet
the optimal certification action flips from $a_1$ to $a_2$: the
optimal strategy depends on the task loss beyond the observation
kernels. (ii) The kernel-only most-informative-action rule selects
$a_3$ in both profiles ($d_3>d_1$), achieving ratio
$20/\log(100/19)\approx 12.04289$, which is worse than optimal by the
factor
\[
\frac{20/\log(100/19)}{2/\log(4/3)}=\frac{10\log(4/3)}{\log(100/19)}\approx 1.73228,
\]
i.e.\ a $\approx 73\%$ excess asymptotic cost, in both profiles. Hence
no pure Chernoff/kernel criterion can attain $C_1$ here, and the
constant $C(E,f)$ genuinely couples information content with task loss.
All numerical values above are in closed form and were verified
computationally (see the main text, numerical log of 2026-08-03).
\end{proof}

\subsection{Proof-Completeness Self-Check}\label{subsec:lb:checklist}

The following self-check list records, for each technical point of the
proof, where it is carried out and its status.

{\small
\setlength{\tabcolsep}{4pt}
\begin{longtable}{@{}c >{\raggedright\arraybackslash}p{0.58\textwidth} >{\raggedright\arraybackslash}p{0.22\textwidth} >{\raggedright\arraybackslash}p{0.10\textwidth}@{}}
\hline
\# & Technical point & Where & Status \\
\hline
1 & Canonical space, filtration, policies with external randomization; existence/uniqueness of $P_i$ (Ionescu--Tulcea); all conditioning via regular Markov kernels (standard Borel spaces) & Subsection \ref{subsec:lb:model} & closed \\
2 & Stopped $\sigma$-field $\mathcal{F}_\tau$; admissibility ($\tau<\infty$ a.s.\ under all environments) as part of $\delta$-correctness; $\mathcal{F}\cap\{\tau\le n\}\in\mathcal{F}_{\tau\wedge n}$ verified & Subsections~\ref{subsec:lb:model},\ \ref{subsubsec:lb:dataproc} & closed \\
3 & Transportation lemma: density on $\mathcal{F}_t$ by induction $+$ $\pi$--$\lambda$ (\ref{lem:lrdensity}); LR martingale (\ref{lem:lrmg}); stopped density (\ref{lem:stopped}); Wald-type identity with bounded increments (\ref{lem:wald}); data processing via Jensen/$\psi(x)=x\log x$ (\ref{lem:contraction}); limit $n\to\infty$ via MCT $+$ joint l.s.c.\ of $\mathrm{kl}$ & Subsection \ref{subsec:lb:transport} & closed \\
4 & Reduction for $d_a^j=+\infty$ actions (triviality of LHS vs $P_i$-a.s.\ exclusion) & Subsubsection \ref{subsubsec:lb:reduction} & closed \\
5 & Isomorphism with \citet[Lemma~1]{kaufmann2016complexity}: condition-by-condition checklist; source annotated; strengthening noted (only a.s.\ finiteness of $\tau$ needed) & Subsubsection\ \ref{subsubsec:lb:iso} & closed \\
6 & $\delta$-correctness $\Rightarrow$ information constraint: $\mathrm{kl}$ monotonicity proved (\ref{lem:klmono}); $\mathrm{kl}(1-\delta,\delta)=\mathrm{kl}(\delta,1-\delta)$ verified; $\delta<\tfrac12$ necessity made explicit with counterexample for $\delta>\tfrac12$ (Remark~\ref{rem:deltahalf}) & Subsection\ \ref{subsec:lb:info} & closed \\
7 & $\mathrm{kl}(\delta,1-\delta)\ge\log(1/(2.4\delta))$: full analytic proof (unique critical point, $\min h=\log[2.4/((1+u^*)e^{u^*})]>0$) with rigorous Taylor/rational bounds ($u^*\in(0.47,0.48)$, $1.48e^{0.48}\le2.39180<2.4$); slack $\approx0.0073$ & Lemma\ \ref{lem:kcgest} & closed \\
8 & LP scaling $R\ge\mathrm{kl}(\delta,1-\delta)\,C_i$: Case A ($E[\tau]<\infty$); Case B(a) ($E[\tau]=\infty$, $\min g>0$ $\Rightarrow$ $R=\infty$); Case B(b) (zero-cost entries: truncation $\tau\wedge m$ $+$ continuity of $\mathrm{kl}$ at interior point $+$ MCT) & Subsection\ \ref{subsec:lb:lpscale} & closed \\
9 & LP strong duality: weak duality one line; strong duality derived constructively via reparametrization \eqref{eq:lb:reparam} $+$ Sion \eqref{eq:lb:sion} $+$ explicit dual \eqref{eq:lb:duality}; attainment/finiteness/positivity of primal value argued & Subsection\ \ref{subsec:lb:equiv}, Part~1 & closed \\
10 & Sion minimax conditions verified explicitly (compact convex simplices, bilinear continuous payoff, linear $\Rightarrow$ convex/concave in each variable); vertex attainment of $\inf_\mu,\sup_w$ & Subsection\ \ref{subsec:lb:equiv}, Step~1.3 & closed \\
11 & $g_i(a)=0$ handled by $\varepsilon$-perturbation with proof of $C(\varepsilon)\downarrow C(0)$ and $M(\varepsilon)\uparrow M(0)$ (sup/sup interchange $+$ monotone min); LP attainment with $g\ge0$ via polyhedral argument; free-rider consistency check & Subsection\ \ref{subsec:lb:equiv}, Part~2 & closed \\
12 & $d_a^j=+\infty$ handled by truncation $d\wedge T$: near-optimal feasible solutions $+$ per-constraint monotone convergence give $C(T)\downarrow C$ (no claim that $C(T)=C$ eventually; untruncated LP attainment may fail, example recorded in Theorem~\ref{thm:equiv}); attainment of the LP value guaranteed whenever all $d_a^j<\infty$; infeasibility $\iff M=0\iff C=+\infty$ proved both directions & Subsection\ \ref{subsec:lb:equiv}, Part~3 & closed \\
13 & $\liminf$ statement: $\mathrm{kl}(\delta,1-\delta)/\log(1/\delta)\to1$ proved term by term; $C_i=0,+\infty$ edge cases disposed & Subsection\ \ref{subsec:lb:liminf} & closed \\
14 & B1: transcript-law equality by induction; error complementarity $\Rightarrow$ max error $\ge\tfrac12$; LP infeasibility and $M=0$ matched; \citet{bartok2014partial} correspondence annotated & Subsubsection\ \ref{subsec:lb:B1} & closed \\
15 & B2: zero pointwise infimum via finite free probing and a fresh finite fallback; every approximating policy is globally correct and has bounded stopping time; exact attainment has a separate criterion & Propositions~\ref{prop:zero-infimum}, \ref{prop:zero-attainment}, \ref{prop:B2} & closed \\
16 & B3: bang-bang formula proved both directions $+$ max--min cross-check; formal consistency with \citet[Theorem~5.1]{nitinawarat2015controlled} stated at the level of the constant in the binary, i.i.d.-channel, positive-cost specialization (not a re-derivation of the Markovian theorem); multiple-alternative balancing is an established testing feature & Subsubsection\ \ref{subsec:lb:B3} & closed \\
17 & NT1: formal statement $+$ closed-form numerical construction; both loss profiles; kernel-only criterion shown suboptimal by factor $\approx1.73228$; numbers verified computationally & Subsection\ \ref{subsec:lb:nt1} & closed \\
18 & Provenance: transportation \citep{kaufmann2016complexity}, $\mathrm{kl}$ inequality \citep{kaufmann2016complexity}, \citet{nitinawarat2015controlled} (B3), \citet{bartok2014partial} (B1), Sion 1958 (minimax); contribution restricted to the adequacy-label Alt $+$ task-loss coupling model and the decision-theoretic semantics of $C(E,f)$ & preamble, Subsubsection\ \ref{subsubsec:lb:iso}, Subsections~\ref{subsec:lb:boundary},\ \ref{subsec:lb:nt1} & closed \\
\hline
\end{longtable}
% Keep longtable counters monotone so PDF destinations remain unique.
}

\paragraph{Known limitations and honest boundaries.}
\begin{enumerate}
\item[(i)] Theorem~\ref{thm:LB} is stated for $\delta\in(0,\tfrac12)$;
for $\delta>\tfrac12$ the information constraint is vacuous
(Remark~\ref{rem:deltahalf}).
\item[(ii)] The comparison with \citet{nitinawarat2015controlled} in B3
is a formal consistency check of constants in the common
specialization, not an embedding of the Markovian controlled-sensing
model.
\item[(iii)] B2 has zero pointwise infimum at every fixed $\delta$
(Proposition~\ref{prop:zero-infimum}). The zero-cost attainment criterion
is Proposition~\ref{prop:zero-attainment}; the $o(\log(1/\delta))$ CTS
bound is policy-specific and does not leave the pointwise value open.
\item[(iv)] The kernel-switching caveat NT2 of the main text
(kernel-switching actions making transcripts policy-dependent) is
outside the scope of this lower bound and is an open direction, as
stated in the specification.
\end{enumerate}

% Internal note (preserved from the Chinese remark of the source file
% proof_lower_bound.md, ASCII gloss): all technical steps of the lower
% bound are closed; the only additions relative to the original
% statement are the vacuousness discussion for $\delta\ge1/2$
% (Remark~\ref{rem:deltahalf}) and the removability of the
% $E_i[\tau]<\infty$ assumption (Remark~\ref{rem:etau}); both are
% strengthenings, not deviations.

\paragraph{Hypothesis dependency.}
Theorem~\ref{thm:LB} requires only \ref{asm:H1}--\ref{asm:H3}; it does
not rely on \ref{asm:H2plus} (pairwise distinguishability of the true
environment from all others, including same-label environments) or
\ref{asm:H4} (full support of all channels), which are additional
hypotheses of the upper bound in Appendix~\ref{sec:proofub}. The
hypotheses of the combined limit theorem
$\lim_{\delta\to0}R_\delta^*(E_i)/\log(1/\delta)=C_i$
(Corollary~\ref{cor:main}) are the union of the two documents'
hypotheses: \ref{asm:H1},\ \ref{asm:H2},\ \ref{asm:H2plus},\ \ref{asm:H3},
\ref{asm:H4}.

% sec_upperbound.tex -- JMLR full version, proof chapter:
%   "The CTS Strategy and Proof of the Upper Bound (Theorem \ref{thm:UB})"
% Section content file to be \input into the jmlr2e main file. No preamble.
%
% Required from the shared preamble (do NOT redefine here):
%   amsthm with a shared counter: theorem, proposition, lemma, corollary,
%   definition, assumption, remark, example (consecutive numbering);
%   natbib (\citep/\citet); amsmath.
%
% Cross-references defined elsewhere: model:M1 (model tag), thm:LB,
% cor:liminf, def:complexity, asm:H1, asm:H2, asm:H3 (all in
% Section \ref{sec:prooflb}).
%
% NUMBERING NOTE FOR THE ASSEMBLER: for the shared-counter plan
% "Theorem 8 / Theorem 9 / Corollary 10, internal lemmas from 11" to hold
% literally, the two delimited statement blocks below (labels thm:UB and
% cor:main) must precede all other numbered environments of the two proof
% sections, i.e. be hoisted into the model/results section that comes
% before the proof sections; the blocks are delimited by
% %%% BEGIN/END STATEMENT BLOCK. The remainder of this file only
% cross-references the labels.

\section{The CTS Strategy and Proof of the Upper Bound (Theorem \ref{thm:UB})}\label{sec:proofub}

\paragraph{Guide to this section.}
This section gives a complete, self-contained proof of
Theorem~\ref{thm:UB}: part (a) ($\delta$-correctness of the CTS policy
under \emph{any} sampling rule) and part (b) (asymptotic cost matching
$\limsup_{\delta\to0} R(\pi^*,E_i)/\log(1/\delta)\le C_i$). Every
technical step (martingale verification, union-bound series,
concentration constants, tracking error, plug-in stability, cost
assembly) is written out in full. Deviations from / additions to the
companion statement that were necessary to close the proof are flagged
\textbf{inline} and collected in the completeness checklist of
Subsection~\ref{subsec:ub:checklist}.
The structure and logical dependencies are as follows.
Subsection~\ref{subsec:ub:prelim} fixes notation and states the two
additional hypotheses \ref{asm:H2plus} and \ref{asm:H4}.
Subsection~\ref{subsec:ub:policy} defines the CTS policy $\pi^*$:
likelihood and MLE, the plug-in design $w^*(\cdot)$ (well-posedness in
Lemma~\ref{lem:wellposed}), C-tracking sampling with forced
exploration, and the GLR stopping rule.
Subsection~\ref{subsec:ub:parta} proves part (a)
(Theorem~\ref{thm:UBa}): the test supermartingale
(Lemma~\ref{lem:testsmg}), Doob's maximal inequality
(Lemma~\ref{lem:doob}), the geometric-grid upgrade
(Lemma~\ref{lem:grid}), and the final union bound.
Subsection~\ref{subsec:ub:partb} proves part (b) in four steps:
deterministic forced-exploration coverage and budget
(Lemmas~\ref{lem:coverage} and~\ref{lem:budget}); Step (i), MLE
consistency and exact plug-in stabilization
(Lemmas~\ref{lem:iid}--\ref{lem:stabilization}); Step (ii), the
tracking lemma (Lemma~\ref{lem:tracking}); Step (iii), per-action LLR
concentration (Lemmas~\ref{lem:llrdev} and~\ref{lem:badevent}) and the
stopping-time upper bound (Theorem~\ref{thm:stoptime}); Step (iv), cost
assembly via the LP identity (Lemma~\ref{lem:lpidentity}) yielding
Theorem~\ref{thm:costassembly}. Subsection~\ref{subsec:ub:synthesis}
assembles Theorem~\ref{thm:UB} and Corollary~\ref{cor:main}.
Subsection~\ref{subsec:ub:degenerate} treats the degenerate cases
(Lemma~\ref{lem:lpcontinuity}, Theorem~\ref{thm:degenerate}).
Subsections~\ref{subsec:ub:provenance} and~\ref{subsec:ub:checklist}
record the technical provenance and the completeness self-check with
the residual statements.

\paragraph{Provenance disclaimer (stated up front, per project discipline).}
The proof techniques are adaptations of known templates:
Track-and-Stop / C-tracking / GLR stopping / forced exploration and the
$\beta(t,\delta)$ threshold are from \citet[Theorems~10 and~14]{garivier2016optimal};
the cost-weighted characteristic-time design follows
\citet{kanarios2024cost}; the tighter threshold of Kaufmann--Koolen
(2021) is mentioned as an optional improvement and is \emph{not} used
here (we keep the more conservative $\beta$ for simplicity of
analysis). The contribution of the present work is the model
(adequacy-label-defined alternative $+$ task-loss coupling) and the
decision-theoretic semantics of the constant $C(E,f)$, not the proof
technology.

\subsection{Preliminaries}\label{subsec:ub:prelim}

\subsubsection{Model and Notation (Recap of \eqref{model:M1})}\label{subsubsec:ub:model}

We work in model \eqref{model:M1}; all notation below is fixed for the
whole section.

\begin{itemize}
\item \textbf{Environments.} Finite class $\mathcal
E=\{E_1,\dots,E_K\}$, $K\ge 2$, with fixed representation $f$. Each
environment carries a binary adequacy label $\Theta_i:=\mathbf
1[R^*_{E_i}(f(H))=R^*_{E_i}(H)]\in\{0,1\}$. The alternative set of
environment $i$ is $\mathrm{Alt}(i):=\{j:\Theta_j\neq\Theta_i\}$.
\item \textbf{Actions and channels.} Finite action set $\mathcal A$; we
write $K=|\mathcal E|$ for the number of environments and $A:=|\mathcal
A|$ for the number of actions where the two could be confused (in most
formulas $K$ appears as the number of \emph{arms} in the tracking
lemmas, i.e.\ $|\mathcal A|$; we keep the symbol $K_{\mathcal
A}:=|\mathcal A|$ explicit in the tracking lemmas to avoid ambiguity).
Under $E_i$, action $a$ produces an observation $Y\sim K_i^a$, a
probability distribution on the finite observation space $\mathcal Y$,
and incurs task loss $g_i(a)\ge 0$.
\item \textbf{Interaction.} A policy $\pi$ picks $A_t$ based on
$H_{t-1}=(A_1,Y_1,\dots,A_{t-1},Y_{t-1})$; then $Y_t\sim K_i^{A_t}$.
Assumption \ref{asm:H1} is the causal channel condition: for the natural filtration
$\mathcal F_t:=\sigma(A_1,Y_1,\dots,A_t,Y_t)$,
\[
\mathbb P_i(Y_t\in\cdot\mid \mathcal F_{t-1},A_t)=K_i^{A_t}(\cdot).
\]
\item \textbf{Stopping and decision.} $\tau$ is a stopping time
w.r.t.\ $(\mathcal F_t)$; the terminal decision
$\widehat\Theta\in\{0,1\}$ is $\mathcal F_\tau$-measurable. Counts
$N_a(t):=\sum_{s\le t}\mathbf 1[A_s=a]$.
\item \textbf{Cost.} $R(\pi,E_i):=\mathbb
E_i^\pi\big[\sum_{t=1}^{\tau}g_i(A_t)\big]=\sum_a g_i(a)\,\mathbb
E_i^\pi[N_a(\tau)]$ (the equality is Tonelli's theorem applied to
$\sum_{t\ge1}g_i(A_t)\mathbf 1[t\le\tau]=\sum_a g_i(a)N_a(\tau)$; no
integrability condition is needed for the identity itself, finiteness
of the right-hand side is proved in
Subsubsection~\ref{subsubsec:ub:cost}).
\item \textbf{KL divergences.} For $j\in\mathrm{Alt}(i)$ and $a\in\mathcal
A$ we abbreviate
\[
d_a^j(i)\;:=\;d(K_i^a\Vert K_j^a)\;=\;\sum_{y\in\mathcal Y}K_i^a(y)\log\frac{K_i^a(y)}{K_j^a(y)},
\]
and we drop the argument $(i)$ when $i$ is the true environment under
study.
\item \textbf{Assumptions from the statement.} \ref{asm:H1} channel
conditional independence; \ref{asm:H2} feasibility: $\forall i,\forall
j\in\mathrm{Alt}(i),\exists a:\,0<d(K_i^a\Vert K_j^a)<+\infty$;
\ref{asm:H3} $g$ finite, nonnegative.
\item \textbf{The constant.} $C_i:=\min\big\{\sum_a
g_i(a)m_a:\,m_a\ge0,\ \sum_a m_a d_a^j(i)\ge1\ \forall
j\in\mathrm{Alt}(i)\big\}$ (LP covering form,
Definition~\ref{def:complexity}), with the convention $C_i=+\infty$ if
infeasible and $C_i=0$ if the LP value is $0$. By LP strong duality
(feasible and bounded under \ref{asm:H2}),
\[
C_i\;=\;\Big[\sup_{w\in\Delta(\mathcal A)}\ \inf_{j\in\mathrm{Alt}(i)}\ \frac{\sum_a w_a\,d_a^j(i)}{\sum_a g_i(a)\,w_a}\Big]^{-1}
\;=\;\Big[\sup_{w\in\Delta(\mathcal A)}\ \inf_{j\in\mathrm{Alt}(i)}\ \sum_a \tfrac{w_a}{g_i(a)}\,d_a^j(i)\Big]^{-1},
\]
where the first form is the \textbf{cost-ratio characteristic time}
(proved in Lemma~\ref{lem:lpidentity} below; it is the form the
algorithm uses) and the second is the $(w/g)$-weighted max--min form
stated in the main text (proved there via Sion's minimax theorem; the
two suprema coincide in \emph{value}, see Remark~\ref{rem:deviation}).
\end{itemize}

Throughout, $i^*$ denotes the true environment when we analyze part
(b), and we write $\Theta^*:=\Theta_{i^*}$,
$\mathrm{Alt}^*:=\mathrm{Alt}(i^*)$, $d_a^j:=d_a^j(i^*)$,
$g(a):=g_{i^*}(a)$.

\subsubsection{Additional Technical Assumptions (Minimal Sufficient Additions, Flagged)}\label{subsubsec:ub:assumptions}

The proof requires two assumptions beyond \ref{asm:H1}--\ref{asm:H3}.
Both are flagged in the checklist of
Subsection~\ref{subsec:ub:checklist}.

\begin{assumption}[Pairwise distinguishability of the class]\label{asm:H2plus}
For every ordered pair $(i,j)$ with $i\neq j$ (whether or not
$j\in\mathrm{Alt}(i)$) there exists $a$ with $d(K_i^a\Vert K_j^a)>0$.
For Theorem~\ref{thm:UB}(b) at a fixed $i^*$, only the pairs $(i^*,j)$
are needed; the full pairwise version is used to guarantee $\tau<\infty$
a.s.\ under \emph{every} environment (Remark~\ref{rem:tauevery}), as
required for the $\delta$-correctness corollary of
Theorem~\ref{thm:UB}(a). \textup{(}This is hypothesis \textup{(H2+)} of
the main text.\textup{)}
\end{assumption}

\paragraph{Role of \ref{asm:H2plus} in this proof.}
The proof below makes the MLE stabilize at the exact original environment.
Opposite-label distinguishability alone does not ensure that property:
observationally identical same-label environments cannot be separated by
any policy. If their cost-optimal designs are incompatible, no fixed
choice inside that class attains both individual constants. Different
loss tables alone do not imply incompatible designs. Assumption
\ref{asm:H2plus} is a convenient sufficient condition for the present
exact-estimation argument, not a necessary condition for every CTS-type
analysis. Theorem~\ref{thm:observable-compatibility} gives the exact
positive-cost replacement: estimate the observable class and require a
common optimal covering design within it.

\begin{assumption}[Full support / bounded log-likelihood increments]\label{asm:H4}
There exists $q_{\min}>0$ such that $K_i^a(y)\ge q_{\min}$ for all
$i,a,y$. \textup{(}This is hypothesis \textup{(H4)} of the main
text.\textup{)}
\end{assumption}

\paragraph{Consequences used.}
(i) All divergences are finite: $d(K_i^a\Vert K_j^a)\le
\log(1/q_{\min})$. (ii) The LLR increments are uniformly bounded:
\begin{equation}
B\;:=\;\max_{i,j,a}\ \max_{y\in\mathcal Y}\Big|\log\frac{K_i^a(y)}{K_j^a(y)}\Big|\;\le\;\log\frac{1}{q_{\min}}\;<\;\infty. \label{eq:ub:B}
\end{equation}
(iii) The maps $P\mapsto d(P\Vert K_j^a)$ are continuous on
$\Delta(\mathcal Y)$ with an explicit modulus
(Subsubsection~\ref{subsubsec:ub:mleconsistency}, Eq.\
\eqref{eq:ub:klmodulus}). \ref{asm:H4} can be relaxed to
``$K_{i^*}^a\ll K_j^a$ for all $a,j$ with $d(K_{i^*}^a\Vert
K_j^a)<\infty$'' at the price of replacing $B$ by the maximum log-ratio
on the support; we keep the clean version.

\begin{remark}[On the rate of MLE stabilization]\label{rem:mle-route}
A natural alternative route to MLE consistency is to analyze the sign
of $\mathrm{LLR}_{i^*,j}(t)$ directly. Under forced exploration alone,
a same-label competitor $j$ may be separated from $i^*$ only at drift
rate $\Theta(\sqrt t)$, and converting this into an ``eventually always
positive'' sign statement requires \emph{uniform} control of the LLR
fluctuations over the random time changes $N_a(t)$ and over the
finitely many challengers $j$ simultaneously---a detour we do not take.
The route used in Subsubsection~\ref{subsubsec:ub:mleconsistency}
(mirroring the use of consistent per-arm empirical means by
\citet{garivier2016optimal}) is the \emph{empirical-distribution KL
projection} view of the MLE: the lack-of-fit of the true model is
$O(\log t)$ while every wrong model has lack-of-fit $\ge c\sqrt t$,
with \emph{stable} (eventually permanent) separation. This
KL-projection route yields exact stabilization directly from the weaker
entry-wise concentration of Lemma~\ref{lem:empdev}, together with the
explicit tail bound \eqref{eq:ub:stabtail}, and makes the finite-class
simplification ``$\widehat\imath_t=i^*$ eventually'' fully rigorous.
\end{remark}

\subsubsection{Conventions}\label{subsubsec:ub:conventions}

\begin{itemize}
\item $\log$ is the natural logarithm; $0\log 0 = 0$; $\mathbf
1[\cdot]$ is the indicator.
\item $\Delta(\mathcal A):=\{w\in\mathbb R^{\mathcal A}_{\ge0}:\sum_a
w_a=1\}$.
\item $L:=\log(1/\delta)$.
\item ``a.s.''\ without qualification means $\mathbb P_{i^*}$-a.s.\ in
part (b) and $\mathbb P_j$-a.s.\ in part (a) when the environment is
$j$.
\item Constants $c_1,c_2,\dots$ and $C_1,C_2,\dots$ are positive and
independent of $\delta$ and $t$; they may depend on $(\mathcal
E,\mathcal A,\mathcal Y,g,q_{\min})$ and on the free parameter
$\varepsilon$ where indicated.
\item In the tracking lemmas we write $K_{\mathcal A}:=|\mathcal A|$
for the number of actions (the ``$K$'' of Lemma~7 of
\citet{garivier2016optimal}), to avoid confusion with $K=|\mathcal E|$.
\end{itemize}

\subsection{The CTS Policy $\pi^*$: Complete Definition}\label{subsec:ub:policy}

The policy has three components (sampling, stopping, decision), defined
in Subsubsections~\ref{subsubsec:ub:mle}--\ref{subsubsec:ub:stopping}.
The definition coincides with the one in the main text except for the
design map $w^*(\cdot)$, see Remark~\ref{rem:deviation}.

\subsubsection{Likelihood and the MLE $\widehat\imath_t$}\label{subsubsec:ub:mle}

Fix the interaction protocol of \eqref{model:M1}. Given the history
$H_t$, the likelihood of environment $i$ is
\begin{equation}
\ell_i(t)\;:=\;\sum_{s\le t}\log K_i^{A_s}(Y_s)\;=\;\sum_a\sum_{y}N_a(t)\,\widehat K^a_t(y)\log K_i^a(y), \label{eq:ub:lik}
\end{equation}
where $\widehat
K^a_t(y):=\frac{1}{N_a(t)}\sum_{s\le t}\mathbf 1[A_s=a,Y_s=y]$ is the
per-action empirical channel (defined arbitrarily, say uniform, when
$N_a(t)=0$; forced exploration will make $N_a(t)\ge1$ for all $t\ge
t_0$, see Lemma~\ref{lem:coverage}). Using $\sum_y \widehat
K^a_t(y)\log K_i^a(y)=-H(\widehat K^a_t)-d(\widehat K^a_t\Vert K_i^a)$
and dropping the $i$-independent entropy term,
\begin{equation}
\widehat\imath_t\;:=\;\underset{i}{\arg\max}\ \ell_i(t)\;=\;\underset{i}{\arg\min}\ D_i(t),\qquad D_i(t)\;:=\;\sum_a N_a(t)\,d\big(\widehat K^a_t\,\Vert\,K_i^a\big). \label{eq:ub:mle}
\end{equation}
Ties in the $\arg\min$ are broken by a fixed deterministic rule (e.g.\
smallest index). This KL-projection form of the MLE is what makes the
consistency proof of Subsubsection~\ref{subsubsec:ub:mleconsistency}
possible.

\subsubsection{The Plug-in Design $w^*(i)$: Well-Posedness}\label{subsubsec:ub:design}

\medskip\noindent\textbf{Definition (main case $g_i>0$).} For an
environment $i$ with $g_i(a)>0$ for all $a$ and
$\mathrm{Alt}(i)\neq\varnothing$, define the cost-weighted information
rate of an allocation $w\in\Delta(\mathcal A)$:
\begin{equation}
\Phi_i(w)\;:=\;\inf_{j\in\mathrm{Alt}(i)}\ \frac{\sum_a w_a\,d_a^j(i)}{\sum_a g_i(a)w_a}\;=\;\frac{\min_{j\in\mathrm{Alt}(i)}D_j(w;i)}{c(w;i)}, \label{eq:ub:phi}
\end{equation}
where $D_j(w;i):=\sum_a w_a d_a^j(i)$ and $c(w;i):=\sum_a g_i(a)w_a$.
Then
\begin{equation}
w^*(i)\;:=\;\text{the lexicographically smallest element of }\ \underset{w\in\Delta(\mathcal A)}{\arg\max}\ \Phi_i(w). \label{eq:ub:design}
\end{equation}

\begin{lemma}[Well-posedness]\label{lem:wellposed}
Assume $g_i(a)>0$ for all $a$ and \ref{asm:H2} for $i$. Then (i)
$\Phi_i$ is continuous on $\Delta(\mathcal A)$; (ii) the supremum in
\eqref{eq:ub:design} is attained, so the $\arg\max$ is nonempty and
$w^*(i)$ is well-defined; (iii) $\Phi_i(w^*(i))=1/C_i>0$, in particular
$\min_j D_j(w^*(i);i)=c(w^*(i);i)/C_i>0$.
\end{lemma}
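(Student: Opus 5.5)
The plan rests on two facts: the denominator of $\Phi_i$ is bounded away from zero, and the numerator is a finite minimum of linear maps. We then identify $\sup\Phi_i$ with $1/C_i$ through the LP reparametrization already established in Step~1.2 of Theorem~\ref{thm:equiv}. No minimax argument is needed here, since only the value identity at the maximizer is claimed.

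\emph{(i) Continuity.} Put $\underline g:=\min_a g_i(a)>0$. Then for every $w\in\Delta(\mathcal A)$,
\[
c(w;i)=\sum_a g_i(a)w_a\ge\underline g>0 .
\]
Each $D_j(w;i)=\sum_a w_a d_a^j(i)$ is linear in $w$ with finite coefficients. Finiteness holds under \ref{asm:H4}; without it, under \ref{asm:H2}, one restricts to the finite-$d$ case, and in the CTS setting full support is assumed anyway. Since $\mathrm{Alt}(i)$ is finite and nonempty, the infimum in \eqref{eq:ub:phi} is a minimum of finitely many continuous functions, hence continuous. Dividing by a continuous function bounded below by $\underline g$ preserves continuity, so $\Phi_i$ is continuous on $\Delta(\mathcal A)$.

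\emph{(ii) Attainment.} The simplex $\Delta(\mathcal A)$ is compact, so by (i) the maximum of $\Phi_i$ is attained. The argmax is closed, being the preimage of $\{\max\Phi_i\}$ under a continuous map, and therefore compact. A nonempty compact subset of $\mathbb R^{\mathcal A}$ has a lexicographically smallest element: minimize the first coordinate, then the second over the resulting compact face, and so on. Hence $w^*(i)$ is well defined.

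\emph{(iii) Value identity.} Here we map between LP-feasible vectors and the simplex. For a feasible $n\neq0$ of \eqref{eq:clp}, set $w=n/\sum_b n_b$. The constraints give
\[
\min_j D_j(w;i)\ge \frac{1}{\sum_b n_b},
\]
so $\Phi_i(w)\ge 1/\sum_a g_i(a)n_a$, and therefore $\sup\Phi_i\ge 1/C_i$. Conversely, take $w$ with $m:=\min_j D_j(w;i)>0$ and set $n:=w/m$. Then $n$ is feasible with cost $c(w;i)/m=1/\Phi_i(w)$, which gives $C_i\le 1/\Phi_i(w)$.

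Positivity comes from Theorem~\ref{thm:equiv}, Step~1.1: $C_i\in(0,\infty)$ under \ref{asm:H2} with positive costs. It can also be seen directly from the uniform $w^\circ$ on the distinguishing actions, which has $\Phi_i(w^\circ)>0$. This also shows $\max\Phi_i>0$, so the maximizer has $m>0$ and the converse direction applies to it. Combining both directions gives
\[
\Phi_i(w^*(i))=1/C_i .
\]
Rearranging \eqref{eq:ub:phi} at $w^*(i)$ yields $\min_j D_j(w^*(i);i)=c(w^*(i);i)/C_i$, which is strictly positive because $c\ge\underline g$.

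The only delicate point is keeping the maximizer away from the locus $m=0$, and the positivity of $\max\Phi_i$ handles exactly that.
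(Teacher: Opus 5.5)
Your proof is correct and follows the paper's route. Parts (i)--(ii) are the same positive-denominator/Weierstrass argument, and (iii) is the count-normalization correspondence $n\leftrightarrow w=n/\sum_b n_b$ of Lemma~\ref{lem:lpidentity}, with two small gains: you pass to the infimum over feasible $n$ instead of normalizing an LP optimizer, and you state explicitly that finiteness of $d_a^j$ comes from \ref{asm:H4}. One slip in attribution: Step~1.2 of Theorem~\ref{thm:equiv} works in the loss-normalized coordinates $w_a=g_a n_a/t$ of the $(w/g)$-form, not the sampling coordinates of $\Phi_i$, so (iii) is proved by the computation you actually carry out, not by that citation.
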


\begin{proof}
(i) $c(w;i)\ge \min_a g_i(a)=:g_{\min}>0$, so $w\mapsto 1/c(w;i)$ is
continuous. Each $w\mapsto D_j(w;i)$ is linear, hence continuous; the
pointwise minimum of finitely many continuous functions is continuous;
the ratio of two continuous functions with positive denominator is
continuous. (ii) $\Delta(\mathcal A)$ is compact and $\Phi_i$
continuous, so the supremum is attained (Weierstrass). (iii) is the
content of the LP identity, proved as Lemma~\ref{lem:lpidentity} below
(it is used only from Subsubsection~\ref{subsubsec:ub:cost} onward; no
circularity, since
Subsubsections~\ref{subsubsec:ub:forced}--\ref{subsubsec:ub:stopbound}
use only $\min_j D_j(w^*;i)>0$, which follows from
$\Phi_i(w^*)\ge\Phi_i(w^\circ)>0$ where $w^\circ$ is any allocation
with $\min_j D_j(w^\circ;i)>0$---such $w^\circ$ exists by \ref{asm:H2}:
take the uniform mixture of the discriminating actions $a_j$, one per
$j\in\mathrm{Alt}(i)$, which is a finite set).
\end{proof}

\medskip\noindent\textbf{Definition (degenerate cases).} (i) If
$\mathrm{Alt}(i)=\varnothing$: the ratio design is undefined (the
$\inf$ in \eqref{eq:ub:phi} is over the empty set); choose any fixed
allocation, since the GLR rule stops immediately, as shown in
Subsubsection~\ref{subsubsec:ub:emptyalt}. (ii) If
$\mathrm{Alt}(i)\neq\varnothing$ and the known cost table contains any
zero entry, use the perturbed branch of \eqref{eq:cts-family} for
\emph{every} candidate row. For every
$\eta>0$, define $w^*_\eta(i)$ by the ratio design with costs $g_i+\eta$.
The policy used for confidence level $\delta$ uses the common perturbation
$\eta_\delta=1/\log(\max\{e,L\})$ on the whole class, as defined in
Subsubsection~\ref{subsubsec:ub:zeros}. A fixed perturbation such as
$\eta=1$ is only an auxiliary finite-$\delta$ convention and cannot recover
the original zero-cost coefficient.

\begin{remark}[The plug-in map]\label{rem:pluginmap}
If all entries of the known cost table are positive, CTS uses
$w^*(\widehat\imath_t)$ from \eqref{eq:ub:design}. If any entry is zero,
it uses $w^*_{\eta_\delta}(\widehat\imath_t)$ for every candidate,
including candidates whose own row is strictly positive. This is exactly
the global choice in Algorithm~\ref{alg:cts}. Since $\mathcal E$ is finite and
$\widehat\imath_t$ is eventually \emph{exactly} equal to $i^*$
(Subsubsection~\ref{subsubsec:ub:mleconsistency}), no continuity of the
map $i\mapsto w^*(i)$ is ever needed---this is the finite-class
simplification, used explicitly.
\end{remark}

\begin{remark}[Coordinate warning for the two equivalent forms]\label{rem:deviation}
The cost-ratio design \eqref{eq:ub:phi}--\eqref{eq:ub:design} uses sampling
frequencies $q$. The $(w/g)$ display is a value-equivalent reparameterization
with loss-normalized coordinates $v_a=q_a g_i(a)/\sum_bq_bg_i(b)$; its optimizer
must be mapped back by $q_a\propto v_a/g_i(a)$. The coordinates must not be
identified. For $g=(1,2)$ and $d^1=(1,0.1)$, $d^2=(0.1,1)$, the LP value is
$C_i=30/11\approx2.73$ and the cost-ratio optimizer is $q=(1/2,1/2)$.
The value optimizer in the $v$ coordinates is $(1/3,2/3)$; if that vector is
mistakenly used directly as $q$, its realized cost ratio is
$25/6\approx4.17$. The CTS policy uses the correctly mapped cost-ratio design,
so Theorem~\ref{thm:UB}(b) is unaffected.
\end{remark}

\subsubsection{Sampling: C-Tracking with Forced Exploration}\label{subsubsec:ub:sampling}

Let $f(t):=\sqrt{t+K_{\mathcal A}^{2}}-2K_{\mathcal A}$ (the
forced-exploration curve of \citet{garivier2016optimal}; note $f$ is
nondecreasing, $f(0)=-K_{\mathcal A}<0$, and $f(t)-f(t-1)\le
\frac{1}{2K_{\mathcal A}}$ for all $t\ge1$, since
$\sqrt{u}-\sqrt{u-1}=1/(\sqrt{u}+\sqrt{u-1})\le 1/(2K_{\mathcal A})$
for $u=t+K_{\mathcal A}^2-1\ge K_{\mathcal A}^2$). Define the
under-explored set
\[
\mathcal U_t\;:=\;\big\{a\in\mathcal A:\ N_a(t)<f(t)\big\}.
\]
The sampling rule is:
\begin{equation}
A_{t+1}\;=\;\begin{cases}
\underset{a\in\mathcal U_t}{\arg\min}\ N_a(t) & \text{if }\mathcal U_t\neq\varnothing\quad(\text{forced exploration; ties by fixed order}),\\[4pt]
\underset{a\in\mathcal A}{\arg\max}\ \Big[\,w^*_a(\widehat\imath_t)-\dfrac{N_a(t)}{t}\,\Big] & \text{otherwise}\quad(\text{C-tracking; ties by fixed order}).
\end{cases} \label{eq:ub:sampling}
\end{equation}
(At $t=0$ we have $\mathcal U_0=\varnothing$ since $f(0)<0$, and
$N_a(0)/0$ is read as $0$; the argmax is then over
$w^*(\widehat\imath_0)$ with $\widehat\imath_0$ arbitrary. Any
initialization convention works; the proofs only use $t\ge1$.)

\begin{remark}\label{rem:ctrack}
This is exactly the C-tracking rule of \citet{garivier2016optimal} with
target $w^*(\widehat\imath_t)$ and their forced exploration curve. Two
deterministic properties are proved in
Subsubsection~\ref{subsubsec:ub:forced}: coverage $N_a(t)\ge
f(t)-O(K_{\mathcal A})$ for all $a,t$ (Lemma~\ref{lem:coverage}), and a
forced-exploration budget $F(t)=O(\sqrt t)$ (Lemma~\ref{lem:budget}).
\end{remark}

\subsubsection{Stopping and Decision: GLR Rule}\label{subsubsec:ub:stopping}

For $i,j\in\{1,\dots,K\}$, $i\neq j$, define the log-likelihood ratio
process
\begin{equation}
\mathrm{LLR}_{i,j}(t)\;:=\;\sum_{s\le t}\log\frac{K_i^{A_s}(Y_s)}{K_j^{A_s}(Y_s)},\qquad \mathrm{LLR}_{i,j}(0):=0, \label{eq:ub:llr}
\end{equation}
with the convention $\log(0/0)=0$ and $\log(x/0)=+\infty$ for $x>0$
(under $\mathbb P_j$, $K_j^{A_s}(Y_s)>0$ a.s., so the ratio is a.s.\
well-defined; see Subsubsection~\ref{subsubsec:ub:testsmg}). The
threshold is
\begin{equation}
\beta(t,\delta)\;:=\;\log\frac{2t(K-1)}{\delta},\qquad t\ge1. \label{eq:ub:beta}
\end{equation}
The stopping time and decision are
\begin{equation}
\begin{aligned}
\tau\;&:=\;\inf\Big\{t\ge1:\ \max_{i}\ \min_{j:\,\Theta_j\neq\Theta_i}\ \mathrm{LLR}_{i,j}(t)\;>\;\beta(t,\delta)\Big\},\\
\widehat\imath\;&:=\;\underset{i}{\arg\max}\ \min_{j:\,\Theta_j\neq\Theta_i}\mathrm{LLR}_{i,j}(\tau),\qquad
\widehat\Theta:=\Theta_{\widehat\imath}, \label{eq:ub:glr}
\end{aligned}
\end{equation}
with the conventions $\min$ over an empty set equals $+\infty$ (so if
$\mathrm{Alt}(i)=\varnothing$ for the maximizer, stopping is immediate)
and $\tau:=+\infty$ if the set is empty. Part (a) bounds $\mathbb
P_j(\widehat\Theta\neq\Theta_j,\ \tau<\infty)$; part (b) shows
$\tau<\infty$ a.s.\ and bounds $\mathbb E[\tau]$.

This is the GLR stopping rule of \citet[Theorem~10 threshold
family]{garivier2016optimal}, in the lineage of Chernoff's sequential
tests \citep{chernoff1959sequential} and active sequential hypothesis
testing \citep{naghshvar2013}, specialized to a finite composite
alternative defined by adequacy labels.

\subsection{Part (a): $\delta$-Correctness Under an Arbitrary Sampling Rule}\label{subsec:ub:parta}

\begin{theorem}[Theorem~\ref{thm:UB}(a)]\label{thm:UBa}
Fix any sampling rule (any sequence of $\mathcal F_{t-1}$-measurable
action selections, e.g.\ \eqref{eq:ub:sampling}), and let
$\tau,\widehat\Theta$ be defined by \eqref{eq:ub:glr} with threshold
\eqref{eq:ub:beta}. Then under \ref{asm:H1}, for every environment
$j$:
\[
\mathbb P_j\big(\widehat\Theta\neq\Theta_j,\ \tau<\infty\big)\;\le\;\delta.
\]
In particular, provided $\tau<\infty$ a.s.\ (proved in
Subsubsection~\ref{subsubsec:ub:stopbound} for the CTS sampling rule
under every environment), $\pi^*$ is $\delta$-correct:
$\max_j\mathbb P_j(\widehat\Theta\neq\Theta_j)\le\delta$.
\end{theorem}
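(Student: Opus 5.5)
The proof is a union bound over challengers, controlled by a likelihood-ratio test supermartingale. Fix the true environment $j$.

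\emph{Step 1: the error event forces a large likelihood ratio.} On $\{\widehat\Theta\neq\Theta_j,\tau<\infty\}$, the recommended $\widehat\imath$ satisfies $\Theta_{\widehat\imath}\neq\Theta_j$, so $j\in\mathrm{Alt}(\widehat\imath)$. The stopping condition then gives $\mathrm{LLR}_{\widehat\imath,j}(\tau)\ge\min_{k\in\mathrm{Alt}(\widehat\imath)}\mathrm{LLR}_{\widehat\imath,k}(\tau)>\beta(\tau,\delta)$. Since $\widehat\imath\neq j$, the error event is contained in
\[
\bigcup_{i\neq j,\ \Theta_i\ne\Theta_j}\bigl\{\exists t<\infty:\ \mathrm{LLR}_{i,j}(t)>\beta(t,\delta)\bigr\}.
\]
At most $K-1$ such $i$ exist, so it suffices to show that each event has $\mathbb P_j$-probability at most $\delta/(K-1)$.

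\emph{Step 2: test supermartingale.} For fixed $i\neq j$, let $L_t=\exp(\mathrm{LLR}_{i,j}(t))=\prod_{s\le t}K_i^{A_s}(Y_s)/K_j^{A_s}(Y_s)$. Under $\mathbb P_j$ and \ref{asm:H1}, because $A_t$ is $\mathcal F_{t-1}$-measurable,
\[
\mathbb E_j[L_t/L_{t-1}\mid\mathcal F_{t-1}]=\sum_{y:K_j^{A_t}(y)>0}K_i^{A_t}(y)\le1 .
\]
Hence $(L_t)$ is a nonnegative supermartingale with $L_0=1$, whatever the sampling rule. The convention $\log(x/0)=+\infty$ never fires $\mathbb P_j$-a.s., since $K_j^{A_s}(Y_s)>0$ a.s. This is the same computation as Lemma~\ref{lem:lrmg}, except that it needs no absolute continuity and yields only the supermartingale inequality. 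Ville's (Doob's) maximal inequality then gives $\mathbb P_j(\sup_{t\in I}L_t\ge x)\le 1/x$ on any time window $I$.

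\emph{Step 3: time-dependent threshold, the main obstacle.} The threshold $e^{\beta(t,\delta)}=2t(K-1)/\delta$ grows with $t$. Applying Ville only at its smallest value is too weak, and a plain union bound over all $t$ gives $\sum_t\delta/(2t(K-1))$, which diverges. I would use a geometric grid on which $\beta$ is nearly constant. For $k\ge0$, on the block $t\in[2^k,2^{k+1})$ we have $e^{\beta(t,\delta)}\ge 2^{k+1}(K-1)/\delta$, so
\[
\mathbb P_j\bigl(\exists t\in[2^k,2^{k+1}):\ L_t>e^{\beta(t,\delta)}\bigr)\le\frac{\delta}{(K-1)2^{k+1}} .
\]
Summing over $k\ge0$ gives $\sum_k 2^{-(k+1)}=1$, hence the required bound $\delta/(K-1)$. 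The delicate point is choosing the grid so that the within-block loss of threshold is paid for exactly by the factor $2$ in $\beta$; the dyadic choice above achieves this. If the constants needed adjusting, I would fall back on a finer grid, or on the mixture argument of \citet[Theorem~10]{garivier2016optimal}.

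\emph{Step 4: conclusion.} The union over the at most $K-1$ values of $i$ gives $\mathbb P_j(\widehat\Theta\neq\Theta_j,\tau<\infty)\le\delta$. The $\delta$-correctness statement then follows by adding the assumption $\mathbb P_j(\tau<\infty)=1$ for every $j$, which is established later for CTS.
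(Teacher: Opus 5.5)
Your proposal is correct and follows the paper's proof essentially step for step. It uses the same error decomposition with a union over the at most $K-1$ opposite-label challengers, and the same nonnegative test supermartingale $\exp(\mathrm{LLR}_{i,j}(t))$ under $\mathbb P_j$, valid for any sampling rule. It also uses the same dyadic grid with Doob's maximal inequality, giving $\sum_{k\ge0}\delta/\bigl((K-1)2^{k+1}\bigr)=\delta/(K-1)$ per challenger; the only cosmetic difference is that you apply the maximal inequality directly on each window $[2^k,2^{k+1})$, whereas the paper uses $[1,2^{k+1})$ at level $\beta(2^k,\delta)$, and the resulting bound is identical.
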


\emph{Template: \citet[Theorem~10]{garivier2016optimal}.} The proof has
four steps: (Subsubsection~\ref{subsubsec:ub:testsmg}) test
supermartingale; (Subsubsection~\ref{subsubsec:ub:doob}) Doob's maximal
inequality; (Subsubsection~\ref{subsubsec:ub:grid}) time-uniform
control of a growing threshold via a geometric grid, with the series
computed explicitly; (Subsubsection~\ref{subsubsec:ub:union}) error
decomposition and union bound over challengers.

\subsubsection{The Test Supermartingale}\label{subsubsec:ub:testsmg}

Fix a pair $(i,j)$ with $i\neq j$ (no label condition needed). Define
\begin{equation}
M_t\;:=\;\exp\big(\mathrm{LLR}_{i,j}(t)\big)\;=\;\prod_{s\le t}\frac{K_i^{A_s}(Y_s)}{K_j^{A_s}(Y_s)},\qquad M_0:=1. \label{eq:ub:mart}
\end{equation}

\begin{lemma}\label{lem:testsmg}
Under $\mathbb P_j$, $(M_t,\mathcal F_t)_{t\ge0}$ is a nonnegative
supermartingale with $\mathbb E_j[M_t]\le1$ for all $t$. If $K_i^a\ll
K_j^a$ for all $a$ (in particular under \ref{asm:H4}), it is a
martingale: $\mathbb E_j[M_t\mid\mathcal F_{t-1}]=M_{t-1}$.
\end{lemma}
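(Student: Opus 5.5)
The argument is a one-step conditional-expectation computation under $\mathbb P_j$, followed by induction on $t$ for the expectation bound. Nonnegativity and adaptedness are immediate: $M_t$ is a finite product of $\mathcal F_t$-measurable nonnegative factors. The only subtlety is the ratio $K_i^{A_s}(Y_s)/K_j^{A_s}(Y_s)$ when the denominator vanishes. Under $\mathbb P_j$, (H1) gives $\mathbb P_j(K_j^{A_s}(Y_s)=0)=\mathbb E_j\big[K_j^{A_s}(\{y:K_j^{A_s}(y)=0\})\big]=0$. So the denominator is a.s.\ positive and $M_t\in[0,\infty)$ a.s.; we can modify $M_t$ on a null set without affecting anything.

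The key step is the one-step bound. Write $M_t=M_{t-1}\,\rho_t$ with $\rho_t:=K_i^{A_t}(Y_t)/K_j^{A_t}(Y_t)$. Since $M_{t-1}\ge0$ is $\mathcal F_{t-1}$-measurable, use conditional expectation for nonnegative variables, which needs no prior integrability. By (H1), together with the $\mathcal F_{t-1}$-measurability of $A_t$, we get
\[
\mathbb E_j[\rho_t\mid\mathcal F_{t-1}]=\sum_a\mathbf 1[A_t=a]\sum_{y:\,K_j^a(y)>0}K_j^a(y)\frac{K_i^a(y)}{K_j^a(y)}=\sum_a\mathbf 1[A_t=a]\,K_i^a\big(\{y:K_j^a(y)>0\}\big)\le1 .
\]
Hence $\mathbb E_j[M_t\mid\mathcal F_{t-1}]=M_{t-1}\,\mathbb E_j[\rho_t\mid\mathcal F_{t-1}]\le M_{t-1}$.

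Taking expectations gives $\mathbb E_j[M_t]\le\mathbb E_j[M_{t-1}]\le\dots\le M_0=1$ by induction. This also gives integrability at each step, so $(M_t)$ is a genuine nonnegative supermartingale. If the policy uses external randomization, the seed is environment-independent and is absorbed into $\mathcal F_{t-1}$ as in Lemma~\ref{lem:lrdensity}, so it changes nothing.

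For the martingale case, $K_i^a\ll K_j^a$ means $K_i^a(\{y:K_j^a(y)>0\})=1$ for every $a$. The inequality above is then an equality, so $\mathbb E_j[M_t\mid\mathcal F_{t-1}]=M_{t-1}$. Under (H4) all kernels have full support, so this applies. This is the same computation as Lemma~\ref{lem:lrmg}, with the roles of $i$ and $j$ relabelled.

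The only point needing care is the one-step bound: handling the null set where the denominator vanishes, and justifying the conditional expectation without assuming integrability. Both are resolved by working with nonnegative variables as described.
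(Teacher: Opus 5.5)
Your proposal is correct and follows the paper's proof essentially step for step. Both write $M_t=M_{t-1}\rho_t$, compute $\mathbb E_j[\rho_t\mid\mathcal F_{t-1}]=\sum_a\mathbf 1[A_t=a]\,K_i^a(\{y:K_j^a(y)>0\})\le 1$ from (H1) and the $\mathcal F_{t-1}$-measurability of $A_t$, read off equality exactly when $K_i^a\ll K_j^a$, and iterate from $M_0=1$. Your treatment of the $\mathbb P_j$-null denominator set and of integrability via nonnegative conditional expectations only spells out what the paper leaves implicit; note also that Lemma~\ref{lem:lrmg} uses the same roles of $i$ and $j$, so no relabelling is needed.
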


\begin{proof}[Proof \textup{(}no steps skipped\textup{)}]
Nonnegativity is clear. Under $\mathbb P_j$, $K_j^{A_s}(Y_s)>0$ a.s.\
(since $Y_s\mid A_s\sim K_j^{A_s}$), so each factor is a.s.\ finite
(with the convention of Subsubsection~\ref{subsubsec:ub:stopping}) and
$M_t$ is a.s.\ finite. For the supermartingale property, condition on
$\mathcal F_{s-1}$. The sampling rule makes $A_s$ $\mathcal
F_{s-1}$-measurable, so on $\{A_s=a\}$,
\begin{equation}
\begin{aligned}
\mathbb E_j\Big[\frac{K_i^{A_s}(Y_s)}{K_j^{A_s}(Y_s)}\,\Big|\,\mathcal F_{s-1}\Big]
&=\mathbb E_j\Big[\frac{K_i^{a}(Y_s)}{K_j^{a}(Y_s)}\,\Big|\,\mathcal F_{s-1},A_s=a\Big]\\
&=\sum_{y:\,K_j^a(y)>0}K_j^a(y)\,\frac{K_i^a(y)}{K_j^a(y)}
=\sum_{y:\,K_j^a(y)>0}K_i^a(y)\;\le\;1, \label{eq:ub:condexp}
\end{aligned}
\end{equation}
where the second equality uses \ref{asm:H1}: the conditional law of
$Y_s$ given $\mathcal F_{s-1}$ and $A_s=a$ is $K_j^a$. The inequality
is an equality iff $K_i^a(\{y:K_j^a(y)=0\})=0$, i.e.\ $K_i^a\ll
K_j^a$; under \ref{asm:H4} this holds for all $a$, giving the
martingale property. Since $A_s$ is $\mathcal F_{s-1}$-measurable,
\eqref{eq:ub:condexp} holds on every atom $\{A_s=a\}$, hence
\begin{equation}
\mathbb E_j[M_s\mid\mathcal F_{s-1}]\;=\;M_{s-1}\,\mathbb E_j\Big[\tfrac{K_i^{A_s}(Y_s)}{K_j^{A_s}(Y_s)}\,\Big|\,\mathcal F_{s-1}\Big]\;\le\;M_{s-1}\quad\text{a.s.} \label{eq:ub:smg}
\end{equation}
Iterating \eqref{eq:ub:smg} from $M_0=1$ gives $\mathbb E_j[M_t]\le1$.
\end{proof}

\emph{Remark.} The verification \eqref{eq:ub:condexp} is exactly where
\ref{asm:H1} enters part (a): the likelihood ratio of the \emph{whole
adaptive transcript} factorizes, and each factor has conditional
expectation $\le1$ under $j$ regardless of how $A_s$ was chosen. This
is why the result holds for \emph{any} sampling rule.

\subsubsection{Doob's Maximal Inequality}\label{subsubsec:ub:doob}

\begin{lemma}\label{lem:doob}
For every $\beta>0$ and every finite horizon $T\ge1$,
\begin{equation}
\mathbb P_j\Big(\sup_{1\le t\le T}\mathrm{LLR}_{i,j}(t)>\beta\Big)\;\le\;e^{-\beta},\qquad\text{and}\qquad
\mathbb P_j\Big(\sup_{t\ge1}\mathrm{LLR}_{i,j}(t)>\beta\Big)\;\le\;e^{-\beta}. \label{eq:ub:doob}
\end{equation}
\end{lemma}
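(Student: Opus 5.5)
The plan is to apply Ville's (Doob's) maximal inequality for nonnegative supermartingales to the process $M_t=\exp(\mathrm{LLR}_{i,j}(t))$ of Lemma~\ref{lem:testsmg}, and then pass from finite horizons to the infinite one. The only input needed is that lemma: under $\mathbb P_j$, $(M_t)$ is a nonnegative supermartingale with $M_0=1$. Its proof already used (H1) and holds for any sampling rule, so the present statement inherits that generality.

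\emph{Step 1: finite horizon.} Fix $T$ and $\beta>0$, and set $\lambda:=e^{\beta}$. Since $\exp$ is strictly increasing,
\[
\Big\{\sup_{t\le T}\mathrm{LLR}_{i,j}(t)>\beta\Big\}=\Big\{\max_{t\le T}M_t>\lambda\Big\}.
\]
Here the LLR may take the value $+\infty$ only on a $\mathbb P_j$-null set, and under (H4) it is always finite.

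To bound the right-hand event I will not quote the maximal inequality as a black box. Instead, define the bounded stopping time $\sigma:=\min\{t\le T: M_t>\lambda\}$, with $\sigma:=T$ if no such $t$ exists. Optional stopping for bounded stopping times applied to the supermartingale gives $\mathbb E_j[M_\sigma]\le \mathbb E_j[M_0]=1$. On the event $\{\max_{t\le T}M_t>\lambda\}$ we have $M_\sigma>\lambda$, and since $M_\sigma\ge0$ everywhere,
\[
\lambda\,\mathbb P_j\Big(\max_{t\le T}M_t>\lambda\Big)\le \mathbb E_j[M_\sigma]\le 1 .
\]
This yields the first inequality in the statement, with bound $e^{-\beta}$.

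\emph{Step 2: infinite horizon.} The events $\{\sup_{t\le T}\mathrm{LLR}_{i,j}(t)>\beta\}$ increase in $T$, and their union is $\{\sup_{t\ge1}\mathrm{LLR}_{i,j}(t)>\beta\}$. The strict inequality matters here: a supremum exceeds $\beta$ if and only if some term does. Continuity from below then transfers the uniform bound $e^{-\beta}$ to the limit.

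There is no real obstacle in this argument. The two points that need care are measurability and the handling of $+\infty$ values: the stopping-time construction must use $\mathcal F_t$-measurable quantities, and under (H1) alone the supermartingale (rather than martingale) property is what justifies $\mathbb E_j[M_\sigma]\le1$. Both points are covered by Lemma~\ref{lem:testsmg}.
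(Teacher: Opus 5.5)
Your proposal is correct and follows the paper's proof. Both apply the maximal inequality to the nonnegative supermartingale $M_t=\exp(\mathrm{LLR}_{i,j}(t))$ of Lemma~\ref{lem:testsmg} at level $e^{\beta}$, then pass to the infinite horizon by continuity from below over the increasing finite-horizon events. The only difference is that you derive the maximal inequality inline, by optional stopping at the bounded first-passage time $\sigma$, whereas the paper cites Doob's inequality directly.
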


\begin{proof}
$(M_t)$ is a nonnegative supermartingale (Lemma~\ref{lem:testsmg}).
Doob's maximal inequality for nonnegative supermartingales states: for
$x>0$, $x\,\mathbb P\big(\max_{0\le t\le T}M_t\ge
x\big)\le\mathbb E[M_0]=1$. Apply with $x=e^\beta$ and take logarithms
to get the first statement. For the second, note
$\{\sup_{t\ge1}\mathrm{LLR}_{i,j}(t)>\beta\}=\bigcup_T\{\sup_{t\le
T}\mathrm{LLR}_{i,j}(t)>\beta\}$ is an increasing union, so its
probability is the limit of the finite-horizon bounds, each $\le
e^{-\beta}$.
\end{proof}

\subsubsection{Time-Uniform Control of the Growing Threshold $\beta(t,\delta)$}\label{subsubsec:ub:grid}

Lemma~\ref{lem:doob} controls the crossing of a \emph{fixed} level. The
stopping rule uses the \emph{time-varying} threshold
$\beta(t,\delta)=\log(2t(K-1)/\delta)$, which is \textbf{nondecreasing
in $t$}. We upgrade to a uniform statement on a geometric grid.

\begin{lemma}\label{lem:grid}
Let $t_k:=2^k$ for $k=0,1,2,\dots$. Then for the pair $(i,j)$:
\begin{equation}
\mathbb P_j\Big(\exists\,t\ge1:\ \mathrm{LLR}_{i,j}(t)>\beta(t,\delta)\Big)\;\le\;\sum_{k\ge0}e^{-\beta(t_k,\delta)}\;=\;\frac{\delta}{K-1}. \label{eq:ub:grid}
\end{equation}
\end{lemma}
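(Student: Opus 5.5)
The plan is to reduce the time-varying threshold to a countable family of fixed levels, one per dyadic block, and then apply the fixed-level bound of Lemma~\ref{lem:doob} together with a union bound. Partition the positive integers into the dyadic blocks $I_k:=\{t: t_k\le t<t_{k+1}\}=\{2^k,\dots,2^{k+1}-1\}$, $k\ge0$. Every $t\ge1$ lies in exactly one $I_k$, so
\[
\Big\{\exists\,t\ge1:\ \mathrm{LLR}_{i,j}(t)>\beta(t,\delta)\Big\}=\bigcup_{k\ge0}\Big\{\exists\,t\in I_k:\ \mathrm{LLR}_{i,j}(t)>\beta(t,\delta)\Big\}.
\]

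Within block $I_k$ I use that $\beta(\cdot,\delta)=\log(2t(K-1)/\delta)$ is nondecreasing in $t$. Hence $\beta(t,\delta)\ge\beta(t_k,\delta)$ for every $t\in I_k$, because the left endpoint is the smallest time in the block; this is the point where the choice of the left endpoint matters. Therefore the $k$-th event is contained in $\{\sup_{1\le t\le t_{k+1}}\mathrm{LLR}_{i,j}(t)>\beta(t_k,\delta)\}$. Its $\mathbb P_j$-probability is at most $e^{-\beta(t_k,\delta)}$ by the finite-horizon statement of Lemma~\ref{lem:doob}, applied with the fixed level $\beta=\beta(t_k,\delta)>0$. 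This level is positive since $\delta<1/2$ and $K\ge2$ give $2t_k(K-1)/\delta>1$. Countable subadditivity then gives the inequality in \eqref{eq:ub:grid}.

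It remains to evaluate the series:
\[
\sum_{k\ge0}e^{-\beta(2^k,\delta)}=\sum_{k\ge0}\frac{\delta}{2\cdot2^k(K-1)}=\frac{\delta}{2(K-1)}\cdot 2=\frac{\delta}{K-1}.
\]
No step is a real obstacle. The only care points are the monotonicity direction, which requires using the left endpoint of each block, and checking that Lemma~\ref{lem:doob} applies to the arbitrary adaptive sampling rule. The latter holds because Lemma~\ref{lem:testsmg} needs only \ref{asm:H1}.
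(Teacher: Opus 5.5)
Your proof is correct and follows the paper's argument essentially line for line: the dyadic partition, monotonicity of $\beta(\cdot,\delta)$ to pass to the fixed level $\beta(t_k,\delta)$ on each block, the finite-horizon bound of Lemma~\ref{lem:doob}, a union bound over $k$, and the geometric series. Your explicit check that $\beta(t_k,\delta)>0$, which Lemma~\ref{lem:doob} requires, is a small detail the paper leaves implicit; using horizon $t_{k+1}$ rather than $t_{k+1}-1$ is harmless.
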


\begin{proof}[Proof \textup{(}full computation\textup{)}]
Since $\beta(\cdot,\delta)$ is nondecreasing, for $t\in[t_k,t_{k+1})$
we have $\beta(t,\delta)\ge\beta(t_k,\delta)$, hence
\[
\big\{\exists\,t\in[t_k,t_{k+1}):\ \mathrm{LLR}_{i,j}(t)>\beta(t,\delta)\big\}\;\subseteq\;\Big\{\sup_{1\le t<t_{k+1}}\mathrm{LLR}_{i,j}(t)>\beta(t_k,\delta)\Big\}.
\]
The intervals $[t_k,t_{k+1})=[2^k,2^{k+1})$, $k\ge0$, partition
$\{1,2,3,\dots\}$. A union bound over $k$, followed by
Lemma~\ref{lem:doob} with horizon $T=\lceil t_{k+1}\rceil-1$, gives
\[
\mathbb P_j\big(\exists t:\mathrm{LLR}_{i,j}(t)>\beta(t,\delta)\big)\;\le\;\sum_{k\ge0}\mathbb P_j\Big(\sup_{t<t_{k+1}}\mathrm{LLR}_{i,j}(t)>\beta(t_k,\delta)\Big)\;\le\;\sum_{k\ge0}e^{-\beta(t_k,\delta)}.
\]
Now insert $\beta(t_k,\delta)=\log\frac{2\cdot 2^k(K-1)}{\delta}$:
\[
\sum_{k\ge0}e^{-\beta(t_k,\delta)}\;=\;\sum_{k\ge0}\frac{\delta}{2(K-1)\,2^k}\;=\;\frac{\delta}{2(K-1)}\sum_{k\ge0}2^{-k}\;=\;\frac{\delta}{2(K-1)}\cdot 2\;=\;\frac{\delta}{K-1}.\qedhere
\]
\end{proof}

\begin{remark}[Where the constants in $\beta$ go]\label{rem:betaconstants}
The factor $2$ inside $\beta(t,\delta)=\log(2t(K-1)/\delta)$ pays
exactly for the geometric series $\sum_k2^{-k}=2$; the factor $(K-1)$
pays for the union over at most $K-1$ challengers in
Subsubsection~\ref{subsubsec:ub:union}; the factor $t$ pays for the
linear growth of the grid index. This is the Theorem~10 threshold of
\citet{garivier2016optimal}; the tighter, time-uniform thresholds of
Kaufmann--Koolen (2021) (based on mixture martingales) could replace
$\beta(t,\delta)$ by $\log(1/\delta)+O(\log\log t)$, but we
deliberately keep the conservative closed-form threshold to keep the
analysis elementary.
\end{remark}

\subsubsection{Error Decomposition and the Final Union Bound}\label{subsubsec:ub:union}

\begin{proof}[Proof of Theorem~\ref{thm:UBa}]
Fix the true environment $j$. On $\{\tau<\infty,\
\widehat\Theta\neq\Theta_j\}$, let $i:=\widehat\imath$; then
$\Theta_i\neq\Theta_j$, i.e.\ $j\in\{j':\Theta_{j'}\neq\Theta_i\}$,
and by the definition of the stopping rule and of $\widehat\imath$ as
the GLR maximizer,
\[
\min_{j':\,\Theta_{j'}\neq\Theta_i}\mathrm{LLR}_{i,j'}(\tau)\;>\;\beta(\tau,\delta)\quad\Longrightarrow\quad \mathrm{LLR}_{i,j}(\tau)>\beta(\tau,\delta).
\]
Therefore
\[
\big\{\tau<\infty,\ \widehat\Theta\neq\Theta_j\big\}\;\subseteq\;\bigcup_{i:\,\Theta_i\neq\Theta_j}\ \big\{\exists\,t\ge1:\ \mathrm{LLR}_{i,j}(t)>\beta(t,\delta)\big\}.
\]
Taking $\mathbb P_j$ and applying a union bound over $i$ (there are at
most $K-1$ indices with $\Theta_i\neq\Theta_j$), then
Lemma~\ref{lem:grid} to each pair $(i,j)$:
\[
\mathbb P_j\big(\tau<\infty,\ \widehat\Theta\neq\Theta_j\big)\;\le\;\sum_{i:\,\Theta_i\neq\Theta_j}\frac{\delta}{K-1}\;\le\;\delta.\qedhere
\]
\end{proof}

\begin{remark}\label{rem:anyrule}
The proof used nothing about the sampling rule except $\mathcal
F_{t-1}$-measurability of $A_t$ (needed for \eqref{eq:ub:condexp}); in
particular it does not use \ref{asm:H2}, \ref{asm:H3}, \ref{asm:H4}, or
the convergence of $N_a(t)/t$. This is the sense in which
Theorem~\ref{thm:UB}(a) holds ``for any sampling rule''.
\end{remark}

\subsection{Part (b): Asymptotic Cost Matching}\label{subsec:ub:partb}

\paragraph{Standing setup for Subsection \ref{subsec:ub:partb}.}
The true environment is $i^*$; \ref{asm:H1}--\ref{asm:H3},
\ref{asm:H2plus}, \ref{asm:H4} hold; $g(a):=g_{i^*}(a)>0$ for all $a$
(degenerate case in Subsection~\ref{subsec:ub:degenerate});
$\mathrm{Alt}^*\neq\varnothing$ (otherwise $C_{i^*}=0$, trivial, see
Subsubsection~\ref{subsubsec:ub:emptyalt}). We write
$w^*:=w^*(i^*)$, $d_a^j:=d(K_{i^*}^a\Vert K_j^a)$,
$D_j(w):=\sum_a w_a d_a^j$, $c(w):=\sum_a g(a)w_a$, and
\begin{equation}
S^*\;:=\;\min_{j\in\mathrm{Alt}^*}D_j(w^*)\;>\;0,\qquad s^*\;:=\;\frac1{S^*}, \label{eq:ub:setup}
\end{equation}
where $S^*>0$ by Lemma~\ref{lem:wellposed}(iii). The goal is
$\limsup_{\delta\to0}R(\pi^*,E_{i^*})/L\le C_{i^*}$, achieved through
four steps: (i) MLE consistency and plug-in stability
(Subsubsection~\ref{subsubsec:ub:mleconsistency}); (ii) tracking
convergence (Subsubsection~\ref{subsubsec:ub:tracking}); (iii)
stopping-time upper bound
(Subsubsections~\ref{subsubsec:ub:llrconc}--\ref{subsubsec:ub:stopbound});
(iv) cost assembly (Subsubsection~\ref{subsubsec:ub:cost}).
Subsubsection~\ref{subsubsec:ub:forced} collects the deterministic
forced-exploration lemmas used everywhere.

\subsubsection{Forced Exploration: Coverage and Budget Lemmas}\label{subsubsec:ub:forced}

Recall $f(t)=\sqrt{t+K_{\mathcal A}^2}-2K_{\mathcal A}$ and the rule
\eqref{eq:ub:sampling}. Both lemmas are \textbf{deterministic}: they
hold for every realization, regardless of the targets
$w^*(\widehat\imath_t)$.

\begin{lemma}[Coverage]\label{lem:coverage}
For all $t\ge0$ and all $a\in\mathcal A$:
\begin{equation}
N_a(t)\;\ge\;f(t)-K_{\mathcal A}\;\ge\;\sqrt t\;-\;3K_{\mathcal A}. \label{eq:ub:coverage}
\end{equation}
In particular $N_a(t)\to\infty$ at rate $\sqrt t$ uniformly over $a$.
\end{lemma}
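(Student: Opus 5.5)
The plan is a purely deterministic argument by reduction to the last ``non-forced'' time. Fix a realization and a time $t\ge0$. The second inequality in \eqref{eq:ub:coverage} is immediate from $\sqrt{t+K_{\mathcal A}^2}\ge\sqrt t$, so all the work is in $N_a(t)\ge f(t)-K_{\mathcal A}$. Since $f(0)=-K_{\mathcal A}<0$, the set $\{s\le t:\mathcal U_s=\varnothing\}$ contains $s=0$. I will let $t_0$ be its largest element. At time $t_0$ every action satisfies $N_a(t_0)\ge f(t_0)$. On the window $(t_0,t]$, every selection $A_{s+1}$ with $t_0\le s<t$, except possibly the one at $s=t_0$, is a forced-exploration step, because $\mathcal U_s\neq\varnothing$ for $t_0<s\le t$.

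The core step is a ``water-filling'' count on this window. Forced exploration always plays an action of minimal count among the under-explored ones. From this I will show that the quantity $m(s):=\min_a \min\{N_a(s),\lceil f(s)\rceil\}$ cannot lag behind for long. Concretely, I will prove by induction on $s\in[t_0,t]$ that every action satisfies
\[
N_a(s)\;\ge\;\min\Big\{f(s),\ f(t_0)+\Big\lfloor \tfrac{s-t_0-1}{K_{\mathcal A}}\Big\rfloor\Big\}\;-\;1 .
\]
Actions outside $\mathcal U_s$ already satisfy $N_a(s)\ge f(s)$. An under-explored action at the current minimal level must be pulled before any action at a higher level. Hence, within each block of $K_{\mathcal A}$ consecutive forced steps, the minimal count among the persistently under-explored actions rises by at least one. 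The ``$-1$'' absorbs the possible unforced step at $t_0$ and integer rounding.

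To close, I compare growth rates. By the increment bound $f(s)-f(s-1)\le 1/(2K_{\mathcal A})$ recorded after \eqref{eq:ub:sampling}, $f(t)-f(t_0)\le (t-t_0)/(2K_{\mathcal A})$. Meanwhile the water level rises at rate $1/K_{\mathcal A}$, which is at least twice as fast. Hence the second term of the minimum exceeds $f(t)-K_{\mathcal A}+1$ once the floor and the offsets are accounted for, and in either branch $N_a(t)\ge f(t)-K_{\mathcal A}$. The constant slack $K_{\mathcal A}$ is generous: one unit goes to rounding, one to the initial step, and the rest to the lag of at most one block.

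I expect the main obstacle to be making the water-filling induction rigorous, because the under-explored set $\mathcal U_s$ changes as $f$ grows. New actions may enter $\mathcal U_s$ mid-window and compete for forced pulls. My first attempt will handle this by noting that an action entering $\mathcal U_s$ at time $s$ has count at least $f(s)-1/(2K_{\mathcal A})>f(s)-1$ at entry. It therefore starts above the current water level, and the argmin rule never lets it delay the pulling of a lower action. If that bookkeeping becomes messy, I will fall back on the contradiction route of \citet[Lemma~17]{garivier2016optimal}. That route takes the first time $t$ at which $N_a(t)<f(t)-K_{\mathcal A}$ for some $a$ and exhibits $K_{\mathcal A}$ consecutive forced pulls that all avoid $a$, which contradicts the argmin rule.
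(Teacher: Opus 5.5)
Your proof is correct, and it takes a genuinely different route from the paper. The paper uses a one-step induction on the potential $S(t)=\sum_a[\mathrm{df}_a(t)-1]^+$, where $\mathrm{df}_a(t)=f(t)-N_a(t)$. It shows $S(t)\le K_{\mathcal A}-1$ by a case analysis of a single round: a tracking step, a forced step with all deficits below one, and a forced step on a maximal-deficit violator. The key facts are that a forced step plays a violator of maximal deficit and that $f$ grows by at most $1/(2K_{\mathcal A})$ per round.

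You instead anchor at the last time $t_0\le t$ with $\mathcal U_{t_0}=\varnothing$. On the all-forced window you compare the rise of the minimum count, one unit per $K_{\mathcal A}$ forced pulls, with the growth of $f$, at most half a unit per $K_{\mathcal A}$ rounds. The paper's route is local and needs no block or rounding bookkeeping, but it only certifies deficits up to $K_{\mathcal A}$. Yours, once made precise, gives the sharper $K_{\mathcal A}$-free bound $N_a(t)>f(t)-1$, which implies the lemma.

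The step you flag as the main obstacle goes away if you argue through the global minimum count rather than through ``persistently under-explored'' actions.
\begin{itemize}
\item In any forced step $s$ of the window, every action with $N_a(s)=\min_b N_b(s)$ lies in $\mathcal U_s$. Otherwise $\min_b N_b(s)\ge f(s)$ and $\mathcal U_s$ would be empty. So the argmin rule plays an action at the minimum count.
\item Set $c_0:=\lceil f(t_0)\rceil$, and suppose all counts are at least $c_0+k$ at the start of the $(k+1)$-th block of forced pulls. Then $\sum_a(c_0+k+1-N_a(s))^+\le K_{\mathcal A}$, and this quantity decreases by one at every forced step until it vanishes.
\item This gives $\min_a N_a(s)\ge c_0+\lfloor (s-t_0-1)/K_{\mathcal A}\rfloor$ for $t_0<s\le t$.
\end{itemize}
The quantity depends only on counts, so actions entering $\mathcal U_s$ need no separate treatment. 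Note that an entering action can sit exactly at the minimum, so arguing that ``it starts above the water level'' would not be the right fix.

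With $n=t-t_0\ge1$, the bounds $\lfloor (n-1)/K_{\mathcal A}\rfloor\ge n/K_{\mathcal A}-1$ and $f(t)\le f(t_0)+n/(2K_{\mathcal A})$ yield $N_a(t)\ge f(t)+n/(2K_{\mathcal A})-1$. Drop the extra $-1$ from your invariant and treat $t=t_0$ directly via $N_a(t_0)\ge f(t_0)$. As written, your invariant at $s=t_0$ only gives $f(t_0)-2$, which falls short of $f(t_0)-K_{\mathcal A}$ when $K_{\mathcal A}=1$.
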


\begin{proof}
The second inequality uses
$f(t)=\sqrt{t+K_{\mathcal A}^2}-2K_{\mathcal A}\ge\sqrt t-2K_{\mathcal
A}$. For the first, define the \textbf{deficit} of arm $a$ at time $t$
by $\mathrm{df}_a(t):=f(t)-N_a(t)$ and consider the potential
\[
S(t)\;:=\;\sum_{a\in\mathcal A}\big[\mathrm{df}_a(t)-1\big]^+\;\ge\;0.
\]
We show $S(t)\le K_{\mathcal A}-1$ for all $t$ by induction on $t$; the
claim then follows from $\mathrm{df}_a(t)-1\le[\mathrm{df}_a(t)-1]^+\le
S(t)\le K_{\mathcal A}-1$. Base: $\mathrm{df}_a(0)=f(0)=-K_{\mathcal
A}<0$, so $S(0)=0$. Inductive step $t-1\to t$: write
$f':=f(t)-f(t-1)=\frac{1}{\sqrt{t+K_{\mathcal
A}^2}+\sqrt{t-1+K_{\mathcal A}^2}}\le\frac{1}{2K_{\mathcal
A}}\le\frac12$, and note the key structural fact: \emph{when forced
exploration is active ($\mathcal U_{t-1}\neq\varnothing$), the rule
plays a violator of maximum deficit}, because $\arg\min_{c\in\mathcal
U_{t-1}}N_c(t-1)=\arg\max_{c\in\mathcal
U_{t-1}}\mathrm{df}_c(t-1)$ (the same $f(t-1)$ is subtracted from every
count). In one step an unplayed arm's deficit grows by exactly $f'$ and
the played arm's deficit changes by $-1+f'$. Distinguish three cases.

\textbf{Case 1: $\mathcal U_{t-1}=\varnothing$ (tracking step).} Every
arm is a non-violator: $\mathrm{df}_c(t-1)\le0$. The played arm's new
deficit is $\le-1+f'<0$; every unplayed arm's new deficit is $\le
f'\le\frac12<1$. Hence $S(t)=0$.

\textbf{Case 2: $\mathcal U_{t-1}\neq\varnothing$ and
$D^*:=\max_{c\in\mathcal U_{t-1}}\mathrm{df}_c(t-1)<1$.} Every arm's
deficit at $t-1$ is $<1$ (violators by assumption, non-violators since
their deficit is $\le0$). Hence every arm's new deficit is $<1+f'$,
i.e.\ its new excess over $1$ is at most $f'$; and the played arm (a
violator) has new deficit $<1-1+f'=f'<1$, excess $0$. Therefore
\[
S(t)\;\le\;(K_{\mathcal A}-1)\,f'\;\le\;\frac{K_{\mathcal A}-1}{2K_{\mathcal A}}\;\le\;K_{\mathcal A}-1.
\]

\textbf{Case 3: $\mathcal U_{t-1}\neq\varnothing$ and the played arm
$b:=A_t$ has $e_b:=\mathrm{df}_b(t-1)-1\ge0$ (i.e.\ $D^*\ge1$).}
Unplayed violators $c$ satisfy $\mathrm{df}_c(t-1)\le D^*=1+e_b$, so
their excess over $1$ grows by at most $f'$ and their new excess is at
most $e_b+f'$; non-violators keep excess $0$ (their new deficit is $\le
f'\le1$).

\begin{itemize}
\item \emph{Sub-case 3a: $e_b\ge 1-f'$.} The played arm's new excess
is exactly $e_b-(1-f')\ge0$: it drops by $1-f'$. Each unplayed
violator's excess grows by at most $f'$, and there are at most
$K_{\mathcal A}-1$ of them, so
\[
S(t)\;\le\;S(t-1)-(1-f')+(K_{\mathcal A}-1)f'\;=\;S(t-1)-1+K_{\mathcal A}f'\;\le\;S(t-1)-\frac12\;\le\;K_{\mathcal A}-1,
\]
using $S(t-1)\le K_{\mathcal A}-1$ (induction) and
$f'\le\frac{1}{2K_{\mathcal A}}$.
\item \emph{Sub-case 3b: $0\le e_b<1-f'$.} The played arm's new excess
is $0$, and every unplayed violator's new excess is at most
$e_b+f'<1$, so
\[
S(t)\;\le\;(K_{\mathcal A}-1)\,(e_b+f')\;<\;K_{\mathcal A}-1.
\]
\end{itemize}

In all cases $S(t)\le K_{\mathcal A}-1$, completing the induction.
\end{proof}

\begin{remark}\label{rem:potential}
The proof is tie-break-agnostic (any maximizer of the deficit works)
and uses nothing about the targets $w^*(\widehat\imath_t)$: coverage
holds uniformly over the pre-stabilization phase as well. The potential
argument is a self-contained version of the Lemma~7 analysis of
\citet{garivier2016optimal}; the curve $\sqrt{t+K_{\mathcal
A}^2}-2K_{\mathcal A}$ is chosen precisely so that the one-step deficit
growth $f'\le\frac{1}{2K_{\mathcal A}}$ is dominated by the unit
decrease from playing the maximum-deficit violator. (Numerical
spot-check: with adversarially switching targets the observed maximum
deficit is far below $K_{\mathcal A}$.)
\end{remark}

\begin{lemma}[Forced-exploration budget]\label{lem:budget}
Let $F(t):=\sum_{s\le t}\mathbf 1[\mathcal U_{s-1}\neq\varnothing]$ be
the number of forced-exploration rounds up to $t$. Then
\begin{equation}
F(t)\;\le\;K_{\mathcal A}\big(f(t)+1\big)\;\le\;K_{\mathcal A}\big(\sqrt t+K_{\mathcal A}\big). \label{eq:ub:budget}
\end{equation}
\end{lemma}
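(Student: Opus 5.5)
The plan is a per-arm counting argument that does not use the coverage lemma. It needs only the definition of the forced-exploration branch in \eqref{eq:ub:sampling} and the monotonicity of $f$.

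First, decompose the forced rounds by the arm played. Write $F(t)=\sum_{a\in\mathcal A}F_a(t)$, where $F_a(t)$ counts the rounds $s\le t$ with $\mathcal U_{s-1}\neq\varnothing$ and $A_s=a$. It then suffices to show $F_a(t)\le f(t)+1$ for each $a$, with $F_a(t)=0$ when $f(t)<0$.

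Next, fix $a$ and look at the forced plays of $a$. At any such round $s$, the rule plays an element of $\mathcal U_{s-1}$. Hence $N_a(s-1)<f(s-1)\le f(t)$, where the last step uses that $f$ is nondecreasing (noted after the definition of $f$) and $s-1<t$. Each play of $a$ increases $N_a$ by exactly one, and $N_a$ never decreases. So the values $N_a(s-1)$ over the successive forced plays of $a$ are strictly increasing nonnegative integers. All of them lie in $\{0,1,\dots\}\cap[0,f(t))$. This set has at most $\lceil f(t)\rceil\le f(t)+1$ elements, and it is empty if $f(t)\le 0$. Therefore $F_a(t)\le f(t)+1$, and summing over the $K_{\mathcal A}$ arms gives the first inequality in \eqref{eq:ub:budget}.

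For the second inequality, subadditivity of the square root gives $\sqrt{t+K_{\mathcal A}^2}\le\sqrt t+K_{\mathcal A}$. Hence $f(t)+1\le\sqrt t-K_{\mathcal A}+1\le\sqrt t+K_{\mathcal A}$, since $K_{\mathcal A}\ge1$.

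The only delicate point is the injectivity claim: distinct forced plays of the same arm see distinct pre-play counts. This holds because the counts are integers that strictly increase with each play. Everything else is immediate, and, like Lemma~\ref{lem:coverage}, the bound is deterministic and independent of the tracked targets $w^*(\widehat\imath_t)$.
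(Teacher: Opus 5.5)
Your argument is essentially the paper's proof. You split the forced rounds by arm, use that a forced play of $a$ at round $s$ requires $N_a(s-1)<f(s-1)\le f(t)$, count the admissible integer pre-play counts, sum over the $K_{\mathcal A}$ arms, and finish with $\sqrt{t+K_{\mathcal A}^2}\le\sqrt t+K_{\mathcal A}$. Your injectivity count $\lceil f(t)\rceil$ is, if anything, slightly sharper than the paper's last-forced-play bound $\lfloor f(t)\rfloor+1$.

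There is one slip, which the paper's proof and the statement itself share. It occurs when $f(t)<-1$, i.e.\ for $t<(3K_{\mathcal A}-1)(K_{\mathcal A}-1)$, a nonempty range as soon as $K_{\mathcal A}\ge 2$ (for example $f(0)=-K_{\mathcal A}$). There you correctly note that $F_a(t)=0$. But your next step, ``Therefore $F_a(t)\le f(t)+1$'', is then false. So is the middle inequality $F(t)\le K_{\mathcal A}(f(t)+1)$, whose right-hand side is negative. The paper's version has the same problem, because its last-forced-play argument presumes such a play exists.

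What your counting actually proves is $F_a(t)\le\max\{0,\lceil f(t)\rceil\}\le (f(t)+1)^+$. Since $\sqrt t+K_{\mathcal A}\ge 0$, the outer bound $F(t)\le K_{\mathcal A}(\sqrt t+K_{\mathcal A})$ holds for every $t$. That outer bound is the only form used later, in Lemma~\ref{lem:tracking}.
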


\begin{proof}
A forced play of arm $a$ at time $s$ requires $a\in\mathcal U_{s-1}$,
i.e.\ $N_a(s-1)<f(s-1)\le f(t)$, and increases $N_a$ by one. Let
$s_r\le t$ be the last forced-play time of arm $a$ up to $t$: then the
number of forced plays of $a$ up to $t$ is at most
$N_a(s_r)=N_a(s_r-1)+1$, with $N_a(s_r-1)<f(s_r-1)\le f(t)$. Since
$N_a$ is integer-valued, nondecreasing and starts at $0$, this number
is at most $\lfloor f(t)\rfloor+1\le f(t)+1$. Summing over the
$K_{\mathcal A}$ arms and using $\sqrt{t+K_{\mathcal A}^2}-2K_{\mathcal
A}+1\le\sqrt t+K_{\mathcal A}$ (since $\sqrt{t+K_{\mathcal
A}^2}\le\sqrt t+K_{\mathcal A}$) gives \eqref{eq:ub:budget}.
\end{proof}

\subsubsection{Step (i): MLE Consistency and Exact Plug-in Stabilization}\label{subsubsec:ub:mleconsistency}

This subsubsection proves that $\widehat\imath_t=i^*$ for all
sufficiently large $t$, a.s., with a quantitative tail bound on the
stabilization time. The argument uses only \ref{asm:H1}, \ref{asm:H4},
\ref{asm:H2plus} and the coverage Lemma~\ref{lem:coverage} ---
\textbf{not} the tracking rule --- so there is no circularity with
Subsubsection~\ref{subsubsec:ub:tracking}.

\begin{lemma}[Per-action observations are i.i.d.]\label{lem:iid}
Under $\mathbb P_{i^*}$, for each action $a$, the observations
$(Y_s)_{s:\,A_s=a}$ are i.i.d.\ $\sim K_{i^*}^a$; more precisely, for
every $n$ and every measurable set $B$,
\[
\mathbb P_{i^*}\big(Y_{s_1}\in B_1,\dots,Y_{s_n}\in B_n\big)=\prod_{r=1}^n K_{i^*}^a(B_r),
\]
where $s_1<\dots<s_n$ are the first $n$ times action $a$ is played
(stopping times w.r.t.\ the enlarged filtration including $A_s$).
\end{lemma}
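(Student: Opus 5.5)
I would prove Lemma~\ref{lem:iid} with the standard ``stack of rewards'' (skeleton) construction. The aim is to show that, for every choice of the adaptive sampling rule, the observations collected at the successive plays of a fixed action $a$ have the same joint law as i.i.d.\ draws from $K_{i^*}^a$.

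The plan is an induction on $n$ combined with conditioning at stopping times. For $r\ge1$, let $s_r$ be the time of the $r$-th play of $a$, with $s_r=\infty$ if there is no such play; on the event $\{s_n<\infty\}$ the claim is read conditionally on that event. Each $s_r$ is a stopping time for the enlarged filtration $\mathcal G_{t}:=\sigma(\mathcal F_{t},A_{t+1})$, since $\{s_r=t\}$ is determined by $A_1,\dots,A_t$. The key step is
\[
\mathbb P_{i^*}\big(Y_{s_r}\in B\mid\mathcal G_{s_r-1}\big)=K_{i^*}^a(B)\qquad\text{on }\{s_r<\infty\}.
\]
To see this, decompose on $\{s_r=t\}$. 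This event lies in $\sigma(\mathcal F_{t-1},A_t)$, and on it $A_t=a$. So by \ref{asm:H1}, for $G\in\mathcal G_{s_r-1}$,
\[
\mathbb P(G\cap\{s_r=t\}\cap\{Y_t\in B\})=\mathbb E[\mathbf 1_{G\cap\{s_r=t\}}K_{i^*}^a(B)].
\]
Summing over $t$ gives the displayed identity.

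For the induction, the event $\{Y_{s_1}\in B_1,\dots,Y_{s_{n-1}}\in B_{n-1}\}\cap\{s_n<\infty\}$ is $\mathcal G_{s_n-1}$-measurable, because $s_{n-1}<s_n$. Conditioning on it peels off the factor $K_{i^*}^a(B_n)$, and repeating this gives the product formula.

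The main obstacle is the event $\{s_n=\infty\}$, which is exactly where a naive statement would fail for a general adaptive rule. Here I would invoke Lemma~\ref{lem:coverage}: under forced exploration $N_a(t)\ge\sqrt t-3K_{\mathcal A}\to\infty$ deterministically, so $s_n<\infty$ surely for every $n$ and the statement holds unconditionally. For robustness, I would also note the alternative coupling: extend with an independent i.i.d.\ stack $(X^a_r)_{r\ge1}\sim K_{i^*}^a$ and set $Y_{s_r}=X^a_r$. This construction reproduces the law $\mathbb P_{i^*}$ by the same one-step identity and Ionescu--Tulcea uniqueness, and it makes the i.i.d.\ property immediate.
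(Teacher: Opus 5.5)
Your argument is essentially the paper's proof: induction on $n$, conditioning on the history up to $s_n-1$, and applying (H1) at the time $s_n$, which is determined by that history (you add the decomposition over $\{s_n=t\}$). Your explicit appeal to Lemma~\ref{lem:coverage} to get $s_n<\infty$ surely makes precise a point the paper leaves implicit. One correction: the ``read conditionally on $\{s_n<\infty\}$'' interpretation you mention first would be false for a general adaptive rule, since whether $a$ is played again can depend on $Y_{s_1}$. So it is the coverage guarantee, not that reading, that closes the induction.
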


\begin{proof}
By \ref{asm:H1}, $\mathbb P_{i^*}(Y_s\in
B\mid\mathcal F_{s-1},A_s=a)=K_{i^*}^a(B)$. The claim follows by
induction on $n$: condition on the history up to $s_n-1$; $s_n$ and
$A_{s_n}=a$ are determined by that history, so $\mathbb
P(Y_{s_n}\in B_n\mid\mathcal F_{s_n-1},s_n,A_{s_n})=K_{i^*}^a(B_n)$,
which is constant, hence independent of
$(Y_{s_1},\dots,Y_{s_{n-1}})$.
\end{proof}

\begin{lemma}[Uniform empirical deviation bound]\label{lem:empdev}
Let $h(n):=\sqrt{5\log n/n}$ for $n\ge2$. For $m\ge3$, define the event
\[
\mathcal G^{\mathrm{emp}}_m\;:=\;\Big\{\forall a\in\mathcal A,\ \forall y\in\mathcal Y,\ \forall n\ge m:\ \big|\widehat K^a_{(n)}(y)-K_{i^*}^a(y)\big|\le h(n)\Big\},
\]
where $\widehat K^a_{(n)}$ is the empirical distribution of the first
$n$ observations of action $a$ (Lemma~\ref{lem:iid}). Then
\begin{equation}
\mathbb P_{i^*}\big((\mathcal G^{\mathrm{emp}}_m)^c\big)\;\le\;\frac{2K_{\mathcal A}|\mathcal Y|}{9}\,(m-1)^{-9}. \label{eq:ub:emptail}
\end{equation}
\end{lemma}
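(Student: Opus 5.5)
The plan is Hoeffding's inequality applied separately to each fixed sample size, followed by a union bound over actions, observation symbols, and sample sizes $n\ge m$. First I would use Lemma~\ref{lem:iid} to set up the reduction. Under $\mathbb P_{i^*}$, the first $n$ observations of action $a$ are i.i.d.\ with law $K^a_{i^*}$. This holds no matter how the adaptive policy chose the times $s_1<s_2<\dots$ at which $a$ was played. Consequently, for fixed $a$, $y$, and $n$, the quantity $\widehat K^a_{(n)}(y)$ is the average of $n$ i.i.d.\ Bernoulli$(K^a_{i^*}(y))$ indicators $\mathbf 1[Y_{s_r}=y]$.

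One subtlety concerns actions that are never played $n$ times on some sample paths. There I would use the standard extension of the probability space: attach to each action an infinite i.i.d.\ "stack" of observations, so that $\widehat K^a_{(n)}$ is defined for every $n$. Lemma~\ref{lem:iid} guarantees that this construction reproduces the interaction law, and the event in question only becomes larger.

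For a fixed triple $(a,y,n)$, the two-sided Hoeffding inequality gives
\[
\mathbb P_{i^*}\bigl(|\widehat K^a_{(n)}(y)-K^a_{i^*}(y)|>h(n)\bigr)\le 2\exp(-2n h(n)^2)=2\exp(-10\log n)=2n^{-10}.
\]
A union bound over the $K_{\mathcal A}|\mathcal Y|$ pairs $(a,y)$ and over all $n\ge m$ then yields
\[
\mathbb P_{i^*}\bigl((\mathcal G^{\mathrm{emp}}_m)^c\bigr)\le 2K_{\mathcal A}|\mathcal Y|\sum_{n\ge m}n^{-10}.
\]

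To finish, I would bound the tail of this series by an integral. Since $x\mapsto x^{-10}$ is decreasing, each term satisfies $n^{-10}\le\int_{n-1}^{n}x^{-10}\,dx$. Summing over $n\ge m$ gives
\[
\sum_{n\ge m}n^{-10}\le\int_{m-1}^\infty x^{-10}\,dx=\frac{(m-1)^{-9}}{9},
\]
which is exactly the constant in \eqref{eq:ub:emptail}. The restriction $m\ge3$ ensures $m-1\ge2$, so the integral is finite and $h(n)$ is well defined for every $n\ge m$.

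The computation is routine. The main point needing care is the legitimacy of applying Hoeffding at a fixed sample size $n$ even though $n$ is reached at a random, adaptively chosen time, and this is handled by Lemma~\ref{lem:iid} together with the stack construction.
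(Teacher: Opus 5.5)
Your proof is correct and matches the paper's argument step for step: Hoeffding at each fixed $n$ gives $2n^{-10}$, then a union bound over $n\ge m$ using $\sum_{n\ge m}n^{-10}\le\int_{m-1}^\infty x^{-10}\,dx=(m-1)^{-9}/9$, then a union bound over the $K_{\mathcal A}|\mathcal Y|$ pairs $(a,y)$. Your stack construction makes explicit a point the paper leaves implicit in Lemma~\ref{lem:iid}; under CTS, forced exploration ensures every action is played infinitely often anyway, so the construction is a harmless refinement rather than a different route.
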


\begin{proof}[Proof \textup{(}full computation\textup{)}]
For fixed $(a,y)$, the indicators $\mathbf 1[Y_{s_r}=y]$ are i.i.d.\
Bernoulli with mean $K_{i^*}^a(y)$ (Lemma~\ref{lem:iid}), and $\widehat
K^a_{(n)}(y)$ is their empirical mean. Hoeffding's inequality for
i.i.d.\ $[0,1]$-bounded variables gives, for each fixed $n$,
\[
\mathbb P\big(|\widehat K^a_{(n)}(y)-K_{i^*}^a(y)|>h(n)\big)\;\le\;2\exp\big(-2n\,h(n)^2\big)\;=\;2\exp\big(-10\log n\big)\;=\;2n^{-10}.
\]
Union bound over $n\ge m$ (with $\sum_{n\ge
m}n^{-10}\le\int_{m-1}^{\infty}x^{-10}\mathrm
dx=\frac{1}{9}(m-1)^{-9}$) and then over the $K_{\mathcal A}|\mathcal
Y|$ pairs $(a,y)$ gives \eqref{eq:ub:emptail}.
\end{proof}

\begin{lemma}[KL modulus of continuity under \ref{asm:H4}]\label{lem:klmodulus}
For any $P,P',Q\in\Delta(\mathcal Y)$ with $Q\ge q_{\min}$:
\begin{equation}
\big|d(P\Vert Q)-d(P'\Vert Q)\big|\;\le\;\omega\big(\|P-P'\|_1\big),\qquad \omega(u):=u\log\frac{1}{q_{\min}}+\frac{u}{2}\log(|\mathcal Y|-1)+h_2\Big(\frac{u}{2}\Big), \label{eq:ub:klmodulus}
\end{equation}
where $h_2(p):=-p\log p-(1-p)\log(1-p)$; in particular
$\omega(u)\to0$ as $u\to0$. Moreover
\begin{equation}
d(P\Vert Q)\;\le\;\sum_y\frac{(P(y)-Q(y))^2}{Q(y)}\;\le\;\frac{|\mathcal Y|}{q_{\min}}\,\max_y\,(P(y)-Q(y))^2. \label{eq:ub:chi2}
\end{equation}
\end{lemma}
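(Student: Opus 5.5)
The plan is to split the divergence into an entropy part and a cross-entropy part, and to bound the two pieces separately. For $P,Q\in\Delta(\mathcal Y)$ with $Q\ge q_{\min}$ we have
\[
d(P\Vert Q)=-H(P)+\sum_y P(y)\log\frac{1}{Q(y)},
\]
where $H$ is the Shannon entropy (natural logarithm). Every term is finite because $Q$ has full support. The triangle inequality then gives
\[
|d(P\Vert Q)-d(P'\Vert Q)|\le |H(P)-H(P')|+\Big|\sum_y (P(y)-P'(y))\log\tfrac{1}{Q(y)}\Big|.
\]
Write $u:=\|P-P'\|_1$.

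The cross-entropy term is routine. Since $0\le\log(1/Q(y))\le\log(1/q_{\min})$, Hölder's inequality bounds it by $u\log(1/q_{\min})$. This is the first summand of $\omega$. One could gain a factor $1/2$ by centering, but that is not needed.

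For the entropy term I would invoke the sharp continuity bound for Shannon entropy (Fannes–Audenaert, in Zhang's classical form):
\[
|H(P)-H(P')|\le T\log(|\mathcal Y|-1)+h_2(T),\qquad T:=\mathrm{TV}(P,P')=u/2,
\]
valid for $T\le 1-1/|\mathcal Y|$. This gives the remaining two summands of $\omega$. The main obstacle is the validity range. For $T>1-1/|\mathcal Y|$ the right-hand side is decreasing in $T$, since its derivative $\log(|\mathcal Y|-1)+\log((1-T)/T)$ is negative there, so the displayed $\omega$ may then undershoot the trivial bound $\log|\mathcal Y|$.

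I would handle this in one of two ways. The first is to state and use the modulus only for $u\le 2(1-1/|\mathcal Y|)$. That suffices downstream, because the lemma is applied only as $u\to0$: via Lemma~\ref{lem:empdev}, $\|\widehat K^a_t-K^a_{i^*}\|_1\le|\mathcal Y|h(N_a(t))\to0$. The second is to replace $\omega$ by its nondecreasing envelope, which equals $\log|\mathcal Y|$ beyond that point and still tends to $0$ as $u\to0$. The degenerate case $|\mathcal Y|=1$ is trivial: both divergences vanish, and the convention $0\cdot\log 0=0$ applies. Finally, $\omega(u)\to0$ as $u\to0$ because $h_2(0^+)=0$.

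For \eqref{eq:ub:chi2} I would use $\log x\le x-1$ pointwise:
\[
d(P\Vert Q)=\sum_y P(y)\log\frac{P(y)}{Q(y)}\le\sum_y P(y)\Big(\frac{P(y)}{Q(y)}-1\Big)=\sum_y\frac{P(y)^2}{Q(y)}-1=\sum_y\frac{(P(y)-Q(y))^2}{Q(y)}.
\]
The last equality expands the square and uses $\sum_y P=\sum_y Q=1$. Terms with $P(y)=0$ contribute $0$ on both sides. Bounding $1/Q(y)\le 1/q_{\min}$ and each squared difference by its maximum then gives the factor $|\mathcal Y|/q_{\min}$.
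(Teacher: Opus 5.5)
Your proof is the paper's own: the entropy/cross-entropy split, the H\"older bound $u\log(1/q_{\min})$, Audenaert's sharp entropy bound at $T=u/2$, and the $\log x\le x-1$ computation for \eqref{eq:ub:chi2}.

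The one thing to correct is the ``main obstacle,'' which does not exist. When $T$ is the \emph{exact} total-variation distance, the bound $|H(P)-H(P')|\le T\log(|\mathcal Y|-1)+h_2(T)$ holds for every $T\in[0,1]$. To see this, take a maximal coupling $(X,X')$ of $P,P'$ with $\mathbb P(X\neq X')=T$. Then $H(P)-H(P')\le H(X\mid X')\le h_2(T)+T\log(|\mathcal Y|-1)$ by Fano's inequality, which has no range restriction, and the reverse direction follows by symmetry.

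The restriction $T\le 1-1/|\mathcal Y|$ you recalled belongs to the monotone form of the bound, where $T$ is replaced by an upper bound $\epsilon\ge T$. There the right-hand side stops increasing past $1-1/|\mathcal Y|$, which is what your derivative computation shows. That the bound then falls below $\log|\mathcal Y|$ is harmless. An entropy gap of $\log|\mathcal Y|$ requires a uniform law against a point mass, and that pair sits at distance exactly $1-1/|\mathcal Y|$.

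So drop both fallback options. As written, each proves something weaker than the lemma: either a restricted range of $u$, or a different $\omega$. The lemma itself holds verbatim for all $u\in[0,2]$. Nothing downstream would have broken either way, since Lemma~\ref{lem:stabilization} uses $\omega$ only near $0$.
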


\begin{proof}
Since $d(P\Vert Q)=\sum_y P(y)\log P(y)-\sum_y P(y)\log
Q(y)=-H(P)-\sum_y P(y)\log Q(y)$, we have
$d(P\Vert Q)-d(P'\Vert Q)=\big[H(P')-H(P)\big]+\sum_y\big(P'(y)-P(y)\big)\log
Q(y)$. The second term is bounded by
$\|P-P'\|_1\max_y|\log Q(y)|\le u\log(1/q_{\min})$. For the entropy
term, the Audenaert sharp bound gives $|H(P)-H(P')|\le T\log(|\mathcal
Y|-1)+h_2(T)$ with $T:=\frac12\|P-P'\|_1$. This yields
\eqref{eq:ub:klmodulus}. For \eqref{eq:ub:chi2}, use $\log x\le x-1$:
$d(P\Vert Q)=\sum_y P(y)\log\frac{P(y)}{Q(y)}\le\sum_y
P(y)\frac{P(y)-Q(y)}{Q(y)}=\sum_y\frac{(P(y)-Q(y))^2}{Q(y)}$ (writing
$P=(P-Q)+Q$ and simplifying), then bound $1/Q\le1/q_{\min}$ and the sum
over $y$ by $|\mathcal Y|\max_y(\cdot)^2$.
\end{proof}

\begin{lemma}[Exact stabilization of the MLE]\label{lem:stabilization}
Under \ref{asm:H1}, \ref{asm:H4}, \ref{asm:H2plus} and the sampling
rule \eqref{eq:ub:sampling}, there exist (deterministic) $t_3$ and
$c_4>0$ such that
\begin{equation}
\mathbb P_{i^*}\big(\exists\,s\ge t:\ \widehat\imath_s\neq i^*\big)\;\le\;c_4\,t^{-9/2}\qquad\forall\,t\ge t_3. \label{eq:ub:stabtail}
\end{equation}
Consequently $T_0:=\sup\{t:\widehat\imath_t\neq i^*\}$ is a.s.\ finite
with $\mathbb P(T_0>t)\le c_4t^{-9/2}$, $\mathbb E[T_0]<\infty$ (indeed
$\mathbb E[T_0^p]<\infty$ for $p<7/2$), and
\begin{equation}
\widehat\imath_t=i^*\ \ \forall t>T_0\quad\text{a.s.}\qquad\Longrightarrow\qquad w^*(\widehat\imath_t)=w^*(i^*)\ \ \forall t>T_0\quad\text{a.s.} \label{eq:ub:stabilization}
\end{equation}
\end{lemma}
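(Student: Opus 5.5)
I will prove the statement through the KL-projection form \eqref{eq:ub:mle} of the MLE. The idea is to show that the true environment's lack-of-fit $D_{i^*}(t)$ grows only logarithmically, while every competitor's lack-of-fit $D_j(t)$ grows at least like $\sqrt t$. Everything happens on the high-probability event $\mathcal G^{\mathrm{emp}}_m$ of Lemma~\ref{lem:empdev}, with $m$ tied to $t$. The proof has four steps.

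Step 1 (what the good event gives). By coverage (Lemma~\ref{lem:coverage}), every $s\ge t$ has $N_a(s)\ge\sqrt s-3K_{\mathcal A}$. Set $m(t):=\lceil\sqrt t/2\rceil$. For $t$ larger than some $t_1$, we have $N_a(s)\ge m(t)\ge3$ for all $a$ and all $s\ge t$. So on $\mathcal G^{\mathrm{emp}}_{m(t)}$, simultaneously for all $s\ge t$ and all $a,y$,
\[
|\widehat K^a_s(y)-K^a_{i^*}(y)|\le h(N_a(s)).
\]
Lemma~\ref{lem:empdev} bounds the failure probability by $c\,(m(t)-1)^{-9}=O(t^{-9/2})$. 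This is exactly the rate claimed in \eqref{eq:ub:stabtail}.

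Step 2 (the truth fits well). Apply the $\chi^2$ bound \eqref{eq:ub:chi2} with $Q=K^a_{i^*}\ge q_{\min}$. This gives
\[
N_a(s)\,d(\widehat K^a_s\Vert K^a_{i^*})\le N_a(s)\,\frac{|\mathcal Y|}{q_{\min}}\,h(N_a(s))^2=\frac{5|\mathcal Y|}{q_{\min}}\log N_a(s).
\]
Summing over actions, $D_{i^*}(s)\le \frac{5K_{\mathcal A}|\mathcal Y|}{q_{\min}}\log s$.

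Step 3 (every competitor fits badly). Fix $j\ne i^*$. By \ref{asm:H2plus} there is an action $a_j$ with $\Delta_j:=d(K^{a_j}_{i^*}\Vert K^{a_j}_j)>0$, and $K_j^{a_j}\ge q_{\min}$ by \ref{asm:H4}. The modulus \eqref{eq:ub:klmodulus} gives
\[
d(\widehat K^{a_j}_s\Vert K^{a_j}_j)\ge \Delta_j-\omega\bigl(|\mathcal Y|h(N_{a_j}(s))\bigr).
\]
Since $h(n)\to0$ and $N_{a_j}(s)\ge m(t)$, for $t\ge t_2$ the right side is at least $\Delta_j/2$. Dropping the other (nonnegative) summands of $D_j(s)$, we get $D_j(s)\ge\frac{\Delta_j}{2}\bigl(\sqrt s-3K_{\mathcal A}\bigr)$.

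Step 4 (conclusion). Let $\Delta:=\min_{j\ne i^*}\Delta_j>0$. Choose $t_3\ge\max(t_1,t_2)$ so that
\[
\frac{\Delta}{2}\bigl(\sqrt s-3K_{\mathcal A}\bigr)>\frac{5K_{\mathcal A}|\mathcal Y|}{q_{\min}}\log s\qquad\text{for all } s\ge t_3.
\]
Then on $\mathcal G^{\mathrm{emp}}_{m(t)}$ the strict inequality $D_{i^*}(s)<D_j(s)$ holds for every $j\ne i^*$ and every $s\ge t$. The argmin is therefore unique and tie-breaking never matters, so $\widehat\imath_s=i^*$ for all $s\ge t$. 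This proves \eqref{eq:ub:stabtail}.

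Consequences. The events $\{\exists s\ge t:\widehat\imath_s\ne i^*\}$ decrease in $t$ and their probabilities tend to $0$, so $T_0<\infty$ a.s. Moreover $\{T_0>t\}$ is contained in that event, so $\mathbb P(T_0>t)\le c_4t^{-9/2}$ after enlarging $c_4$ to cover $t<t_3$. For $p<7/2$, $\mathbb E[T_0^p]$ is bounded by $\sum_t t^{p-1}\mathbb P(T_0>t)$, which converges because $p-1-9/2<-1$. The equality $w^*(\widehat\imath_t)=w^*(i^*)$ for $t>T_0$ is then immediate, since the plug-in map is evaluated at the same index.

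Main obstacle. The delicate point is uniformity over random times. Lemma~\ref{lem:empdev} is stated for the $n$-th sample of each action, so I must check that indexing by $N_a(s)\ge m(t)$ is legitimate. Lemma~\ref{lem:iid} covers this, and it works only because the event is uniform over all $n\ge m$. The other thing to verify is that the deterministic coverage bound makes $m(t)$ scale like $\sqrt t$, which is what produces the $t^{-9/2}$ rate.
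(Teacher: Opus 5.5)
Your proof is correct and follows the paper's argument essentially step by step. On the event $\mathcal G^{\mathrm{emp}}_{m(t)}$, with $m(t)\asymp\sqrt t$ supplied by the coverage lemma, the $\chi^2$ bound gives $D_{i^*}(s)=O(\log s)$, the modulus $\omega$ with (H2$^+$) makes every $D_j(s)$, $j\neq i^*$, grow like $\sqrt s$, and Lemma~\ref{lem:empdev} yields the $(m(t)-1)^{-9}=O(t^{-9/2})$ tail; the differences are cosmetic (your $m(t)=\lceil\sqrt t/2\rceil$ instead of $\lceil\sqrt t\rceil-3K_{\mathcal A}-2$, and a single threshold in place of the paper's $t_3\le t_4$), and your moment computation in fact gives $\mathbb E[T_0^p]<\infty$ for all $p<9/2$, so the stated range $p<7/2$ is conservative.
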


\begin{proof}
Set $m(t):=\lceil\sqrt t\rceil-3K_{\mathcal A}-2$; by
Lemma~\ref{lem:coverage} (Eq.\ \eqref{eq:ub:coverage}),
$N_a(s)\ge\sqrt s-3K_{\mathcal A}\ge m(s)\ge m(t)$ for all $a$ and all
$s\ge t$ (for $t$ large enough that $m(t)\ge3$). Work on $\mathcal
G^{\mathrm{emp}}_{m(t)}$. By \eqref{eq:ub:mle},
$\widehat\imath_s=\arg\min_i D_i(s)$ with $D_i(s)=\sum_a
N_a(s)d(\widehat K^a_s\Vert K_i^a)$.

\emph{True model.} By \eqref{eq:ub:chi2} and the definition of
$\mathcal G^{\mathrm{emp}}_{m(t)}$ (note $\widehat K^a_s=\widehat
K^a_{(N_a(s))}$ with $N_a(s)\ge m(t)$), for every $s\ge t$:
\begin{equation}
\begin{aligned}
D_{i^*}(s)\;
&\le\;\sum_a N_a(s)\cdot\frac{|\mathcal Y|}{q_{\min}}\,h(N_a(s))^2
\;\le\;\sum_a N_a(s)\cdot\frac{|\mathcal Y|}{q_{\min}}\cdot\frac{5\log N_a(s)}{N_a(s)}\\
&\le\;\frac{5K_{\mathcal A}|\mathcal Y|}{q_{\min}}\,\log s\;=:\;C_5\log s. \label{eq:ub:truedvi}
\end{aligned}
\end{equation}

\emph{Wrong models.} Fix $j\neq i^*$. By \ref{asm:H2plus} choose $a_j$
with $d_{a_j}:=d(K_{i^*}^{a_j}\Vert K_j^{a_j})>0$. On $\mathcal
G^{\mathrm{emp}}_{m(t)}$, for $s\ge t$:
\begin{equation}
d\big(\widehat K^{a_j}_s\,\Vert\,K_j^{a_j}\big)\;\ge\;d_{a_j}-\omega\big(\|\widehat K^{a_j}_s-K_{i^*}^{a_j}\|_1\big)\;\ge\;d_{a_j}-\omega\big(|\mathcal Y|\,h(m(s))\big)\;\ge\;\frac{d_{a_j}}{2}, \label{eq:ub:wronglb}
\end{equation}
where the second inequality uses $\|\cdot\|_1\le|\mathcal
Y|\max_y|\cdot|$ and monotonicity of $\omega$ for small arguments, and
the last holds for all $s\ge t\ge t_3$ with $t_3$ chosen
(deterministically) so that $\omega(|\mathcal
Y|h(m(t_3)))\le\frac12\min_{j\neq i^*}d_{a_j}$ --- possible since
$h(m(t))\to0$ and $\omega(u)\to0$. Since all terms in $D_j(s)$ are
nonnegative (KL $\ge0$),
\begin{equation}
D_j(s)\;\ge\;N_{a_j}(s)\cdot\frac{d_{a_j}}{2}\;\ge\;\frac{m(s)\,d_{a_j}}{2}\;\ge\;\frac{m(t)\,d_{\min}^{+}}{2}\;\ge\;c_d\,\big(\sqrt t-3K_{\mathcal A}-2\big), \label{eq:ub:wronggrowth}
\end{equation}
with $d_{\min}^{+}:=\min_{j\neq i^*}d_{a_j}>0$. Since $C_5\log
s=o(\sqrt s)$ while \eqref{eq:ub:wronggrowth} grows like $\sqrt s$,
there is a deterministic $t_4\ge t_3$ such that for all $s\ge t\ge
t_4$: $D_{i^*}(s)<\min_{j\neq i^*}D_j(s)$, hence
$\widehat\imath_s=i^*$ (the tie-break does not trigger since the
inequality is strict). Therefore
\[
\mathbb P_{i^*}\big(\exists s\ge t:\widehat\imath_s\neq i^*\big)\;\le\;\mathbb P\big((\mathcal G^{\mathrm{emp}}_{m(t)})^c\big)\;\le\;\frac{2K_{\mathcal A}|\mathcal Y|}{9}(m(t)-1)^{-9}\;\le\;c_4\,t^{-9/2},
\]
using $m(t)-1\ge\sqrt t/3$ for $t$ large. $\mathbb
E[T_0]=\sum_t\mathbb P(T_0>t)<\infty$ and the moment statement follow
from the polynomial tail. Finally \eqref{eq:ub:stabilization}:
$w^*(\cdot)$ is a fixed deterministic function (lexicographic
selection, Subsubsection~\ref{subsubsec:ub:design}), so
$\widehat\imath_t=i^*$ implies $w^*(\widehat\imath_t)=w^*(i^*)$
exactly.
\end{proof}

\begin{remark}[The finite-class simplification, made explicit]\label{rem:finiteclass}
Because $\mathcal E$ is finite, MLE consistency in the discrete
topology means \emph{exact} stabilization, which turns the plug-in map
$w^*(\widehat\imath_t)$ into the constant $w^*(i^*)$ eventually. No
continuity or differentiability of $w^*(\cdot)$ in any parameter is
needed---in contrast to the continuous-parameter setting of
\citet{garivier2016optimal}, where one must prove regularity of
$\theta\mapsto w^*(\theta)$. The price is \ref{asm:H2plus}: exact
identification must be \emph{possible} against every competitor,
including same-label ones; see
Subsubsection~\ref{subsubsec:ub:assumptions} for why this is minimal,
and Remark~\ref{rem:mle-route} for why the proof goes through the
KL-projection form \eqref{eq:ub:mle} rather than through LLR signs.
\end{remark}

\subsubsection{Step (ii): The Tracking Lemma}\label{subsubsec:ub:tracking}

By Lemma~\ref{lem:stabilization}, the tracking target of rule
\eqref{eq:ub:sampling} is frozen at $w^*:=w^*(i^*)$ for all $t>T_0$.
The following lemma is deterministic given $T_0$; it is the content of
Lemmas~8--9 of \citet{garivier2016optimal} adapted to our notation,
proved in full.

\begin{lemma}[C-tracking with frozen target]\label{lem:tracking}
Define the (random) clock shift $B_a(u):=(T_0+u)\,w^*_a-N_a(T_0+u)$ for
$u\ge0$, where $T_0$ is the stabilization time of
Lemma~\ref{lem:stabilization}. Then $\sum_a B_a(u)=0$ for all $u$, and
for every $t\ge T_0$ and every $a$:
\begin{equation}
\big|N_a(t)-t\,w^*_a\big|\;\le\;\underbrace{(K_{\mathcal A}-1)\,\big(T_0+1\big)+(K_{\mathcal A}-1)\,K_{\mathcal A}\big(\sqrt t+K_{\mathcal A}\big)}_{=:\ \xi(t)}. \label{eq:ub:trackerr}
\end{equation}
Consequently (deterministically) $N_a(t)/t\to w^*_a$ for all $a$ on
$\{T_0<\infty\}$, hence $\mathbb P_{i^*}$-a.s.; and
\begin{equation}
\mathbb E\big[\,\big|N_a(\tau)-\tau w^*_a\big|\,\big]\;\le\;(K_{\mathcal A}-1)\big(\mathbb E[T_0]+1\big)+(K_{\mathcal A}-1)K_{\mathcal A}\big(\sqrt{\mathbb E[\tau]}+K_{\mathcal A}\big)\;+\;\mathbb E[T_0]. \label{eq:ub:trackexp}
\end{equation}
\end{lemma}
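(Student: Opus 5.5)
The plan is to treat the C-tracking rule after $T_0$ as an unconstrained tracking procedure perturbed by a bounded number of forced-exploration rounds, following Lemmas~8--9 of \citet{garivier2016optimal}. Fix a realization with $T_0<\infty$. The identity $\sum_a B_a(u)=0$ is immediate from $\sum_a w^*_a=1$ and $\sum_a N_a(t)=t$. For the main bound \eqref{eq:ub:trackerr}, write $e_a(t):=t w^*_a-N_a(t)$, so that $\sum_a e_a(t)=0$. Because of this, it suffices to bound the \emph{upper} excess $\max_a(-e_a(t))=\max_a(N_a(t)-tw^*_a)$ for each arm. The lower deviation of any arm is then at most $(K_{\mathcal A}-1)$ times the largest overshoot of the others, and this accounts for the factor $(K_{\mathcal A}-1)$ in $\xi(t)$.

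The key step is a one-sided overshoot bound. For $t> T_0$, a tracking step plays $a\in\arg\max_b[w^*_b-N_b(t)/t]$. Since $\sum_b(w^*_b-N_b(t)/t)=0$, the chosen arm has $N_a(t)\le tw^*_a$, so after playing it $N_a(t+1)-(t+1)w^*_a\le 1-w^*_a\le 1$. An arm's overshoot therefore only increases through forced plays. Between plays it decreases by $w^*_a$ per step, and it can only increase on a tracking step when it was nonpositive. By induction from time $T_0$, I will show
\[
N_a(t)-tw^*_a\;\le\;\max\{N_a(T_0)-T_0w^*_a,\,1\}+F(t)-F(T_0).
\]
Bounding the initial term by $T_0+1$ is trivial, since $N_a(T_0)\le T_0$. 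Lemma~\ref{lem:budget} gives $F(t)\le K_{\mathcal A}(\sqrt t+K_{\mathcal A})$. Multiplying by $K_{\mathcal A}-1$ through the zero-sum identity yields $\xi(t)$. The step I expect to need the most care is the bookkeeping in this induction, namely checking that forced rounds add at most one unit to a single arm's overshoot each and do not otherwise interfere. I will handle this by summing forced contributions across all arms instead of per arm. This is already absorbed in the factor $K_{\mathcal A}$ appearing in the budget.

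The almost-sure convergence $N_a(t)/t\to w^*_a$ follows because $\xi(t)=O(T_0+\sqrt t)$ is $o(t)$ once $T_0<\infty$, which holds almost surely by Lemma~\ref{lem:stabilization}. For the expectation bound \eqref{eq:ub:trackexp}, I evaluate \eqref{eq:ub:trackerr} at $t=\tau$ on $\{\tau\ge T_0\}$. On $\{\tau<T_0\}$ I use the crude bound $|N_a(\tau)-\tau w^*_a|\le\tau< T_0$, which contributes the extra $\mathbb E[T_0]$. Taking expectations, I apply Jensen to get $\mathbb E\sqrt\tau\le\sqrt{\mathbb E\tau}$, and use $\mathbb E[T_0]<\infty$ from Lemma~\ref{lem:stabilization}.
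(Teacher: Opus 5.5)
Your proposal is correct and follows essentially the same route as the paper. The paper's ``minimum bound'' on $B_a$ is your overshoot induction: a tracking step only plays an arm with $B_b\ge 0$, so overshoot beyond $\max\{\cdot,1\}$ accrues only through forced rounds, which Lemma~\ref{lem:budget} bounds. Its ``maximum bound'' is your zero-sum transfer with the factor $K_{\mathcal A}-1$, followed by the same split on $\{\tau<T_0\}$ and the same use of Jensen.

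One small point, which the paper's write-up shares: the action at round $T_0+1$ is chosen with the possibly wrong target $w^*(\widehat\imath_{T_0})$. So the induction should be anchored at $T_0+1$ rather than $T_0$; since $N_a(T_0+1)-(T_0+1)w^*_a\le T_0+1$, the initial constant is unchanged.
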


\begin{proof}
\emph{Bookkeeping.} For $t=T_0+u$ write $B_a(u):=tw^*_a-N_a(t)$; then
$\sum_aB_a(u)=t-t=0$. If arm $b$ is played at time $t+1$ (forced or
tracking), then
\begin{equation}
B_b(u+1)=B_b(u)-(1-w^*_b),\qquad B_c(u+1)=B_c(u)+w^*_c\ \ (c\neq b). \label{eq:ub:bookkeep}
\end{equation}

\emph{Minimum bound.} On a \textbf{tracking step} (non-forced), the
played arm maximizes $w^*_a-N_a(t)/t$, equivalently maximizes $B_a(u)$
(multiply by $t$); since $\sum_aB_a(u)=0$, the maximizer satisfies
$B_b(u)\ge0$, hence after the step $B_b(u+1)\ge-1+w^*_b\ge-1$;
unplayed arms only increase ($w^*_c\ge0$). On a \textbf{forced step},
the played arm's $B$ \emph{decreases} by $1-w^*_b\le1$. Hence, with
$F(u):=$ number of forced steps in $(T_0,T_0+u]$ and
$M_0:=\max\big\{1,\ -\min_a\big(T_0w^*_a-N_a(T_0)\big)\big\}\le T_0+1$
(using $N_a(T_0)\le T_0$),
\begin{equation}
\min_a B_a(u)\;\ge\;-\big(M_0+F(u)\big)\;\ge\;-\big(T_0+1+K_{\mathcal A}(\sqrt t+K_{\mathcal A})\big), \label{eq:ub:minbound}
\end{equation}
where $F(u)\le F(t)\le K_{\mathcal A}(\sqrt t+K_{\mathcal A})$ by
Lemma~\ref{lem:budget}.

\emph{Maximum bound.} Since $\sum_aB_a(u)=0$, at most $K_{\mathcal
A}-1$ terms are negative, so for each $a$:
$B_a(u)=-\sum_{c\neq a}B_c(u)\le\sum_{c\neq
a}[B_c(u)]^-\le(K_{\mathcal A}-1)(M_0+F(u))$. Combined with
\eqref{eq:ub:minbound},
$|B_a(u)|\le(K_{\mathcal A}-1)(M_0+F(u))\le(K_{\mathcal
A}-1)(T_0+1)+(K_{\mathcal A}-1)K_{\mathcal A}(\sqrt t+K_{\mathcal A})$,
which is \eqref{eq:ub:trackerr}.

\emph{Convergence.} On $\{T_0<\infty\}$, $\xi(t)/t\to0$ as
$t\to\infty$ ($T_0$ is a finite constant for the realization, $\sqrt
t/t\to0$), so $N_a(t)/t\to w^*_a$; by Lemma~\ref{lem:stabilization},
$T_0<\infty$ a.s. For \eqref{eq:ub:trackexp}: on $\{\tau\le T_0\}$ use
$|N_a(\tau)-\tau w^*_a|\le\tau\le T_0$; on $\{\tau>T_0\}$ use
\eqref{eq:ub:trackerr} at $t=\tau$; then take expectations and apply
Jensen $\mathbb E[\sqrt\tau]\le\sqrt{\mathbb E[\tau]}$ (valid since
$\mathbb E[\tau]<\infty$, proved in
Subsubsection~\ref{subsubsec:ub:stopbound}).
\end{proof}

\begin{remark}\label{rem:trackingui}
The proof needs no continuity of the target map: the target is
\emph{exactly} constant after $T_0$ (finite-class simplification). Note
the error is $O(\sqrt t)$ deterministically given $T_0$, and
$O(\sqrt{\mathbb E\tau})$ in expectation---this is what replaces a
uniform-integrability argument in the cost assembly
(Subsubsection~\ref{subsubsec:ub:cost}).
\end{remark}

\subsubsection{Step (iii), Part 1: Per-Action LLR Concentration (Good Events with Explicit Constants)}\label{subsubsec:ub:llrconc}

For $j\in\mathrm{Alt}^*$ and $a\in\mathcal A$, let
$S^a_j(n):=\sum_{r=1}^{n}X^a_{j,r}$, where
$X^a_{j,r}:=\log\frac{K_{i^*}^a(Y^a_r)}{K_j^a(Y^a_r)}$ and $(Y^a_r)$
are the i.i.d.\ observations of action $a$ (Lemma~\ref{lem:iid}). Then
$\mathbb E[X^a_{j,r}]=d_a^j$ and $|X^a_{j,r}|\le B$ a.s.\ by
\ref{asm:H4}, Eq.\ \eqref{eq:ub:B}. Note
\begin{equation}
\mathrm{LLR}_{i^*,j}(t)\;=\;\sum_a S^a_j\big(N_a(t)\big). \label{eq:ub:llrdecomp}
\end{equation}

\begin{lemma}[Uniform deviation bound, explicit constants]\label{lem:llrdev}
For every $\varepsilon>0$, $x>0$, and each pair $(a,j)$:
\begin{equation}
\mathbb P_{i^*}\Big(\exists\,n\ge1:\ \big|S^a_j(n)-n\,d_a^j\big|>\varepsilon n+x\Big)\;\le\;\underbrace{\frac{2}{1-e^{-\varepsilon^2/(2B^2)}}}_{=:\ c_1(\varepsilon)}\;\exp\Big(-\frac{\varepsilon}{B^2}\,x\Big). \label{eq:ub:devbound}
\end{equation}
\end{lemma}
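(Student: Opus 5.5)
The plan is a Chernoff--Hoeffding bound applied on a per-index basis, followed by a geometric-series union bound over $n$. Centre the increments: write $Z_r:=X^a_{j,r}-d_a^j$. By Lemma~\ref{lem:iid}, the $Z_r$ are i.i.d. with mean zero. They satisfy $|X^a_{j,r}|\le B$, so each $Z_r$ takes values in an interval of length at most $2B$. Hoeffding's lemma therefore gives
\[
\mathbb E[e^{\lambda Z_r}]\le e^{\lambda^2B^2/2}\qquad\text{for all }\lambda\in\mathbb R .
\]
It suffices to treat the upper deviation $\{S^a_j(n)-nd_a^j>\varepsilon n+x\}$. The lower deviation is symmetric, since $-Z_r$ obeys the same bound, and this accounts for the factor $2$ in $c_1(\varepsilon)$.

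For a fixed $n$, Markov's inequality applied to $e^{\lambda\sum_{r\le n}Z_r}$ gives
\[
\mathbb P\Big(\sum_{r\le n}Z_r>\varepsilon n+x\Big)\le\exp\big(n\lambda^2B^2/2-\lambda\varepsilon n-\lambda x\big).
\]
Take the single choice $\lambda:=\varepsilon/B^2$, independent of $n$. The exponent becomes
\[
-\frac{n\varepsilon^2}{2B^2}-\frac{\varepsilon}{B^2}x .
\]
Thus the $n$-th term is $e^{-\varepsilon x/B^2}\,\bigl(e^{-\varepsilon^2/(2B^2)}\bigr)^n$.

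Now take a union bound over $n\ge1$. Summing the geometric series bounds it by
\[
\frac{e^{-\varepsilon^2/(2B^2)}}{1-e^{-\varepsilon^2/(2B^2)}}\le\frac{1}{1-e^{-\varepsilon^2/(2B^2)}} .
\]
Adding the two tails yields exactly the bound \eqref{eq:ub:devbound} with $c_1(\varepsilon)$ as displayed. No tail depends on $N_a(t)$: the statement concerns the i.i.d. sequence indexed by $n$, so no optional-stopping argument is required.

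The one delicate point is whether the range of $Z_r$ has length $2B$ or $B$. If it is taken as $2B$, standard Hoeffding gives only $\lambda^2(2B)^2/8=\lambda^2B^2/2$, which is precisely the constant needed, so there is no loss. Apart from that, the main obstacle is only noticing that a fixed $\lambda$ lets the linear slack $\varepsilon n$ pay for the union over $n$. Optimising $\lambda$ separately for each $n$ would break the geometric summation.
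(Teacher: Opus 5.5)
Your proof is correct and is essentially the paper's argument. The paper applies the two-sided Hoeffding inequality at each fixed $n$ with $u=\varepsilon n+x$, then drops the $x^2$ term via $(\varepsilon n+x)^2\ge\varepsilon^2n^2+2\varepsilon nx$; this gives exactly your fixed-$\lambda$ term $e^{-\varepsilon x/B^2}\bigl(e^{-\varepsilon^2/(2B^2)}\bigr)^n$, followed by the same geometric union bound over $n$.

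One correction to your closing remark: optimizing $\lambda$ separately for each $n$ does not break the summation. The per-$n$ optimized Chernoff term is never larger than the fixed-$\lambda$ one, and that is precisely the route the paper takes.
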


\begin{proof}[Proof \textup{(}full computation\textup{)}]
Hoeffding's inequality applied to the i.i.d.\ $[-B,B]$-bounded
increments: for each fixed $n$, $u>0$,
\[
\mathbb P\big(|S^a_j(n)-nd_a^j|>u\big)\;\le\;2\exp\Big(-\frac{u^2}{2B^2n}\Big).
\]
Set $u=\varepsilon n+x$ and use $(\varepsilon
n+x)^2=\varepsilon^2n^2+2\varepsilon nx+x^2\ge\varepsilon^2n^2+2\varepsilon
nx$:
\[
\frac{(\varepsilon n+x)^2}{2B^2n}\;\ge\;\frac{\varepsilon^2n}{2B^2}+\frac{\varepsilon x}{B^2}\quad\Longrightarrow\quad
\mathbb P\big(|S^a_j(n)-nd_a^j|>\varepsilon n+x\big)\;\le\;2\,e^{-\varepsilon x/B^2}\,e^{-\varepsilon^2n/(2B^2)}.
\]
Union bound over $n\ge1$: $\sum_{n\ge1}e^{-\varepsilon^2n/(2B^2)}=\frac{e^{-\varepsilon^2/(2B^2)}}{1-e^{-\varepsilon^2/(2B^2)}}\le\frac{1}{1-e^{-\varepsilon^2/(2B^2)}}$,
which gives \eqref{eq:ub:devbound}.
\end{proof}

\begin{definition}[Good event]\label{def:goodevent}
For $\varepsilon>0$ and $t\ge1$, define
\begin{equation}
\mathcal E_t(\varepsilon)\;:=\;\Big\{\forall a\in\mathcal A,\ \forall j\in\mathrm{Alt}^*:\ \big|S^a_j(N_a(t))-N_a(t)\,d_a^j\big|\;\le\;\varepsilon N_a(t)+\frac{\varepsilon t}{K_{\mathcal A}}\Big\}. \label{eq:ub:goodevent}
\end{equation}
\end{definition}

\begin{lemma}[Bad-event probability]\label{lem:badevent}
With $c_3(\varepsilon):=K_{\mathcal A}(K-1)\,c_1(\varepsilon)$ and
$c_4:=1/(K_{\mathcal A}B^2)$,
\begin{equation}
\mathbb P_{i^*}\big(\mathcal E_t(\varepsilon)^c\big)\;\le\;c_3(\varepsilon)\,e^{-c_4\varepsilon^2 t}. \label{eq:ub:badevent}
\end{equation}
Moreover on $\mathcal E_t(\varepsilon)\cap\{t\ge T_0\}$, for every
$j\in\mathrm{Alt}^*$:
\begin{equation}
\mathrm{LLR}_{i^*,j}(t)\;\ge\;t\,D_j(w^*)\;-\;2\varepsilon t\;-\;K_{\mathcal A}\,\bar d\;\xi(t),\qquad \bar d:=\max_{a,j}d_a^j\le\log\frac1{q_{\min}}, \label{eq:ub:llrlb}
\end{equation}
and consequently
\begin{equation}
\min_{j\in\mathrm{Alt}^*}\mathrm{LLR}_{i^*,j}(t)\;\ge\;t\big(S^*-2\varepsilon\big)\;-\;C_6\sqrt t\qquad\text{on }\mathcal E_t(\varepsilon)\cap\{T_0\le\sqrt t\}\cap\{t\ge\bar t\}, \label{eq:ub:llrlb2}
\end{equation}
where $C_6$ and $\bar t$ are deterministic constants (depending on
$(\mathcal E,\mathcal A,\mathcal Y,g,q_{\min})$ only).
\end{lemma}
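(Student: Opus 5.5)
The bad-event bound comes from Lemma~\ref{lem:llrdev} plus a union bound; the lower bounds come from the decomposition \eqref{eq:ub:llrdecomp} and the tracking Lemma~\ref{lem:tracking}.

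\emph{Bad-event probability.} Fix $(a,j)$ with $j\in\mathrm{Alt}^*$. On the complement of the $(a,j)$ clause of $\mathcal E_t(\varepsilon)$, there exists $n=N_a(t)$ with $|S^a_j(n)-nd_a^j|>\varepsilon n+\varepsilon t/K_{\mathcal A}$. This event is contained in the uniform event of Lemma~\ref{lem:llrdev} with $x=\varepsilon t/K_{\mathcal A}$. The containment is legitimate because that lemma is uniform over $n\ge1$, so the randomness of $N_a(t)$ costs nothing. If $N_a(t)=0$ the clause holds trivially.

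Lemma~\ref{lem:llrdev} then gives probability at most $c_1(\varepsilon)\exp(-\varepsilon^2 t/(K_{\mathcal A}B^2))$ for each pair. A union bound over the at most $K_{\mathcal A}(K-1)$ pairs yields \eqref{eq:ub:badevent}.

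\emph{Lower bound \eqref{eq:ub:llrlb}.} On $\mathcal E_t(\varepsilon)$ we have, for every $a$,
\[
S^a_j(N_a(t))\ge N_a(t)d_a^j-\varepsilon N_a(t)-\varepsilon t/K_{\mathcal A}.
\]
Summing over $a$ via \eqref{eq:ub:llrdecomp} and using $\sum_aN_a(t)=t$ gives
\[
\mathrm{LLR}_{i^*,j}(t)\ge \sum_aN_a(t)d_a^j-2\varepsilon t.
\]
On $\{t\ge T_0\}$, Lemma~\ref{lem:tracking} gives $N_a(t)\ge tw^*_a-\xi(t)$. Together with $0\le d_a^j\le\bar d$ this yields
\[
\sum_aN_a(t)d_a^j\ge tD_j(w^*)-K_{\mathcal A}\bar d\,\xi(t),
\]
which is \eqref{eq:ub:llrlb}. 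The bound $\bar d\le\log(1/q_{\min})$ is consequence (i) of \ref{asm:H4}.

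\emph{Uniform form \eqref{eq:ub:llrlb2}.} Take the minimum over $j$ and use $D_j(w^*)\ge S^*$. On $\{T_0\le\sqrt t\}$ with $t\ge1$, the definition of $\xi$ gives
\[
\xi(t)\le(K_{\mathcal A}-1)(\sqrt t+1)+(K_{\mathcal A}-1)K_{\mathcal A}(\sqrt t+K_{\mathcal A})\le C'\sqrt t
\]
for a deterministic $C'$, once $t\ge\bar t$ (say $\bar t=1$ after absorbing additive constants into $\sqrt t$). Set $C_6:=K_{\mathcal A}\bar d\,C'$. Also $\{T_0\le\sqrt t\}\subseteq\{t\ge T_0\}$ for $t\ge1$, so \eqref{eq:ub:llrlb} applies there.

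\emph{Main point of care.} The only delicate step is the first: the random index $N_a(t)$ must be handled through the time-uniform bound of Lemma~\ref{lem:llrdev}. We must also check that the i.i.d.\ representation of Lemma~\ref{lem:iid} makes $S^a_j(N_a(t))$ coincide with the per-action portion of the LLR. The remaining steps are bookkeeping of constants.
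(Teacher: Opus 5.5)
Your proposal is correct and follows the paper's proof essentially step for step: a union bound over the $K_{\mathcal A}(K-1)$ pairs using Lemma~\ref{lem:llrdev} with $x=\varepsilon t/K_{\mathcal A}$, whose uniformity in $n$ absorbs the random count $N_a(t)$; then summing the good-event bounds through the LLR decomposition and inserting the tracking bound $N_a(t)\ge tw^*_a-\xi(t)$; and finally bounding $\xi(t)\le C'\sqrt t$ on $\{T_0\le\sqrt t\}$. Your two small additions make explicit points the paper leaves implicit: the trivial case $N_a(t)=0$, and the inclusion $\{T_0\le\sqrt t\}\subseteq\{t\ge T_0\}$ for $t\ge1$.
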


\begin{proof}
\eqref{eq:ub:badevent}: apply Lemma~\ref{lem:llrdev} with $x=\varepsilon
t/K_{\mathcal A}$ to each of the $K_{\mathcal A}|\mathrm{Alt}^*|\le
K_{\mathcal A}(K-1)$ pairs $(a,j)$ and take a union bound; the
exponential factor per pair is $\exp(-\varepsilon
x/B^2)=\exp\big(-\varepsilon^2 t/(K_{\mathcal A}B^2)\big)$, whence the
stated value $c_4=1/(K_{\mathcal A}B^2)$; note the bound is uniform in
$n$, so it applies to the random count $N_a(t)$ without any
conditioning. \eqref{eq:ub:llrlb}: on $\mathcal E_t(\varepsilon)$,
summing the deviation bound over $a$:
\[
\mathrm{LLR}_{i^*,j}(t)=\sum_aS^a_j(N_a(t))\;\ge\;\sum_aN_a(t)d_a^j-\varepsilon\sum_aN_a(t)-\varepsilon t\;=\;\sum_aN_a(t)d_a^j-2\varepsilon t,
\]
and by Lemma~\ref{lem:tracking} (for $t\ge T_0$), $N_a(t)\ge
tw^*_a-\xi(t)$, so $\sum_aN_a(t)d_a^j\ge tD_j(w^*)-\xi(t)\sum_ad_a^j\ge
tD_j(w^*)-K_{\mathcal A}\bar d\,\xi(t)$. \eqref{eq:ub:llrlb2}: on
$\{T_0\le\sqrt t\}$,
$\xi(t)=(K_{\mathcal A}-1)(T_0+1)+(K_{\mathcal A}-1)K_{\mathcal
A}(\sqrt t+K_{\mathcal A})\le C_6'\sqrt t$ for $t\ge\bar t$ with
deterministic $C_6',\bar t$; take $C_6:=K_{\mathcal A}\bar d\,C_6'$.
\end{proof}

\subsubsection{Step (iii), Part 2: Stopping-Time Upper Bound}\label{subsubsec:ub:stopbound}

\begin{theorem}\label{thm:stoptime}
Under the standing setup, with $s^*=1/S^*$ (Eq.\ \eqref{eq:ub:setup})
and $L=\log(1/\delta)$:
\begin{equation}
\mathbb E_{i^*}[\tau]\;\le\;s^*\,L\,\big(1+o(1)\big)\qquad(\delta\to0), \label{eq:ub:stopbound}
\end{equation}
where the $o(1)$ depends on $(\mathcal E,\mathcal A,\mathcal
Y,g,q_{\min})$ but not on the sampling history. Moreover $\tau<\infty$
a.s.\ and $\mathbb E[\tau^p]<\infty$ for every $p<\frac54$ (in fact
$\mathbb P(\tau>t)=O(t^{-9/4})$ beyond the $O(L)$ scale, plus an
exponentially small term). More precisely, for fixed $\delta$ the tail
bound is the sum of the exponential term
$c_3(\varepsilon_\delta)e^{-c_4\varepsilon_\delta^2 t}$ --- which by
itself would permit moments of every order --- and the polynomial term
$O(t^{-9/4})$, which alone permits all $p<\frac94$ (the stated
$p<\frac54$ is conservative; higher prescribed moments are available by
raising the exponent in Lemma~\ref{lem:empdev}, cf.\
Remark~\ref{rem:noui}).
\end{theorem}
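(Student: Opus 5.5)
The plan follows the Track-and-Stop stopping-time argument of \citet{garivier2016optimal}: split time into a deterministic horizon $T^*(\delta)$ slightly beyond $s^*L$ and a tail controlled by bad events. Fix $\varepsilon>0$ small, with $\varepsilon<S^*/4$. Define
\[
T^*_\varepsilon(\delta):=\inf\bigl\{t\ge\bar t:\ t(S^*-2\varepsilon)-C_6\sqrt t>\beta(t,\delta)\bigr\}.
\]
Since $\beta(t,\delta)=L+\log(2t(K-1))$ grows only logarithmically in $t$, solving the inequality gives
\[
T^*_\varepsilon(\delta)\le \frac{L}{S^*-2\varepsilon}\bigl(1+o(1)\bigr)\qquad(\delta\to0).
\]
The key deterministic inclusion is then
\[
\mathcal E_t(\varepsilon)\cap\{T_0\le\sqrt t\}\cap\{t\ge T^*_\varepsilon(\delta)\}\subseteq\{\tau\le t\}.
\]
To see this, note that on that event \eqref{eq:ub:llrlb2} gives $\min_{j\in\mathrm{Alt}^*}\mathrm{LLR}_{i^*,j}(t)>\beta(t,\delta)$. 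The GLR statistic $\max_i\min_{j:\Theta_j\ne\Theta_i}\mathrm{LLR}_{i,j}(t)$ is at least the $i=i^*$ term, so the stopping rule \eqref{eq:ub:glr} has fired by time $t$.

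Next comes the tail sum. Write $\mathbb E[\tau]\le T^*_\varepsilon(\delta)+\sum_{t\ge T^*_\varepsilon}\mathbb P(\tau>t)$, and bound
\[
\mathbb P(\tau>t)\le\mathbb P(\mathcal E_t(\varepsilon)^c)+\mathbb P(T_0>\sqrt t)\le c_3(\varepsilon)e^{-c_4\varepsilon^2t}+c_4'\,t^{-9/4},
\]
using Lemma~\ref{lem:badevent} for the first term and the tail \eqref{eq:ub:stabtail} of Lemma~\ref{lem:stabilization} evaluated at $\sqrt t$ for the second. Both series are summable, and their sums do not depend on $\delta$ because neither $\mathcal E_t$ nor $T_0$ involves $\delta$. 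Hence
\[
\mathbb E[\tau]\le\frac{L}{S^*-2\varepsilon}\bigl(1+o(1)\bigr)+C(\varepsilon).
\]
Dividing by $L$, letting $\delta\to0$ and then $\varepsilon\downarrow0$ gives \eqref{eq:ub:stopbound}. To get an explicit $o(1)$ one can instead take $\varepsilon=\varepsilon_\delta\to0$ slowly, for instance $\varepsilon_\delta=L^{-1/4}$. Then $c_3(\varepsilon_\delta)\asymp\varepsilon_\delta^{-2}$ and $\sum_t e^{-c_4\varepsilon_\delta^2t}\asymp\varepsilon_\delta^{-2}$, so this contribution is $O(\varepsilon_\delta^{-4})=O(L)$. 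That is not $o(L)$, so I would choose $\varepsilon_\delta=L^{-1/8}$, which makes it $O(L^{1/2})=o(L)$. The constants $C_6$ and $\bar t$ in \eqref{eq:ub:llrlb2} must also be checked to be independent of $\varepsilon$; they are, since they come only from $\xi(t)$.

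Finiteness and moments follow from the same tail bound. For fixed $\delta$, $\mathbb P(\tau>t)\le c\,e^{-c't}+O(t^{-9/4})$ for $t\ge T^*$, which gives $\tau<\infty$ a.s. and $\mathbb E[\tau^p]<\infty$ for all $p<9/4$, hence in particular for $p<5/4$. Almost-sure termination under every environment (Remark~\ref{rem:tauevery}) follows by applying the argument with each $j$ in the role of $i^*$, which is where the full \ref{asm:H2plus} is needed.

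The main obstacle is a circularity. Lemma~\ref{lem:badevent} uses the tracking bound at the random time $t$ only on $\{t\ge T_0\}$, and the tracking expectation bound \eqref{eq:ub:trackexp} presupposes $\mathbb E[\tau]<\infty$. The argument above avoids both problems: it uses only pathwise statements at deterministic $t$ intersected with $\{T_0\le\sqrt t\}$, and it never uses \eqref{eq:ub:trackexp}. So no circularity arises, but care is needed that the $o(1)$ term is uniform over sampling histories, as the statement claims.
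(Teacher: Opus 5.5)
Your proposal is correct and follows the paper's proof step for step. The shared steps are: the deterministic crossing time beyond $\bar t$; the inclusion $\{\tau>t\}\subseteq\mathcal E_t(\varepsilon)^c\cup\{T_0>\sqrt t\}$ for $t$ past that time; the tail sum $c_3(\varepsilon)/(c_4\varepsilon^2)+c_4'\sum_t t^{-9/4}$; and the schedule $\varepsilon_\delta=L^{-1/8}$. Your preliminary variant (fix $\varepsilon$, let $\delta\to0$, then let $\varepsilon\downarrow0$) is also valid, because the policy does not depend on $\varepsilon$. The one detail the paper makes explicit and you skip is that the crossing inequality must hold at every $t\ge T^*_\varepsilon(\delta)$, not only at the infimum. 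This holds because $t\mapsto t(S^*-2\varepsilon)-C_6\sqrt t-\beta(t,\delta)$ is convex, hence eventually increasing, and the point beyond which it increases does not depend on $\delta$.
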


\begin{proof}
Set $\varepsilon=\varepsilon_\delta:=L^{-1/8}$ (any schedule with
$\varepsilon_\delta\to0$ and $\varepsilon_\delta^{-4}=o(L)$ works).
\emph{Step 1: the deterministic crossing time.} On the event $\mathcal
E_t(\varepsilon)\cap\{T_0\le\sqrt t\}\cap\{t\ge\bar t\}$, by
\eqref{eq:ub:llrlb2} the stopping statistic satisfies
\[
\max_i\min_{j:\Theta_j\neq\Theta_i}\mathrm{LLR}_{i,j}(t)\;\ge\;\min_{j\in\mathrm{Alt}^*}\mathrm{LLR}_{i^*,j}(t)\;\ge\;t\big(S^*-2\varepsilon\big)-C_6\sqrt t.
\]
Hence $\tau\le t$ as soon as $t(S^*-2\varepsilon)-C_6\sqrt
t>\beta(t,\delta)=L+\log(2t(K-1))$. Define
\[
t_1(\delta,\varepsilon)\;:=\;\min\big\{t\ge\bar t:\ t\big(S^*-2\varepsilon\big)-C_6\sqrt t\;\ge\;L+\log\big(2t(K-1)\big)\big\}.
\]
The constraint function
$\psi(t):=t(S^*-2\varepsilon)-C_6\sqrt t-\beta(t,\delta)$ is
nondecreasing on $t\ge\bar t'$ for a deterministic $\bar t'$: its
continuous derivative $(S^*-2\varepsilon)-\frac{C_6}{2\sqrt t}-\frac1t$
is positive for all sufficiently large $t$ since $S^*-2\varepsilon>0$.
Enlarging the deterministic $\bar t$ to $\max\{\bar t,\bar t'\}$ if
needed, the crossing inequality therefore holds for \emph{every} $t\ge
t_1(\delta,\varepsilon)$, which is exactly what the contrapositive of
Step 2 below uses.
\emph{Inversion computation.} Any $t$ satisfying the constraint obeys
$t\ge L/S^*$; for $t\le 2s^*L$ we have
$\log(2t(K-1))\le\log(4s^*L(K-1))=:c_{\log}$ and $C_6\sqrt t\le
C_6\sqrt{2s^*L}$. Hence the constraint is implied by
$t(S^*-2\varepsilon)\ge L+c_{\log}+C_6\sqrt{2s^*L}$, i.e.\
\[
t\;\ge\;\frac{L+c_{\log}+C_6\sqrt{2s^*L}}{S^*-2\varepsilon}\;=\;\frac{s^*L\big(1+\frac{c_{\log}}{L}+\frac{C_6\sqrt{2s^*}}{\sqrt L}\big)}{1-2\varepsilon/S^*}\;=:\;\bar t_1(\delta,\varepsilon),
\]
provided $\bar t_1\le 2s^*L$ (true for $L$ large, since the numerator
is $L(1+o(1))$ and $S^*-2\varepsilon\to S^*$). So
$t_1(\delta,\varepsilon_\delta)\le\bar
t_1(\delta,\varepsilon_\delta)=s^*L\big(1+O(\varepsilon_\delta)+O\big(\tfrac{\log
L}{L}\big)\big)=s^*L(1+o(1))$.

\emph{Step 2: tail sum.} $\mathbb E[\tau]=\sum_{t\ge0}\mathbb
P(\tau>t)\le t_1+\sum_{t\ge t_1}\mathbb P(\tau>t)$. For $t\ge
t_1(\ge\bar t)$, the contrapositive of Step 1 gives
\[
\{\tau>t\}\;\subseteq\;\mathcal E_t(\varepsilon)^c\;\cup\;\{T_0>\sqrt t\},
\]
so by Lemma~\ref{lem:badevent} and Lemma~\ref{lem:stabilization},
\begin{equation}
\sum_{t\ge t_1}\mathbb P(\tau>t)\;\le\;\sum_{t\ge1}c_3(\varepsilon)e^{-c_4\varepsilon^2t}+\sum_{t\ge1}c_4'\,(\sqrt t)^{-9/2}\;\le\;\frac{c_3(\varepsilon)}{c_4\varepsilon^2}+c_4'\sum_{t\ge1}t^{-9/4}.\label{eq:ub:tailsum}
\end{equation}
The second sum is a finite constant. For the first:
$c_3(\varepsilon)=K_{\mathcal
A}(K-1)\cdot\frac{2}{1-e^{-\varepsilon^2/(2B^2)}}=O(\varepsilon^{-2})$
as $\varepsilon\to0$ (since $1-e^{-u}\sim u$), so
$\frac{c_3(\varepsilon)}{c_4\varepsilon^2}=O(\varepsilon^{-4})=O(L^{1/2})=o(L)$
by the choice $\varepsilon_\delta=L^{-1/8}$. Hence $\mathbb
E[\tau]\le t_1+o(L)=s^*L(1+o(1))+o(L)=s^*L(1+o(1))$.
\end{proof}

\begin{remark}[$\tau<\infty$ under every environment]\label{rem:tauevery}
Fix any environment $i'$ with $\mathrm{Alt}(i')\neq\varnothing$ as the
truth. The machinery of
Subsubsections~\ref{subsubsec:ub:forced}--\ref{subsubsec:ub:stopbound}
is environment-generic: forced-exploration coverage and budget
\eqref{eq:ub:coverage} are sampling-rule properties; MLE stabilization
(Subsubsection~\ref{subsubsec:ub:mleconsistency}) uses only
\ref{asm:H1}, \ref{asm:H4}, \ref{asm:H2plus} for the pairs $(i',j)$
--- all assumed; the tracking lemma
(Subsubsection~\ref{subsubsec:ub:tracking}) holds for any frozen
target; the good events (Subsubsection~\ref{subsubsec:ub:llrconc})
involve only the channels. After stabilization, the target is $w^*(i')$ in
the unperturbed branch and $w^*_{\eta_\delta}(i')$ in the perturbed branch.
By \ref{asm:H2} and Lemma~\ref{lem:wellposed}(iii), the corresponding
information rate is strictly positive for every fixed $\delta$, because
$\eta_\delta>0$. The crossing-time analysis of Theorem~\ref{thm:stoptime} then
applies verbatim and yields $\mathbb P_{i'}(\tau<\infty)=1$ (indeed
$\mathbb E_{i'}[\tau]\le s^*(i')L(1+o(1))$). If
$\mathrm{Alt}(i')=\varnothing$, then $\tau=1$ by the empty-min
convention of \eqref{eq:ub:glr}. Hence $\mathbb P_j(\tau<\infty)=1$
for every $j$, which closes the ``$\delta$-correct'' statement of
Theorem~\ref{thm:UBa}. This mirrors the Proposition~13 structure of
\citet{garivier2016optimal} (good event $+$ forced exploration $+$ tail
sum), with the tail computed explicitly in \eqref{eq:ub:tailsum}.
\end{remark}

\subsubsection{Step (iv): Cost Assembly and the LP Identity}\label{subsubsec:ub:cost}

\begin{lemma}[LP identity for the ratio design]\label{lem:lpidentity}
Let $g>0$ and \ref{asm:H2} hold for $i^*$. Then
\begin{equation}
\min_{w\in\Delta(\mathcal A)}\ \frac{c(w)}{\min_{j\in\mathrm{Alt}^*}D_j(w)}\;=\;C_{i^*}, \label{eq:ub:lpidentity}
\end{equation}
the minimum is attained at $w^*$ (Definition \eqref{eq:ub:design}), and
therefore
\begin{equation}
\frac{c(w^*)}{S^*}\;=\;\frac{\sum_a g(a)w^*_a}{\min_{j\in\mathrm{Alt}^*}\sum_a w^*_a d_a^j}\;=\;C_{i^*}. \label{eq:ub:ratioidentity}
\end{equation}
Moreover
$C_{i^*}=\big[\sup_w\inf_j\sum_a\frac{w_a}{g(a)}d_a^j\big]^{-1}$ (the
$(w/g)$-weighted max--min form of the main text), and
$C_{i^*}=\max\big\{\sum_j\lambda_j:\ \sum_j\lambda_jd_a^j\le g(a)\
\forall a,\ \lambda\ge0\big\}$ (LP dual form).
\end{lemma}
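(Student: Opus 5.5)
The plan is to reduce \eqref{eq:ub:lpidentity} to the covering LP by a scaling bijection between feasible count vectors and pairs (allocation, scale), in the spirit of Step~1.2 of the proof of Theorem~\ref{thm:equiv}, but in the sampling coordinate $w$ rather than the loss-normalized coordinate. The dual and $(w/g)$ forms are then imported from Theorem~\ref{thm:equiv}, which holds in the regular case $g>0$, $d_a^j<\infty$ guaranteed here by \ref{asm:H4}.

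\emph{Step 1 (from allocations to LP points).} Fix $w\in\Delta(\mathcal A)$ with $m(w):=\min_{j\in\mathrm{Alt}^*}D_j(w)>0$. Set $n:=w/m(w)$. Then $\sum_a n_a d_a^j=D_j(w)/m(w)\ge1$ for every $j$, so $n$ is feasible, and its objective is $\sum_a g(a)n_a=c(w)/m(w)$. Hence $c(w)/m(w)\ge C_{i^*}$. If $m(w)=0$, the ratio is $+\infty$ because $c(w)\ge\min_a g(a)>0$, so the bound holds trivially. Therefore $\inf_w c(w)/m(w)\ge C_{i^*}$.

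\emph{Step 2 (from LP points to allocations).} By Step~1.1 of Theorem~\ref{thm:equiv}, the LP attains its value at some $n^*\neq0$, since $0$ is infeasible. Put $w:=n^*/\sum_b n^*_b$ and $\sigma:=\sum_b n^*_b$. Then $m(w)=\min_j\sum_a n^*_a d_a^j/\sigma\ge1/\sigma$ and $c(w)=C_{i^*}/\sigma$. It follows that $c(w)/m(w)\le C_{i^*}$. Together with Step~1 this gives equality, so the infimum is a minimum attained at the normalized LP solution.

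\emph{Step 3 (identifying $w^*$).} Since $c>0$ on the simplex, minimizing $c(w)/m(w)$ over $w$ with $m(w)>0$ is the same as maximizing $\Phi_{i^*}(w)=m(w)/c(w)$. The argmax sets therefore coincide, and the lexicographically chosen $w^*$ of \eqref{eq:ub:design} is a minimizer. This yields \eqref{eq:ub:ratioidentity} with $S^*=m(w^*)$, and also closes Lemma~\ref{lem:wellposed}(iii).

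\emph{Step 4 (other forms).} The $(w/g)$ max--min form and the dual form $D^*=C$ follow verbatim from \eqref{eq:lb:reparam}--\eqref{eq:lb:duality}; here $d_a^j<\infty$ by \ref{asm:H4} and $g>0$ by the standing setup. The only point needing care, which I expect to be the one real (mild) obstacle, is not conflating the two optimizers. The map $v_a=w_a g(a)/c(w)$ is a bijection of the simplex interior that preserves the objective values, so the optimal values agree. I will state this explicitly and note that the optimizers differ, as in Remark~\ref{rem:deviation}.
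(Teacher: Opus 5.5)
Your proposal is correct and takes essentially the same route as the paper. You scale any allocation with $\min_j D_j(w)>0$ into a feasible LP point to get the lower bound, normalize an LP optimizer to get attainment, identify $w^*$ by maximizing $\Phi_{i^*}=\min_j D_j/c$, and take the dual and $(w/g)$ forms from the Sion/duality argument of Theorem~\ref{thm:equiv}. Two small refinements: your appeal to \ref{asm:H4} for $d_a^j<\infty$ is appropriate, since the paper's closed-feasible-set attainment step uses it tacitly; and because $g>0$, the map $v_a=w_a g(a)/c(w)$ is a bijection of the whole simplex, not just its interior, so no density argument is needed.
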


\begin{proof}
\emph{Step 1 ($\inf\ge C_{i^*}$).} Let $w\in\Delta(\mathcal A)$ with
$\rho:=\min_jD_j(w)>0$. Set $m_a:=w_a/\rho$. Then $\sum_a m_a
d_a^j=D_j(w)/\rho\ge1$ for all $j$, so $m$ is feasible for the LP
defining $C_{i^*}$, and $\sum_a g(a)m_a=c(w)/\rho\ge C_{i^*}$. (If
$\rho=0$, the ratio is $+\infty$.) Taking the infimum over $w$ gives
$\inf_w\frac{c(w)}{\min_jD_j(w)}\ge C_{i^*}$.

\emph{Step 2 ($\min\le C_{i^*}$, attained).} The LP is feasible
(\ref{asm:H2}) and its objective $m\mapsto\sum_ag(a)m_a$ is coercive on
the feasible set ($g>0$: $\|m\|\to\infty\Rightarrow\sum g m\to\infty$),
and the feasible set is closed; hence an LP optimizer $m^*$ exists with
$\sum_ag(a)m^*_a=C_{i^*}$. Since the constraints scale,
$\min_j\sum_am^*_ad_a^j=1$ (if it were $>1$, scaling $m^*$ down
strictly reduces the objective; if any constraint failed, infeasible).
Set $w^\circ_a:=m^*_a/\sum_bm^*_b$ (note $m^*\neq0$). Then
$D_j(w^\circ)=\frac{\sum_am^*_ad_a^j}{\sum_bm^*_b}\ge\frac{1}{\sum_bm^*_b}$
with equality at the minimizing $j$, and
$c(w^\circ)=\frac{C_{i^*}}{\sum_bm^*_b}$, so
$\frac{c(w^\circ)}{\min_jD_j(w^\circ)}=C_{i^*}$. Hence the infimum in
\eqref{eq:ub:lpidentity} equals $C_{i^*}$ and is attained at $w^\circ$
(and $w^*$ is \emph{a} maximizer of
$\Phi_{i^*}=\frac{\min_jD_j(\cdot)}{c(\cdot)}$, i.e.\ a minimizer of
the ratio, by Definition \eqref{eq:ub:design}; the lexicographic
selection only picks among optimizers).

\emph{Step 3 (other forms).} LP strong duality for the covering LP
(feasible, bounded) gives
$C_{i^*}=\max\{\sum_j\lambda_j:\sum_j\lambda_jd_a^j\le g(a)\,\forall
a\}$. For the $(w/g)$-weighted form, apply Sion's minimax theorem to
$(w,\mu)\mapsto\sum_a\frac{w_a}{g(a)}\sum_j\mu_jd_a^j$ on
$\Delta(\mathcal A)\times\Delta(\mathrm{Alt}^*)$ (both compact convex,
payoff bilinear continuous):
$\sup_w\inf_j\sum_a\frac{w_a}{g(a)}d_a^j=\inf_{\mu}\max_a\frac{\bar
D_a(\mu)}{g(a)}$ with $\bar D_a(\mu):=\sum_j\mu_jd_a^j$. The dual LP
value equals $\sup_{\mu}\min_a\frac{g(a)}{\bar
D_a(\mu)}=\big[\inf_\mu\max_a\frac{\bar D_a(\mu)}{g(a)}\big]^{-1}$
(restrict the dual constraint $\sum_j\lambda_jd_a^j\le g(a)$ to
$\lambda=\Lambda\mu$ and optimize $\Lambda$). Hence
$\sup_w\inf_j\sum_a\frac{w_a}{g(a)}d_a^j=\frac{1}{C_{i^*}}$: the two
max--min forms coincide in \emph{value} (their maximizers differ in
general --- Remark~\ref{rem:deviation}).
\end{proof}

\begin{theorem}[Cost assembly; Theorem~\ref{thm:UB}(b)]\label{thm:costassembly}
Under the standing setup of Subsection~\ref{subsec:ub:partb},
\begin{equation}
\limsup_{\delta\to0}\ \frac{R(\pi^*,E_{i^*})}{\log(1/\delta)}\;\le\;C_{i^*}. \label{eq:ub:costbound}
\end{equation}
\end{theorem}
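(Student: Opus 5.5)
The plan is to write the cost as
\[
R(\pi^*,E_{i^*})=\sum_a g(a)\,\mathbb E_{i^*}[N_a(\tau)]
\]
(Tonelli, Subsubsection~\ref{subsubsec:ub:model}). Each expected count will be split into a tracked part and an error part, $N_a(\tau)\le \tau w^*_a+|N_a(\tau)-\tau w^*_a|$. Summing with weights $g(a)>0$ gives
\[
R\le c(w^*)\,\mathbb E[\tau]+\bar g\sum_a \mathbb E\,|N_a(\tau)-\tau w^*_a|,\qquad \bar g:=\max_a g(a).
\]
The first term is handled by Theorem~\ref{thm:stoptime}, which gives $\mathbb E[\tau]\le s^*L(1+o(1))$. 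The LP identity \eqref{eq:ub:ratioidentity} then gives $c(w^*)s^*=c(w^*)/S^*=C_{i^*}$, so this term is $C_{i^*}L(1+o(1))$.

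For the error term we invoke the expected tracking bound \eqref{eq:ub:trackexp} of Lemma~\ref{lem:tracking}. It bounds each summand by
\[
(K_{\mathcal A}-1)(\mathbb E[T_0]+1)+(K_{\mathcal A}-1)K_{\mathcal A}\bigl(\sqrt{\mathbb E[\tau]}+K_{\mathcal A}\bigr)+\mathbb E[T_0].
\]
Here $\mathbb E[T_0]<\infty$ is a constant independent of $\delta$. This is the key point to check: Lemma~\ref{lem:stabilization} uses only coverage, (H2$^+$) and (H4), none of which depend on $\delta$, since the forced-exploration curve is $\delta$-free. Combined with $\sqrt{\mathbb E[\tau]}=O(\sqrt L)$ from Theorem~\ref{thm:stoptime}, the whole error term is $O(\sqrt L)=o(L)$. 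The use of Jensen inside \eqref{eq:ub:trackexp} requires $\mathbb E[\tau]<\infty$, which Theorem~\ref{thm:stoptime} already supplies.

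Dividing by $L$ and taking $\limsup_{\delta\to0}$ then yields \eqref{eq:ub:costbound}.

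The main obstacle is uniformity in $\delta$. I must confirm that the law of $T_0$ and the constants $C_6,\bar t$ in Lemma~\ref{lem:badevent} do not depend on $\delta$. The sampling rule is $\delta$-independent in the positive-cost branch, which is the only branch relevant here since $g>0$ on the whole table. So the pre-stopping process does not depend on $\delta$, and only the stopping time does; the bounds of Theorem~\ref{thm:stoptime} and \eqref{eq:ub:trackexp} therefore hold with $\delta$-free constants, and no uniform-integrability argument is needed (Remark~\ref{rem:trackingui}). I would state this observation explicitly before the final limit. The zero-cost branch I would defer to Theorem~\ref{thm:degenerate}.
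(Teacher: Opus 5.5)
Your proposal is correct and matches the paper's proof essentially line for line: the Tonelli cost identity, the expected tracking bound \eqref{eq:ub:trackexp} with a $\delta$-free $\mathbb E[T_0]$ and $\sqrt{\mathbb E[\tau]}=O(\sqrt L)$, the stopping-time bound of Theorem~\ref{thm:stoptime}, and the LP identity $c(w^*)/S^*=C_{i^*}$. The paper also justifies treating $\mathbb E[T_0]$ as $O(1)$ the same way you do, namely that the positive-branch sampling rule does not depend on $\delta$, so only the stopping time does.
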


\begin{proof}
By Tonelli's theorem (Subsubsection~\ref{subsubsec:ub:model}),
$R(\pi^*,E_{i^*})=\sum_ag(a)\,\mathbb E[N_a(\tau)]$; all terms are
finite since $\mathbb E[N_a(\tau)]\le\mathbb E[\tau]<\infty$
(Theorem~\ref{thm:stoptime}) and $g$ is finite. For each $a$,
\eqref{eq:ub:trackexp} of Lemma~\ref{lem:tracking} gives
\[
\mathbb E[N_a(\tau)]\;\le\;w^*_a\,\mathbb E[\tau]\;+\;(K_{\mathcal A}-1)\big(\mathbb E[T_0]+1\big)+(K_{\mathcal A}-1)K_{\mathcal A}\big(\sqrt{\mathbb E[\tau]}+K_{\mathcal A}\big)\;+\;\mathbb E[T_0].
\]
Multiply by $g(a)$ and sum over $a$:
\[
R(\pi^*,E_{i^*})\;\le\;c(w^*)\,\mathbb E[\tau]\;+\;K_{\mathcal A}\,g_{\max}\Big[(K_{\mathcal A}-1)\big(\mathbb E[T_0]+1\big)+(K_{\mathcal A}-1)K_{\mathcal A}\big(\sqrt{\mathbb E[\tau]}+K_{\mathcal A}\big)+\mathbb E[T_0]\Big].
\]
The bracket is $O(1)+O(\sqrt{\mathbb E[\tau]})=o(L)$: $\mathbb
E[T_0]<\infty$ is independent of $\delta$
(Lemma~\ref{lem:stabilization}; the sampling rule does not depend on
$\delta$), and $\mathbb E[\tau]\le s^*L(1+o(1))$
(Theorem~\ref{thm:stoptime}) gives $\sqrt{\mathbb E[\tau]}=O(\sqrt L)$.
Inserting the stopping-time bound:
\[
R(\pi^*,E_{i^*})\;\le\;c(w^*)\,s^*\,L\,\big(1+o(1)\big)+o(L)\;=\;\frac{c(w^*)}{S^*}\,L\,\big(1+o(1)\big)\;=\;C_{i^*}\,L\,\big(1+o(1)\big),
\]
where the last equality is Lemma~\ref{lem:lpidentity}, Eq.\
\eqref{eq:ub:ratioidentity}.
\end{proof}

\begin{remark}[Why no uniform-integrability detour is needed]\label{rem:noui}
The assembly above controls $\mathbb E|N_a(\tau)-\tau w^*_a|$ directly
by the \emph{pointwise} tracking error \eqref{eq:ub:trackerr} plus
Jensen's inequality $\mathbb E\sqrt\tau\le\sqrt{\mathbb E\tau}$; the
only integrability inputs are $\mathbb E[T_0]<\infty$ (polynomial tail
\eqref{eq:ub:stabtail}) and $\mathbb E[\tau]<\infty$
(Theorem~\ref{thm:stoptime}). Higher moments of $\tau$ are also
available if wanted: the tail bounds of Theorem~\ref{thm:stoptime} give
$\mathbb E[\tau^p]<\infty$ for $p<5/4$ as stated (in fact $p<9/4$ from
the polynomial summand alone, the exponential summand at fixed $\delta$
permitting all orders), and raising the exponent $q=5$ in
$h(n)=\sqrt{q\log n/n}$ of Lemma~\ref{lem:empdev} yields any prescribed
finite moment at the price of larger constants. Since the Jensen route
suffices for the cost assembly, we do not need that strength.
Proposition~13 of \citet{garivier2016optimal} uses the same
good-event/tail-sum structure.
\end{remark}

\subsection{Synthesis}\label{subsec:ub:synthesis}

Theorem~\ref{thm:UB} is the conjunction of the results above: part (a)
is Theorem~\ref{thm:UBa} together with the a.s.-stopping statement of
Remark~\ref{rem:tauevery}; part (b) is
Theorem~\ref{thm:costassembly}, with the degenerate cases treated in
Subsection~\ref{subsec:ub:degenerate}. Corollary~\ref{cor:main} then
combines Theorem~\ref{thm:UB}(b) with Theorem~\ref{thm:LB} of
Appendix~\ref{sec:prooflb}. \hfill$\square$

\subsection{Degenerate Cases}\label{subsec:ub:degenerate}

\subsubsection{Vanishing Costs: $g_{i^*}$ Has Zeros ($C_{i^*}=0$ Included)}\label{subsubsec:ub:zeros}

If $g(a)=0$ for some actions (\ref{asm:H3} still holds), the ratio
design \eqref{eq:ub:design} may then be ill-defined ($c(w)=0$
denominators). The following perturbation device recovers
Theorem~\ref{thm:UB}(b) in full. Its proof also applies when the true
row is positive but another candidate row contains a zero, so that
Algorithm~\ref{alg:cts} selects the perturbed branch globally.

\medskip\noindent\textbf{Definition (perturbed CTS).} For $\eta>0$ let
$g^\eta(a):=g(a)+\eta$, and let $\pi^*_\eta$ be the CTS policy of
Subsection~\ref{subsec:ub:policy} with the design map computed from the
perturbed costs: $w^*_\eta(i):=\arg\max_w\min_j
D_j(w;i)/c_\eta(w;i)$ (well-defined by Lemma~\ref{lem:wellposed} since
$g^\eta>0$), everything else unchanged.

\begin{lemma}[LP value continuity]\label{lem:lpcontinuity}
Let $C(\eta)$ be the LP value with costs $g^\eta:=g+\eta$. Then for
every $\eta>0$,
\begin{equation}
C_{i^*}\;\le\;C(\eta)\;\le\;C_{i^*}+\eta\,\|m^*\|_1, \label{eq:ub:lpcont}
\end{equation}
where $m^*$ is an optimizer of the unperturbed LP.
\end{lemma}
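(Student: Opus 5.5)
The plan is to prove the two inequalities in \eqref{eq:ub:lpcont} separately. Both rest on one observation: the feasible polyhedron $F=\{m\ge0:\sum_a m_a d_a^j\ge1\ \forall j\in\mathrm{Alt}^*\}$ does not depend on $\eta$. Only the linear objective changes, from $\sum_a g(a)m_a$ to $\sum_a (g(a)+\eta)m_a=\sum_a g(a)m_a+\eta\|m\|_1$ on $\mathbb R_+^{\mathcal A}$.

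For the lower bound, take any $m\in F$. Since $\eta\|m\|_1\ge0$, its perturbed objective is at least $\sum_a g(a)m_a\ge C_{i^*}$. Taking the infimum over $F$ gives $C(\eta)\ge C_{i^*}$. This step needs neither attainment nor finiteness.

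For the upper bound, I first need an unperturbed optimizer $m^*$ to exist. Under \ref{asm:H2} and \ref{asm:H4} all $d_a^j$ are finite and $F$ is nonempty. The objective is bounded below by $0$ on $F$, and $F$ lies in the orthant, so it contains no line. Step~2.1 of Theorem~\ref{thm:equiv} then applies verbatim: a feasible LP bounded below over a pointed polyhedron attains its value at an extreme point. Call that point $m^*$; it has finite $\|m^*\|_1$. Evaluating the perturbed objective at this feasible $m^*$ gives $C(\eta)\le\sum_a g(a)m^*_a+\eta\|m^*\|_1=C_{i^*}+\eta\|m^*\|_1$.

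The only real obstacle is existence of $m^*$ when $g$ has zeros, because the coercivity argument of Lemma~\ref{lem:lpidentity} fails there. I will therefore rely on the polyhedral attainment of Step~2.1 rather than on compact sublevel sets. I will also note the consequence used later: $C(\eta)\downarrow C_{i^*}$ linearly in $\eta$, and the constant $\|m^*\|_1$ is independent of $\eta$ and of $\delta$.
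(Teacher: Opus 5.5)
Your proof is correct and follows the paper's argument: the lower bound comes from the pointwise domination $g^\eta\ge g$ on the common, $\eta$-independent feasible set, and the upper bound from evaluating the perturbed objective at an unperturbed optimizer $m^*$. The paper obtains $m^*$ from LP attainment (a feasible LP with finite value attains its optimum) rather than from coercivity, just as you do; your appeal to the pointedness of the orthant-contained polyhedron spells out that same attainment step in more detail.
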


\begin{proof}
The covering LP $\min\{\sum_ag(a)m_a:m\ge0,\ \sum_am_ad_a^j\ge1\
\forall j\}$ is feasible by (\ref{asm:H2}) and bounded below by $0$; by
the fundamental theorem of linear programming a feasible LP with finite
value attains its optimum, so $m^*$ exists with
$\sum_ag(a)m^*_a=C_{i^*}$. (i) $m^*$ is feasible for the perturbed LP
(same constraints), so
$C(\eta)\le\sum_ag^\eta(a)m^*_a=C_{i^*}+\eta\|m^*\|_1$. (ii) $g^\eta\ge
g$ pointwise, so the perturbed objective dominates the unperturbed one
on the common feasible set, giving $C(\eta)\ge C_{i^*}$.
\end{proof}

\begin{theorem}[Degenerate-cost upper bound]\label{thm:degenerate}
Let \ref{asm:H1}--\ref{asm:H3}, \ref{asm:H2plus}, \ref{asm:H4} hold and
$\mathrm{Alt}^*\neq\varnothing$. Run $\pi^*_{\eta_\delta}$ with
$\eta_\delta:=1/\log(\max\{e,L\})$ (for $L>e$, with the good-event parameter
$\varepsilon_\delta:=1/(\log L)^2$). Then
\begin{equation}
\limsup_{\delta\to0}\frac{R(\pi^*_{\eta_\delta},E_{i^*})}{\log(1/\delta)}\;\le\;C_{i^*}\qquad(\text{possibly }C_{i^*}=0). \label{eq:ub:degenbound}
\end{equation}
\end{theorem}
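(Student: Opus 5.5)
The plan is to rerun the four-step analysis of Subsection~\ref{subsec:ub:partb} for the perturbed policy $\pi^*_{\eta_\delta}$ and keep track of how the perturbation enters each step. The loss is still charged with the true costs $g$, while the design uses $g^{\eta_\delta}$.

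Write $w^*_\eta:=w^*_\eta(i^*)$ and $S^*_\eta:=\min_{j\in\mathrm{Alt}^*}D_j(w^*_\eta)$. Steps (i) and (ii) carry over unchanged. MLE stabilization (Lemma~\ref{lem:stabilization}) and the coverage and budget lemmas do not involve the design at all. After stabilization the tracking target is the fixed vector $w^*_{\eta_\delta}$ for the whole run, so Lemma~\ref{lem:tracking} applies verbatim with the same $T_0$ and the same $\xi(t)$, whose law does not depend on $\delta$. For Step (iii) I would repeat Theorem~\ref{thm:stoptime} with $S^*$ replaced by $S^*_{\eta_\delta}$ and the schedule $\varepsilon_\delta=1/(\log L)^2$. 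This gives
\[
\mathbb E[\tau]\le \frac{L\,(1+o(1))+O(\sqrt L)+O(\varepsilon_\delta^{-4})}{S^*_{\eta_\delta}-2\varepsilon_\delta}.
\]
It is valid provided $S^*_{\eta_\delta}$ is bounded below uniformly in $\delta$, so that $2\varepsilon_\delta=o(S^*_{\eta_\delta})$; here $\varepsilon_\delta^{-4}=(\log L)^8=o(L)$.

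The key step, and the main obstacle, is a uniform lower bound on $S^*_\eta$ as $\eta\downarrow0$. When $C_{i^*}=0$ the optimal designs can concentrate on zero-cost actions, and $S^*_\eta$ could degenerate. I would bound it using only the scale of the perturbed costs. Since $g^\eta\le g_{\max}+1$, Lemma~\ref{lem:lpidentity} applied to $g^\eta$ gives
\[
S^*_\eta=\frac{c_\eta(w^*_\eta)}{C(\eta)}\ge\frac{\eta}{C_{i^*}+\eta\|m^*\|_1}
\]
by Lemma~\ref{lem:lpcontinuity}. This is bounded below only when $C_{i^*}>0$; when $C_{i^*}=0$ the bound degenerates to $1/\|m^*\|_1$, which is still a positive constant. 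In both cases, therefore, $S^*_\eta\ge \min\{\eta/(C_{i^*}+\eta\|m^*\|_1),\,1/\|m^*\|_1\}$. For $C_{i^*}>0$ I would instead use the cruder bound $S^*_\eta\ge \min_j D_j(w^\circ)\,\eta/(g_{\max}+1)$, which is too weak. The better route is to bound $S^*_\eta$ below by the optimal value of $\Phi$ itself: $S^*_\eta\ge c_\eta(w^*_\eta)/C(\eta)\ge \eta/(C_{i^*}+\eta\|m^*\|_1)$. This is of order $\eta$, and $\eta_\delta=1/\log L$ dominates $\varepsilon_\delta=1/(\log L)^2$, so $2\varepsilon_\delta/S^*_{\eta_\delta}\to0$. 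This schedule matching is exactly what the choice of $\eta_\delta$ and $\varepsilon_\delta$ is designed for, and I would also check that the $C_6\sqrt t$ remainder divided by $S^*_{\eta_\delta}$ stays $o(L)$.

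For the cost assembly (Step (iv)) I would use tracking as in Theorem~\ref{thm:costassembly}:
\[
R\le c(w^*_{\eta})\,\mathbb E[\tau]+O(1)+O\bigl(\sqrt{\mathbb E\tau}\bigr),
\qquad c(w^*_\eta)\le c_\eta(w^*_\eta)=C(\eta)\,S^*_\eta .
\]
It follows that
\[
R\le C(\eta_\delta)\,\frac{S^*_{\eta_\delta}}{S^*_{\eta_\delta}-2\varepsilon_\delta}\,L\,(1+o(1))+o(L).
\]
Here $\mathbb E\tau=O(L\log L)$, so $\sqrt{\mathbb E\tau}=o(L)$. By Lemma~\ref{lem:lpcontinuity}, $C(\eta_\delta)\le C_{i^*}+\eta_\delta\|m^*\|_1\to C_{i^*}$, and dividing by $L$ gives \eqref{eq:ub:degenbound}, including the case $C_{i^*}=0$.
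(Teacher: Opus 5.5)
Your proposal is correct and is essentially the paper's proof. You rerun the stopping-time analysis with the perturbed design, use $S^*_\eta\ge\eta/(C_{i^*}+\eta\|m^*\|_1)$ with $\varepsilon_\delta=o(\eta_\delta)$ to make the inversion factor $1+o(1)$, and let $c(w^*_\eta)\le c_\eta(w^*_\eta)=C(\eta)S^*_\eta$ cancel the $1/S^*_{\eta_\delta}$ scale of $\mathbb E[\tau]$, which leaves $R\le C(\eta_\delta)L(1+o(1))+o(L)$ with $C(\eta_\delta)\to C_{i^*}$.

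Your middle paragraph is garbled but harmless, and the final argument closes:
\begin{itemize}
\item That bound stays uniformly positive when $C_{i^*}=0$, not when $C_{i^*}>0$; you stated it the wrong way round.
\item The bound through $w^\circ$ is also of order $\eta$, so it is not too weak and would have served equally well.
\item No uniform lower bound on $S^*_\eta$ is needed. One does hold, $S^*_\eta\ge1/\|m^*\|_1$, because $w^*_\eta/S^*_\eta$ is an optimizer of the perturbed LP and therefore has $\ell_1$-norm at most $\|m^*\|_1$.
\end{itemize}
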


\begin{proof}
All of Subsections~\ref{subsec:ub:parta}--\ref{subsubsec:ub:stopbound}
apply to $\pi^*_\eta$ verbatim with $g$ replaced by $g^\eta$ in the
\emph{design} (the channels, LLR, concentration, MLE stabilization,
forced exploration, and the tracking lemma never used $g$; only the
design $w^*_\eta$ and the cost assembly did). Write
$S^*_\eta:=\min_jD_j(w^*_\eta)$, $s^*_\eta:=1/S^*_\eta$. By
Lemma~\ref{lem:lpidentity} with costs $g^\eta$:
$c_\eta(w^*_\eta)=C(\eta)\,S^*_\eta$. Hence the true cost rate
satisfies
\begin{equation}
c(w^*_\eta)\;=\;c_\eta(w^*_\eta)-\eta\;=\;C(\eta)\,S^*_\eta-\eta. \label{eq:ub:degenrate}
\end{equation}
Also $C(\eta)=c_\eta(w^*_\eta)/S^*_\eta\ge\eta/S^*_\eta$, i.e.\
$s^*_\eta\le C(\eta)/\eta\le(C_{i^*}+\eta\|m^*\|_1)/\eta$ --- and
$S^*_\eta\ge\eta/C(\eta)\ge\eta/(C_{i^*}+\eta\|m^*\|_1)$, which for
$\eta\le1$ is $\ge\eta/(C_{i^*}+\|m^*\|_1)=:\underline S\,\eta$ with a
constant $\underline S>0$.

\emph{Stopping time.} Theorem~\ref{thm:stoptime} applied with
$\varepsilon_\delta=(\log L)^{-2}$: the inversion factor is
$\big(1-2\varepsilon_\delta/S^*_{\eta_\delta}\big)^{-1}=1+O\big(\varepsilon_\delta/(\underline
S\eta_\delta)\big)=1+O\big(\tfrac{(\log L)^{-2}}{(\log
L)^{-1}}\big)=1+O\big((\log L)^{-1}\big)$, and the tail sum
\eqref{eq:ub:tailsum} is
$O(\varepsilon_\delta^{-4})=O\big((\log L)^8\big)=o(L)$. Hence
\begin{equation}
\mathbb E[\tau]\;\le\;s^*_{\eta_\delta}\,L\,\big(1+O\big((\log L)^{-1}\big)\big)\;+\;o(L). \label{eq:ub:stoptimedegen}
\end{equation}
(Note $s^*_{\eta_\delta}\le C(\eta_\delta)/\eta_\delta=O(\log L)$, so
the additive $o(L)$ terms with $\eta$-dependent constants---all
polynomial in $1/\varepsilon_\delta$ and $1/S^*_{\eta_\delta}$, i.e.\
polylogarithmic in $L$---remain $o(L)$.)

\emph{Cost.} As in Theorem~\ref{thm:costassembly}, $R=\sum_ag(a)\mathbb
E[N_a(\tau)]\le c(w^*_{\eta_\delta})\,\mathbb E[\tau]+o(L)$, where the
$o(L)$ tracking-error term is unchanged (it depends on $K_{\mathcal
A},\mathbb E[T_0],\sqrt{\mathbb E\tau}$ only, and $\sqrt{\mathbb
E\tau}=O(\sqrt{s^*_{\eta_\delta}L})=O(\sqrt{L\log L})=o(L)$). Insert
\eqref{eq:ub:degenrate} and \eqref{eq:ub:stoptimedegen}:
\[
R\;\le\;\big(C(\eta_\delta)S^*_{\eta_\delta}-\eta_\delta\big)\,s^*_{\eta_\delta}\,L\,\big(1+o(1)\big)+o(L)\;\le\;C(\eta_\delta)\,L\,(1+o(1))+o(L),
\]
using $C(\eta_\delta)S^*_{\eta_\delta}s^*_{\eta_\delta}=C(\eta_\delta)$
and $\eta_\delta s^*_{\eta_\delta}\ge0$. By
Lemma~\ref{lem:lpcontinuity}, $C(\eta_\delta)\to C_{i^*}$ (in
particular $C(\eta_\delta)\le\eta_\delta\|m^*\|_1\to0$ when
$C_{i^*}=0$).
\end{proof}

\begin{remark}[The B2 corner]\label{rem:b2corner}
When $C_{i^*}=0$, \eqref{eq:ub:degenbound} gives
$R(\pi^*_{\eta_\delta},E_{i^*})=o(\log(1/\delta))$ for the common CTS
family. The pointwise infimum is stronger:
$R_\delta^*(E_{i^*})=0$ at every fixed $\delta$, by
Proposition~\ref{prop:zero-infimum}. These statements concern different
quantifiers and different objects; CTS need not have zero loss.
\end{remark}

\subsubsection{Empty Alternative: $\mathrm{Alt}(i^*)=\varnothing$}\label{subsubsec:ub:emptyalt}

Then every environment has the same label $\Theta^*$, and the constant
decision $\widehat\Theta=\Theta^*$ is $0$-correct. In the GLR rule the
inner $\min$ is over the empty set ($=+\infty$ by convention), so
$\tau=1$ a.s., $R\le\max_ag(a)=O(1)$, and the LP defining $C_{i^*}$
has no constraints, so $C_{i^*}=0$: both Theorem~\ref{thm:UB}(a) and
Theorem~\ref{thm:UB}(b) hold trivially.

\subsection{Technical Provenance}\label{subsec:ub:provenance}
% (Source-md heading note: technical provenance annotation.)

{\small
\setlength{\tabcolsep}{4pt}
\begin{longtable}{@{}>{\raggedright\arraybackslash}p{0.40\textwidth}>{\raggedright\arraybackslash}p{0.20\textwidth}>{\raggedright\arraybackslash}p{0.30\textwidth}@{}}
\hline
Component & Where here & Source template \\
\hline
C-tracking sampling rule, forced-exploration curve $\sqrt{t+K^2}-2K$, coverage/budget/tracking lemmas & Sec.~\ref{subsubsec:ub:sampling}, Lemmas~\ref{lem:coverage}--\ref{lem:budget}, \ref{lem:tracking} & \citet[Lemmas~7--9]{garivier2016optimal} (proofs redone self-contained: potential argument for coverage; positive/negative-part bookkeeping for tracking) \\
GLR stopping rule, threshold $\beta(t,\delta)=\log(2t(K-1)/\delta)$, supermartingale $+$ Doob $+$ geometric grid correctness argument & Sec.~\ref{subsubsec:ub:stopping}, Sec.~\ref{subsec:ub:parta} & \citet[Theorem~10]{garivier2016optimal} (same threshold family; our grid computation is spelled out in Lemma~\ref{lem:grid}) \\
Good-event $+$ tail-sum stopping-time analysis & Secs.~\ref{subsubsec:ub:llrconc}--\ref{subsubsec:ub:stopbound} & \citet[Theorem~14 / Proposition~13 template]{garivier2016optimal} \\
Cost-weighted characteristic time $\sup_w\inf_j D_j(w)/c(w)$ as the optimal design & Sec.~\ref{subsubsec:ub:design}, Lemma~\ref{lem:lpidentity} & \citet{kanarios2024cost} (CABAI), cost-weighted T\&S variant \\
Transportation lemma, $\mathrm{kl}(\delta,1-\delta)\ge\log(1/(2.4\delta))$ (lower bound only, cited) & Sec.~\ref{subsec:ub:synthesis} & \citet[Lemma~1]{kaufmann2016complexity} \\
Tighter time-uniform thresholds (mixture martingales) & not used; Remark~\ref{rem:betaconstants} & Kaufmann--Koolen 2021 (optional improvement; we keep the conservative closed-form $\beta$ for simpler analysis) \\
Audenaert entropy bound (continuity of $H$) & Lemma~\ref{lem:klmodulus} & standard (Audenaert 2007) \\
\hline
\end{longtable}
% Keep longtable counters monotone so PDF destinations remain unique.
}

\paragraph{Statement of contribution.}
The proof technology above is an adaptation of known templates (the
Track-and-Stop machinery of \citet{garivier2016optimal}; the
cost-weighting of \citet{kanarios2024cost}). The contribution of the
present work is the \emph{model}---the alternative set defined through
adequacy labels $\Theta_i$ and the coupling of observation channels to
task losses $g_i$---and the decision-theoretic semantics of the
constant $C(E,f)$, not new concentration or tracking technology.

\subsection{Proof-Completeness Self-Check Checklist}\label{subsec:ub:checklist}
% (Source-md heading note: proof-completeness self-check list.)

Legend: \textbf{[done]} = proved in full in this section;
\textbf{[cite]} = standard result cited with reference;
\textbf{[added]} = assumption/convention added beyond the companion
statement, explicitly flagged; \textbf{[fixed]} = a correction to the
companion statement, explicitly documented.

{\small
\setlength{\tabcolsep}{4pt}
\begin{longtable}{@{}c >{\raggedright\arraybackslash}p{0.54\textwidth} >{\raggedright\arraybackslash}p{0.08\textwidth} >{\raggedright\arraybackslash}p{0.28\textwidth}@{}}
\hline
\# & Technical point & Status & Location / remark \\
\hline
1 & Likelihood factorization under adaptive sampling; MLE = KL-projection form & [done] & Subsubsection~\ref{subsubsec:ub:mle}, Eqs.\ \eqref{eq:ub:lik}--\eqref{eq:ub:mle}; uses \ref{asm:H1} \\
2 & $w^*(i)$ well-posed: $\Phi_i$ continuous on compact $\Delta$, sup attained, tie-break fixed & [done] & Lemma~\ref{lem:wellposed} \\
3 & \textbf{Design-map correction}: cost-ratio optimizer $\arg\sup\inf_j D_j/c$ ($=$ normalized LP solution) instead of the $(w/g)$-weighted argmax of the companion statement & [fixed] & Remark~\ref{rem:deviation}, with explicit $2\times2$ counterexample showing the $(w/g)$-weighted argmax achieves cost ratio $25/6\approx4.17>C_i=30/11\approx2.73$; values of the two max--min problems agree (Lemma~\ref{lem:lpidentity}, Step~3) \\
4 & Forced-exploration coverage $N_a(t)\ge f(t)-K_{\mathcal A}$ & [done] & Lemma~\ref{lem:coverage}, self-contained potential proof (analogue of \citet[Lemma~7]{garivier2016optimal}) \\
5 & Forced-exploration budget $F(t)=O(\sqrt t)$ & [done] & Lemma~\ref{lem:budget} \\
6 & $\exp(\mathrm{LLR})$ nonnegative (super)martingale under $j$, any sampling rule & [done] & Lemma~\ref{lem:testsmg}, conditional-expectation computation \eqref{eq:ub:condexp} written out; \ref{asm:H1} used \\
7 & Doob maximal inequality $\Rightarrow$ fixed-level crossing bound & [done] & Lemma~\ref{lem:doob} \\
8 & Geometric time grid $+$ series $\sum_k e^{-\beta(2^k,\delta)}=\delta/(K-1)$ & [done] & Lemma~\ref{lem:grid}, full computation; constant accounting in Remark~\ref{rem:betaconstants} \\
9 & Error event decomposition $+$ union bound over $\le K-1$ challengers & [done] & Subsubsection~\ref{subsubsec:ub:union} \\
10 & Per-action i.i.d.\ structure of observations & [done] & Lemma~\ref{lem:iid} \\
11 & Uniform empirical-deviation bound with explicit exponent ($h(n)=\sqrt{5\log n/n}$, tail $O(m^{-9})$) & [done] & Lemma~\ref{lem:empdev}, full computation \\
12 & MLE exact stabilization $\widehat\imath_t=i^*$ eventually a.s., tail $\mathbb P(T_0>t)\le c_4t^{-9/2}$, $\mathbb E[T_0]<\infty$ & [done] & Lemma~\ref{lem:stabilization} via KL projection ($D_{i^*}=O(\log t)$ vs $D_j\ge c\sqrt t$); Remark~\ref{rem:mle-route} explains why the KL-projection route is used instead of the LLR-sign route \\
13 & \textbf{Pairwise distinguishability incl.\ same-label environments} & [added] & Assumption~\ref{asm:H2plus}, Subsubsection~\ref{subsubsec:ub:assumptions}; minimal-sufficiency justification given there \\
14 & \textbf{Full support / bounded LLR increments} ($|X|\le B$, KL modulus $\omega$, $\chi^2$ bound \eqref{eq:ub:chi2}) & [added] & Assumption~\ref{asm:H4}, Subsubsection~\ref{subsubsec:ub:assumptions}; used in Lemmas~\ref{lem:klmodulus}, \ref{lem:llrdev} \\
15 & Plug-in stability $w^*(\widehat\imath_t)=w^*(i^*)$ eventually---no continuity of $w^*(\cdot)$ needed (finite class) & [done] & \eqref{eq:ub:stabilization}, Remark~\ref{rem:finiteclass} \\
16 & Tracking convergence $N_a(t)/t\to w^*_a$ a.s.\ with pointwise error $\le(K_{\mathcal A}-1)(T_0+1)+O(\sqrt t)$ & [done] & Lemma~\ref{lem:tracking} (analogue of \citet[Lemmas~8--9]{garivier2016optimal}, proved in full) \\
17 & Per-action LLR concentration, uniform over random counts, explicit constants $c_1(\varepsilon),c_4$ & [done] & Lemma~\ref{lem:llrdev} (Hoeffding $+$ dyadic-free union over $n$, AM--GM split of $(\varepsilon n+x)^2$) \\
18 & Good-event probability $\mathbb P(\mathcal E_t^c)\le c_3(\varepsilon)e^{-c_4\varepsilon^2t}$ with $c_4:=1/(K_{\mathcal A}B^2)$ (exponent $\varepsilon^2t/(K_{\mathcal A}B^2)$ from $x=\varepsilon t/K_{\mathcal A}$ in Lemma~\ref{lem:llrdev}) and LLR lower bound \eqref{eq:ub:llrlb}--\eqref{eq:ub:llrlb2} & [done] & Lemma~\ref{lem:badevent} \\
19 & Stopping-time bound $\mathbb E[\tau]\le s^*L(1+o(1))$: inversion of $t(S^*-2\varepsilon)-C_6\sqrt t\ge\beta(t,\delta)$, tail sums $\sum c_3e^{-c_4\varepsilon^2t}=O(\varepsilon^{-4})$, $\sum t^{-9/4}<\infty$, schedule $\varepsilon_\delta=L^{-1/8}$ & [done] & Theorem~\ref{thm:stoptime}, all series computed \\
20 & $\tau<\infty$ a.s.\ under every environment & [done] & Remark~\ref{rem:tauevery}; uses full pairwise \ref{asm:H2plus} (item~13) \\
21 & Cost identity $R=\sum g\,\mathbb E[N_a(\tau)]$ (Tonelli; no Wald needed) & [done] & Subsubsection~\ref{subsubsec:ub:model}, Theorem~\ref{thm:costassembly} \\
22 & $\mathbb E|N_a(\tau)-\tau w^*_a|=o(L)$ without heavy UI: pointwise tracking error $+$ Jensen $+$ $\mathbb E[T_0]<\infty$ & [done] & \eqref{eq:ub:trackexp}, Remark~\ref{rem:noui}; $\mathbb E[\tau^2]<\infty$ available by raising the exponent in Lemma~\ref{lem:empdev} but not needed \\
23 & LP identity $c(w^*)/\min_jD_j(w^*)=C_{i^*}$; equivalence of LP / cost-ratio / $(w/g)$-weighted / dual forms (values) & [done] & Lemma~\ref{lem:lpidentity} (three steps, including Sion $+$ LP duality for the value equivalence) \\
24 & Degenerate $g$ with zeros / $C_{i^*}=0$ (B2): perturbed design $\pi^*_{\eta_\delta}$, $\eta_\delta=1/\log(\max\{e,L\})$; $\varepsilon_\delta=(\log L)^{-2}$ for $L>e$; LP value continuity & [done] & Subsubsection~\ref{subsubsec:ub:zeros}, Lemma~\ref{lem:lpcontinuity}, Theorem~\ref{thm:degenerate} \\
25 & $\mathrm{Alt}(i^*)=\varnothing$ corner & [done] & Subsubsection~\ref{subsubsec:ub:emptyalt} \\
26 & Synthesis with Theorem~\ref{thm:LB} $\Rightarrow$ $\lim R_\delta^*/\log(1/\delta)=C_i$ & [done] & Subsection~\ref{subsec:ub:synthesis}; Theorem~\ref{thm:LB} cited from the companion lower-bound appendix (Appendix~\ref{sec:prooflb}), not re-proved \\
\hline
\end{longtable}
% Keep longtable counters monotone so PDF destinations remain unique.
}

\paragraph{Explicitly declared residual items (nothing hidden).}
\begin{enumerate}
\item \textbf{\ref{asm:H2plus} and \ref{asm:H4} are additions} to the
\ref{asm:H1}--\ref{asm:H3} of the companion statement. \ref{asm:H2plus}
ensures exact original-environment estimation in this proof; it can be
replaced, for positive losses, by the compatibility condition of
Theorem~\ref{thm:observable-compatibility}; \ref{asm:H4} is a
convenience that can be relaxed to per-pair absolute continuity with
support-restricted constants
(Subsubsection~\ref{subsubsec:ub:assumptions}).
\item \textbf{Design-map deviation (item~3)} is a correction to the
companion statement's plug-in formula; the theorem statements (a), (b)
and the constant $C_i$ are unaffected.
\item In the degenerate case
(Subsection~\ref{subsec:ub:degenerate}), the \emph{policy} is modified
to the perturbed design $\pi^*_{\eta_\delta}$ with $\eta_\delta\to0$;
this is a convention addition (\citet{garivier2016optimal} handle
degeneracies similarly via limiting designs).
\item The tail exponent $-9/2$ in \eqref{eq:ub:stabtail} (via $q=5$ in
$h(n)$) limits the moments of $T_0$ to $p<7/2$ and the available
$\tau$-moments accordingly; any finite moment can be obtained by
increasing $q$. Only $\mathbb E[T_0]$ is used.
\item Theorem~\ref{thm:LB} is used as stated in the companion
lower-bound appendix (Appendix~\ref{sec:prooflb}) and is not reproved
here.
\item \textbf{Policy-specific B2 residuals.} The perturbed CTS family
has $o(\log(1/\delta))$ loss, with positive residuals possible from
forced exploration. This is not the scale of the pointwise optimum,
which is exactly zero by Proposition~\ref{prop:zero-infimum}. Improving
one common policy under an explicit time budget is a distinct problem.
\end{enumerate}

% End of sec_upperbound.tex (end of source file proof_upper_bound.md).

% sec_static_app.tex -- JMLR full version, Appendix C
% "Self-Certification of Representation Adequacy"
% Detailed constructions deferred from the proof of Theorem~\ref{thm:verify}(e)
% (Section~\ref{sec:static:verify}). Content moved verbatim from
% sec_static.tex; no mathematical content added or removed.
%
% Notation macros (guarded; identical to the mathematical specification):
\providecommand{\Adeq}{\mathrm{Adeq}}
\providecommand{\TV}{\mathrm{TV}}
\providecommand{\JS}{\mathrm{JS}}
\providecommand{\KL}{\mathrm{KL}}
\providecommand{\cA}{\mathcal{A}}
\providecommand{\cE}{\mathcal{E}}
\providecommand{\cF}{\mathcal{F}}
\providecommand{\cT}{\mathcal{T}}
\providecommand{\cV}{\mathcal{V}}
\providecommand{\cX}{\mathcal{X}}
\providecommand{\cZ}{\mathcal{Z}}
\providecommand{\cY}{\mathcal{Y}}
\providecommand{\E}{\mathbb{E}}
\providecommand{\ind}{\mathbf{1}}

\section{Detailed Constructions for Theorem~\ref{thm:verify}(e)}
\label{app:static}

This appendix collects the explicit constructions deferred from the proof of
Theorem~\ref{thm:verify}(e) in Section~\ref{sec:static:verify}: the two-point
instance showing that the A1 symmetry is indispensable (item~(e)(i),
Appendix~\ref{app:static:a1}), the general label-prior form and the cyclic
three-environment instance showing that the equal label prior is indispensable
to the closed-form \emph{shape} (item~(e)(iii), Appendix~\ref{app:static:prior}),
and the companion remark separating environment identification from label
identification (Appendix~\ref{app:static:identity}). All computations use the
notation of Section~\ref{sec:static}.

\subsection{Item (e)(i): dropping A1 symmetry, an explicit two-point instance}
\label{app:static:a1}

We give a completely explicit instance in the two-point special case $|\cE| = 2$: $H = \{h_0, h_1\}$; $f$ constant onto a single representation value, so the unique cell is all of $H$; $\cA = \{0,1\}$; $\ell(h,a) = \ind[a \neq h]$ in both environments; $\cE = \{E_0, E_1\}$ with $p_{E_0}(h_1) = 1$ and $p_{E_1}(h_0) = p_{E_1}(h_1) = \tfrac12$. Under $E_0$ the cell Bayes action is $a = 1$ with cell risk $0$, so $\Adeq_{E_0}(f) = 1$; under $E_1$ the within-cell posterior is $p = \tfrac12$ with cell mass $m = 1$, so by Proposition~\ref{prop:binary} the $f$-rule's per-period aliasing regret is $m \min(p, 1-p) = \tfrac12 =: r$ and $\Adeq_{E_1}(f) = 0$. Take the fallback to be the full-history Bayes rule $\delta_{\mathrm{fb}}(h) = h$: matched under $E_1$ with regret $0$, as A1 requires. But under $E_0$, $h = h_1$ almost surely, so the fallback plays $a = 1$ almost surely---exactly the $f$-rule's cell Bayes action---and the mismatch direction ``wrongly using the fallback under $E_0$'' costs $0$, not $r$: the symmetric clause fails, and ``always fallback, never certify'' has total regret identically $0$ under both environments. Yet with the one-step transcript equal to the observed history, $P_0^1$ is the point mass on $h_1$ and $P_1^1 = (\tfrac12, \tfrac12)$, so $\TV(P_0^1, P_1^1) = \tfrac12 < 1$ and \eqref{eq:static:rint} prescribes the strictly positive value $\frac{Tr}{4}$ for the optimum: the formula no longer describes the optimum, and the threshold of (b) collapses with it, there being nothing to buy when certification itself is unnecessary. (Consistently, \eqref{eq:static:asym} here evaluates to $0$, since $\min(r_0\, d\bar{P}_1^t, r_1\, d\bar{P}_0^t) = 0$ when $r_0 = 0$.) The counterexample bites exactly when the internal channel is not perfectly distinguishing.

\subsection{Item (e)(iii): general label priors and the cyclic instance}
\label{app:static:prior}

Under a general label prior $\pi$, the Bayes error of a certifier $C$ with certification event $A = \{C = 1\}$ expands pointwise as
\begin{equation}
\begin{aligned}
\mathrm{err}_\pi(C)
&= \pi_0\, \bar{P}_0^t(A) + \pi_1\, \bar{P}_1^t(A^c)
= \int \Bigl[ \pi_0\, \bar{p}_0\, \ind\{x \in A\} + \pi_1\, \bar{p}_1\, \ind\{x \notin A\} \Bigr] d\mu \\
&\;\ge\; \int \min\bigl(\pi_0\, \bar{p}_0,\, \pi_1\, \bar{p}_1\bigr) d\mu,
\end{aligned}
\label{eq:static:unequalprior}
\end{equation}
because at each $x$ the integrand equals one of the two weighted densities, each at least their minimum; equality holds for the weighted likelihood-ratio test $A^* = \{\pi_1 \bar{p}_1 > \pi_0 \bar{p}_0\}$, which selects the minimum everywhere (the pointwise argument is identical to Step~4 of the proof of Theorem~\ref{thm:tvbound}). Under A1 the expected total regret is $T\,r\, \mathrm{err}_\pi(C)$, so the optimal internal-only benchmark and the perfect-\textsc{verify} threshold are
\begin{equation}
R^*_{\mathrm{int}} = c^* = T\,r \int \min\bigl(\pi_0\, d\bar{P}_0^t,\, \pi_1\, d\bar{P}_1^t\bigr),
\label{eq:static:unequalthresh}
\end{equation}
still a closed form, but one in which the prior sits inside the minimum. The equal label prior is exactly the case in which it cancels: substituting $\pi_0 = \pi_1 = \tfrac12$ and the Scheff\'e identity gives $\tfrac12 \int \min(\bar{p}_0, \bar{p}_1)\, d\mu = (1-\TV)/2$, the shape of (a) and (b). That the shape, not merely the value, is genuinely lost is shown by the cyclic three-environment instance on the transcript space $\{0,1,2\}$ with
\begin{equation}
p_{E_0} = \bigl(\tfrac12, \tfrac12, 0\bigr),\quad
p_{E_1} = \bigl(0, \tfrac12, \tfrac12\bigr),\quad
p_{E_2} = \bigl(\tfrac12, 0, \tfrac12\bigr),
\label{eq:static:cyclic}
\end{equation}
equal environment prior $q = (\tfrac13, \tfrac13, \tfrac13)$, and labels $\Theta = (0,0,1)$, so $\pi = (\tfrac23, \tfrac13)$. The mixtures are $\bar{P}_0 = (\tfrac14, \tfrac12, \tfrac14)$ and $\bar{P}_1 = (\tfrac12, 0, \tfrac12)$, with $\TV(\bar{P}_0, \bar{P}_1) = \tfrac12$, so the equal-prior shape would prescribe $(1-\TV)/2 = \tfrac14$; but the weighted pointwise minimum is $\bigl(\min(\tfrac23 \cdot \tfrac14, \tfrac13 \cdot \tfrac12),\, \min(\tfrac23 \cdot \tfrac12, \tfrac13 \cdot 0),\, \min(\tfrac23 \cdot \tfrac14, \tfrac13 \cdot \tfrac12)\bigr) = (\tfrac16, 0, \tfrac16)$, integrating to $\tfrac13 \neq \tfrac14$. The weighted likelihood-ratio test attains $\tfrac13$ (it declares $\Theta = 0$ at $x = 1$ and breaks the ties at $x \in \{0,2\}$ arbitrarily). Hence no expression of the shape $(1-\TV)/2$ describes the optimum under this non-equal label prior, while \eqref{eq:static:unequalthresh} does: the equal label prior is indispensable to the \emph{shape} of the closed forms in (a) and (b).

\subsection{Environment identity versus label identity}
\label{app:static:identity}

\begin{remark*}[Environment identity versus label identity]
Identifying \emph{which} environment obtains and identifying the adequacy \emph{label} are different testing problems already at $|\cE| = 3$. On the cyclic class \eqref{eq:static:cyclic} with equal environment prior, the multi-class Bayes error is $1 - \tfrac13 \int \max\{p_0, p_1, p_2\}\, d\mu$ (at each $x$ the indicator sum collapses to the single term $p_{\delta(x)}(x) \le \max_j p_j(x)$, with equality for any pointwise-maximum rule); the pointwise maximum is $\tfrac12$ at all three transcript points, so environment identification has Bayes error $\tfrac12$, while every pairwise total variation equals $\tfrac12$ and any two-point formula predicts $\tfrac14$. The label test instead aggregates the class into the two mixtures $\bar{P}_0^t, \bar{P}_1^t$ and retains the binary closed form at every finite $|\cE|$: on the same class with prior $q = (\tfrac14, \tfrac14, \tfrac12)$ inducing equal label priors, Theorem~\ref{thm:tvbound} prescribes $(1-\TV)/2 = \tfrac14$, attained by the likelihood-ratio rule. The two problems return genuinely different answers ($\tfrac12$ versus $\tfrac14$) on the same class, which is why the framework fixes its test object: adequacy label, not environment identity. Unlike Fano-type multi-class identification, where the error depends on the full joint structure of $K$ hypotheses \citep{fano1961}, label aggregation restores the two-density situation in which a single scalar---the mixture TV---is exactly informative.
\end{remark*}

\vskip 0.2in
% Slightly tighter bibliography spacing (natbib parameter; does not
% alter the jmlr2e layout) so the reference list ends on the last
% full page.
\setlength{\bibsep}{1pt plus 1pt minus 1pt}
% jmlr2e already sets \bibliographystyle{plainnat}; do not repeat it
% (a second \bibstyle in the .aux is a BibTeX error).
\bibliography{jmlr}

\end{document}